\renewenvironment{proof}[1][\proofname]{
  \par\pushQED{\qed}\normalfont
  \topsep6\p@\@plus6\p@\relax
  \trivlist\item[\hskip\labelsep\bfseries#1\@addpunct{.}]
  \ignorespaces
}{
  \popQED\endtrivlist\@endpefalse
}
\newcommand{\oset}[3][0ex]{
	\mathrel{\mathop{#3}\limits^{
			\vbox to#1{\kern-2\ex@
				\hbox{$\scriptstyle#2$}\vss}}}}
\newtheorem{proposition}{Proposition}
\newtheorem{theorem}[proposition]{Theorem}
\newtheorem{lemma}[proposition]{Lemma}
\newtheorem{remark}[proposition]{Remark}
\newtheorem{assumption}[proposition]{Assumption}
\def\tr{{\rm tr}}
\def\diag{{\rm diag}}
\def\rank{{\rm rank}}
\def\block{{\rm blk}}
\newcommand{\1}{\mathds{1}}
\newcommand{\beq}{\begin{equation}}
\newcommand{\eeq}{\end{equation}}
\newcommand{\beqa}{\begin{eqnarray}}
\newcommand{\eeqa}{\end{eqnarray}}
\newcommand{\beqan}{\begin{eqnarray*}}
\newcommand{\eeqan}{\end{eqnarray*}}
\newcommand{\pde}[2]{ \frac{\partial #1}{\partial #2} }
\newcommand{\bite}{\begin{itemize}}
\newcommand{\eite}{\end{itemize}}
\newcommand{\benu}{\begin{enumerate}}
\newcommand{\eenu}{\end{enumerate}}
\title{Gradient Flow Equations for Deep Linear Neural Networks: A Survey from a Network Perspective}
\author{Joel Wendin and Claudio Altafini\footnote{Corresponding author: claudio.altafini@liu.se} \\ {\itshape\small Department of Electrical Engineering,}\\ {\itshape\small Link{\"o}ping University, SE-58183 Link{\"o}ping, Sweden}}
\begin{document}
	\maketitle

	\begin{abstract}
The paper surveys recent progresses in understanding the dynamics and loss landscape of the gradient flow equations associated to deep linear neural networks, i.e., the gradient descent training dynamics (in the limit when the step size goes to 0) of deep neural networks missing the activation functions and subject to quadratic loss functions. 
When formulated in terms of the adjacency matrix of the neural network, as we do in the paper, these gradient flow  equations form a class of converging matrix ODEs which is nilpotent, polynomial, isospectral, and with conservation laws. 
The loss landscape is described in detail. It is characterized by infinitely many global minima and saddle points, both strict and nonstrict, but lacks local minima and maxima.
The loss function itself is a positive semidefinite Lyapunov function for the gradient flow, and its level sets are unbounded invariant sets of critical points,  with critical values that correspond to the amount of singular values of the input-output data learnt by the gradient along a certain trajectory. 
The adjacency matrix representation we use in the paper allows to highlight the existence of a quotient space structure in which each critical value of the loss function is represented only once, while all other critical points with the same critical value belong to the fiber associated to the quotient space. It also allows to easily determine stable and unstable submanifolds at the saddle points, even when the Hessian fails to obtain them.

	\end{abstract}
	
\section{Introduction}

In the last decade, deep learning has proven remarkably successful in a number of practical applications across different fields, such as computer vision, speech recognition, and natural language processing \cite{lecun2015deep}. 
What is still lagging behind is the theoretical understanding of the reasons underpinning these successes.
From the point of view of optimization, the loss function used in deep learning is typically non-convex, and it is known since \cite{blum1988training} that even on simple examples of neural networks global optimality is an NP-hard problem. 
Furthermore, the input-output map of a deep neural network is highly nonlinear, due to the combination of weight multiplications across layers and nonlinear activation functions at the hidden layers.
Nevertheless, even simple first order gradient descent algorithms typically converge to small loss errors, without getting trapped into poor local minima. 
Furthermore, the trained networks generalize well to new data, in spite of the overparametrization typical of the deep learning regime.

The nonlinear nature of deep neural networks makes it hard to understand what principles may lie behind their efficient performances and what features are crucially contributing to them.
A convenient, yet simplified, setting where to explore these issues is provided by the so-called deep linear neural networks, i.e., multi-layer feedforward neural networks lacking the activation functions.
Even though not more expressible than linear systems (e.g. least-squares for regression problems), such class of deep networks is interesting because it exhibits several of the aforementioned features, like non-convex loss, overparametrization, trainability by gradient descent, nonlinearity of the training dynamics, but at the same time it is mathematically treatable, and its behavior can be understood analytically. 

In fact, in recent years a number of results have appeared in the literature for deep linear neural networks. 
While in the shallow network (i.e., 1 hidden layer) case with quadratic loss it was known since \cite{baldi1989neural} that no local minima can exist, the extension of this result to deep linear networks is due to \cite{kawaguchi2016deep}, who showed that any local minimum is also global, and that any other critical point is a saddle point. The same holds for any convex differentiable loss function provided that the network architecture lacks bottlenecks \cite{laurent2018deep,trager2019pure}.
A simple criterion to classify global minima and saddle points was provided in \cite{yun2017global}. See also \cite{zhou2017critical} for an alternative formulation.
Gradient dynamics of linear neural networks have been studied in many papers, both in continuous \cite{saxe2013exact,arora2019implicit,tarmoun2021understanding,chitour2018geometric,cohen2024lecture} and discrete time \cite{arora2018convergence,arora2018optimization,gidel2019implicit}, showing that different types of initializations lead to different learning dynamics: when the initial condition is near the origin, then the trajectory typically moves along near-flat portions of the loss landscape, with sharp dips when a new singular value of the input-output data is discovered. The learning is sequential, from the largest to the smallest singular value \cite{saxe2013exact,gidel2019implicit}, and it is also referred to as incremental (or saddle-to-saddle) learning \cite{gidel2019implicit,gissin2019implicit,jacot2021saddle}.
When instead a trajectory is initialized away from the origin, learning occurs with a higher convergence rate, and all singular values are learnt simultaneously. 
For wide enough networks, the learning dynamics associated to this type of initialization is often referred to as ``lazy'' learning: the initial values appear to be closer to a global minimum than to a saddle point and the training dynamics behaves as linear \cite{jacot2021saddle}.
The interplay between network width and initialization scale is investigated and related to lazy and ``active'' learning e.g. in \cite{tu2024mixed}. However, width alone is not sufficient to induce lazy learning, as, by carefully choosing the  initialization, it may occur in finitely wide networks \cite{kunin2024get}, and it can be avoided in infinitely wide networks \cite{chizat2024infinite}.
Initializing near the origin means imposing (near) ``balanced'' conservation laws, and leads to ``synchronized'' layers, i.e., along the trajectory and also at the critical point the singular values of the various layers are nearly identical. Also the singular vectors of the layers are normally nearly aligned.
In particular, in the so-called 0-balance case exact solutions are possible for shallow networks \cite{saxe2013exact,domine2023exact}, and also the analysis of the singular values of deep networks simplifies \cite{arora2019implicit}.
Another special case in which singular vectors of the layers stay aligned along a trajectory is the so-called decoupled initialization (also referred to as spectral or training aligned initialization) \cite{saxe2019mathematical,tarmoun2021understanding,gidel2019implicit,min2023convergence,lampinen2018analytic}.
Convergence rate of gradient descent for various initializations and topologies are analyzed in e.g. \cite{arora2018convergence,chitour2018geometric,tarmoun2021understanding,min2023convergence}.

Other papers we rely heavily upon are \cite{chitour2018geometric}, where ``pointwise convergence'' (i.e., convergence of each trajectory to a critical point) is shown using Lojasiewicz's theorem \cite{lojasiewicz1982trajectoires}, and the use of the Hessian quadratic form (instead of the Hessian matrix, which requires a vectorization of the state space) is exploited, and \cite{achour2021loss}, which gives a thorough description and an explicit parametrization of the critical points, including at second order, whereby critical points are classified into strict (when a negative direction in the Hessian quadratic form exists) and non-strict (when such a direction does not exist). 
The paper \cite{bah2022learning} treats gradient dynamics using a Riemannian geometry approach, to show that in the shallow case strict saddle points are almost always avoided \cite{chitour2018geometric,lee2019first}, and investigates aspects of the implicit regularization phenomenon, i.e., the tendency to fit the training data with models of low-complexity.
More specifically, in \cite{bah2022learning,achour2021loss} the global minima obtained once the evolution is restricted to manifolds in which the product of weight matrices is rank-constrained are computed, in a way which is reminiscent of the Eckart-Young theorem for truncated singular value decompositions (SVD). 
Different perspectives on the implicit regularization problem are discussed in \cite{gunasekar2017implicit,du2018algorithmic,li2020towards,arora2019implicit,belabbas2020implicit} and summarized e.g. in \cite{cohen2024lecture,li2020towards} for matrix factorization problems: at least when the initialization is near the origin, the conclusion of \cite{cohen2024lecture,li2020towards} is that the learning process tends to produce ``greedy low rank'' models.

Our aim in this paper is to review some of the results mentioned above, and to give a self-contained overview of the properties of the training dynamics and loss landscape of deep linear neural networks, focusing on the gradient flow equation associated to a quadratic loss function, i.e., on the ODE which is obtained from a gradient descent algorithm when the cost function is a mean squared error and the step size goes to $0$. 
For such ODE the state variables are the parameters of the network, i.e., the edge weights connecting consecutive layers of the network, from inputs to outputs, passing through the hidden layers.
To keep the exposition as simple as possible, we concentrate on the overdetermined and overparametrized case, in which sufficiently many input-output data points are available and the hidden layers have at least as many nodes as the output layer.

Our original contribution is to take a global perspective, and to reformulate the gradient flow equation in terms of the adjacency matrix of the network, rather than dealing with as many distinct matrix ODEs as there are layers in the network. 
This choice allows to compactify the notation, to streamline (and often simplify) the results, and to gain insight into the system properties.
For instance the feedforward nature of the network reflects in the block-shift structure of the adjacency matrix, which is therefore nilpotent. 
Products of the weight matrices corresponding to the different layers become powers of the adjacency matrix, up to the nilpotency index, which corresponds to the number of layers of the network. The adjacency matrix at that power corresponds to the input-output map of the network, and enters into the quadratic loss function.
Concerning the dynamics, when written in terms of the adjacency matrix, the gradient flow of a deep linear neural network forms an interesting class of matrix ODEs: it is polynomial, isospectral, with conservation laws and always converging.
As mentioned above, it does not have any local minima or maxima, but only infinitely many global minima and saddle points, both strict and non-strict. 

Recall that the standard way to understand the nature of the critical points consists in investigating the Hessian of the loss function, which however only provides a classification up to the second order variation, not a complete one. 
For instance, for the quartic loss function $ \mathcal{L}(x)= x^4/4$, the Hessian is vanishing at the critical point $ x=0$, but clearly $ x=0$ is the global minimum for $ \mathcal{L}(x)$, and it is a globally asymptotically stable equilibrium point for the gradient flow $ \dot x = - x^3 $.

For matricial state space representations (as in our case) the Hessian can always be intended as the matrix one obtains by vectorizing the state. 
When the Hessian is nonsingular, positive resp. negative definite Hessians correspond to local minima resp. maxima, while mixed sign eigenvalues correspond to saddle points.
However, when the Hessian is singular, as it is always the case for our gradient flow, determining the stability/curvature character of the critical points becomes a more challenging problem, and looking at the Hessian may no longer suffice. 
In particular, when we are dealing with saddle points, if the Hessian has at least one negative eigenvalue then the saddle point is strict, while when the Hessian is positive semidefinite the saddle point is non-strict. 
While there is literature on the fact that gradient-based algorithms almost surely avoid strict saddle points \cite{lee2019first,bah2022learning}, even non-isolated ones \cite{panageas2016gradient}, in principle the same may not be true for non-strict saddle points.
In fact, non-strict saddle points are often associated to plateaus in the local loss landscape, and around them the convergence of a gradient flow solution may slow down significantly, possibly leading to a premature stopping in a gradient descent training algorithm.
For matrix ODEs, an alternative to vectorizing the state is to maintain it in a matrix form, and to consider the Hessian quadratic form. This is the approach followed in \cite{chitour2018geometric,achour2021loss}, and also in this paper. Strictness or less of the saddle points can be deduced by probing the possible variations (themselves matrices in block-shift form). 
In the paper we also show that it is possible to bypass completely the Hessian picture, and to identify in a systematic way stable and unstable submanifolds of all critical points of the gradient flow through an explicit analysis of the loss landscape.

The singularity of the Hessian is a consequence of the fact that  no critical point of the gradient flow is isolated. In fact, each critical value of the loss function is associated to an unbounded invariant set for the dynamics. 
For instance, the loss function is a Lyapunov function for the gradient flow, but it is only positive semidefinite and it has a continuum of global minima, meaning that none of them can be asymptotically stable.
The fact of having infinitely-many global minima scattered around the state space facilitates convergence (any initial condition has some global minimum ``nearby'') but complicates the description of the loss landscape. 

To better understand this feature, recall that in our formulation the loss function depends on a power of the adjacency matrix, not on the adjacency matrix itself.
Infinitely-many adjacency matrices correspond to the same matrix power, meaning that there are infinitely-many adjacency matrices for each value of the loss function. 
A possible way to characterize the structure of the state space is to split it into  equivalence classes of the matrix power operation. 
The fibers over the quotient space associated to this equivalence relation correspond to all matrices having the same power.
In particular, when we restrict to critical points, the quotient space contains only one representative for each family of equivalent critical points, i.e., each level surface of the loss function (which is also an invariant set of the gradient flow) is associated to a single point in the quotient space.
Our equivalence relation is inspired by the decomposition of loss functions proposed in \cite{trager2019pure}.
In our formulation, the fibers over the quotient space induced by the equivalence relation correspond to changes of basis in hidden node space, plus addition of terms that disappear when taking powers.
Each trajectory of the gradient flow converges to a specific critical point inside the fiber associated to the critical value achieved asymptotically, depending on the initial condition.

Recall that what is being learnt by the gradient flow are the singular values of the input-output data, and that these enter into a power of the adjacency matrix of the deep linear neural network.
It is known since \cite{de1992structure} that the SVD does not extend well to matrix products (and therefore also to matrix powers), see also e.g. \cite{zhang2017matrix}, Ch.~5. 
This difficulty is what complicates the explicit description of the fibers and the  interpretation of the loss landscape: within each fiber over our quotient space are critical points (i.e., adjacency matrices) that look very differently even though they are associated to the same critical value of the loss function.
As already mentioned, also the gradient flow trajectories that converge to such critical points can behave rather differently depending on the initial conditions.
The aforementioned initialization procedures often used in the literature, 0-balance and decoupled, correspond to cases in which the SVD (in a special form which we call block-shift SVD) propagate well to matrix powers, which significantly simplifies the dynamics.
These special cases are investigated in detail in the paper.

The rest of this paper is structured as follows. The standard formulation of the gradient flow equations is presented in Section~\ref{sec:standard}, followed by the reformulation one gets when using adjacency matrices in Section~\ref{sec:adjacency}. Section~\ref{sec:conv-analysis} contains an overview of the convergence analysis and loss landscape of the gradient flow, while Section~\ref{sec:simplifications} gives more details on the simplified dynamics resulting from special initializations. 
Examples are given in Section~\ref{sec:examples}. Finally Section~\ref{sec:extensions} briefly mentions a series of possible extensions of the present work and Section~\ref{sec:conclusion} concludes the paper.

\tableofcontents

\section{Problem formulation}
\label{sec:standard}

\subsection{Notation}
\label{sec:notation}
Vectors are indicated with boldface, lower case, letters, $ \mathbf{b}\in \mathbb{R}^n$, and matrices with upper case (Latin or Greek) letters, $ B \in \mathbb{R}^{n\times m}$, of elements $ [B]_{jk} $. When we consider $B$ as a block matrix, we use the notation $ [B]_{(j,k)}$ to indicate its blocks. 
The symbol $ \1 $ indicates a vector of 1, and $\mathbf{e}_i$ is the elementary vector with 1 in the $i$-th position, while $ \mathbb{E}_{i,j} $ is the matrix having a 1 in the $(i,j)$ slot and 0 elsewhere, and $ \mathbb{I}_{n,m}$ is the $ n\times m $ matrix of $1$. 
Eigenvalues of a matrix $B\in\mathbb{R}^{n\times n}$ are denoted $\lambda_i(B)$, with $\lambda_1(B)\leq...\leq\lambda_n(B)$ and $ {\rm Sp}(B) = \{ \lambda_1(B), \ldots, \lambda_n(B) \}$ is the spectrum. Singular values of a matrix $B\in\mathbb{R}^{m\times n}$ are denoted $\sigma_i(B)$ and indexed in reverse order: $\sigma_1(B)\geq...\geq\sigma_r(B)\geq 0$, $r=\min\{n,m\}$. The null space of $B$ is $\mathcal{N}(B)$, and the column space is $\mathcal{R}(B)$. A matrix is said normal if it commutes with its transpose: $ [B, \, B^\top ] =B B^\top - B^\top B =0 $. We denote the Frobenius norm $\lVert\cdot\rVert_F$, and the associated inner product $ \langle  \cdot, \cdot \rangle_F$.
For a symmetric matrix $B$ which is positive (negative) semidefinite we write $ B \succeq 0 $ ($ B \preceq 0 $).

Given a set of binary variables $ s_i \in \{ 0, \, 1 \}$, $ i =1, \ldots, n$, we denote $ \mathbf{s} = [s_1\, \ldots\, s_n ]^\top $ the associated vector, and $ \mathcal{S} $ the indicator set: $ \mathcal{S} =\{ i \; \text{s.t.} \; s_i =1,\; i =1, \ldots, n  \}$. Sometimes we also call $ S =\diag(\mathbf{s}) $ the diagonal matrix having $ \mathbf{s} $ in the diagonal. 

Table~\ref{tab:symb-table} in Appendix~\ref{sec:symb-table} provides a quick reference to additional notation introduced throughout the paper.

\subsubsection{Basic calculus rules for matrices in block-shift form}
\label{sec:block-shift}
Given a matrix composed of $ (h+1) \times (h+1) $ matrix blocks of compatible dimensions, we use the notation ``$ \block_k(F_1, \ldots, F_{h+1-k}) $'' to indicate the matrix having the blocks $ F_1, \ldots, F_{h+1-k} $ in the $k$-th lower diagonal blocks, $k=0,1, \ldots, h$. 
For instance $ \block_0(F_1, \ldots, F_{h+1}) $ is the block-diagonal matrix,  $ \block_1(F_1, \ldots, F_h) $ is the following matrix in block-shift form
\[
\block_1(F_1, \ldots, F_h) =
\begin{bmatrix} 0 & \ldots &  & 0 \\ 
F_1 & \ddots & & \vdots \\
0  & \ddots & 0 &  \\
  & \ldots & F_{h}& 0  \\
\end{bmatrix},
\;\; 
\text{and} \;\; 
\block_h(F_1) = \begin{bmatrix} 0 & \ldots &  & 0 \\ 
\vdots & \ddots & \\ 0 & \\
F_1 & 0 & \ldots & 0\\
\end{bmatrix}.
\]
Notice that $ \block_2(F_2 F_1, \ldots, F_h F_{h-1}  ) = (\block_1(F_1, \ldots, F_h))^2 $. More generally, if we call $ F= \block_1(F_1, \ldots, F_h)$, then we have a compact way to express the powers of a matrix in block-shift form. For $ j>i$ denoting the matrix product  $ F_{i:j} = F_j F_{j-1}\ldots F_{i+1} F_i $, then $ F^k =  (\block_1(F_1, \ldots, F_h))^k = \block_k(F_{1:k}, \ldots, F_{h+1-k:h}) $, and $ F^h =  (\block_1(F_1, \ldots, F_h))^h = \block_h(F_{1:h})  $. 
The matrix $F$ is nilpotent of nilpotency index $ h+1$. 

If $ F=\block_1(F_1, \ldots, F_h) $ and $ G=\block_1(G_1, \ldots, G_h) $ have the same block sizes, then the first two ``mixed term'' products of order 1 and 2 in $G$ and of total order $h$ are
\begin{align}
F^{h-j} G F^{j-1} & = \block_h(F_h \ldots F_{j+1} G_{j} F_{j-1} \ldots F_1) = \block_h(F_{j+1:h}  G_{j} F_{1:j-1})
\label{eq:FGF} \\
F^{h-j-k} G F^{j-1} G F^{k-1} & = \block_h(F_h \ldots F_{j+k+1} G_{j+k} F_{j+k-1} \ldots F_{k+1} G_{k} F_{k-1} \ldots, F_1) \label{eq:FGFGF} \\
& = \block_h(F_{j+k+1:h}  G_{j+k} F_{k+1: j+k-1}G_{k} F_{1: k-1} ).
\nonumber 
\end{align}

\subsection{Gradient flow for deep linear networks: standard formulation}
\label{sec:grad-flow-basic}

Consider a set of input and output data, $\{\mathbf{x}_i\}^\nu_{i=1}$ and $\{\mathbf{y}_i\}^\nu_{i=1}$, represented in matrix form as $X=\begin{bmatrix}
    \mathbf{x}_1 & ... & \mathbf{x}_\nu
\end{bmatrix}\in\mathbb{R}^{d_x\times \nu}$, and $Y=\begin{bmatrix}
    \mathbf{y}_1 & ... & \mathbf{y}_\nu
\end{bmatrix}\in\mathbb{R}^{d_y\times \nu}$.
A {\em deep linear neural network} is a function $f:\mathbb{R}^{d_x}\rightarrow\mathbb{R}^{d_y}$ connecting the input layer (``layer 0'', with $ d_x$ nodes) to the output layer (``layer $h$'', of $ d_y $ nodes) through $ h-1$ hidden layers (with $ d_1, \ldots, d_{h-1}$ nodes). 
The function $ f$ is parameterized by a set of weight matrices $W_1,...,W_h$,  $W_i\in\mathbb{R}^{d_i\times d_{i-1}} $, $ i=1, \ldots, h$, where we conventionally identify $ d_0= d_x $ and $ d_h=d_y $.
The weight $ [W_i]_{jk}$ corresponds to the weighted edge connecting the $k$-th node of the $ (i-1)$-th layer with the $j$-th node of the $i$-th layer.

The  deep linear neural network can be written compactly as
\begin{equation}
\label{eq:prodW}
    f(X)=f(X;W_1,...,W_h) = W_h\cdots W_1 X = W_{1:h} X.
\end{equation}
The aim is to learn the matrices $W_i$ so that the squared loss
\begin{equation}\label{eq:orig_loss}
    \lVert Y-f(X)\rVert_F^2 = \sum_{i=1}^\nu \lVert \mathbf{y}_i-f(\mathbf{x}_i)\rVert_2^2
\end{equation}
is minimized.

Throughout the paper we make the following standard assumptions.
\begin{assumption}
\label{ass:size}
The regression problem is overdetermined: $ \nu \geq d_x $. $ X $ and $ Y$ have full row rank ($ d_x$ and $ d_y$ respectively). The input and hidden layers of the neural networks are large enough: $ d_i \geq d_y $, $i =x, 1, \ldots, h-1 $.
\end{assumption}
When $ h>1 $, $ f(X)$ is overparametrized, since it is just as expressive as a linear mapping $W\in\mathbb{R}^{d_y\times d_x}$, for which the closed-form solution minimizing $\lVert Y-WX\rVert_F^2$ is the least-squares (LS) solution: $W^\ast=YX^{\dagger}$, where $X^{\dagger}=X^\top(XX^\top)^{-1}\in\mathbb{R}^{\nu\times d_x}$ is the pseudo-inverse of $X$. When the least-squares problem is overdetermined (i.e., $ \nu \geq d_x $, as we assume here), the minimizer $W^\ast$ is unique. Nevertheless, deep linear neural networks exhibit interesting properties during learning, and can serve as a useful analytically-treatable testbed for the investigation of more complex nonlinear deep neural networks.

Under Assumption~\ref{ass:size}, following \cite{chitour2018geometric} (see Appendix~\ref{app:loss-func} for the details), we can equivalently formulate the loss function \eqref{eq:orig_loss} as
\begin{equation}\label{eq:loss}
    \mathcal{L}(W_1,...,W_h) = \frac{1}{2} \lVert \Sigma-W_{1:h}\rVert_F^2,
\end{equation}
where the inputs have been absorbed into the weight matrices, and 
\beq\label{eq:Sigma-diag}
\Sigma=\begin{bmatrix} 
\sigma_1 & 0 & \ldots & 0 & \ldots & 0 \\
 &  \ddots &  & \vdots  & & \vdots \\
 & &  \sigma_{d_y} & 0 & \ldots & 0
 \end{bmatrix}\in\mathbb{R}^{d_y\times d_x}
 \eeq
is a matrix of singular values computed from the data. 

The following assumption of further regularity in the data (generically verified) is also needed.
\begin{assumption}
\label{ass:distinct-sums}
The singular values $ \sigma_i $ are all nonzero and distinct: $ \sigma_i \neq \sigma_j$, $ \sigma_i, \sigma_j \neq 0$, $ i,j=1, \ldots, d_y $. 
Furthermore, all partial sums $ \sum_{j\in \mathcal{S}} \sigma_j^2 $, $ \mathcal{S} \subseteq \{ 1, \ldots, d_y \}$, are distinct. 
\end{assumption}

Under Assumptions~\ref{ass:size} and~\ref{ass:distinct-sums}, the least-squares solution of 
\beq
\mathcal{L}_{\rm LS} (W) = \frac{1}{2} \| \Sigma - W \|_F^2 
\label{eq:LS-W}
\eeq
is trivially $ W^\ast = \Sigma $. It is unique and s.t. $ \mathcal{L}_{\rm LS} (W^\ast ) =0$. 
When instead we consider the loss function~\eqref{eq:loss}, then the loss landscape is much more complex. A preliminary description is given in the following proposition.
A more detailed discussion will be given in Section~\ref{sec:conv-analysis}.

\begin{proposition}
\label{prop:LevelSurf}
Under Assumptions~\ref{ass:size} and~\ref{ass:distinct-sums}, the minimization problem 
\begin{equation}
\label{eq:minL1}
    \min_{W_1, \ldots, W_h } \mathcal{L}(W_1,...,W_h) 
\end{equation}
has an infinite number of optimal solutions, and none of them is isolated.
If $W_1^\ast ,...,W_h^\ast $ is one such optimal solution, then $\mathcal{L}(W_1^\ast ,W_2^\ast ,...,W_h^\ast)= 0$. 
Furthermore, the level sets of $ \mathcal{L}$ 
\[
\mathcal{M}_c = \mathcal{L}^{-1}(c) = \{ W_1, \ldots, W_h  \; \text{s.t.} \; \dot{\mathcal{L}}(W_1,...,W_h) =0, \; \mathcal{L}(W_1,...,W_h) = c \} , \quad c \in \mathbb{R}, \quad c\geq 0 
\]
are of dimension $ \geq \sum_{i=1}^{h-1} d_i^2 $ and unbounded. 
\end{proposition}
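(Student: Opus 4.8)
The plan is to prove the three assertions in turn, relying on \eqref{eq:loss}, which writes $\mathcal L(W_1,\ldots,W_h)=\mathcal L_{\rm LS}(W_{1:h})$ with $\mathcal L_{\rm LS}$ as in \eqref{eq:LS-W}; since $\mathcal L_{\rm LS}\ge0$ with unique minimizer $W^\ast=\Sigma$ of value $0$, we have $\mathcal L\ge0$ everywhere and $\mathcal L(W_1,\ldots,W_h)=0$ exactly when $W_{1:h}=\Sigma$. To see that the minimum in \eqref{eq:minL1} is indeed $0$ it is enough to factor $\Sigma$: writing $\Sigma=[\,\Sigma_y\;\;0\,]$ with $\Sigma_y=\diag(\sigma_1,\ldots,\sigma_{d_y})$, and using that Assumption~\ref{ass:size} gives $d_x\ge d_y$ and $d_i\ge d_y$ at every hidden layer, take $W_1^\ast$ to be $\Sigma_y$ sitting in the leading $d_y\times d_y$ block of the $d_1\times d_x$ zero matrix, $W_i^\ast$ to be $I_{d_y}$ sitting in the leading block of the $d_i\times d_{i-1}$ zero matrix for $2\le i\le h-1$, and $W_h^\ast=[\,I_{d_y}\;\;0\,]\in\mathbb R^{d_y\times d_{h-1}}$; the product telescopes to $\Sigma$. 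Hence the minimum is $0$, and every minimizer satisfies $W_h^\ast\cdots W_1^\ast=\Sigma$, which is the second assertion.

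Infinitude and non-isolation of the minimizers, together with the structure of the $\mathcal M_c$, would follow from the gauge (``rebalancing'') invariance of the product map. For any $(G_1,\ldots,G_{h-1})\in GL(d_1)\times\cdots\times GL(d_{h-1})$, putting $\widehat W_i=G_iW_iG_{i-1}^{-1}$ with $G_0=I_{d_x}$, $G_h=I_{d_y}$, one gets $\widehat W_{1:h}=W_{1:h}$, so $\mathcal L$ is unchanged; moreover a chain-rule computation shows the partial gradients transform as $\widehat\Gamma_j=G_j^{-\top}\,\Gamma_j\,G_{j-1}^{\top}$, so critical points are mapped to critical points. Thus each $\mathcal M_c$ is a union of orbits of the $\bigl(\sum_{i=1}^{h-1}d_i^2\bigr)$-dimensional group $\prod_{i=1}^{h-1}GL(d_i)$; in particular (for $h\ge2$) the orbit through any single minimizer is positive-dimensional --- no weight matrix of a minimizer can vanish, so the stabilizer is a proper subgroup --- giving infinitely many minimizers, none isolated.

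For the third assertion I would handle unboundedness and dimension separately. Unboundedness: pick any critical point in $\mathcal M_c$; unless it is the origin, some consecutive pair $(W_j,W_{j+1})$ has a nonzero entry, and the one-parameter rescaling $(W_j,W_{j+1})\mapsto(tW_j,t^{-1}W_{j+1})$ --- a special case of the gauge action with $G_j=tI$ --- stays in $\mathcal M_c$ while its Frobenius norm tends to $\infty$. Dimension: for $c=0$, identify $\mathcal M_0$ with the fiber $\mu^{-1}(\Sigma)$ of $\mu(W_1,\ldots,W_h)=W_h\cdots W_1$ and apply the constant-rank theorem ($\mu$ is submersive at the factorizations of $\Sigma$ produced under Assumption~\ref{ass:size}); for general $c$, bound $\dim\mathcal M_c$ below by the dimension of the gauge orbit through a critical point of level $c$ chosen with weight factors of maximal rank, so that the gauge stabilizer there is trivial and the orbit is genuinely $\sum_{i=1}^{h-1}d_i^2$-dimensional.

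This dimension count is the main obstacle. The explicit factorization, the gauge invariance, and the rescaling are routine; the work is in turning ``$\dim\mathcal M_c\ge1$'' into ``$\dim\mathcal M_c\ge\sum_{i=1}^{h-1}d_i^2$'', which means controlling the stabilizer of the gauge action --- nontrivial exactly when the layer widths strictly exceed $d_y$. One must therefore select the base critical point carefully (full-rank weight matrices) and verify that the orbit map $(G_1,\ldots,G_{h-1})\mapsto(\widehat W_1,\ldots,\widehat W_h)$ is an immersion at the identity, equivalently that $\mu$ attains its expected rank along the relevant fiber; this is precisely where the width hypotheses of Assumption~\ref{ass:size} enter.
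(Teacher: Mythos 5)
Your proposal follows the same overall route as the paper: reduce to the least--squares reformulation \eqref{eq:loss}--\eqref{eq:LS-W} to get the optimal value $0$, then exploit the gauge invariance $W_i\mapsto P_iW_iP_{i-1}^{-1}$ ($P_0=I_{d_x}$, $P_h=I_{d_y}$) to produce a $\bigl(\sum_{i=1}^{h-1}d_i^2\bigr)$-parameter family of critical points at each level, giving non-isolation, unboundedness (let the $P_i$ grow), and the dimension bound. Two things you do differently, and both are improvements in rigor. First, you supply an explicit factorization of $\Sigma$ as $W_h^\ast\cdots W_1^\ast$ using the width hypothesis $d_i\ge d_y$; the paper's step $0\le\mathcal L(W_1^\ast,\ldots,W_h^\ast)\le\mathcal L_{\rm LS}(W^\ast)=0$ silently uses the fact that $\Sigma$ is attainable as a product, which is exactly your construction. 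Second, you point out that passing from ``the gauge group has dimension $\sum d_i^2$'' to ``$\dim\mathcal M_c\ge\sum d_i^2$'' needs the orbit map to be an immersion, i.e.\ the stabilizer to be trivial. This is a genuine gap in the paper's own proof as written: the derivative of the orbit map at the identity is $(X_1,\ldots,X_{h-1})\mapsto(X_1W_1,\,X_2W_2-W_2X_1,\,\ldots,\,-W_hX_{h-1})$, and its kernel can be nonzero when some $W_i$ is rank-deficient (the extreme case $W_i=0$ gives the trivial orbit). Your suggested repairs are sound in spirit: for $c=0$ identifying $\mathcal M_0$ with the fiber $\mu^{-1}(\Sigma)$ of the product map is a cleaner route, and a fiber-dimension count gives $\dim\mathcal M_0\ge\sum_{i=1}^h d_id_{i-1}-d_xd_y\ge\sum_{i=1}^{h-1}d_i^2$ under Assumption~\ref{ass:size}; for $c>0$, one must verify that the parametrization \eqref{eq:A-param1}--\eqref{eq:Z-param1} of critical points admits a representative whose weight matrices make the orbit map immersive, which you flag but do not carry out. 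So: same approach, with two places where you are more careful than the source, one of which (the stabilizer) you identify but leave open --- as indeed does the paper.
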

\begin{proof}
If $ W^\ast $ is the least-squares solution of \eqref{eq:LS-W} and $ W_1^\ast ,W_2^\ast ,...,W_h^\ast $ is an optimal solution of \eqref{eq:minL1}, from $ 0 \leq \mathcal{L}(W_1^\ast ,W_2^\ast ,...,W_h^\ast) \leq \mathcal{L}_{\rm LS} (W^\ast ) =0$, it follows that $\mathcal{L}(W_1^\ast ,W_2^\ast ,...,W_h^\ast)=0 $. If $ P_i \in \mathbb{R}^{d_i \times d_i } $, $i=1,\ldots, h-1$, are full rank square matrices corresponding to a change of basis at the nodes of the hidden layers, then also $ P_1 W_1^\ast ,P_2 W_2^\ast P_1^{-1} ,...,W_h^\ast P_{h-1}^{-1} $ is an optimal solution of \eqref{eq:minL1}. In particular, by continuity, no solution $ W_1^\ast ,W_2^\ast ,...,W_h^\ast $ is isolated, and the matrices $ P_i $ can be taken arbitrarily large in any norm, hence the level set $ \mathcal{M}_0 $ is unbounded. 
The only requirement on $ P_i $ is that $ \det (P_i) \neq 0$, which is a generic condition. Therefore $ \dim \left\{ P_i \in \mathbb{R}^{d_i \times d_i } \; \text{s.t.} \; \det (P_i)\neq 0 \right\} = d_i^2$, and $ \dim \left( \mathcal{M}_0 \right) \geq  \sum_{i=1}^{h-1} d_i^2$.
The same argument applies for any critical point of $ \mathcal{L}$ and any level surface $ \mathcal{M}_c $ with $ c\geq 0$.
\end{proof}

A standard method for solving \eqref{eq:minL1} is by gradient descent, in which all model parameters $\mathbf{W}=\{W_1,...,W_h\}$ are at each time step $t$ updated via
$\label{eq:descent}
\mathbf{W}(t+1)=\mathbf{W}(t)-\eta\nabla_\mathbf{W} \mathcal{L}
$, using some appropriate step size $\eta$. When the step size goes to 0, a continuous time formulation of gradient descent is obtained, denoted {\em gradient flow}, 
$\label{eq:flow}
\frac{d\mathbf{W}}{dt} = -\nabla_{\mathbf{W}}\mathcal{L}
$, 
which can be expressed block-wise as:
\begin{equation}\label{eq:block_grad}
    \frac{dW_j}{dt} = W_{j+1:h}^\top(\Sigma-W_{1:h})W_{1:j-1}^\top, \;\; j=1, \ldots, h,
\end{equation}
where $ W_{1:j-1}^\top = W_{1}^\top \dots W_{j-1}^\top$ and $ W_{1:0} =I_{d_x}$, see e.g.  \cite{saxe2013exact,arora2018convergence,chitour2018geometric} for a derivation.

\section{Network formulation of gradient flow}
\label{sec:adjacency}

Adopting a global perspective, a deep linear neural network can be represented as a directed acyclic graph $\mathcal{G}=(\mathcal{V},\mathcal{E}, A)$, where $\mathcal{V}$ is the set of $ n_v={\rm card}(\mathcal{ V})=d_x+d_1+...+d_{h-1}+d_y$ nodes, and the edge set $\mathcal{E}$ is defined by the feedforward architecture, with each edge $(k,j)\in \mathcal{E}$ having a weight equal to the corresponding value in the weighted adjacency matrix $A $. Itself, the weighted adjacency matrix $A$ can be written in block-shift form as

\begin{equation}
	A=\block_1 (W_1, W_2 \ldots, W_h) 
 \in \mathbb{R}^{n_v \times n_v}.
 \label{eq:A}
\end{equation}
In next sections the gradient flow equation \eqref{eq:block_grad} is expressed in terms of $A$, and its properties as a matrix ODE are analyzed.

\subsection{Adjacency matrix dynamics}

Denote $ \mathcal{A}$ the set of block-shift matrices \eqref{eq:A}: 
\[
\mathcal{A}= \{ A=\block_1 (W_1, W_2 \ldots, W_h) \in \mathbb{R}^{n_v \times n_v} \;\;\text{s.t.} \; \; W_i \in \mathbb{R}^{d_i \times d_{i-1} } \}.
\]
For us $ \mathcal{A} $ is the state space of the gradient flow, and it has $ d_x d_1 + d_1 d_2 + \ldots + d_{h-1} d_y $ degrees of freedom, corresponding to the weights $ [W_i]_{jk}$, $ i=1, \ldots, h$.

\begin{proposition}\label{prop:grad-A}
The loss function \eqref{eq:loss} can be written in terms of the adjacency matrix $ A \in \mathcal{A}$ as
\begin{equation}
\mathcal{L}(A) = \frac{1}{2} \left\| E- A^h \right\|_F^2 =  \frac{1}{2}  \tr\left( (E-A^h)(E-A^h)^\top \right),
\label{eq:lossA}
\end{equation}
and the gradient flow \eqref{eq:block_grad} as
\begin{equation}
	\dot{A}= - \nabla_A \mathcal{L} = \sum_{j=1}^h \left(A^{h-j}\right)^\top \left( E - A^h \right) \left( A^{j-1}
 \right) ^\top,
 \label{eq:grad-flowA}
\end{equation}
where 
\[
E= \block_h (\Sigma ) .
 \]    
\end{proposition}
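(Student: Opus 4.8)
The plan is to translate the block-wise loss~\eqref{eq:loss} and gradient flow~\eqref{eq:block_grad} into statements about the adjacency matrix $A$ using the block-shift calculus from Section~\ref{sec:block-shift}. The key observation is that with $A = \block_1(W_1,\ldots,W_h)$, the power formula $A^h = \block_h(F_{1:h})$ with $F_i = W_i$ gives $A^h = \block_h(W_{1:h})$, whose only nonzero block is the input-output map $W_{1:h}$ sitting in the bottom-left corner. Since $E = \block_h(\Sigma)$ has $\Sigma$ in precisely that same corner, the difference $E - A^h = \block_h(\Sigma - W_{1:h})$ is again supported on a single block, and the Frobenius norm of a block matrix is the root-sum-of-squares of its blocks; hence $\tfrac12\|E-A^h\|_F^2 = \tfrac12\|\Sigma - W_{1:h}\|_F^2 = \mathcal{L}(W_1,\ldots,W_h)$. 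This establishes~\eqref{eq:lossA}, and the trace expression is just the definition of the Frobenius norm.

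For the gradient flow~\eqref{eq:grad-flowA}, I would first compute $\dot A$ directly from the definition $A = \block_1(W_1,\ldots,W_h)$: differentiating entrywise, $\dot A = \block_1(\dot W_1,\ldots,\dot W_h)$ with the $\dot W_j$ given by~\eqref{eq:block_grad}, i.e. $\dot W_j = W_{j+1:h}^\top (\Sigma - W_{1:h}) W_{1:j-1}^\top$. The task is then to recognize the right-hand side $\sum_{j=1}^h (A^{h-j})^\top (E-A^h)(A^{j-1})^\top$ as a block-shift matrix whose $j$-th subdiagonal block list coincides with $\{\dot W_1,\ldots,\dot W_h\}$. Here I would apply the mixed-term identity~\eqref{eq:FGF}: taking $F = A^\top$... more carefully, since transposes of block-shift matrices are block-shift in the opposite direction, I would note $(A^{h-j})^\top = (A^\top)^{h-j}$ and observe that $(A^\top)^{h-j}(E-A^h)(A^\top)^{j-1}$ is a product of an upper-block-triangular nilpotent, the single-block matrix $E - A^h = \block_h(\Sigma - W_{1:h})$, and another upper-block-triangular nilpotent. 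Multiplying these out block-by-block, the surviving term lands in the $h$-th subdiagonal... no — summing over $j$ redistributes the single corner block of $E-A^h$ across all the subdiagonals, and the coefficient of the $j$-th subdiagonal block works out to $W_{j+1:h}^\top (\Sigma - W_{1:h}) W_{1:j-1}^\top$, exactly matching~\eqref{eq:block_grad}.

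Alternatively, and perhaps more cleanly, I would verify~\eqref{eq:grad-flowA} variationally: for a perturbation $\delta A = \block_1(\delta W_1,\ldots,\delta W_h)$, compute $\tfrac{d}{dt}\mathcal{L}(A + t\,\delta A)|_{t=0}$. Using $\tfrac{d}{dt}(A+t\delta A)^h|_{t=0} = \sum_{j=1}^h A^{h-j}\,\delta A\, A^{j-1}$ (the Leibniz rule for the $h$-th power of a sum), the chain rule gives $\langle \nabla_A \mathcal{L}, \delta A\rangle_F = -\langle E - A^h,\ \sum_{j=1}^h A^{h-j}\delta A\, A^{j-1}\rangle_F$, and cycling each term of the sum through the Frobenius inner product yields $-\sum_{j=1}^h \langle (A^{h-j})^\top (E-A^h)(A^{j-1})^\top,\ \delta A\rangle_F$; since this holds for all $\delta A \in \mathcal{A}$, the formula~\eqref{eq:grad-flowA} follows (with the understanding that $\nabla_A\mathcal{L}$ is the component of the unconstrained gradient lying in $\mathcal{A}$, which one checks is automatic because $(A^{h-j})^\top(E-A^h)(A^{j-1})^\top$ already has block-shift structure). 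The one genuine subtlety — the main obstacle — is bookkeeping the block indices in the mixed-term expansion to confirm that the $j$-th subdiagonal block of the right-hand side of~\eqref{eq:grad-flowA} is precisely $\dot W_j$ from~\eqref{eq:block_grad}, rather than some permuted or shifted version; the variational route largely sidesteps this by never unpacking into blocks, so I would present that as the primary argument and relegate the block-shift verification to a remark.
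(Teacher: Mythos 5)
Your proposal is correct and your primary (variational) argument is essentially the paper's proof: the paper also identifies $A^h=\block_h(W_{1:h})$ to get \eqref{eq:lossA}, and then computes $\nabla_A\mathcal{L}$ by differentiating the trace form of the loss using the product rule on powers of $A$, yielding exactly $\sum_j (A^{j-1})^\top(E-A^h)(A^{h-j})^\top$ (which after reindexing $j\mapsto h+1-j$ matches \eqref{eq:grad-flowA}). Your added remark that one should a priori only recover the projection of the unconstrained gradient onto $\mathcal{A}$, and that this is harmless because each term $(A^{h-j})^\top(E-A^h)(A^{j-1})^\top$ already sits on the first block subdiagonal, is a point the paper defers to Proposition~\ref{prop:A-block-struct}; it is good that you flag it.
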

 
\begin{proof}
First notice that, from Section~\ref{sec:block-shift}, the powers of the adjacency matrix $A$ are 
\[\label{eq:A2_Ak}
	A^2=\block_2(W_2 W_1, W_3 W_2 \ldots, W_h W_{h-1} )  
 \]
 and, iterating, 
 \beq
 \label{eq:Apowerh}
    A^h=\block_h (W_{1:h}) ,
\eeq
from which the expression for the loss function follows straightforwardly, observing that
\begin{equation*}
	E - A^{h}= \block_h ( \Sigma-W_{1:h} ).
\end{equation*}
The derivative of the loss function \eqref{eq:lossA} is
\begin{equation}
\label{eq:dotL}
\dot{\mathcal{L}}= 
\Bigl \langle \nabla_A \mathcal{L}  ,\,  \dot A \Bigr \rangle_F 
\end{equation}
where, using the denominator convention for differentials of scalars with respect to matrices \cite{Petersen2012matrix}
\begin{align*}
\nabla_A \mathcal{L}  = \frac{\partial \mathcal{L}(A) }{\partial A} & = 
- \frac{\partial }{\partial A} \bigl \langle E, \, A^h \bigr \rangle _F + \frac{1}{2} \frac{\partial }{\partial A} \left\| A^h \right\|_F^2 \\
& = -  \frac{\partial  }{\partial A} \tr\left( (A^h)^\top E \right) +  \frac{\partial  } {\partial A} \tr \left((A^h)^\top A^h \right)  \\
& = - \sum_{j=1}^h (A^{j-1})^\top  E (A^{h-j} )^\top
+  \sum_{j=1}^h (A^{j-1})^\top A^h (A^{h-j})^\top \\
& = - \sum_{j=1}^h (A^{j-1})^\top \left( E - A^h \right)( A^{h-j})^\top.
\end{align*}
Hence, choosing for $ \dot A$ the negated gradient direction $ - \frac{\partial \mathcal{L}(A) }{\partial A}  $, we get the gradient flow equation \eqref{eq:grad-flowA}.
\end{proof}

\begin{proposition}
\label{prop:A-block-struct}
The matrix $A(t)$ solving \eqref{eq:grad-flowA} is nilpotent and has the block structure \eqref{eq:A} for all $t$: $ A(0)\in \mathcal{A} \,\Longrightarrow \, A(t) \in \mathcal{A}$ $ \forall \, t>0$. Consequently, the gradient flow \eqref{eq:grad-flowA} is isospectral.
\end{proposition}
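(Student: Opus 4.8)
The plan is to prove the three assertions in sequence, since the last two are corollaries of the first: (i) the block-shift set $\mathcal{A}$ is invariant under the flow \eqref{eq:grad-flowA}; (ii) consequently $A(t)$ is nilpotent for every $t$; (iii) consequently the flow is isospectral. Essentially all the content is in (i).

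For (i), the key structural remark is that $\mathcal{A}$ is a \emph{linear} subspace of $\mathbb{R}^{n_v\times n_v}$, so its tangent space at every point is $\mathcal{A}$ itself. Hence it suffices to check that the (polynomial, globally smooth) vector field on the right-hand side of \eqref{eq:grad-flowA} maps $\mathcal{A}$ into $\mathcal{A}$. Once this tangency is established, invariance of $\mathcal{A}$ under the flow follows from the standard argument: the vector field is locally Lipschitz, so Picard--Lindel\"of gives a unique trajectory through each initial datum on its maximal interval of existence; the ODE restricted to the linear subspace $\mathcal{A}$ has its own (unique) solution there, which by uniqueness in the ambient space must be the ambient trajectory, so $A(0)\in\mathcal{A}$ forces $A(t)\in\mathcal{A}$ throughout. (Equivalently, one may short-circuit this using Proposition~\ref{prop:grad-A}: for $A\in\mathcal{A}$ the ODE \eqref{eq:grad-flowA} is literally \eqref{eq:block_grad} written in block form, and $t\mapsto\block_1(W_1(t),\dots,W_h(t))$ built from the solution of \eqref{eq:block_grad} is by construction a curve in $\mathcal{A}$ solving \eqref{eq:grad-flowA}; uniqueness then identifies it with the ambient solution.)

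The tangency check is the main, if routine, computation, done by bookkeeping block positions with block rows and columns indexed $0,\dots,h$. For $A\in\mathcal{A}$ we have $A^k=\block_k(\dots)$ from Section~\ref{sec:block-shift}, so $(A^k)^\top$ is supported on the block positions $(a,a+k)$; moreover $E-A^h=\block_h(\Sigma-W_{1:h})$ is supported on the single block position $(h,0)$. In the $j$-th summand $(A^{h-j})^\top(E-A^h)(A^{j-1})^\top$, the left factor can only meet the nonzero $(h,0)$ block of $E-A^h$ through its own block at $(j,h)$, giving an intermediate matrix supported at $(j,0)$; right-multiplying by $(A^{j-1})^\top$, whose blocks sit at $(m,m+j-1)$, forces $m=0$ and leaves a matrix supported on the single block $(j,j-1)$, with entry $W_{j+1:h}^\top(\Sigma-W_{1:h})W_{1:j-1}^\top$. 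Summing over $j=1,\dots,h$ yields $\dot A=\block_1(\dot W_1,\dots,\dot W_h)\in\mathcal{A}$ and, as a bonus, recovers \eqref{eq:block_grad}. The extreme cases $j=1$ and $j=h$, where $A^{0}=I$, should be written out separately but behave the same way. I expect the only place needing care is keeping the index ranges and the ordering convention $W_{i:j}=W_j\cdots W_i$ consistent; there is no genuine obstacle here.

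Finally, (ii) and (iii) are immediate. Every $A\in\mathcal{A}$ is block strictly lower triangular with $h+1$ block rows, hence $A^{h+1}=0$ by the nilpotency observation in Section~\ref{sec:block-shift}; since $A(t)\in\mathcal{A}$ for all $t$ by (i), $A(t)$ is nilpotent (of index at most $h+1$) for all $t$. A nilpotent matrix has spectrum $\{0,\dots,0\}$ with $n_v$ copies, so ${\rm Sp}(A(t))$ is constant along every trajectory, i.e., the flow \eqref{eq:grad-flowA} is isospectral.
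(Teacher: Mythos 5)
Your proof is correct and takes essentially the same route as the paper: the core of both arguments is the block-position bookkeeping showing that each summand $(A^{h-j})^\top(E-A^h)(A^{j-1})^\top$ lands on the single block $(j+1,j)$ (your $(j,j-1)$ in zero-based indexing), so $\dot A\in\mathcal{A}$, after which nilpotency and isospectrality are immediate. You are a bit more explicit than the paper about the ODE-theoretic scaffolding (linear subspace, tangency, Picard--Lindel\"of uniqueness), which the paper compresses into the phrase that no new edges can be created, but the substance is identical.
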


\begin{proof}
We want to show that we can write the dynamics of the blocks \eqref{eq:block_grad} in terms of $A$. Denoting $\Xi=\Sigma-W_{1:h}$, and rearranging the matrix ODEs \eqref{eq:block_grad} as blocks according to the construction of $A$ in \eqref{eq:A}:
\begin{equation}\label{eq:matr_ode_general}
\begin{split}
\frac{d}{dt} \block_1(W_1, \ldots, W_h) 
 = \block_1(W_{2:h}^\top \Xi,W_{3:h}^\top \Xi W_1^\top , W_{4:h}^\top \Xi W_{1:2}^\top,  \ldots ,  \Xi W_{1:h-1}^\top ). \\ 
\end{split}
\end{equation}
The left-hand side is clearly $ \dot A$. 
To show that indeed the right-hand side is equal to \eqref{eq:grad-flowA}, recall from Section~\ref{sec:block-shift} that the $ k$-th power of $A$ is 
\begin{equation}
    \label{eq:Apowerk}
	A^k=\block_k (W_{1:k}, W_{2:k+1}, \ldots , W_{h-k:h-1}, W_{h-k+1:h}) ,
\end{equation}
that is, $W_{\ell:k}=[A^{k-\ell+1}]_{(k+1,\ell)}$, or $W_{\ell:k}^\top=[A^{k-\ell+1}]^\top_{(\ell,k+1)}$, where sub-index is block-wise. The size of the blocks is preserved since $W_k$ remains the rightmost factor in the $k$-th block column, and $W_\ell$ remains the leftmost factor on the $(\ell+1)$-th block row. 
We can write \eqref{eq:block_grad} as
\begin{equation}\label{eq:new_block_ode}
	\frac{dW_j}{dt} = W_{j+1:h}^\top \Xi W_{1:j-1}^\top = [A^{h-j}]^\top_{(j+1,h+1)} \Xi  [A^{j-1}]^\top_{(1,j)} = \left[(A^{h-j})^\top (E - A^{h}) (A^{j-1})^\top\right]_{(j+1,j)},
\end{equation}
where we have used that $ E - A^h = \block_h(\Xi)$.
Combining \eqref{eq:matr_ode_general} and \eqref{eq:new_block_ode} we have
\begin{equation*}
	\left(\frac{dA}{dt}\right)_{(j+1,j)} = \frac{dW_j}{dt} = \left[(A^{h-j})^\top (E - A^{h}) (A^{j-1})^\top\right]_{(j+1,j)}.
\end{equation*}
The matrices $(A^{h-j})^\top$ and $(A^{j-1})^\top$ contain supra-diagonal blocks. Multiplying $(A^{h-j})^\top$ with $(E - A^{h})$ from the right leaves a single nonzero block at block position $(j+1,1)$. Now multiplying this product with $(A^{j-1})^\top$ from the right leaves a single nonzero block at block position $(j+1,j)$, while all other blocks are 0. 
Hence $A(t)$ and its time derivative have indeed identical block structure, implying that no new edges outside these common blocks can be created by the gradient flow ODE and hence that $ A(t) \in \mathcal{A} $ $ \forall \, t$.
It can easily be seen from the structure of $A$ that $A(t)$ is nilpotent with $(A(t))^{h+1}=0$. It follows from the previous arguments that $ A(t)$ stays nilpotent for all $t$, and therefore the isospectral property also follows: $ \lambda_i(A(t))=0$ for all $i$ and for all $t$. 
\end{proof}

\subsection{Conservation laws}
\label{sec:conserv_law}

The ODE \eqref{eq:grad-flowA} encodes a number of conservation laws.
Some occur trivially because $ A(t) $ is nilpotent: $ \lambda_i(A(t)) =0 $ for all $ i =1,\ldots, n_v $ and for all $ t$, and $ \tr(A(t)^k) =0 $ for all $  k\in \mathbb{N}$.
Other, less trivial, are known in the literature \cite{saxe2013exact,chitour2018geometric} and can be expressed in terms of $A$ and $A^\top$.
Denote \begin{equation}
\label{eq:Q}
Q(t) = [A(t), \, A(t)^\top ]= A(t) A(t)^\top - A(t)^\top A(t) 
\end{equation} 
and
\[
\mathcal{C} =\{ C \in \mathbb{R}^{n_v \times n_v }\;  \text{s.t.} \; C=  \block_0 (0_{d_x}, C_1, \ldots, C_{h-1} , 0_{d_y} ) \}
\]
where $ C_i = C_i^\top \in \mathbb{R}^{d_i \times d_i } $ are constant matrices.
Let also $ J= \block_0 (0_{d_x}, I_{d_1} \ldots  I_{d_{h-1}} , 0_{d_y} ) $.

\begin{proposition}
\label{prop:conserv-law-A}
For the gradient flow \eqref{eq:grad-flowA}, we have that $ J Q(t) = C$, for some constant $ C \in \mathcal{C} $ and for all $ t$. 
\end{proposition}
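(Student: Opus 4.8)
The plan is to reduce the claim to the classical layer-wise ``balance'' conservation law and then read the statement off the block structure of $A$.

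\textbf{Step 1 (block form of $Q$).} First I would compute $AA^\top$ and $A^\top A$ blockwise from $A=\block_1(W_1,\ldots,W_h)$. Since the only nonzero blocks of $A$ are $[A]_{(i+1,i)}=W_i$, each of these products collapses to a block-diagonal matrix,
\[
AA^\top=\block_0(0_{d_x},\,W_1W_1^\top,\ldots,W_hW_h^\top),\qquad A^\top A=\block_0(W_1^\top W_1,\ldots,W_h^\top W_h,\,0_{d_y}),
\]
so $Q=AA^\top-A^\top A$ is block-diagonal with diagonal blocks $-W_1^\top W_1$, then $W_iW_i^\top-W_{i+1}^\top W_{i+1}$ for $i=1,\ldots,h-1$, then $W_hW_h^\top$. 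Multiplying by $J$ annihilates the first and last diagonal blocks, so $JQ(t)=\block_0(0_{d_x},\,W_1W_1^\top-W_2^\top W_2,\ldots,W_{h-1}W_{h-1}^\top-W_h^\top W_h,\,0_{d_y})$. Each surviving block is symmetric, so $JQ(t)$ has, for every $t$, exactly the form required for membership in $\mathcal{C}$; it remains only to show that these blocks are constant in $t$.

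\textbf{Step 2 (balance law).} Next I would show $\frac{d}{dt}\bigl(W_iW_i^\top-W_{i+1}^\top W_{i+1}\bigr)=0$ for $i=1,\ldots,h-1$ directly from \eqref{eq:block_grad}. Writing $\Xi=\Sigma-W_{1:h}$ and $\dot W_j=W_{j+1:h}^\top\Xi W_{1:j-1}^\top$ (with the conventions $W_{1:0}=I_{d_x}$, $W_{h+1:h}=I_{d_y}$), a short computation using the telescoping identities $W_{i+2:h}W_{i+1}=W_{i+1:h}$ and $W_iW_{1:i-1}=W_{1:i}$ gives
\[
\tfrac{d}{dt}(W_iW_i^\top)=W_{i+1:h}^\top\Xi W_{1:i}^\top+W_{1:i}\Xi^\top W_{i+1:h}=\tfrac{d}{dt}(W_{i+1}^\top W_{i+1}),
\]
so $W_iW_i^\top-W_{i+1}^\top W_{i+1}=:-C_i$ is a constant symmetric matrix, and hence $JQ(t)\equiv C:=\block_0(0_{d_x},-C_1,\ldots,-C_{h-1},0_{d_y})\in\mathcal{C}$, as claimed.

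\textbf{Alternative and main obstacle.} Equivalently, one can argue intrinsically at the level of $A$: differentiate $Q=AA^\top-A^\top A$ along \eqref{eq:grad-flowA} and check that the diagonal blocks indexed by the hidden layers $1,\ldots,h-1$ of $\dot A\,A^\top+A\,\dot A^\top-\dot A^\top A-A^\top\dot A$ vanish; this is the same cancellation repackaged and avoids returning to the per-layer formulation. The only genuine difficulty is bookkeeping — keeping the order of the products $W_{i:j}$ and their transposes straight, and handling the boundary indices $i=1$ and $i=h-1$ through the stated conventions — after which the cancellation is purely algebraic and uses nothing about convergence of the flow or about $\Sigma$ beyond its entering linearly through $\Xi$.
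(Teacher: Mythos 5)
Your proof is correct. Step~1 gives the block-diagonal form of $Q$ exactly as the paper records just after its own proof, and Step~2 directly verifies the classical per-layer balance law $\frac{d}{dt}\bigl(W_iW_i^\top - W_{i+1}^\top W_{i+1}\bigr)=0$ from the block ODE \eqref{eq:block_grad}, using the telescoping identities $W_{i+2:h}W_{i+1}=W_{i+1:h}$ and $W_iW_{1:i-1}=W_{1:i}$; the conclusion then follows because $J$ kills the two boundary blocks of $Q$. The calculation in Step~2 checks out.

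The paper proceeds along the route you label ``alternative'': it stays entirely at the level of the adjacency matrix, differentiating $Q=AA^\top-A^\top A$ along \eqref{eq:grad-flowA} and collapsing the resulting sum by a telescoping argument to $\dot Q = -(A^h)^\top M + M(A^h)^\top + A^h M^\top - M^\top A^h$, with $M=E-A^h$; since $A^h$ and $M$ are both in $\block_h$ form, this expression has nonzero blocks only in diagonal positions $(1,1)$ and $(h+1,h+1)$, giving $J\dot Q=0$. Your main route is closer to the classical layer-wise formulation of this conservation law (as in \cite{chitour2018geometric}) and is slightly more elementary, at the cost of returning to the per-layer notation; the paper's route stays uniform with its global $A$-matrix framing and yields the compact commutator expression for $\dot Q$ as a byproduct, which the paper reuses in \eqref{eq:dotQ-comm}. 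Both approaches are correct and essentially equivalent in content.
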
 

\begin{proof}
In the proof we omit the dependence from $t$ for brevity.
Denoting $M=E-A^h$, and differentiating $ Q$, after some calculations one gets that in the following summation all terms cancel except those in $ A^h$
\begin{align*}
\dot{Q} 
& = \sum_{j=1}^h \left( (A^{h-j})^\top M (A^j)^\top - (A^{h-j+1})^\top M (A^{j-1})^\top + A^{j} M^\top  A^{h-j}-  A^{j-1} M^\top  A^{h-j+1} \right) \\
& = - (A^h)^\top  M + M (A^h )^\top + A^h M^\top - M^\top A^h \\
& = \block_0 \left( - (\Sigma - W_{1:h} )^\top W_{1:h} - W_{1:h}^\top (\Sigma - W_{1:h} ) , 0, \ldots, 0, ( \Sigma - W_{1:h} ) W_{1:h}^\top + W_{1:h} ( \Sigma - W_{1:h} )^\top \right), \nonumber 
\end{align*}
i.e., all intermediate blocks are vanishing except the first and the last, implying $ J \dot{Q}=0$. 
\end{proof}
By construction $ Q $ is symmetric: $ (A A^\top - A^\top A )^\top = A A^\top - A^\top A $.
Observing that 
\[
A A^\top =\block_0 (0, W_1 W_1^\top, \ldots,  W_h W_h^\top)
\]
and 
\[
A^\top A =\block_0 (W_1^\top W_1, \ldots,  W_h^\top W_h, 0)
\]
we have that $ Q $ is block-diagonal:
\[
Q = \block_0 (- W_1^\top W_1, W_1 W_1^\top - W_2^\top W_2 , \ldots,W_{h-1} W_{h-1}^\top - W_h^\top W_h ,W_h W_h^\top  ).
\]
All diagonal blocks of $ Q$, except the first and last, are the conservation laws considered in the literature: $ W_i W_i^\top - W_{i+1}^\top W_{i+1} = C_i = {\rm const}$ for all $t$, $ i=1,\ldots, h-1$, see \cite{chitour2018geometric}. 
Because of the symmetry in $C_i $, the number of such invariants of motion is at most $  \frac{1}{2} \left(  d_1(d_1+1) + \ldots + d_{h-1} (d_{h-1}+1)\right) $.

The expression \eqref{eq:Q} allows to interpret such conserved quantities as ``conservation of non-normality'' at the hidden layers of the deep linear network.

Notice that $ \dot{Q} $ can be expressed as the symmetric part of the matrix commutation $ [A^h, \, M^\top ] $ or as the symmetric part of $ [M, \, (A^h)^\top ]$:
\begin{align}
\dot{Q} 
& = \sum_{j=1}^h \left( (A^{h-j})^\top \,[ M, \, A^\top ]\, (A^{j-1})^\top + A^{j-1} \,[ A, \, M^\top ]\, A^{h-j} \right) \nonumber \\
&  = [A^h, \, M^\top ] + \left( [A^h, \, M^\top ] \right)^\top = [ M ,\,  (A^h )^\top ] +\left( [ M, \, (A^h )^\top] \right)^\top \nonumber \\
& =  [A^h, \, E^\top ] + \left( [A^h, \, E^\top ]\right)^\top  + 2  [A^h, \, (A^h)^\top ]. \label{eq:dotQ-comm}
\end{align}
As shown in the proof of Proposition~\ref{prop:conserv-law-A}, such terms belong to the first and last diagonal blocks of $ \dot{Q} $, so that $ J\dot{Q} =0$. 
Notice that $ J\dot{Q} =0$ corresponds to $ \langle J \pde{Q}{A} , \, \dot A \rangle_F =0$, i.e., $ J \pde{Q}{A}  $ and $ \dot A $ are orthogonal for all $ t$ and all $ A\in \mathcal{A}$. 
This can be reinterpreted geometrically as follows.
If instead of computing $ \dot Q $ we consider the differential $ d Q(A) $, then $ J dQ(A) $ corresponds to a matrix of one forms which are exact (as they are differentials of functions).
Hence the co-distribution $ \Delta_Q(A) ={\rm span} \{ J d Q(A) \}$ is integrable and orthogonal to the gradient flow: $ \langle \Delta_Q(A), \, \nabla_A \mathcal{L}(A) \rangle_F =0$ for all $ t$ and all $ A\in \mathcal{A}$.

\subsection{Hessian quadratic form}
For a matrix ODE like \eqref{eq:grad-flowA}, the Hessian is a 4-tensor. If we want to avoid working with such high-dimensional tensors, one alternative is to vectorize the ODE \cite{kawaguchi2016deep}.
The other alternative, which we follow in this paper, is to consider the Hessian quadratic form \cite{chitour2018geometric,achour2021loss}. 
Let us construct a Taylor expansion at a critical point $ A$ of \eqref{eq:grad-flowA} along the direction given by another $ V \in \mathcal{A}$.
Standard calculations lead to: 
\[
\mathcal{L}(A+ \epsilon V) = \mathcal{L}(A) - \epsilon \, \tr \left(\rho_1(A, V) (E - A^h)^\top \right) + \frac{\epsilon^2}{2}  \tr \left(\rho_1(A,V) \rho_1(A,V)^\top  - 2 \rho_2(A,V) (E - A^h)^\top \right) + o(\epsilon^3) 
\]
where 
\begin{align}
\rho_1(A,V) & = \sum_{j=1}^h A^{h-j} V A^{j-1} \label{eq:rho1}
\\
\rho_2(A,V) & = \sum^{h-2}_{ i, j, k =0 \atop i+j+k=h-2 
} A^i V A^j V A^k.
  \label{eq:rho2} 
\end{align}

Notice from \eqref{eq:rho1} that the first variation of $ \mathcal{L} $ along $ V$ is 
\[
\tr \left( \rho_1(A, V) (E - A^h)^\top \right) = 
\tr \left( \sum_{j=1}^h A^{j-1}  (E - A^h)^\top A^{h-j} V \right) =
\tr \left( \sum_{j=1}^h ( A^{h-j})^\top  (E - A^h) ( A^{j-1} )^\top V^\top \right) ,
\]
from which the gradient flow equation \eqref{eq:grad-flowA} follows right away (recall we use the denominator convention, which requires to consider the transpose in the direction to differentiate, i.e., $ V^\top$).
The Hessian quadratic form is given by the second variation terms:
\beq
\mathfrak{h}(A,V) =  \| \rho_1(A,V)\|_F^2 - 2 \tr \left( \rho_2(A,V) (E - A^h)^\top \right).
\label{eq:hess-quadr-form}
\eeq

\subsection{A block-shift singular value decomposition for $A$}
\label{sec:block-shift-svd}

The following proposition introduces a block-shift equivalent of the SVD to be used for the adjacency matrix $ A = \block_1(W_1, \ldots , W_h)\in \mathcal{A} $. It is obtained assembling the SVD at the individual blocks $ W_i $.

\begin{proposition}
\label{prop:svdA2}
Let $ A= \block_1(W_1, \ldots , W_h)\in \mathcal{A} $. If $ W_i = U_i \Lambda_i V_i^\top $ is the SVD of $W_i $, then $A = U \Lambda V^\top $, where $ \Lambda=\block_1( \Lambda_1, \ldots, \Lambda_h) $ plays the role of ``block-shift'' singular values of $A$, and $ U =\block_0(0, U_1, \ldots, U_h ) $ resp. $ V=\block_0(V_1, \ldots V_h, 0 ) $ are matrices containing the left resp. right singular vectors of the blocks $ W_i$. 
\end{proposition}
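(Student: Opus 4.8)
The statement is a pure block-matrix identity, so the plan is simply to verify $A = U\Lambda V^\top$ by carrying out the product blockwise, exploiting the fact that the block-diagonal structure of $U$ and $V$ is compatible with the block-shift structure of $\Lambda$. First I would fix conventions: take the \emph{full} SVD $W_i = U_i\Lambda_i V_i^\top$ of each block, so that $U_i\in\mathbb{R}^{d_i\times d_i}$ and $V_i\in\mathbb{R}^{d_{i-1}\times d_{i-1}}$ are orthogonal and $\Lambda_i\in\mathbb{R}^{d_i\times d_{i-1}}$ carries the singular values. Then the assembled matrices $U=\block_0(0_{d_x},U_1,\ldots,U_h)$, $\Lambda=\block_1(\Lambda_1,\ldots,\Lambda_h)$ and $V=\block_0(V_1,\ldots,V_h,0_{d_y})$ are all $n_v\times n_v$ with mutually compatible block partitions, so the products below are well defined.

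Next I would compute $\Lambda V^\top$. Since $V^\top=\block_0(V_1^\top,\ldots,V_h^\top,0_{d_y})$ is block-diagonal and $\Lambda$ carries $\Lambda_k$ in its first lower block-diagonal, the bookkeeping rule from Section~\ref{sec:block-shift} for multiplying a block-shift matrix by a block-diagonal one gives $\Lambda V^\top=\block_1(\Lambda_1 V_1^\top,\ldots,\Lambda_h V_h^\top)$, each factor $\Lambda_k V_k^\top$ being $d_k\times d_{k-1}$ as required. Left-multiplying by the block-diagonal $U=\block_0(0_{d_x},U_1,\ldots,U_h)$ then puts $U_k$ against the block in position $(k+1,k)$, producing $U\Lambda V^\top=\block_1(U_1\Lambda_1 V_1^\top,\ldots,U_h\Lambda_h V_h^\top)=\block_1(W_1,\ldots,W_h)=A$. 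I would also point out that the two padding zero blocks — the $(1,1)$ block of $U$ and the $(h{+}1,h{+}1)$ block of $V$ — never enter the computation, because the first block-column of $\Lambda V^\top$ and the last block-row of $\Lambda$ vanish by the shift; they are present only to make $U$ and $V$ square of order $n_v$, and could equally be replaced by identities.

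I do not expect any genuine obstacle: the entire content is in the index bookkeeping. The only point requiring care is the alignment — checking that the $k$-th lower block-diagonal of $A$ receives exactly $W_k$ (not a shifted copy or a transpose) and that the dimension chain $W_k\colon d_k\times d_{k-1}$ threads consistently through $U_k$, $\Lambda_k$, $V_k^\top$ so that all three assembled factors are genuinely $n_v\times n_v$. A remark worth adding after the proof — though not part of it — is that, unlike the ordinary SVD, this block-shift decomposition does \emph{not} pass to powers of $A$ in general: one has $A^k=U(\Lambda V^\top U)^{k-1}\Lambda V^\top$ and the block-diagonal matrix $V^\top U$ need not be trivial, which is exactly the obstruction of \cite{de1992structure} mentioned in the introduction and the reason the special initializations of Section~\ref{sec:simplifications} are singled out.
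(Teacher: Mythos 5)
Your proof is correct and takes the same route as the paper, which simply observes $A=\block_1(U_1\Lambda_1 V_1^\top,\ldots,U_h\Lambda_h V_h^\top)$ and leaves the blockwise multiplication of $U$, $\Lambda$, $V^\top$ to the reader. One small slip in your side remark on the padding blocks: the $(1,1)$ block of $U$ is irrelevant because the first block-\emph{row} of $\Lambda V^\top$ vanishes, and the $(h{+}1,h{+}1)$ block of $V$ is irrelevant because the last block-\emph{column} of $\Lambda$ vanishes — you have the row/column roles reversed, though this does not affect the validity of the main computation.
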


The proof is a straightforward calculation, after noting that $ A =\block_1 (U_1 \Lambda_1 V_1^\top, \ldots, U_h \Lambda_h V_h^\top ) $.

While the block-shift SVD described in Proposition~\ref{prop:svdA2} is an unconventional SVD because the singular values are in the lower diagonal blocks, it is possible to obtain also a ``standard'' SVD for $A$, see Appendix~\ref{sec:svd}.

\subsection{Equivariance of gradient flow dynamics to orthogonal transformations}

Denote
\[
\mathcal{P} = \{ P =\block_0 (I_{d_x},\, P_1, \, \ldots , P_{h-1} , I_{d_y}),  \text{  where  } P_i \in \mathbb{R}^{d_i \times d_i } , \;\; \det ( P_i)\neq 0 \}
\]
the set of linear changes of basis acting on the hidden nodes of $ A \in \mathcal{A}$.
Notice how $ \mathcal{P} $ is composed of $ d_{h-1}$ disconnected components of similarity transformations, one for each hidden layer.
Each block of weights $ W_i $ is affected by the changes of basis happening at its upstream and downstream hidden nodes. In fact, if $ A_j =\block_1( W_1^{(j)}, \ldots , W_h^{(j)})$, $ j=1,2$, then $ A_2 = P A_1 P^{-1} $ corresponds to $ W_i^{(2)}= P_i W_i^{(1)} P_{i-1}^{-1}$.

One can also ask when is the structure of the gradient flow system \eqref{eq:grad-flowA} compatible with a constant change of basis $P$. 
While this is not true in general, applying a change of basis $P \in \mathcal{P}$ such that each $ P_i $ is orthogonal does not alter the dynamics.
This is pointed out in \cite{saxe2019mathematical} (Suppl.), and shown explicitly below.
Denote $ \mathcal{P_O}  $ the subclass of orthogonal changes of basis in the hidden nodes: $ \mathcal{P_O} = \{ P \in \mathcal{P} \text{ s.t. }  P^{-1} = P^\top  \} $.

\begin{proposition}
\label{prop:constants-equival}
Consider the gradient flow system \eqref{eq:grad-flowA}. 
If $ A_1, \, A_2 \in \mathcal{A}$, are such that $ A_2 = P A_1 P^\top $ with $P\in \mathcal{P_O} $ constant, then $ \dot A_2 = P  \dot{ A }_1 P^\top $. Moreover, if $ A_2(0) = P A_1(0) P^\top $, then $ A_2(t) = P A_1(t) P^\top $ for all $t$.
\end{proposition}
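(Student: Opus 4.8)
The plan is to verify directly that the right-hand side of the gradient flow \eqref{eq:grad-flowA} is equivariant under conjugation by a constant $P \in \mathcal{P_O}$, and then invoke uniqueness of solutions of the ODE to lift this to the trajectory level. First I would substitute $A_2 = P A_1 P^\top$ into \eqref{eq:grad-flowA}. The key algebraic facts to exploit are: (i) since $P$ is orthogonal, $P^\top = P^{-1}$, so conjugation is multiplicative on powers, $A_2^k = P A_1^k P^\top$ for every $k$; (ii) transposition interacts with conjugation by an orthogonal matrix as $(P B P^\top)^\top = P B^\top P^\top$, because $P^{\top\top}=P$ and $(P^\top)^\top = P$; and (iii) the ``source'' term transforms correctly, i.e. $E - A_2^h = P(E - A_1^h)P^\top$, which requires checking that $P E P^\top = E$. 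This last point is where the specific structure of $\mathcal{P}$ matters: $E = \block_h(\Sigma)$ has its only nonzero block in position $(h+1,1)$, and $P = \block_0(I_{d_x}, P_1, \ldots, P_{h-1}, I_{d_y})$ has identity blocks in the first (input) and last (output) slots, so $P E P^\top = I_{d_y}\,\Sigma\, I_{d_x}^\top = \Sigma$ in that block, giving $P E P^\top = E$. Note this uses only that the outermost blocks of $P$ are identities, not orthogonality — orthogonality is what is needed for (i) and (ii).

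With these facts in hand, the computation is a term-by-term rewrite:
\[
\dot A_2 = \sum_{j=1}^h (A_2^{h-j})^\top (E - A_2^h)(A_2^{j-1})^\top
= \sum_{j=1}^h P (A_1^{h-j})^\top P^\top \, P(E - A_1^h)P^\top \, P (A_1^{j-1})^\top P^\top,
\]
and the inner $P^\top P = I$ factors collapse, leaving $P \left( \sum_{j=1}^h (A_1^{h-j})^\top (E - A_1^h)(A_1^{j-1})^\top \right) P^\top = P \dot A_1 P^\top$. I should also remark (or it follows from Proposition~\ref{prop:A-block-struct}) that $A_2 \in \mathcal{A}$ whenever $A_1 \in \mathcal{A}$ and $P \in \mathcal{P}$, since conjugation by $P \in \mathcal{P}$ sends $\block_1(W_1,\ldots,W_h)$ to $\block_1(P_1 W_1, P_2 W_2 P_1^{-1}, \ldots, W_h P_{h-1}^{-1})$, which is still block-shift; so the first claim is a statement entirely within the state space $\mathcal{A}$.

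For the second claim, define $B(t) = P A_1(t) P^\top$, where $A_1(t)$ solves \eqref{eq:grad-flowA} with $A_1(0)$ given. Differentiating and using the first part, $\dot B(t) = P \dot A_1(t) P^\top = $ (right-hand side of \eqref{eq:grad-flowA} evaluated at $A_1(t)$, conjugated) $= $ (right-hand side evaluated at $B(t)$). Hence $B(t)$ is a solution of the same gradient flow ODE with initial condition $B(0) = P A_1(0) P^\top = A_2(0)$. Since the vector field in \eqref{eq:grad-flowA} is polynomial, hence locally Lipschitz, the solution with a given initial condition is unique, so $A_2(t) = B(t) = P A_1(t) P^\top$ for all $t$ in the interval of existence; and by Proposition~\ref{prop:A-block-struct} (applied earlier, guaranteeing global existence/convergence) this holds for all $t \geq 0$. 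I do not expect any real obstacle here — the only point requiring a moment's care is confirming $P E P^\top = E$, i.e. that the conjugation genuinely fixes the data matrix $E$, which is exactly why the changes of basis in $\mathcal{P}$ are restricted to the hidden layers; everything else is bookkeeping with powers and transposes.
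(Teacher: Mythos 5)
Your proof is correct and follows essentially the same route as the paper: both establish the equivariance $\dot A_2 = P\dot A_1 P^\top$ by substituting $A_2 = PA_1P^\top$ into the gradient flow, using $A_2^k = PA_1^kP^\top$, $(PBP^\top)^\top = PB^\top P^\top$, $PEP^\top = E$ (because the outer blocks of $P$ are identities), and collapsing the inner $P^\top P = I$ factors. For the trajectory-level statement, the paper argues by integrating $\dot A_2 = P\dot A_1 P^\top$ over $[0,t]$, whereas you define $B(t) = PA_1(t)P^\top$, verify that it solves the same ODE with initial condition $A_2(0)$, and invoke uniqueness for the polynomial (hence locally Lipschitz) vector field; your version is slightly more careful, since the paper's integration step tacitly assumes the identity $\dot A_2(\tau) = P\dot A_1(\tau)P^\top$ holds along both trajectories before it has been established that they coincide, whereas the uniqueness argument makes that bootstrap explicit.
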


\begin{proof}
    By construction, $ A^h= A^h P = A^h P^\top = P A^h = P^\top A^h = P^{-\top} A^h $ and similar relations hold for $E$ in place of $A^h$.
The change of basis with a constant orthogonal $P$, $ A_2 = P A_1 P^\top $, can therefore be expressed for instance as 
\begin{align*}
\dot A_2 = & \sum_{j=1}^h  P \left(A_1^{h-j}\right)^\top  P^{\top} P  \left( E - A_1^h \right) P^\top  P \left( A_1^{j-1} \right)^\top P^\top \\
= &  P  \sum_{j=1}^h \left(A_1^{h-j}\right)^\top \left( E - A_1^h \right)  \left( A_1^{j-1} \right)^\top P^\top \\
= &  P  \dot{ A }_1 P^\top
\end{align*}
where we have used $ P^\top P = I $.
To show that we get $ A_2(t) = P A_1(t) P^\top $ given $ A_2(0) = P A_1(0) P^\top $, we write
\begin{align*}
    A_2(t) & = A_2(0) + \int_0^t \dot{ A }_2(\tau) d\tau \\
    & = P A_1(0) P^\top + \int_0^t P  \dot{ A }_1( \tau ) P^\top d\tau \\
    & = P \left( A_1(0) + \int_0^t \dot{ A }_1(\tau) d\tau \right) P^\top = P A_1(t) P^\top .
\end{align*}
\end{proof}

Notice that equivariance transformations rotate also the conservation laws: $$ 
 Q_2 = A_2 A_2^\top - A_2^\top A_2 =  P ( A_1 A_1^\top - A_1^\top A_1 ) P^\top = P  Q_1 P^\top .$$

\subsection{Other properties of $A$}

\subsubsection{Frobenius norm of $A$: dynamics and equilibria}

Differentiating the Frobenius norm of $A$ we have the following.
\begin{proposition}
\label{prop:FrobA}
Along the solutions of \eqref{eq:grad-flowA}, the Frobenius norm of $A$ obeys to the following ODE:
\begin{equation}
    \label{eq:dotFrobA}
\frac{d}{dt} \| A\|_F^2  = 2h \, \bigl \langle E - A^h, A^h \bigr \rangle_F = 2h \left( \bigl \langle E , A^h \bigr \rangle_F - \| A^h\|_F^2 \right).
\end{equation}
$  \| A\|_F^2 $ stabilizes when $ E-A^h $ and $A^h $ are orthogonal in the Frobenius norm, or, equivalently, when the inner product $  \bigl \langle E , A^h \bigr \rangle_F $ equals $  \| A^h\|_F^2$.
Furthermore, the critical points of the gradient flow system \eqref{eq:grad-flowA} are critical points of  $  \| A\|_F $ but not necessarily vice versa.
\end{proposition}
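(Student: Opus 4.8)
The plan is to differentiate $\|A\|_F^2$ along the flow and recognise the result as a multiple of the scalar $\langle E-A^h,A^h\rangle_F$. Since $\|A\|_F^2=\tr(AA^\top)$, one has $\frac{d}{dt}\|A\|_F^2=2\langle A,\dot A\rangle_F=2\tr(A^\top\dot A)$. Substituting the gradient flow \eqref{eq:grad-flowA}, namely $\dot A=\sum_{j=1}^h(A^{h-j})^\top(E-A^h)(A^{j-1})^\top$, and using the cyclic invariance of the trace gives $\tr(A^\top\dot A)=\sum_{j=1}^h\tr\big((A^{j-1})^\top A^\top(A^{h-j})^\top(E-A^h)\big)$. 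Now $(A^{j-1})^\top A^\top=(A^{j})^\top$ and then $(A^{j})^\top(A^{h-j})^\top=(A^h)^\top$, so each of the $h$ summands equals $\tr\big((A^h)^\top(E-A^h)\big)=\langle A^h,E-A^h\rangle_F$, independently of $j$. Hence $\frac{d}{dt}\|A\|_F^2=2h\langle E-A^h,A^h\rangle_F=2h\big(\langle E,A^h\rangle_F-\|A^h\|_F^2\big)$, which is \eqref{eq:dotFrobA}. Equivalently, one may specialise the first-variation expression built from \eqref{eq:rho1} to the direction $V=A$: since $A^{h-j}A\,A^{j-1}=A^h$ one has $\rho_1(A,A)=hA^h$, and this yields the same identity.

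The two remaining assertions are then immediate. The right-hand side of \eqref{eq:dotFrobA} vanishes exactly when $\langle E-A^h,A^h\rangle_F=0$, i.e. when $E-A^h$ and $A^h$ are orthogonal in the Frobenius inner product, equivalently when $\langle E,A^h\rangle_F=\|A^h\|_F^2$; this is precisely the condition under which $\|A\|_F$ stabilises along a trajectory. Finally, at any equilibrium $A^\ast$ of \eqref{eq:grad-flowA} we have $\dot A^\ast=0$, hence $\langle A^\ast,\dot A^\ast\rangle_F=0$ and $\frac{d}{dt}\|A\|_F^2=0$ there, so every critical point of the gradient flow is a stationary point of $\|A\|_F$ along the flow. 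The converse fails because $\langle E-A^h,A^h\rangle_F=0$ is strictly weaker than $\nabla_A\mathcal{L}(A)=0$: for instance any $A\in\mathcal{A}$ with $A^h=0$ (i.e. $W_{1:h}=0$) satisfies it trivially, while a generic such $A$ — say $A=\block_1(W_1,0,\ldots,0)$ with $W_1\neq0$ — has $\nabla_A\mathcal{L}(A)=-\sum_{j=1}^h(A^{j-1})^\top E\,(A^{h-j})^\top\neq0$.

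There is no genuine obstacle: the whole argument is a one-line trace manipulation once one sees that the $h$ terms produced by \eqref{eq:grad-flowA} all collapse to $(A^h)^\top(E-A^h)$ under cyclic permutation. The only points needing care are keeping the denominator-convention transposes consistent when substituting \eqref{eq:grad-flowA}, and reading ``critical point of $\|A\|_F$'' in the intended sense — a point at which $\frac{d}{dt}\|A\|_F$ vanishes along the flow — since the Frobenius norm, regarded as a plain function on the linear space $\mathcal{A}$, has $A=0$ as its only critical point.
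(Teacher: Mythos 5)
Your derivation of \eqref{eq:dotFrobA} is correct and matches the paper's: differentiate $\|A\|_F^2$, substitute the gradient flow, cyclically permute to get $(A^h)^\top(E-A^h)$ in each of the $h$ summands. The ``$\Rightarrow$'' direction (equilibrium of the flow implies $\frac{d}{dt}\|A\|_F^2=0$) is likewise fine.

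The counterexample you propose for the ``not necessarily vice versa'' direction does not work beyond $h=2$. Take $A=\block_1(W_1,0,\ldots,0)$ with $W_1\neq0$; then $A^k=0$ for $k\geq2$, so in $\nabla_A\mathcal{L}(A)=-\sum_{j=1}^h(A^{j-1})^\top E(A^{h-j})^\top$ every term requires simultaneously $j-1\leq1$ and $h-j\leq1$. For $h\geq4$ no index $j$ survives, and for $h=3$ the only candidate $j=2$ gives $A^\top E A^\top=0$ since $A^\top$ is supported on block $(1,2)$ while $E$ is supported on block $(h+1,1)$. One can also check block-wise: $\dot W_1 = W_{2:h}^\top\Xi = 0$ because $W_2=0$, $\dot W_h = \Xi\, W_{1:h-1}^\top = 0$ because $W_{1:h-1}=0$ when $h\geq3$, and the intermediate $\dot W_j$ vanish for the same reason. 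So for $h\geq3$ your proposed $A$ \emph{is} an equilibrium of \eqref{eq:grad-flowA}, and the gradient you wrote down is identically zero there, not ``generically nonzero.'' (Even for $h=2$ it requires $\Sigma W_1^\top\neq0$, which holds generically but not for all nonzero $W_1$.)

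The paper's counterexample avoids this pitfall by choosing $A$ with $A^h\neq0$: take $W_i=a\,\mathbb{I}_{d_i,d_{i-1}}$, so $W_{1:h}=a^h b\,\mathbb{I}_{d_y,d_x}$, and tune $a$ so that $\langle E,A^h\rangle_F=\|A^h\|_F^2$, i.e. $\tr(\Sigma^\top W_{1:h})=\tr(W_{1:h}^\top W_{1:h})$. This makes $\frac{d}{dt}\|A\|_F^2=0$, yet $W_{2:h}^\top(\Sigma-W_{1:h})\neq0$ because $\Sigma-W_{1:h}$ has nonconstant column sums (here Assumption~\ref{ass:distinct-sums}, $\sigma_i\neq\sigma_j$, is used), so $\dot W_1\neq0$ and $A$ is not a flow equilibrium. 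Replacing your nilpotent example with this one, or with any genuinely generic $A$ satisfying $A^h=0$ and $\nabla_A\mathcal{L}\neq0$ (which would require more than one nonzero $W_i$), closes the gap.
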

\begin{proof}
A direct computation gives:
\begin{align*}
 \frac{d}{dt} \| A\|_F^2 & = \tr \left( (\dot{A})^\top A + A^\top \dot A \right) = 2 \tr \left( \sum_{j=1}^h (A^{j})^\top \left( E - A^h \right)( A^{h-j})^\top \right) \\
 & = 2 \tr \left( \sum_{j=1}^h \left( E - A^h \right)( A^{h})^\top \right) = 2h \, \tr \left( \left( E - A^h \right)( A^{h})^\top \right) .
\end{align*}
Concerning the final statement, if $A^\ast $ is an equilibrium point of \eqref{eq:grad-flowA}, then, from the previous expression, $   \frac{d}{dt} \left. \left( \| A\|_F^2 \right) \right|_{A=A^\ast} =0 $. The opposite implication is shown via a counterexample. Given any $\Sigma$ with $\sigma_i\neq\sigma_j$ for some $i\neq j$, we can set each weight to be proportional to the matrix of unit elements $W_i= a \, \mathbb{I}_{d_i, d_{i-1}}$, so that $W_{1:h}=a^hb \,\mathbb{I}_{d_y, d_x}$ for some $a>0$, and $b=d_1\cdots d_{h-1}$. Clearly this is not an equilibrium point of \eqref{eq:grad-flowA} since e.g., $A^{h-1}(E-A^h)\neq0$.
We get $\tr(W_{1:h}^\top W_{1:h})=a^{2h}b^2d_x d_y>0$, and $\tr( \Sigma^\top W_{1:h})=a^hb\lVert\Sigma\rVert_1 $, so by choosing  $ a = \left( \frac{ \lVert \Sigma \rVert_1 }{ b d_x d_y } \right)^\frac{ 1 }{ h }$ we obtain $\tr(W_{1:h}^\top W_{1:h})=\tr( \Sigma^\top W_{1:h})$, and hence $\frac{d}{dt} \| A\|_F^2=2h \left( \bigl \langle E , A^h \bigr \rangle - \| A^h\|_F^2 \right)=0$.
\end{proof}

What we see in simulation is that the norm $ \| A\|_F $ is typically increasing when the matrix $ A $ is initialized near the origin, and decreasing when instead $ A$ is initialized far away from the origin, even though it is not necessarily monotone, see Section~\ref{sec:ex-qualitat-anal}.

\subsubsection{Adding an $L_2 $ regularization}
The addition of an $ L_2 $ regularization term is expressed naturally in the adjacency matrix representation.
\begin{proposition}
Adding an $ L_2 $ regularization in the loss function \eqref{eq:lossA} 
\begin{equation}
\mathcal{L}^{\rm reg}(A) = \frac{1}{2} \left( \left\| E- A^h \right\|_F^2 + \gamma \| A\|_F^2 \right),
\label{eq:lossA-reg}
\end{equation}
where $ \gamma >0 $ is the regularization hyperparameter, leads to an extra linear negative term in the gradient flow \eqref{eq:grad-flowA}:
\begin{equation}
	\frac{dA}{dt}= - \nabla_A \mathcal{L} = \sum_{j=1}^h \left(A^{h-j}\right)^\top \left( E - A^h \right) \left( A^{j-1}
 \right) ^\top - \gamma A .
 \label{eq:grad-flowA-reg}
\end{equation}
\end{proposition}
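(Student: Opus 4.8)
The plan is to use linearity of the gradient. Since $\mathcal{L}^{\rm reg}(A) = \mathcal{L}(A) + \frac{\gamma}{2}\|A\|_F^2$, we have $\nabla_A \mathcal{L}^{\rm reg} = \nabla_A \mathcal{L} + \frac{\gamma}{2}\,\nabla_A\|A\|_F^2$, and the first summand is exactly the gradient already computed in the proof of Proposition~\ref{prop:grad-A}. Thus the entire content of the argument reduces to differentiating the regularization term $\frac{\gamma}{2}\|A\|_F^2 = \frac{\gamma}{2}\tr(A^\top A)$ with respect to $A$.

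First I would recall that, in the denominator convention adopted in the proof of Proposition~\ref{prop:grad-A}, $\frac{\partial}{\partial A}\tr(A^\top A) = 2A$, so that $\frac{\gamma}{2}\,\nabla_A\|A\|_F^2 = \gamma A$. Adding this to the expression for $\nabla_A \mathcal{L}$ and setting $\dot A = -\nabla_A \mathcal{L}^{\rm reg}$ gives precisely \eqref{eq:grad-flowA-reg}. An equivalent and perhaps more transparent route, given the block structure, is to notice that $\|A\|_F^2 = \sum_{i=1}^h \|W_i\|_F^2$ because $A = \block_1(W_1, \ldots, W_h)$; hence the block-wise gradient descent equation for $W_j$ picks up an extra term $-\gamma W_j$, and reassembling the blocks through \eqref{eq:A} turns $\{-\gamma W_j\}_{j=1}^h$ into $-\gamma A$, which again yields \eqref{eq:grad-flowA-reg}.

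Finally, I would check consistency with the state-space picture: if $A \in \mathcal{A}$ then $-\gamma A = \block_1(-\gamma W_1, \ldots, -\gamma W_h) \in \mathcal{A}$, so the extra term creates no edges outside the block-shift pattern, the argument of Proposition~\ref{prop:A-block-struct} carries over, and the regularized flow \eqref{eq:grad-flowA-reg} remains a well-defined ODE on $\mathcal{A}$, with $A(t)$ still nilpotent. There is no genuine obstacle here: the statement is essentially a one-line gradient computation, and the only point requiring mild care is matching the denominator convention so that the placement of transposes and the overall sign agree with \eqref{eq:grad-flowA}.
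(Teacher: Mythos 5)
Your proof is correct and takes essentially the same route as the paper: split $\mathcal{L}^{\rm reg}$ by linearity, differentiate $\tfrac12\|A\|_F^2 = \tfrac12\tr(AA^\top)$ to get $A$ in the denominator convention, and invoke Proposition~\ref{prop:grad-A} for the remaining term. The extra remarks on the blockwise view and on $-\gamma A$ preserving the block-shift structure are correct sanity checks but go slightly beyond what the paper's one-line proof records.
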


\begin{proof}
Just differentiate the regularizing term: \[
\frac{1}{2} \frac{\partial \| A\|_F^2}{\partial A} = \frac{1}{2} \frac{\partial \tr ( A A^\top ) }{\partial A} = A
\]
and use Proposition~\ref{prop:grad-A}.
\end{proof}

\subsubsection{Numerical range of $A$}

Let the numerical range of $A$ be $ \mathcal{W}(A) = \{ x^\ast A x, \; x \in \mathbb{C}^{n_v} \; \text{s.t. } \; x^\ast x =1 \}$. Denote further $ A_{\rm sy} = \left( A+A^\top\right)/2 $ the symmetric part of $A $, and $ A_{\rm sk}  = \left( A-A^\top\right)/2 $ the skew symmetric part of $A $. 

\begin{proposition}
For the gradient flow \eqref{eq:grad-flowA} we have, for all $t$:
\begin{enumerate}
    \item The numerical range $ \mathcal{W}(A(t)) $ is a circular disk in $ \mathbb{C}$ centered at the origin.
    \item $ A_{\rm sy}(t) $ and $ A_{\rm sk}(t)$ have the same spectrum, modulo the imaginary unit: $ {\rm Sp}(A_{\rm sk}(t)) = i  {\rm Sp}(A_{\rm sy}(t)) $.
    \item The spectra of $ A_{\rm sy}(t) $ and of $ A_{\rm sk}(t) $ are symmetric with respect to the origin:\\ $ \lambda_i(A_{\rm sy}(t)) \in {\rm Sp}(A_{\rm sy} (t)) \; \Longrightarrow \; -\lambda_i (A_{\rm sy}(t)) \in {\rm Sp} (A_{\rm sy}(t))$, \\$ \lambda_i(A_{\rm sk}(t)) \in {\rm Sp}(A_{\rm sk}(t) ) \; \Longrightarrow \; -\lambda_i (A_{\rm sk}(t)) \in {\rm Sp} (A_{\rm sk}(t))$.
    \item $ \tr\left( (A(t)^k)^\top A(t) \right) =0 $ for all $  k \geq 2$.
\end{enumerate}
\end{proposition}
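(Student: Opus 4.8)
The four claims all hold at every time $t$ because $A(t)\in\mathcal A$ (Proposition~\ref{prop:A-block-struct}), so it suffices to prove each for an arbitrary $A=\block_1(W_1,\ldots,W_h)\in\mathcal A$. The unifying device is a one-parameter family of \emph{gauge} unitaries that conjugates $A$ into a scalar multiple of itself: for $\theta\in\mathbb R$ set $D_\theta=\block_0\!\big(I_{d_x},e^{-i\theta}I_{d_1},e^{-2i\theta}I_{d_2},\ldots,e^{-hi\theta}I_{d_y}\big)$, a unitary matrix. A block-by-block computation using $[A]_{(k+1,k)}=W_k$ shows $D_\theta^\ast A D_\theta=e^{i\theta}A$, and likewise $D_\theta^\ast A^\top D_\theta=e^{-i\theta}A^\top$. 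I would record three instances: $\theta$ arbitrary; $\theta=\pi$, where $D_\pi$ is a real orthogonal signature matrix with $D_\pi^\top A D_\pi=-A$, $D_\pi^\top A^\top D_\pi=-A^\top$; and $\theta=\pi/2$.

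For claim~1, the numerical range is invariant under unitary similarity and homogeneous under scalars, so $\mathcal W(A)=\mathcal W(D_\theta^\ast A D_\theta)=\mathcal W(e^{i\theta}A)=e^{i\theta}\mathcal W(A)$ for all $\theta$. Thus $\mathcal W(A)$ is invariant under every rotation about the origin; being nonempty, compact, convex (Toeplitz--Hausdorff), and containing $0\in{\rm Sp}(A)$, it is a closed disk centered at the origin. For claim~3, from $D_\pi^\top A D_\pi=-A$ and $D_\pi^\top A^\top D_\pi=-A^\top$ we get $D_\pi^\top A_{\rm sy}D_\pi=-A_{\rm sy}$ and $D_\pi^\top A_{\rm sk}D_\pi=-A_{\rm sk}$, so each of $A_{\rm sy}$, $A_{\rm sk}$ is orthogonally similar to its negative and hence has spectrum symmetric about $0$.

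For claim~2, take $U=D_{\pi/2}$, so $U^\ast A U=iA$ and $U^\ast A^\top U=-iA^\top$; then $U^\ast A_{\rm sy}U=\tfrac12(iA-iA^\top)=iA_{\rm sk}$ and $U^\ast A_{\rm sk}U=\tfrac12(iA+iA^\top)=iA_{\rm sy}$. Hence $A_{\rm sy}$ is unitarily similar to $iA_{\rm sk}$, giving ${\rm Sp}(A_{\rm sy})=i\,{\rm Sp}(A_{\rm sk})$; dividing by $i$ and using claim~3 to identify $-\,{\rm Sp}(A_{\rm sy})$ with ${\rm Sp}(A_{\rm sy})$ yields ${\rm Sp}(A_{\rm sk})=i\,{\rm Sp}(A_{\rm sy})$, with multiplicities since every map involved is a similarity. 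Claim~4 is purely combinatorial: $\tr\!\big((A^k)^\top A\big)=\langle A^k,A\rangle_F$, and by the block-power formula of Section~\ref{sec:block-shift} the nonzero blocks of $A^k$ lie on the $k$-th lower block-diagonal while those of $A$ lie on the first, so for $k\ge 2$ the supports are disjoint and the inner product vanishes (equivalently, $(A^k)^\top A$ is strictly block-upper-triangular, hence traceless).

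The computations are all short; the one place to be careful is the exponent bookkeeping behind $D_\theta^\ast A D_\theta=e^{i\theta}A$ — the phases in $D_\theta$ must step by exactly one per layer so that each factor $W_k$ absorbs a single $e^{i\theta}$ — together with the elementary but worth-stating fact used in claim~1 that a nonempty compact convex subset of $\mathbb C$ invariant under all rotations about the origin is a disk centered there. I anticipate no serious obstacle beyond these points.
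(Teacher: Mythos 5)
Your proof is correct, and it takes a genuinely different route from the paper's. The paper simply cites Tam (1994) -- Theorem~1 for the circularity of $\mathcal{W}(A)$ and for $A_{\rm sy}$, $A_{\rm sk}$ sharing a spectrum modulo $i$, and Corollaries~1 and~2 for the spectral symmetry and the trace identity -- without reproducing the argument. You instead reconstruct the underlying mechanism via the diagonal gauge unitary $D_\theta=\block_0(I,e^{-i\theta}I,\ldots,e^{-hi\theta}I)$ satisfying $D_\theta^\ast A D_\theta=e^{i\theta}A$, which is exactly the phase-scaling trick behind Tam's results for block-shift matrices. Your exponent bookkeeping checks out ($[D_\theta^\ast A D_\theta]_{(k+1,k)}=e^{ki\theta}W_k e^{-(k-1)i\theta}=e^{i\theta}W_k$), and the specializations $\theta=\pi$ (real orthogonal signature conjugating $A_{\rm sy}$, $A_{\rm sk}$ to their negatives) and $\theta=\pi/2$ (unitary intertwiner swapping $A_{\rm sy}$ and $iA_{\rm sk}$) give claims~3 and~2 directly; for claim~2 the relation $U^\ast A_{\rm sk}U=iA_{\rm sy}$ already gives ${\rm Sp}(A_{\rm sk})=i\,{\rm Sp}(A_{\rm sy})$ without needing claim~3, so the final appeal to symmetry is redundant (though harmless). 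Claim~4 via disjointness of block-diagonal supports is a clean Frobenius-orthogonality argument that again replaces a citation with a one-line computation. The trade-off is what you would expect: your version is self-contained and illuminating -- it makes the circularity and the $\theta=\pi,\pi/2$ symmetries visibly the same phenomenon -- at the cost of some extra length, whereas the paper's version is terse but requires the reader to consult Tam's paper.
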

\begin{proof}
 Since $ A$ is in block-shift form, it follows from Theorem~1 of \cite{tam1994circularity} that its numerical range $ \mathcal{W}(A) $ is a circular disk in the complex plane centered in the origin. Since the block-shift structure is preserved for all $t$, $ \mathcal{W}(A(t)) $ is a circular disk for all $t$. In addition, the same theorem implies that the Hermitian part of $ e^{i \theta} A(t) $ has the same characteristic polynomial for all real $ \theta $, which in turn implies that both $A_{\rm sy} (t) $and  $ A_{\rm sk}(t) $ have the same spectrum, modulo the imaginary unit $i$.
Another consequence is that the spectrum of $A_{\rm sy}(t) $ and $ A_{\rm sk}(t)$ is symmetric:  see Corollary~1 of \cite{tam1994circularity}.
For the final statement see Corollary~2 of \cite{tam1994circularity}.
   
\end{proof}

\section{Convergence analysis}
\label{sec:conv-analysis}
In this section, we discuss dynamical properties of the gradient flow \eqref{eq:grad-flowA} and the structure of its critical points, reformulating known results from \cite{achour2021loss,kawaguchi2016deep,chitour2018geometric,yun2017global} in terms of the adjacency matrix $A \in \mathcal{A}$.

\subsection{A basic Lyapunov analysis}

A matrix $ A \in \mathcal{A} $ is a critical point of the loss function \eqref{eq:lossA} if $ \dot{\mathcal{L}}(A) =0$. It is a regular point otherwise. 
The following proposition provides a basic convergence analysis of the trajectories of the gradient flow.

\begin{proposition}
\label{prop:dotL}
Consider the gradient flow \eqref{eq:grad-flowA}.
\benu
\item The solution $A(t)$ of \eqref{eq:grad-flowA} exists and is bounded for all $t$ and for all initial conditions $ A(0) \in \mathcal{A}$.
\item $ A^\ast \in \mathcal{A} $ is an equilibrium point of \eqref{eq:grad-flowA} if and only if it is a critical point of $ \mathcal{L}$: $ \dot{\mathcal{L}}(A^\ast) =0 $.
\item Each solution $ A(t)$ converges to a critical point of $ \mathcal{L}$ when $ t\to \infty$, i.e., the $ \omega $-limit set of $A(t)$ is a single critical point of $ \mathcal{L}$. 
\eenu
\end{proposition}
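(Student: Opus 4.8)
The plan is to establish the three items in order, using $\mathcal{L}$ of \eqref{eq:lossA} as a Lyapunov function, the conservation laws of Proposition~\ref{prop:conserv-law-A}, and, for the convergence statement, the real analyticity of $\mathcal{L}$ on $\mathcal{A}$. Since the right-hand side of \eqref{eq:grad-flowA} is polynomial (hence locally Lipschitz) in the entries of $A$, Picard--Lindel\"of gives a unique local solution, which stays in $\mathcal{A}$ by Proposition~\ref{prop:A-block-struct}. Differentiating $\mathcal{L}$ along the flow and using $\dot A=-\nabla_A\mathcal{L}$ gives, as in \eqref{eq:dotL},
\[
\dot{\mathcal{L}} = \bigl\langle \nabla_A\mathcal{L},\, \dot A\bigr\rangle_F = -\bigl\|\nabla_A\mathcal{L}\bigr\|_F^2 \le 0 ,
\]
so $\mathcal{L}(A(t))\le\mathcal{L}(A(0))$ on the maximal interval of existence and therefore $\|E-A^h\|_F$, hence $A^h=\block_h(W_{1:h})$, stays bounded. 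The only nontrivial point for item~1 is to upgrade this to boundedness of $A$ itself. For that I would use the invariants $W_iW_i^\top-W_{i+1}^\top W_{i+1}=C_i$: taking a unit left singular vector $w$ of $W_i$ for its largest singular value $\sigma$, one has $W_iW_i^\top w=\sigma^2 w$ and hence
\[
\|W_{i+1}w\|^2 = w^\top W_iW_i^\top w - w^\top C_i w = \sigma^2 - w^\top C_i w \ \ge\ \sigma^2 - \|C_i\| ,
\]
so $W_{i+1}$ cannot annihilate the direction in which $W_i$ is largest. Telescoping this estimate through layers $1,\dots,h$ (the $\|C_i\|$ being fixed) forces $\|W_{1:h}\|$ to grow at least like a fixed power of $\|W_1\|$, so $\|W_1\|$ --- being bounded --- is bounded; since $\bigl|\,\|W_j\|^2-\|W_{j+1}\|^2\bigr|\le\|C_j\|$ follows from the same invariants, every $\|W_j\|$ is bounded and thus $\sup_t\|A(t)\|_F<\infty$. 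A uniform a priori bound rules out finite-time blow-up, giving global existence.

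Item~2 is immediate: $A^\ast$ is an equilibrium of \eqref{eq:grad-flowA} exactly when $\nabla_A\mathcal{L}(A^\ast)=0$, and by $\dot{\mathcal{L}}(A^\ast)=-\|\nabla_A\mathcal{L}(A^\ast)\|_F^2$ this is equivalent to $\dot{\mathcal{L}}(A^\ast)=0$, i.e.\ to $A^\ast$ being a critical point. For item~3 I would combine LaSalle's invariance principle with the {\L}ojasiewicz gradient inequality, as in \cite{chitour2018geometric,lojasiewicz1982trajectoires}. By item~1 the $\omega$-limit set $\Omega$ of $A(\cdot)$ is nonempty, compact, connected and invariant; since $\mathcal{L}$ is non-increasing and bounded below by $0$, $\mathcal{L}(A(t))$ decreases to some $\ell\ge 0$ and, by LaSalle, $\Omega$ consists of equilibria on which $\mathcal{L}\equiv\ell$. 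Fix $A^\ast\in\Omega$. As $\mathcal{L}$ is a polynomial in the finitely many coordinates of $A\in\mathcal{A}$, it is real analytic, so there are $\theta\in(0,\tfrac12]$, $c>0$ and a neighborhood $U$ of $A^\ast$ with $|\mathcal{L}(A)-\ell|^{1-\theta}\le c\,\|\nabla_A\mathcal{L}(A)\|_F$ on $U$. While $A(t)\in U$,
\[
\frac{d}{dt}\bigl(\mathcal{L}(A(t))-\ell\bigr)^{\theta} = -\theta\,\bigl(\mathcal{L}(A(t))-\ell\bigr)^{\theta-1}\|\nabla_A\mathcal{L}\|_F^2 \ \le\ -\frac{\theta}{c}\,\|\dot A\|_F ,
\]
so the arc length satisfies $\int\|\dot A(t)\|_F\,dt\le\tfrac{c}{\theta}\,(\mathcal{L}(A(t_0))-\ell)^{\theta}$. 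The standard bootstrap (the trajectory enters $U$ since $A^\ast\in\Omega$, and the length bound keeps it there) yields $\int_0^\infty\|\dot A(t)\|_F\,dt<\infty$; hence $A(t)$ is Cauchy and converges to a point of $\Omega$, which is a critical point, so $\Omega$ reduces to that single point.

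I expect the main obstacle to be the boundedness step in item~1: the decrease of $\mathcal{L}$ only controls the end-to-end product $W_{1:h}$, not the individual layers, so one genuinely needs the conservation laws to propagate the bound through the network, and the telescoping estimate must be executed carefully (in particular when the largest singular value sits at an intermediate layer). The second delicate point is item~3, where --- because the critical set is a continuum (Proposition~\ref{prop:LevelSurf}) --- LaSalle alone only gives convergence to $\Omega$, and the {\L}ojasiewicz inequality (valid thanks to analyticity of $\mathcal{L}$ on the linear space $\mathcal{A}$) is what pins the trajectory to a single critical point; I would follow \cite{chitour2018geometric} for the remaining details.
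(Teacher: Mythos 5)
Your proposal follows the same overall route as the paper: item~2 is the trivial gradient-system equivalence; item~3 combines boundedness with {\L}ojasiewicz's theorem (the paper simply cites \cite{chitour2018geometric,bah2022learning}, whereas you spell out the arc-length bootstrap, which is exactly what those results encapsulate); and item~1 is where the real work lies and where your argument has a gap.

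The singular-vector telescope does not iterate as written. The estimate $\|W_{i+1}w\|^2 \geq \sigma_{\max}^2(W_i) - \|C_i\|$ holds for $w$ a top left singular vector of $W_i$, but $W_{i+1}w$ is in general \emph{not} a singular vector of $W_{i+1}$, so the same one-step inequality is unavailable at the next layer. To make the telescope work one must instead carry the image vector $y_k = W_k\cdots W_1 v_1$ through all layers, repeatedly substituting $W_{k+1}^\top W_{k+1} = W_k W_k^\top - C_k$ to get $\|y_{k+1}\|^2 = \|W_k^\top y_k\|^2 - y_k^\top C_k y_k$, which produces a cascade of nested lower-order correction terms in the $C_j$'s that must be controlled inductively. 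The paper isolates precisely this computation as Lemma~\ref{lem:bounded} in Appendix~\ref{app:bounded}, organized via traces rather than singular vectors, yielding the clean bound
\[
\gamma\|A\|_F^{2h} - p\bigl(\|A\|_F^2\bigr) \;\leq\; \|A^h\|_F^2 \;\leq\; \|A\|_F^{2h} + q\bigl(\|A\|_F^2\bigr)
\]
for polynomials $p,q$ of degree $\leq h-1$ depending only on $h$ and the $C_j$. The paper's proof then plugs the lower bound into the Frobenius-norm ODE of Proposition~\ref{prop:FrobA}, $\tfrac{1}{2h}\tfrac{d}{dt}\|A\|_F^2 = \langle E, A^h\rangle_F - \|A^h\|_F^2 \leq -\gamma\|A\|_F^{2h} + \bar p(\|A\|_F^2)$, so the degree-$h$ term dominates at large $\|A\|_F$ and directly gives an absorbing set. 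Your alternative route --- conclude $\|A^h\|_F$ is bounded because $\mathcal{L}$ decreases, then use the lower bound on $\|A^h\|_F$ in the contrapositive --- is logically equivalent, but still needs that Lemma-\ref{lem:bounded} inequality to run; the one-layer singular-vector step you quote would not telescope on its own, and your last sentence of item~1 (``$\|W_1\|$ --- being bounded --- is bounded'') also inverts the intended implication. So the idea is correct and matches the paper's, but the crucial estimate is asserted rather than proved, and as stated the iteration would not go through without the trace-based machinery (or an equivalent careful tracking of the image vector) that the paper supplies.
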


\begin{proof}
\benu
\item 
From Proposition~\ref{prop:FrobA} and Lemma~\ref{lem:bounded} in Appendix~\ref{app:bounded}, if $ \gamma \in (0, 1) $ is a constant depending on $h$ and $ d_i$, we observe that
\begin{align}
	\frac{1}{2h}\frac{d}{dt} \lVert A \rVert^{2}_F &= \left\langle E,A^h\right\rangle_F - \lVert A^h \rVert^{2}_F \nonumber \\
	& \leq \lVert E \rVert_F \lVert A^h \rVert_F - \gamma\lVert A \rVert^{2h}_F + p\left(\lVert A \rVert^{2}_F\right) \nonumber \\
	& \leq - \gamma\lVert A \rVert^{2h}_F + \bar{p}\left(\lVert A \rVert^{2}_F\right), 
\end{align}
where $ p(\cdot) $ and $\bar{p}(\cdot) $ are polynomials of degree $\leq h-1$, leading to the conclusion that the trajectories of \eqref{eq:grad-flowA} are bounded.
\item The statement follows from the definition of gradient system, see \eqref{eq:dotL3}. 
\item The statement follows from the so-called Lojasiewicz's theorem (\cite{bah2022learning}, Theorem~3.1, or \cite{chitour2018geometric}, Theorem~2.2). 
Since $ \mathcal{L}$ is polynomial in $A$, it is a real analytic function. In addition, as the gradient system \eqref{eq:grad-flowA} has bounded trajectories, all its trajectories are confined into (sufficiently large) compact sets. It follows from Lojasiewicz's theorem that for each trajectory the $ \omega$-limit set is a single point and a critical point of $ \mathcal{L}$. 
 \eenu
\end{proof}

The loss function \eqref{eq:lossA} is a Lyapunov function for the gradient flow \eqref{eq:grad-flowA}, although only a positive semidefinite one. 

\begin{proposition}
\label{prop:Layp1}
Consider the gradient flow system \eqref{eq:grad-flowA}.
\benu
\item The loss function \eqref{eq:lossA} is positive semidefinite: $ \mathcal{L}(A) \geq 0 $ $ \forall \, A \in \mathcal{A}$. 
\item The derivative of the loss function \eqref{eq:lossA} along  \eqref{eq:grad-flowA} is negative semidefinite
\begin{equation}
\label{eq:dotL2}
\dot{\mathcal{L}} 
 = -  \sum_{j=1}^h \left\| \left(A^{h-j}\right)^\top \left( E - A^h \right) \left( A^{j-1}  \right) ^\top \right\|_F^2  \leq 0 \quad \forall \, A \in \mathcal{A}.
\end{equation}
\eenu
\end{proposition}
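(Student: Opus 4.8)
The plan is to dispatch the two items separately: the first is immediate from the definition of $\mathcal{L}$, and the second follows from the gradient structure of \eqref{eq:grad-flowA} together with the block-shift geometry already exploited in the proof of Proposition~\ref{prop:A-block-struct}. For item~1 I would simply note that $\mathcal{L}(A)=\tfrac12\lVert E-A^h\rVert_F^2$ is one half of a squared Frobenius norm, hence $\mathcal{L}(A)\ge 0$ for every $A\in\mathcal{A}$, with equality precisely when $A^h=E$.

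For item~2 the starting point is that \eqref{eq:grad-flowA} is a gradient flow. By the chain rule \eqref{eq:dotL} one has $\dot{\mathcal{L}}=\langle\nabla_A\mathcal{L},\dot A\rangle_F$, and substituting $\dot A=-\nabla_A\mathcal{L}$ from Proposition~\ref{prop:grad-A} gives $\dot{\mathcal{L}}=-\lVert\nabla_A\mathcal{L}\rVert_F^2=-\lVert\dot A\rVert_F^2\le 0$. What then remains is to recognize this quantity as the sum $\sum_{j=1}^h\lVert(A^{h-j})^\top(E-A^h)(A^{j-1})^\top\rVert_F^2$ displayed in \eqref{eq:dotL2}. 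Writing $T_j=(A^{h-j})^\top(E-A^h)(A^{j-1})^\top$, Proposition~\ref{prop:grad-A} gives $\dot A=\sum_{j=1}^h T_j$; moreover, exactly as argued in the proof of Proposition~\ref{prop:A-block-struct} (multiply $(A^{h-j})^\top$ by $E-A^h$, then by $(A^{j-1})^\top$), each $T_j$ has a single nonzero block, located in block position $(j+1,j)$ — indeed $T_j$ is nothing but the block $\dot W_j$ of $\dot A$. Consequently the $T_j$ have pairwise disjoint supports as matrices, so $\langle T_i,T_j\rangle_F=0$ for $i\neq j$, and $\lVert\dot A\rVert_F^2=\bigl\lVert\sum_j T_j\bigr\rVert_F^2=\sum_j\lVert T_j\rVert_F^2=\sum_j\lVert\dot W_j\rVert_F^2$. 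Combining with $\dot{\mathcal{L}}=-\lVert\dot A\rVert_F^2$ yields \eqref{eq:dotL2}.

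The only point that requires genuine care — and the one I would flag as the main (albeit mild) obstacle — is the passage from $\bigl\lVert\sum_j T_j\bigr\rVert_F^2$ to $\sum_j\lVert T_j\rVert_F^2$: these coincide only because of the block-disjointness of the $T_j$, which must be invoked explicitly rather than taken for granted. Everything else is routine bookkeeping with the denominator convention for matrix differentials, already carried out in Proposition~\ref{prop:grad-A}, so no further computation is needed.
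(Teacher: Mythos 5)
Your argument is correct and essentially reproduces the paper's proof: both establish $\dot{\mathcal{L}}=-\lVert\dot A\rVert_F^2$ from the gradient structure and then pass from $\lVert\sum_j T_j\rVert_F^2$ to $\sum_j\lVert T_j\rVert_F^2$ via the block-disjointness of the $T_j$, each living at block position $(j+1,j)$ as established in the proof of Proposition~\ref{prop:A-block-struct}. The only cosmetic difference is in item~1, where the paper invokes Proposition~\ref{prop:LevelSurf} to stress that $\mathcal{L}$ is \emph{only} semidefinite (i.e., has a nontrivial zero set), whereas you settle for the equally valid squared-norm observation, which is all the literal inequality $\mathcal{L}(A)\ge 0$ requires.
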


\begin{proof}
\benu
\item From Proposition~\ref{prop:LevelSurf}, we know that the minimization problem \eqref{eq:minL1} has infinitely-many optimal solutions and that none of them is isolated. Expressed in terms of optimal $A^\ast $, we have that there exists infinitely-many non-isolated $ A^\ast \in \mathcal{A}$ for which $ \mathcal{L}(A^\ast) =0 $, hence $ \mathcal{L}$ is only positive semidefinite. 
\item By construction, for the gradient flow it is, from \eqref{eq:dotL}, 
\begin{align}
\label{eq:dotL3}
\dot{\mathcal{L}}  & = - \left\| \frac{\partial \mathcal{L}(A) }{\partial A}  \right\|_F^2
=- \| \dot A  \|_F^2 
= - \tr \left((\dot{A}) ^\top \dot{A} \right) \\
& =  - \tr \left( \left( \sum_{j=1}^h \left(A^{h-j}\right)^\top \left( E - A^h \right) \left( A^{j-1}  \right) ^\top \right)^\top \sum_{j=1}^h \left(A^{h-j}\right)^\top \left( E - A^h \right) \left( A^{j-1}  \right) ^\top  \right)
\label{eq:dotL4} \\
& = -  \sum_{j=1}^h \tr \left( \left( \left(A^{h-j}\right)^\top \left( E - A^h \right) \left( A^{j-1}  \right) ^\top  \right) ^\top \left(A^{h-j}\right)^\top \left( E - A^h \right) \left( A^{j-1}  \right) ^\top   \right).
\label{eq:dotL5}
\end{align}
$\dot{\mathcal{L}}$ is negative semidefinite because it is the negation of a norm.
To show that \eqref{eq:dotL4} is equal to \eqref{eq:dotL5}, recall from the proof of Proposition~\ref{prop:A-block-struct} that the term $ 
\left(A^{h-j}\right)^\top \left( E - A^h \right) \left( A^{j-1}  \right) ^\top $ corresponds to a single nonzero block at position $ (j+1,j)$ in the expression for $A$.
Hence, when multiplied with $ (\dot{A}) ^\top $ as in \eqref{eq:dotL4}, there will be a single nonzero term, corresponding to a diagonal block in position $ (j+1, j+1)$, equal to 
$
W_{1:j-1}(\Sigma-W_{1:h})^\top W_{j+1:h} W_{j+1:h}^\top(\Sigma-W_{1:h})W_{1:j-1}^\top .
$
Summing over $j$ and taking trace, the result follows. 
Each term inside the trace in \eqref{eq:dotL5} is a Gram matrix, hence positive semidefinite.
 \eenu
\end{proof}

\subsection{Critical points of $ \mathcal{L}$: a survey of results}

The previous propositions are enough to characterize the asymptotic behavior of the gradient flow system \eqref{eq:grad-flowA}: each trajectory converges ``pointwise'' to a single equilibrium point which is a critical point of $ \mathcal{L}$.
The landscape of such critical points has been thoroughly investigated in the literature \cite{baldi1989neural,kawaguchi2016deep,yun2017global,zhou2017critical,chitour2018geometric,achour2021loss,trager2019pure}. We report here the main results (without proofs), reformulated in our setting and notations.

Recall that a critical point which is not a local minimizer or maximizer is called a saddle point.

\begin{proposition} 
\label{prop:critical-point-lit}
Consider the gradient flow system \eqref{eq:grad-flowA} and a critical point $ A^\ast \in \mathcal{A}$.
\benu
\item From \cite{kawaguchi2016deep}, Theorem~2.3:
\begin{enumerate}
\item $ \mathcal{L} $ is non-convex and non-concave;
\item Every local minimum is a global minimum;
\item Every critical point that is not a global minimum is a saddle point.
\end{enumerate}
\item From \cite{yun2017global}, Theorem~2.1:
\begin{enumerate}
\item Every critical point $ A^\ast $ s.t. $ \rank((A^\ast)^h) =d_y $ is a global minimum. \item Every critical point $ A^\ast $ s.t. $ \rank((A^\ast)^h) < d_y $ is instead a saddle point.
\end{enumerate}
\item From \cite{achour2021loss}, Propositions~1 and 2:
\benu
\item If $ A^\ast$ 
is s.t. $ \rank((A^\ast)^h) = r $, then there exists a unique $ \mathbf{s}=\begin{bmatrix} s_1 & \ldots & s_{d_y} \end{bmatrix}^\top $, $ s_i \in \{0, 1 \}$, $ \1^\top \mathbf{s} = r $ such that $ (A^\ast)^h =\block_h(S \Sigma) $, where $ S={\rm diag}(\mathbf{s})$. 
The associated critical value is 
\beq
\mathcal{L}(A^\ast) = \frac{1}{2} \sum_{i=1}^{d_y} (1- s_i) \sigma_i^2.
\label{eq:critical-values}
\eeq
\item Conversely, for any $ r \in \{ 1, \ldots, d_y \}$ 
and any $ \mathbf{s}=\begin{bmatrix} s_1 & \ldots & s_{d_y} \end{bmatrix}^\top $, $ s_i \in \{0, 1 \}$, $   \1^\top \mathbf{s} = r $, there exists a critical point $ A^\ast \in \mathcal{A} $ such that $ (A^\ast)^h =\block_h(S \Sigma) $, where $ S={\rm diag}(\mathbf{s})$. 
\eenu
\eenu

\end{proposition}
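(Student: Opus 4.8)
The plan is to derive all three items from the critical-point equations $\nabla_A\mathcal L=0$, which by \eqref{eq:new_block_ode} read $W_{j+1:h}^\top(\Sigma-W_{1:h})W_{1:j-1}^\top=0$, $j=1,\dots,h$. Set $P:=W_{1:h}$, so that $(A^\ast)^h=\block_h(P)$ and $\rank(A^\ast)^h=\rank P=:r$, and $M:=\Sigma-P$. Only the two extreme equations are needed: right-multiplying the $j=1$ equation $W_{2:h}^\top M=0$ by $W_1$ gives $M^\top P=0$, i.e. $\Sigma^\top P=P^\top P$; left-multiplying the $j=h$ equation $MW_{1:h-1}^\top=0$ by $W_h$ gives $PM^\top=0$, i.e. $P\Sigma^\top=PP^\top$. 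So at any critical point $\Sigma^\top P$ and $P\Sigma^\top$ are symmetric positive semidefinite.

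For item~3(a) I would show these two identities already force $P=S\Sigma$. Writing $P=[\mathbf p_1\,\cdots\,\mathbf p_{d_x}]$ columnwise and using $\Sigma=[\diag(\sigma_1,\dots,\sigma_{d_y})\mid 0]$ (here $d_x\geq d_y$ by Assumption~\ref{ass:size}), the rows of $\Sigma^\top P=P^\top P$ with index $k>d_y$ vanish, hence $\mathbf p_k^\top\mathbf p_k=0$ and the last $d_x-d_y$ columns of $P$ are zero: $P=[P_0\mid 0]$, $P_0\in\mathbb R^{d_y\times d_y}$. Using symmetry of $\Sigma^\top P$ and of $P\Sigma^\top$, the $(i,j)$ entries with $i\ne j$ give the system $\sigma_i(P_0)_{ij}=\sigma_j(P_0)_{ji}$, $\sigma_j(P_0)_{ij}=\sigma_i(P_0)_{ji}$, whose determinant $\sigma_j^2-\sigma_i^2$ is nonzero by Assumption~\ref{ass:distinct-sums}; hence $P_0$ is diagonal, and the diagonal equations $(P_0)_{ii}^2=\sigma_i(P_0)_{ii}$ give $(P_0)_{ii}\in\{0,\sigma_i\}$. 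Thus $P=S\Sigma$ with $S=\diag(\mathbf s)$, $\mathbf s\in\{0,1\}^{d_y}$ uniquely determined by the set of indices with $(P_0)_{ii}\ne0$, so $\1^\top\mathbf s=\rank P=r$, and $\mathcal L(A^\ast)=\tfrac12\|(I-S)\Sigma\|_F^2=\tfrac12\sum_i(1-s_i)^2\sigma_i^2=\tfrac12\sum_i(1-s_i)\sigma_i^2$.

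For the converse 3(b), given $\mathbf s$ with $\1^\top\mathbf s=r$, I would exhibit the critical point with $W_1=[S\Sigma;\,0]$, $W_i=[I_{d_y}\ 0;\ 0\ 0]$ for $2\leq i\leq h-1$, and $W_h=[S\ 0]$ (all embeddings legitimate since $d_i\geq d_y$): here $W_{1:h}=S\Sigma$ has rank $r$, and the equations hold because the rows of $W_{1:j-1}$ lie in $\mathcal N((I-S)\Sigma)=\mathcal N(M)$ for every $j\geq2$, while for $j=1$ one has $W_{2:h}^\top M=[S;\,0](I-S)\Sigma=0$. Item~2 follows at once: if $r=d_y$ then $\1^\top\mathbf s=d_y$ forces $S=I$, $P=\Sigma$, $\mathcal L(A^\ast)=0$ (global minimum); if $r<d_y$ then $\mathcal L(A^\ast)=\tfrac12\sum_i(1-s_i)\sigma_i^2>0$, so $A^\ast$ is not a global minimum, and it remains to see it is a saddle. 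That it is not a local maximum follows by probing a variation $V=\block_1(0,\dots,V_j,\dots,0)$ supported in one block: then $(A^\ast+\epsilon V)^h$ is affine in $\epsilon$, so $\mathcal L(A^\ast+\epsilon V)=\mathcal L(A^\ast)+\tfrac{\epsilon^2}{2}\|\rho_1(A^\ast,V)\|_F^2$ with $\rho_1(A^\ast,V)=\block_h(W_{j+1:h}^\ast V_jW_{1:j-1}^\ast)$ by \eqref{eq:FGF}, which can be made nonzero whenever $A^\ast\neq0$; when $A^\ast=0$ one instead uses the full variation $A=\epsilon V$, for which $\mathcal L(\epsilon V)=\mathcal L(0)-\epsilon^h\tr(V_{1:h}\Sigma^\top)+O(\epsilon^{2h})$ takes values on both sides of $\mathcal L(0)$.

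Items~1(a),(c) then close easily: $\mathcal L$ is non-convex because it has critical points (e.g. $A^\ast=0$ when $h\geq2$, or any $S\neq I$) with value $>0=\min_{\mathcal A}\mathcal L$; it is non-concave because it is bounded below and nonconstant on the vector space $\mathcal A$; and every non-global critical point is a saddle by the variations above. The one genuinely hard statement is 1(b) — that every local minimum is global, equivalently that a non-global critical point always admits a strict \emph{descent} direction. Here the difficulty flagged in the Introduction bites: since $P=W_h\cdots W_1$ is a product and the SVD does not propagate through products, one cannot read a descent direction off $P=S\Sigma$ directly; instead one must, following \cite{kawaguchi2016deep} and \cite{yun2017global}, use the second-order information to build a \emph{multi-block} variation opening a path through some coordinate $k$ with $s_k=0$ — thereby activating the unlearned singular value $\sigma_k$ — and verify $\mathcal L$ strictly decreases along it. I expect this "turn on a missing singular direction" step to be the main obstacle; the rest reduces to the linear algebra above.
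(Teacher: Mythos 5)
The paper provides no proof here: Proposition~\ref{prop:critical-point-lit} is introduced explicitly as a survey of literature results, ``reported\dots\ without proofs,'' so there is no internal argument to compare against. Your linear algebra for item~3 is correct and is essentially the Achour et al.\ calculation: the extreme block equations give $\Sigma^\top P=P^\top P$ and $P\Sigma^\top=PP^\top$, the last $d_x-d_y$ columns of $P$ vanish, the off-diagonal $2\times 2$ system with nonzero determinant $\sigma_j^2-\sigma_i^2$ forces $P_0$ diagonal, and the diagonal equations give $(P_0)_{ii}\in\{0,\sigma_i\}$. The explicit construction for~3(b) also checks out, and items~2(a), 1(a), 1(c)-modulo-``saddle'' follow.

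There are, however, two gaps. The first you flag yourself: item~1(b) (every local minimum is global, i.e.\ a non-global critical point admits a strict \emph{descent} direction) is the actual content of Kawaguchi's theorem, and you leave it unproved; without it you have not shown that the rank-deficient critical points of 2(b) and 1(c) are saddles rather than merely non-global. The second gap is a false step inside your ``not a local maximum'' argument: you assert that a single-block $V$ with $\rho_1(A^\ast,V)\neq 0$ can be found whenever $A^\ast\neq 0$, but this fails. Take $h=3$ and the critical point with $W_1=0$, $W_2\neq 0$, $W_3=0$ (all three block equations vanish, so it is critical). Then $\rho_1(A^\ast,V)=\sum_{j=1}^3 \block_h(W_{j+1:3}\,V_j\,W_{1:j-1})$ has a zero factor on at least one side of every summand because $W_1=W_3=0$, so $\rho_1(A^\ast,V)\equiv 0$ for \emph{all} $V$, and your second-order Taylor argument is void even though $A^\ast\neq 0$. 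Escaping such a point requires a genuinely multi-block variation: perturbing $W_1\to\epsilon V_1$ and $W_3\to\epsilon V_3$ simultaneously gives $\mathcal L=\mathcal L(A^\ast)-\epsilon^2\tr(\Sigma^\top V_3 W_2 V_1)+O(\epsilon^4)$, whose $\epsilon^2$-coefficient is sign-indefinite. That is precisely the kind of ``turn on a missing mode by coupling several blocks'' step you postponed in 1(b), so the two gaps are really one and the same, and the clean item~3 computation is the easy half of the proposition.
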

Notice that the set of  matrices $ A\in\mathcal{A} $ s.t. $ \rank(A^h) =d_y $ is dense in $ \mathcal{A}$, meaning that $ A\in\mathcal{A} $ s.t. $ \rank(A^h) <d_y $ occupy only a set of Lebesgue measure 0 in $\mathcal{A}$ \cite{achour2021loss}.
Nevertheless, there are critical points $ A^\ast $ in which $ \rank((A^\ast)^h) <d_y $: a standard example is $ A^\ast=0$, which according to the previous proposition is a saddle point ($A^\ast=0$ is not an optimal solution: $ \mathcal{L}(0) = \| \Sigma \|_F^2 >0$). 

Notice further the simple expression that the product $ A^h = \block_h(W_{1:h})$ assumes at critical points when $ \Sigma $ has the structure \eqref{eq:Sigma-diag}: both $ \Sigma $ and $ W_{1:h}$ are diagonal, with $ W_{1:h}$ omitting some diagonal entries $ \sigma_j $ when it is a saddle point. 
In particular, all critical points can be identified with the signature vectors $ \mathbf{s}=\begin{bmatrix} s_1 & \ldots & s_{d_y} \end{bmatrix}^\top $, $ s_i \in \{0, 1 \}$, or, equivalently, with the associated indicator set $ \mathcal{S} = \{ i \; \text{s.t.}\; s_i=1, \, i=1, \ldots, d_y\}$.

\subsection{Invariant sets and an equivalence relation}
\label{sec:invar-equiv}

Let $ \mathcal{M}$ be the set of all critical points of \eqref{eq:grad-flowA}: $ \mathcal{M}=\{ A \in \mathcal{A} \text{  s.t.  } \dot{\mathcal{L}} (A) =0 \}$.
$ \mathcal{M} $ foliates into the level sets $ \mathcal{M}_c = \{ A \in \mathcal{M}  \text{ s.t.  } \mathcal{L}(A) =c \} $, $ c\geq 0$, with $ \mathcal{M}_0$ being the set of global minimizers of \eqref{eq:lossA}.

LaSalle's invariance principle states that for a dynamical system endowed with a function $ \mathcal{L}$  s.t. $ \dot{\mathcal{L}} \leq 0$, all bounded solutions converge to the largest invariant set in $ \mathcal{M}$.
The argument does not require $ \mathcal{L} $ to be positive definite \cite{khalil2002nonlinear}.
Gradient systems like \eqref{eq:grad-flowA} are however special: the trajectories cross the level surfaces $ \mathcal{M}_c $ orthogonally (see Theorem~2, p. 201 in \cite{hirsch1974differential}), meaning that the solution of \eqref{eq:grad-flowA} is never ``sliding'' inside $ \mathcal{M}_c $ for any $ c $.
When Assumption~\ref{ass:distinct-sums} holds, this is enough to guarantee invariance of each $ \mathcal{M}_c $ as well as to investigate the stability character of its critical points from a dynamical perspective.

\begin{proposition}
Under Assumptions~\ref{ass:size} and~\ref{ass:distinct-sums}, the gradient system \eqref{eq:grad-flowA} admits $ 2^{d_y} $ disjoint level sets of critical points $ \mathcal{M}_c $ with critical values $ c= \frac{1}{2}\sum_{i=1}^{d_y} (1-s_i) \sigma_i^2 $, $ s_i \in \{0, 1 \}$, $ i=1, \ldots, d_y$.
Each $ \mathcal{M}_c $ is an unbounded invariant set of \eqref{eq:grad-flowA}. For $ c>0$, each $ A^\ast \in \mathcal{M}_c$ is a saddle point, while for $ c=0$, $ A^\ast \in \mathcal{M}_0$ is stable but not  asymptotically stable. 
\end{proposition}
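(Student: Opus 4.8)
The plan is to establish the four assertions in sequence, drawing on results already collected in the excerpt. For the count of level sets: by Proposition~\ref{prop:critical-point-lit}, item~3, every critical point $A^\ast$ satisfies $(A^\ast)^h = \block_h(S\Sigma)$ for a unique binary vector $\mathbf{s}$, and conversely every such $\mathbf{s}$ is realized; the critical value is $\frac{1}{2}\sum_i (1-s_i)\sigma_i^2$. Under Assumption~\ref{ass:distinct-sums} all the partial sums $\sum_{j\in\mathcal{S}}\sigma_j^2$ are distinct, hence the map $\mathbf{s} \mapsto \frac{1}{2}\sum_i (1-s_i)\sigma_i^2$ is injective on $\{0,1\}^{d_y}$. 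So there are exactly $2^{d_y}$ distinct critical values, each giving a nonempty level set $\mathcal{M}_c$, and these are pairwise disjoint by definition. Unboundedness of each $\mathcal{M}_c$ is exactly the last sentence of Proposition~\ref{prop:LevelSurf} (the change-of-basis argument with arbitrarily large $P_i\in\mathcal{P}$ applies at any critical point).

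For invariance of each $\mathcal{M}_c$: the key is that along the gradient flow $\dot{\mathcal{L}} = -\|\dot A\|_F^2 \le 0$ (Proposition~\ref{prop:Layp1}), so $\mathcal{L}(A(t))$ is nonincreasing; and by Proposition~\ref{prop:dotL}, item~3, every trajectory converges to a single critical point, on which $\mathcal{L}$ is constant. If $A(0)\in\mathcal{M}_c$ then $A(0)$ is already a critical point, hence an equilibrium (Proposition~\ref{prop:dotL}, item~2), so $A(t)=A(0)\in\mathcal{M}_c$ for all $t$; this gives (forward) invariance trivially. Actually I should phrase the invariance statement carefully: $\mathcal{M}_c$ consists entirely of equilibria, so it is (both forward and backward) invariant in the strongest sense — no trajectory starting in $\mathcal{M}_c$ ever leaves it, because it never moves at all.

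For the stability classification: when $c>0$, Proposition~\ref{prop:critical-point-lit}, item~2, tells us that $\rank((A^\ast)^h)<d_y$ forces $A^\ast$ to be a saddle point of $\mathcal{L}$ — and since $c = \frac{1}{2}\sum_i(1-s_i)\sigma_i^2 > 0$ means some $s_i=0$, i.e. $\rank((A^\ast)^h) = \1^\top\mathbf{s} < d_y$, every point of $\mathcal{M}_c$ is a saddle. Being a saddle of $\mathcal{L}$, it has a descent direction, so it cannot be Lyapunov stable for the gradient flow (trajectories starting along that direction move to strictly lower loss and, by pointwise convergence, land on a different critical set). For $c=0$: every $A^\ast\in\mathcal{M}_0$ is a global minimum of $\mathcal{L}\ge 0$, and $\mathcal{L}$ serves as a Lyapunov function — $\mathcal{L}(A)\ge 0 = \mathcal{L}(A^\ast)$ near $A^\ast$ and $\dot{\mathcal{L}}\le 0$ — which gives stability; but asymptotic stability fails because $A^\ast$ is not isolated (Proposition~\ref{prop:LevelSurf}), so no neighborhood of $A^\ast$ shrinks to $\{A^\ast\}$ under the flow (nearby global minima stay put).

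The main obstacle is the stability argument at the saddle points $c>0$: "saddle point of $\mathcal{L}$" is an algebraic/second-order statement (via the Hessian quadratic form $\mathfrak{h}(A,V)$ of \eqref{eq:hess-quadr-form}, which may be only positive semidefinite for non-strict saddles), whereas "not Lyapunov stable for the gradient flow" is a dynamical statement. For \emph{strict} saddles the link is classical. For \emph{non-strict} saddles one must argue more carefully that the existence of a strictly lower critical value arbitrarily nearby — combined with pointwise convergence and the fact that gradient trajectories cross level surfaces transversally (Theorem~2, p.~201 of \cite{hirsch1974differential}, invoked just before the proposition) so they cannot stall on $\mathcal{M}_c$ — forces instability; I would make this precise by exhibiting, near any $A^\ast\in\mathcal{M}_c$ with $c>0$, a point $A_0$ with $\mathcal{L}(A_0)<c$ (e.g. a small perturbation that raises the rank of the $h$-th power), whose trajectory then converges to some critical point at value $<c$ and thus leaves every small neighborhood of $A^\ast$.
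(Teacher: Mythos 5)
Your proof takes essentially the same route as the paper's: the count and values of the level sets come from Proposition~\ref{prop:critical-point-lit}, distinctness from Assumption~\ref{ass:distinct-sums}, unboundedness from Proposition~\ref{prop:LevelSurf}, invariance from the observation that $\mathcal{M}_c$ consists entirely of equilibria, the saddle classification from the rank criterion, Lyapunov stability of $\mathcal{M}_0$ from $\mathcal{L}\ge 0$ and $\dot{\mathcal{L}}\le 0$, and failure of asymptotic stability from non-isolation. (For disjointness you appeal to ``level sets at distinct values are disjoint by definition,'' which is cleaner than the paper's invocation of analyticity; both are fine.)

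One remark on your final paragraph: you have manufactured an obstacle that is not there. The proposition claims only that each $A^\ast\in\mathcal{M}_c$ with $c>0$ is a \emph{saddle point} --- and ``saddle point'' is defined earlier in the paper purely as a critical point that is neither a local minimizer nor a local maximizer, a classification on the loss landscape. The paper establishes this algebraically via the rank characterization of Proposition~\ref{prop:critical-point-lit}; no dynamical instability claim is made for $c>0$, and you do not need to bridge the Hessian quadratic form to Lyapunov (in)stability here. (That question is genuinely subtle for non-strict saddles, but it is addressed elsewhere in the survey, e.g. Theorem~\ref{thm:arcs} and the remarks after Proposition~\ref{prop:class-crit-points}, not in this proposition.)
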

\begin{proof}
From Proposition~\ref{prop:critical-point-lit}, the critical value $ c$ can assume only $ 2^{d_y} $ values $ \frac{1}{2}\sum_{i=1}^{d_y} (1-s_i) \sigma_i^2 $, $ s_i \in \{0, 1 \}$, $ i=1, \ldots, d_y$.
From Proposition~\ref{prop:dotL}, $ A^\ast \in \mathcal{M}_c $ corresponds to $ \left. \dot A \right|_{A=A^\ast} =0$, implying that $ \mathcal{M}_c $ is forward invariant. 
Unboundedness of the level surfaces $ \mathcal{M}_c $ is shown in Proposition~\ref{prop:LevelSurf}. 
For $ A^\ast \in \mathcal{M}_c$, when $ \mathcal{L}(A^\ast)>0$, then at least one of the $ s_i $ in \eqref{eq:critical-values} is equal to 0, hence $ \rank((A^\ast)^h)<d_y $, meaning, from Proposition~\ref{prop:critical-point-lit}, that $A^\ast$ is a saddle point. 
When $ c=0$, then $ \rank((A^\ast)^h) =d_y$, hence $ A^\ast $ is a global minimum. 
By Assumption~\ref{ass:distinct-sums}, the critical values \eqref{eq:critical-values} are all distinct.
From the analyticity of the gradient flow, it follows that the invariant sets $ \mathcal{M}_c $ must be disjoint.
Perturbing $ A^\ast \in \mathcal{M}_0$ locally, since $ \dot{\mathcal{L}}\leq 0 $ and the invariant manifolds $ \mathcal{M}_c $ are all disjoint in $ \mathcal{A}$, at any regular point $ A$ near $ \mathcal{M}_0$ the gradient flow can only decrease, returning towards $ \mathcal{M}_0$, hence $ A^\ast $ is Lyapunov stable. 
In Proposition~\ref{prop:LevelSurf} it is shown that no $ A^\ast  \in \mathcal{M}_0 $ is isolated, which implies that no $ A^\ast \in \mathcal{M}_0  $ can be asymptotically stable. 
\end{proof}

The $ 2^{d_y}$ invariant sets $ \mathcal{M}_c $ associated to the $ 2^{d_y}$ critical values of $ \mathcal{L}$ in Proposition~\ref{prop:critical-point-lit} are characterized by the fact that all entries of $ A^h $ are fixed univocally. For instance $ A^\ast \in \mathcal{M}_0 $ must correspond to $ (A^\ast)^h = \block_h(\Sigma)$. 
However, uniqueness of $ (A^\ast)^h $ does not imply uniqueness of $A^\ast $ (i.e., uniqueness of $ W_{1:h}^\ast $ does not imply uniqueness of $ W_1^\ast, \ldots ,  W_h^\ast$).

To investigate the issue, the procedure we  follow is analogous to the one introduced in \cite{trager2019pure}: the loss function map $ \mathcal{L} $ can be decomposed as a concatenation of two maps:
\[
\begin{array}{rccccc}
\mathcal{L}  : & \mathcal{A} & \stackrel{\phi} {\longrightarrow}  & \mathcal{A}_\phi & \stackrel{\left. \psi \right|_{\mathcal{A}_\phi}} {\longrightarrow} & \mathbb{R}^+\\
& A & \longmapsto & \phi(A)=A^h & \longmapsto & \|E - \phi(A) \|_F^2 
\end{array} .
\]
As $ \min_{i=1,\ldots, h-1} d_i \geq d_y $ and $ d_x \geq d_y$, the only case we are considering is the one which in \cite{trager2019pure} is denoted ``filling'', i.e., $ \phi $ is surjective in $ \mathbb{R}^{d_y \times d_x}$.  
However, while $ \psi$ is convex, $ \phi $ is not, and this makes the composite map $ \mathcal{L} = \psi \circ \phi $ non-convex.

In order to investigate the dynamics of the gradient flow, it is convenient to see $ \mathcal{A}_\phi $ as the quotient space of an equivalence relation induced by the map $  \phi $, denoted ``$ \sim $'': for $ A_1, \,  A_2 \in \mathcal{A} $, it is $ A_1 \sim A_2 $ if  $\phi(A_1)  = \phi(A_2) $  (i.e. $ A_1^h = A_2^h $).
The operation ``$ \sim$'' is trivially an equivalence relation, and as such it provides a partition of $ \mathcal{A}$ into disjoint equivalence classes: for each $ \Phi \in \mathcal{A}_\phi$ there is an associated fiber $ \phi^{-1}(\Phi)\subset \mathcal{A} $.
We want to give a compact description of the equivalence classes of ``$ \sim $''. For a given $ A_1 \in \mathcal{A} $, $A_2 \sim A_1 $ can originate from two different factors, possibly acting jointly:
\benu
\item change of basis in the hidden nodes via a matrix $ P \in \mathcal{P}$;
\item addition to $ A_1 $ of terms $ Z \in \mathcal{A} $ of nilpotency index $ \leq h$ mapped in the null space of some power $ A^k $, $ k\leq h $.
\eenu

The next proposition is a special case of a more general sufficient condition for equivalence treated in Proposition~\ref{prop:equival-classes-power} in the Appendix. 
\begin{proposition}
\label{prop:equival-classes-A}
Consider $ A_i\in \mathcal{A}$, $i=1,\, 2$.
Then $ A_1 \sim A_2 $ if $ A_2 = P (A_1 + Z ) P^{-1} $, where $ P \in \mathcal{P} $ and $ Z \in \mathcal{A} $ s.t. $ A_1 Z = Z A_1 =0 $, plus $ Z^h =0$.
\end{proposition}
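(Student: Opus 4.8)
The plan is to verify directly that the hypotheses force $A_1^h = A_2^h$, which is exactly the meaning of $A_1 \sim A_2$. First I would write $A_2^h = P(A_1+Z)^h P^{-1}$, using that conjugation commutes with taking powers, thereby reducing the problem to two independent claims: that $(A_1+Z)^h = A_1^h$, and that conjugation by $P \in \mathcal{P}$ fixes the matrix $A_1^h$.

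For the first claim, I would expand $(A_1+Z)^h$ as the sum of the $2^h$ words of length $h$ in the letters $A_1$ and $Z$. Since $A_1 Z = Z A_1 = 0$, every word that contains at least one $A_1$ and at least one $Z$ vanishes: such a word must somewhere switch from one letter to the other, producing an adjacent pair $A_1 Z$ or $Z A_1$, each of which is zero. Hence only the two pure words survive, giving $(A_1+Z)^h = A_1^h + Z^h$, and the assumption $Z^h = 0$ leaves $(A_1+Z)^h = A_1^h$. (Equivalently, one may invoke the binomial theorem, valid because $A_1$ and $Z$ commute, and then discard every mixed term as it carries a factor $A_1 Z = 0$.)

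For the second claim, I would use the block-shift bookkeeping of Section~\ref{sec:block-shift} together with the computation of powers in Proposition~\ref{prop:grad-A}: $A_1^h = \block_h(W_{1:h})$ has a single nonzero block, in block-position $(h+1,1)$. Writing $P = \block_0(I_{d_x}, P_1, \ldots, P_{h-1}, I_{d_y})$, left multiplication by $P$ affects that block only through the $(h+1,h+1)$ diagonal block of $P$, which is $I_{d_y}$, and right multiplication by $P^{-1}$ only through its $(1,1)$ block, which is $I_{d_x}$; hence $P A_1^h P^{-1} = A_1^h$. Combining the two claims, $A_2^h = P A_1^h P^{-1} = A_1^h$, i.e.\ $A_1 \sim A_2$. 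I would also note in passing that $A_2 = P(A_1+Z)P^{-1} \in \mathcal{A}$, since $A_1 + Z \in \mathcal{A}$ and conjugation by $\mathcal{P}$ preserves the block-shift form, as recorded before Proposition~\ref{prop:constants-equival}.

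The argument is essentially a computation and I do not expect a genuine obstacle; the only point requiring a little care is the vanishing of the mixed terms in $(A_1+Z)^h$, where one must use both $A_1 Z = 0$ and $Z A_1 = 0$ (not merely commutativity) so that no reordered word can survive, together with the separate hypothesis $Z^h = 0$, which does not follow from $Z \in \mathcal{A}$ alone, since elements of $\mathcal{A}$ are only guaranteed nilpotent of index $h+1$.
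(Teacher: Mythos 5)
Your proof is correct and is essentially the paper's argument carried out directly rather than via the more general block-level lemma the paper proves in the appendix (Proposition~\ref{prop:equival-classes-power}): where the paper telescopes the $P_j^{-1}P_j$ factors in $W_{1:h}^{(2)}$ and expands that product into words in the $W_j^{(1)}$ and $Z_j$, you observe that $P$ fixes the single nonzero block of $A_1^h$ and expand $(A_1+Z)^h$ into words in $A_1$ and $Z$ --- the same computation written at a different level of granularity. Your closing remark correctly identifies the two places requiring care: using both $A_1Z=0$ and $ZA_1=0$ to annihilate every mixed word, and the separate hypothesis $Z^h=0$, which indeed is not automatic since $Z\in\mathcal{A}$ only guarantees $Z^{h+1}=0$.
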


Obtaining a necessary and sufficient condition for $ A_1 \sim A_2 $ appears a hard problem. Nevertheless, when we focus on the critical points, as in next section, the condition provided in Proposition~\ref{prop:equival-classes-A} is useful in obtaining a thorough description.

The major advantage in considering the equivalence relation $ \sim$ is that the quotient space $ \mathcal{A}_\phi $ inherits all critical points of $ \mathcal{A}$ with the same stability properties, but these critical points are isolated in $ \mathcal{A}_\phi$. 

\begin{proposition} 
Consider the gradient flow system \eqref{eq:grad-flowA}. 
Under Assumptions~\ref{ass:size} and~\ref{ass:distinct-sums},
\benu
\item Each invariant set $ \mathcal{M}_c $ is mapped by $ \phi $ into a single point $ \Phi_c^\ast = \phi(A^\ast)$, $ \forall \; A^\ast \in \mathcal{M}_c $. The set $ \mathcal{A}_\phi $ has $ 2^{d_y} $ critical points $ \Phi_c^\ast$, one for each invariant set $ \mathcal{M}_c $, of critical value given in \eqref{eq:critical-values}. 
\item Each critical point $ \Phi_c^\ast$ is isolated in $ \mathcal{A}_\phi $. 
\item Of these critical points $ \Phi_c^\ast$, $ 2^{d_y} - 1 $ are saddle points and correspond to signature vectors $ \mathbf{s} \neq \1$ in \eqref{eq:critical-values}, while $ \mathbf{s}=\1 $ corresponds to $ \Phi_0^\ast = \phi(A_0^\ast ) $ with $ \mathcal{L}(A_0^\ast )=0$, and is a global minimum. 

\eenu\end{proposition}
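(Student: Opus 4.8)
The plan is to obtain all three items directly from the survey results of Proposition~\ref{prop:critical-point-lit} together with Assumption~\ref{ass:distinct-sums}, using that in the ``filling'' case $\phi$ is surjective, so that $\mathcal{A}_\phi$ may be identified with the Euclidean space $\block_h(\mathbb{R}^{d_y\times d_x})$. For item~1 I would take any $A^\ast\in\mathcal{M}_c$ (an equilibrium of \eqref{eq:grad-flowA} with $\mathcal{L}(A^\ast)=c$, by Proposition~\ref{prop:dotL}) and set $r=\rank((A^\ast)^h)$. Proposition~\ref{prop:critical-point-lit}, item~3a, provides a \emph{unique} signature $\mathbf{s}$ with $\1^\top\mathbf{s}=r$ for which $\phi(A^\ast)=(A^\ast)^h=\block_h(S\Sigma)$, and $\mathcal{L}(A^\ast)=\tfrac12\sum_{i=1}^{d_y}(1-s_i)\sigma_i^2=\tfrac12\|\Sigma\|_F^2-\tfrac12\sum_{i\in\mathcal{S}}\sigma_i^2$. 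By Assumption~\ref{ass:distinct-sums} the partial sums $\sum_{i\in\mathcal{S}}\sigma_i^2$ are pairwise distinct, so the value $c$ determines $\mathbf{s}$, hence determines $\phi(A^\ast)=\block_h(S\Sigma)=:\Phi_c^\ast$; thus $\phi$ is constant on $\mathcal{M}_c$. Conversely, item~3b of Proposition~\ref{prop:critical-point-lit} realizes each of the $2^{d_y}$ signatures $\mathbf{s}$ by some critical point, and distinct $\mathbf{s}$ give distinct partial sums, hence distinct $c$, hence $2^{d_y}$ distinct nonempty level sets $\mathcal{M}_c$ and (since $\sigma_i\neq0$ lets $S\Sigma$ recover $\mathbf{s}$) $2^{d_y}$ distinct points $\Phi_c^\ast$, with critical values \eqref{eq:critical-values}.

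Before items~2 and~3 I would fix the convention that $\Phi\in\mathcal{A}_\phi$ is \emph{critical} precisely when its fiber $\phi^{-1}(\Phi)$ contains equilibria of \eqref{eq:grad-flowA}; the $\Phi_c^\ast$ of item~1 are exactly these. This is consistent because any two equilibria in one fiber have $(A^\ast)^h=\block_h(S\Sigma)$ for the \emph{same} $\mathbf{s}$, hence the same $r=\1^\top\mathbf{s}=\rank((A^\ast)^h)$, so by Proposition~\ref{prop:critical-point-lit}, item~2, they are simultaneously global minima ($r=d_y$) or simultaneously saddle points ($r<d_y$): the ``nature'' of $\Phi_c^\ast$ is well defined as the common nature of the equilibria in its fiber, and in this sense $\mathcal{A}_\phi$ inherits the stability type. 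Item~2 is then immediate: $\mathcal{A}_\phi\cong\mathbb{R}^{d_y\times d_x}$ carries only the $2^{d_y}$ pairwise distinct critical points $\Phi_c^\ast$, and finitely many distinct points of a Euclidean space are each isolated (a ball around $\Phi_c^\ast$ small enough to exclude all $\Phi_{c'}^\ast$, $c'\neq c$, contains no other critical point of $\mathcal{A}_\phi$).

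For item~3, $\mathbf{s}=\1$ gives $c=\tfrac12\sum_i(1-1)\sigma_i^2=0$, so $\Phi_0^\ast=\phi(A_0^\ast)$ with $\mathcal{L}(A_0^\ast)=0$; since $\mathcal{L}\geq0$ with minimum value $0$ (Proposition~\ref{prop:LevelSurf}) and every local minimum of $\mathcal{L}$ is global (Proposition~\ref{prop:critical-point-lit}, item~1b), $A_0^\ast$ is a global minimum, and by the previous paragraph so is $\Phi_0^\ast$. For any $\mathbf{s}\neq\1$ some $s_i=0$, so $r=\1^\top\mathbf{s}<d_y$, whence $\rank((A^\ast)^h)<d_y$ and Proposition~\ref{prop:critical-point-lit}, item~2b, makes the equilibria in the fiber, hence $\Phi_c^\ast$, saddle points, with $c=\tfrac12\sum_{s_i=0}\sigma_i^2>0$ since $\sigma_i\neq0$. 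As exactly one of the $2^{d_y}$ signatures equals $\1$, this leaves $2^{d_y}-1$ saddle points and one global minimum.

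The only point requiring genuine care is the well-definedness discussed above: a fiber over a critical $\Phi_c^\ast$ need not consist \emph{entirely} of equilibria — over $\Phi_0^\ast$ it does, because $E-A^h=0$ forces $\dot A=0$, but over $\Phi_c^\ast$ with $c>0$ it generically does not — yet the equilibria it does contain all share $\rank(A^h)=r$ and therefore the same stability character, so transferring the ``nature'' to $\mathcal{A}_\phi$ is legitimate. Everything else is bookkeeping layered on top of Proposition~\ref{prop:critical-point-lit} and the distinctness of the partial sums in Assumption~\ref{ass:distinct-sums}; if one wants an explicit description of the fibers themselves (not needed here) one can invoke Proposition~\ref{prop:equival-classes-A}.
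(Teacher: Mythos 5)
Your proposal is correct and follows essentially the same route as the paper's proof: identify each critical point with its signature vector $\mathbf{s}$ via Proposition~\ref{prop:critical-point-lit}, use Assumption~\ref{ass:distinct-sums} to make the map $c \mapsto \mathbf{s}$ injective, and read off the count, isolation, and saddle/minimum dichotomy. The one place you improve on the paper is by actually unpacking the two spots the paper dismisses with ``by construction'' — that the critical value $c$ determines $\mathbf{s}$ and hence $\phi(A^\ast)$ (so $\phi$ really is constant on $\mathcal{M}_c$), and that transferring the stability type to $\Phi_c^\ast$ is well-defined because all equilibria in a fiber share the same rank — and both of those clarifications are worth having.
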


\begin{proof}
Let $ \Phi_k^\ast = \phi(A_k^\ast)$. 
By construction, if $ A_1^\ast, \, A_2^\ast \in \mathcal{M}_c $, then $ \Phi_1^\ast = \Phi_2^\ast$, i.e., the two critical points are mapped to the same value in $ \mathcal{A}_\phi$. 
Since this is true $ \forall \, A^\ast \in \mathcal{M}_c $, the first statement follows. 
When Assumption~\ref{ass:distinct-sums} holds, all critical values \eqref{eq:critical-values} are distinct and all $ \mathcal{M}_c $ are disjoint, hence each $ \Phi^\ast $ must be isolated in $ \mathcal{A}_\phi$, and the $ 2^{d_y}$ critical points of $ \mathcal{L}$ become isolated in $ \mathcal{A}_\phi$.
The stability character of $ \Phi_c^\ast = \phi(A_c^\ast) $ is inherited from $ A_c^\ast $.
In fact, by construction all $ \Phi_c^\ast $ s.t. $ \mathbf{s}\neq \1 $ correspond to $ \rank (\Phi_c^\ast) < d_y $.
\end{proof}

\subsection{Parametrization of all critical points of $ \mathcal{L}$}
\label{sec:param-crit-p}
In this section we follow \cite{achour2021loss}. 
Consider  $ \Phi^\ast \in \mathcal{A}_\phi $ associated to a critical point $ A^\ast \in \mathcal{A}$ (i.e., s.t. $ \phi(A^\ast )= \Phi^\ast$), of signature $ \mathbf{s} =\begin{bmatrix} s_1 & \ldots & s_{d_y} \end{bmatrix}^\top $. Denote $ \mathcal{S} = \{ i \text{ s.t. } s_i =1, \;  s_i \in \mathbf{s} \} $ of $ {\rm card}(\mathcal{S})=r \leq d_y $, and $ \mathcal{Q}=\{1, \ldots, d_y\}\setminus \mathcal{S}$. Denote further $ U_\mathcal{S}\in \mathbb{R}^{d_y \times r }  $ (resp. $ U_\mathcal{Q} \in \mathbb{R}^{d_y \times (d_y -r) } $) the subset of columns of $ I_{d_y} $ with indices in $ \mathcal{S} $ (resp. in $ \mathcal{Q}$). 
Then $ \Phi^\ast = \block_h (U_\mathcal{S} U_\mathcal{S}^\top \Sigma ) $, and there exists an infinite number of critical points in $ \phi^{-1}(\Phi^\ast) $.
The parametrization provided in \cite{achour2021loss} of such critical points can be rewritten in our notation as follows. 
Start from a ``near-diagonal'' representation $ A_1 \in \phi^{-1}(\Phi^\ast) $ expressed as 
\beq
A_1 = \block_1 \left( \begin{bmatrix} U_\mathcal{S}^\top \Sigma \\ 0 \end{bmatrix}, \begin{bmatrix} I_r & 0 \\ 0 & 0 \end{bmatrix} , \ldots  \begin{bmatrix} I_r & 0 \\ 0 & 0 \end{bmatrix} ,  \begin{bmatrix}  U_\mathcal{S} & 0  \end{bmatrix} \right),
\label{eq:A-param1}
\eeq
where the matrices of zeros have suitable dimensions (see $Z$ below).
The description of the fiber $ \phi^{-1}(\Phi^\ast) $ is given by $ A^\ast = P (A_1 + Z )P^{-1}$ with $ P\in \mathcal{P}$ and 
\beq
Z  = \block_1 \left( \begin{bmatrix} 0 \\ Z_1 \end{bmatrix}, \begin{bmatrix} 0 & 0 \\ 0 & Z_2 \end{bmatrix} , \ldots  \begin{bmatrix} 0 & 0 \\ 0 & Z_{h-1} \end{bmatrix} ,  \begin{bmatrix} 0 & U_\mathcal{Q} Z_h  \end{bmatrix} \right),
\label{eq:Z-param1}
\eeq
where the dimensions of the $Z_i $ blocks are 
$ Z_1 \in \mathbb{R}^{(d_1-r)\times d_x}$, 
$ Z_i \in \mathbb{R}^{(d_i-r) \times (d_{i-1}-r) }$, $ i=2,\ldots, h-1$,  
$ Z_h \in \mathbb{R}^{(d_y-r) \times (d_{h-1}-r) }$, and some of the $ Z_i $ blocks ($i=1, \ldots, h$) may be empty. 
In \cite{achour2021loss} there is a slight difference between the necessary condition (given in Proposition 9 of \cite{achour2021loss}) and the sufficient condition (given in Proposition~10 of \cite{achour2021loss}) for $A^\ast $ to be a critical point: both are based on the parametrization~\eqref{eq:A-param1}-\eqref{eq:Z-param1}, but sufficiency requires $ Z_i =0 $ and $ Z_j =0 $ for some $ i\neq j$.

Recall that a saddle point is said strict if its associated Hessian has at least a negative eigenvalue. 
It is said non-strict if all the eigenvalues of the Hessian are nonnegative. 
From Theorem~7 of \cite{achour2021loss}, we have the following classification.

\begin{proposition}
\label{prop:class-crit-points}
Consider the critical points of the gradient flow system \eqref{eq:grad-flowA}, represented as $ A^\ast = P (A_1 + Z )P^{-1}$, with $ A_1 $ and $ Z$ given in \eqref{eq:A-param1} and \eqref{eq:Z-param1}. Then $A^\ast $ can be split into:
\benu
\item Global minima: $ {\rm card}(\mathcal{S})= r =d_y $ (i.e., $\mathbf{s}=\1$) $  \Longrightarrow \;  U_\mathcal{S} =I_{d_y} , \; U_\mathcal{Q}=0$ (Proposition~11 of \cite{achour2021loss});
\item Saddle points: $ {\rm card}(\mathcal{S})=r<d_y $ (i.e., $ \mathbf{s}\neq \1 $), plus $ Z_i =0 $ and $ Z_j =0 $ for some $ i\neq j $ (Proposition~10 in \cite{achour2021loss}). Two cases are possible:
\benu
\item If $ \mathcal{S} \neq \{ 1, \ldots, r\} $ $ \Longrightarrow $ any $ A^\ast \in \phi^{-1}(\Phi^\ast) $ is a strict saddle point; 

\item If $ \mathcal{S} = \{ 1, \ldots, r\} $ $ \Longrightarrow $ the saddle point is an implicit regularizer for rank-$r$ matrices: $ \Phi^\ast = \block_h(U_\mathcal{S} U_\mathcal{S}^\top \Sigma) = {\rm argmin}_{R\in \mathbb{R}^{d_y \times d_x} \atop \rank(R)\leq r } \| R - \Sigma\|_F^2 $. The saddle point $A^\ast \in \phi^{-1}(\Phi^\ast) $ can be strict or non-strict,
according to ``tightenedness'' of $A^\ast$ \cite{achour2021loss}:
\benu
\item if $ A^\ast $ not tightened then $ A^\ast $ is a strict saddle point;
\item if $ A^\ast $ tightened then $ A^\ast $ is a non-strict saddle point.
\eenu
\eenu
\eenu
\end{proposition}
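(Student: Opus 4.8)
The plan is to recover Theorem~7 of \cite{achour2021loss} in the adjacency-matrix language, taking the Hessian quadratic form \eqref{eq:hess-quadr-form} as the working tool and the parametrization \eqref{eq:A-param1}--\eqref{eq:Z-param1} to put every critical point in canonical form. The first step is a reduction: for $P\in\mathcal{P}$ the power $A^h=\block_h(W_{1:h})$ has its only nonzero block in position $(h+1,1)$ and $P$ is the identity on the input and output blocks, so $PA^hP^{-1}=A^h$; hence $\mathcal{L}$ is invariant under conjugation by $\mathcal{P}$ and $\mathfrak{h}(PAP^{-1},PVP^{-1})=\mathfrak{h}(A,V)$. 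Therefore the strict/non-strict character of $A^\ast=P(A_1+Z)P^{-1}$ equals that of $A_1+Z$, and we may set $P=I$ throughout. The global-minimum case is then immediate: if $\mathbf{s}=\1$ then $r=d_y$ and $(A^\ast)^h=\block_h(\Sigma)=E$, so $E-A^h=0$ and $\mathfrak{h}(A^\ast,V)=\|\rho_1(A^\ast,V)\|_F^2\ge 0$ for all $V\in\mathcal{A}$; combined with Proposition~\ref{prop:critical-point-lit}, this proves item~1 and shows all remaining critical points ($r<d_y$) are saddles, so only strictness is at stake.

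For item~2(a), $\mathcal{S}$ not being an initial segment forces the existence of $\ell\in\mathcal{S}$ and $k\notin\mathcal{S}$ with $\sigma_k>\sigma_\ell$. Working from the near-diagonal $A_1$ of \eqref{eq:A-param1}, I would take $V=\block_1(V_1,\dots,V_h)\in\mathcal{A}$ to be the infinitesimal generator of the coherent Givens rotation that rotates the $\ell$-th and $k$-th coordinates in every layer simultaneously. Along $V$ the first variation vanishes (critical point), and expanding $\rho_1$, $\rho_2$ via \eqref{eq:rho1}--\eqref{eq:rho2} and the block-shift product rules \eqref{eq:FGF}--\eqref{eq:FGFGF} leaves a dominant, strictly negative contribution to $\mathfrak{h}(A_1,V)$ proportional to $-(\sigma_k^2-\sigma_\ell^2)$ --- consistent with the fact that this rotation drives $(A_1)^h$ toward the strictly-lower-loss critical point obtained by replacing $\ell$ by $k$ in $\mathcal{S}$. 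Hence $\mathfrak{h}(A_1,V)<0$, and by the conjugation reduction every $A^\ast\in\phi^{-1}(\Phi^\ast)$ in this case is a strict saddle.

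For item~2(b), $\mathcal{S}=\{1,\dots,r\}$ together with $\sigma_1>\dots>\sigma_{d_y}$ identifies $\Phi^\ast=\block_h(U_\mathcal{S}U_\mathcal{S}^\top\Sigma)$ with the unique best rank-$r$ approximation of $\Sigma$ (Eckart--Young), which is the implicit-regularizer statement. The remaining dichotomy is genuinely second order: by Eckart--Young no rank-preserving or rank-decreasing perturbation lowers $\mathcal{L}$, so a negative direction for $\mathfrak{h}$, if any, must be rank-increasing, i.e. it must push $(A^\ast)^h$ toward one of the unlearnt directions $\sigma_i$, $i\notin\mathcal{S}$. Whether such a $V$ yields $-2\tr(\rho_2(A^\ast,V)(E-A^h)^\top)<0$ is governed by how the null-space blocks $Z_i$ of \eqref{eq:Z-param1} are positioned relative to the factors $U_\mathcal{S}$; following \cite{achour2021loss} one shows $\mathfrak{h}(A^\ast,V)\ge 0$ for all $V$ exactly when $A^\ast$ is \emph{tightened}, whereas a non-tightened $A^\ast$ admits a rank-increasing $V$ with $\mathfrak{h}(A^\ast,V)<0$.

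The main obstacle is precisely this last step: from \eqref{eq:A-param1}--\eqref{eq:Z-param1}, constructing the explicit rank-increasing test direction and carrying the $\rho_1$/$\rho_2$ bookkeeping through the block-shift powers far enough to read off the sign of $\mathfrak{h}$, together with a clean restatement of the tightenedness condition in the adjacency-matrix notation. Item~1, item~2(a), and the Eckart--Young portion of item~2(b) are comparatively routine once the conjugation reduction and the Hessian quadratic form are in hand.
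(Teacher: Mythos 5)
The paper does not actually prove this proposition; it states it as a direct citation of Theorem~7 (and Propositions~10 and 11) of \cite{achour2021loss}. What the paper does contain are two subsequent, self-contained verifications of fragments of item~2: Proposition~\ref{prop:var-neg-Hessian} exhibits a negative direction of $\mathfrak{h}$ for case 2(a), and the unnumbered proposition after it shows $\mathfrak{h}\ge 0$ for the tightened instance of 2(b.ii) with at least three $Z_i=0$. So you should be aware you are reconstructing a proof the paper only references, and the natural yardstick is whether your sketch matches those two explicit Hessian computations.

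Your conjugation reduction and the global-minimum case are correct and match the paper exactly (it is Lemma~\ref{lem:equivariant_rho} that justifies passing to $P=I$, and $E-A^h=0$ at a global minimum makes $\mathfrak{h}=\|\rho_1\|_F^2\ge 0$). The genuine gap is in item~2(a). Your proposed test direction, the ``infinitesimal generator of the coherent Givens rotation that rotates the $\ell$-th and $k$-th coordinates in every layer simultaneously,'' is not well defined in the near-diagonal parametrization \eqref{eq:A-param1}: the intermediate blocks of $A_1$ are $\begin{bmatrix}I_r&0\\0&0\end{bmatrix}$ and carry no trace of the output index set $\mathcal{S}$, so there is no ``$\ell$-th versus $k$-th coordinate'' to rotate at those layers. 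Only the first and last blocks encode $\mathcal{S}$ through $U_\mathcal{S}^\top\Sigma$ and $U_\mathcal{S}$. Accordingly, the direction used in Proposition~\ref{prop:var-neg-Hessian} is supported \emph{only} on the first and last layers, $R=\block_1\left(\begin{bmatrix}U_\mathcal{K}^\top\Sigma\\0\end{bmatrix},0,\ldots,0,\begin{bmatrix}U_\mathcal{K}&0\end{bmatrix}\right)$ with $U_\mathcal{K}=\mathbb{E}_{i,j}$, and it is a rank-one injection, not a rotation generator. This support pattern is essential: it forces $A_1^a R A_1^b=0$ whenever both $a>0$ and $b>0$, so that $\rho_2$ collapses to the single term $RA_1^{h-2}R$, and the whole quadratic form evaluates exactly to $-(\sigma_i^2-\sigma_j^2)$ rather than ``a dominant negative contribution.'' A $V$ with nonzero intermediate blocks would not enjoy this cancellation, and you would be left with an uncontrolled $\rho_2$ sum whose sign is unclear. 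Replacing your Givens ansatz with the paper's two-block $R$ (or its conjugate by $P$) closes that gap. For item~2(b) you openly acknowledge that the rank-increasing test direction and the tightenedness bookkeeping are left unfinished, which is fair; note that the paper likewise does not prove the full ``tightened $\Leftrightarrow$ non-strict'' equivalence, only the nonnegativity of $\mathfrak{h}$ under the sufficient condition that three of the $Z_i$ vanish.
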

The concept of ``tightenedness'' of $A^\ast $ mentioned in Proposition~\ref{prop:class-crit-points} is given in \cite{achour2021loss} and has to do with the presence and location in $A^\ast $ of blocks of rank exactly equal to $r$. A proper definition is not needed here. It is enough to recall that a sufficient condition for a critical point $A^\ast$ to be tightened is that at least 3 of the blocks of $ A^\ast $ have rank $r$ (and all other blocks rank $ \geq r $). In terms of \eqref{eq:A-param1}-\eqref{eq:Z-param1}, this implies that at least 3 of the $ Z_i $ blocks in \eqref{eq:Z-param1} are equal to $0$.
Notice that $A_1 $ in \eqref{eq:A-param1} is always tightened, hence always leading to a non-strict saddle point when $  \mathcal{S} = \{ 1, \ldots, r\} $ with $ r<d_y$.

\subsection{Using the Hessian quadratic form}
When $A$ is a matrix, a critical point is strict if we can find a variation $ V \in \mathcal{A} $ in 
which the Hessian quadratic form $\mathfrak{h}$ is negative: $ \mathfrak{h}(A,V)<0$. 
It is non-strict if $ \mathfrak{h}(A,V)\geq 0 $ for all $ V \in \mathcal{A}$.
In this section we review some of the calculations of \cite{achour2021loss} in these two cases, as a way to show how our formalism can help reducing the notation and computational burden.

The following lemma is useful to establish that a change of basis in the hidden nodes does not change the nature of a critical point, akin to Lemma 16 of \cite{achour2021loss}. 
\begin{lemma} \label{lem:equivariant_rho}
    For all $A, V\in \mathcal{ A} $, $P \in \mathcal{P}$, and $k=1,...,h$, it is $ \rho_k(PAP^{-1},V) = \rho_k(A,P^{-1} V P)$.
\end{lemma}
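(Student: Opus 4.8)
The plan is to compute $\rho_k(PAP^{-1},V)$ directly by pushing the conjugation through each power of $A$; this yields the target identity only up to an outer conjugation by $P$, so the actual content of the lemma is to argue that this outer conjugation acts trivially. That last point is where the block-shift structure of $\mathcal{A}$ and the fact that every $P\in\mathcal{P}$ restricts to the identity on the input and output layers become essential.

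First I would recall the general form of $\rho_k$, namely $\rho_k(A,V)=\sum_{i_0+\cdots+i_k=h-k,\ i_j\ge 0} A^{i_0}VA^{i_1}V\cdots VA^{i_k}$, of which \eqref{eq:rho1} and \eqref{eq:rho2} are the cases $k=1,2$. Since $A,V\in\mathcal{A}$ are in block-shift form $\block_1(\cdot)$ and $A^i$ is in $\block_i(\cdot)$ form, each summand has total block-shift degree $(h-k)+k=h$; by the calculus rules of Section~\ref{sec:block-shift} it is therefore of the form $\block_h(N)$ with $N\in\mathbb{R}^{d_y\times d_x}$, and hence so is $\rho_k(A,V)$ itself, for every $A,V\in\mathcal{A}$ and every $k\in\{1,\ldots,h\}$.

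Next I would carry out the substitution. Using $(PAP^{-1})^{i}=PA^{i}P^{-1}$, a generic summand of $\rho_k(PAP^{-1},V)$ collapses into $PA^{i_0}(P^{-1}VP)A^{i_1}(P^{-1}VP)\cdots(P^{-1}VP)A^{i_k}P^{-1}$; summing over the index set and noting that $P^{-1}VP\in\mathcal{A}$ (its $(i+1,i)$ block is $P_i^{-1}V_iP_{i-1}$, with $P_0=I_{d_x}$, $P_h=I_{d_y}$), this gives $\rho_k(PAP^{-1},V)=P\,\rho_k(A,P^{-1}VP)\,P^{-1}$.

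Finally I would close the gap. By the first step $\rho_k(A,P^{-1}VP)=\block_h(N')$ for some $N'\in\mathbb{R}^{d_y\times d_x}$. Since $P=\block_0(I_{d_x},P_1,\ldots,P_{h-1},I_{d_y})$ and $P^{-1}$ are block-diagonal, conjugation acts block-diagonally on block positions, so the only nonzero block of $P\,\block_h(N')\,P^{-1}$ is $[P]_{(h+1,h+1)}\,N'\,[P^{-1}]_{(1,1)}=I_{d_y}N'I_{d_x}=N'$; hence $P\,\rho_k(A,P^{-1}VP)\,P^{-1}=\rho_k(A,P^{-1}VP)$, which together with the previous identity gives $\rho_k(PAP^{-1},V)=\rho_k(A,P^{-1}VP)$. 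The main obstacle is precisely this last step: the plain equality (rather than mere equality up to conjugacy) genuinely relies on $\rho_k$ always landing in the bottom-left $d_y\times d_x$ corner block and on $\mathcal{P}$ acting by the identity at the input and output layers; without the block-shift structure of $\mathcal{A}$ one would only obtain the weaker conjugated identity.
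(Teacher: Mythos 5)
Your proof is correct and follows essentially the same route as the paper: substitute $(PAP^{-1})^{i}=PA^{i}P^{-1}$ to obtain $P\,\rho_k(A,P^{-1}VP)\,P^{-1}$, then use that each summand lands in $\block_h(\cdot)$ form together with the identity first and last blocks of $P$ to drop the outer conjugation. Your write-up is, if anything, slightly more explicit than the paper's about why the outer $P,P^{-1}$ act trivially.
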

\begin{proof}
    We have
\begin{align}
	\rho_k(A,V) & = \sum_{k_1+...+k_h = h-k \atop k \text{ factors } V} A^{k_1} V A^{k_2} V \cdots A^{k_{h-1}} V A^{k_h} \label{eq:rho_k_A} \\
	\rho_k(PAP^{-1},V) & = \sum_{k_1+...+k_h = h-k \atop k \text{ factors } V} P A^{k_1} P^{-1} V P A^{k_2} P^{-1} V \cdots P A^{k_{h-1}} P^{-1} V P A^{k_h} P^{-1} .\label{eq:rho_k_PAP}
\end{align}
From \eqref{eq:FGF}, \eqref{eq:FGFGF} and similar expressions when the number of factors $V$ is equal to $k$,  $\rho_k(PAP^{-1},V)$ is invariant to multiplication by $P$ and $P^{-1}$ from both left and right, as the first and last blocks of $P\in \mathcal{ P}$ are identities. By direct identification, then, $\rho_k(PAP^{-1},V) = \rho_k(A,P^{-1}VP)$.
\end{proof}

We continue by establishing the rather straightforward fact that each critical point of \eqref{eq:grad-flowA} must correspond to a ``degenerate'' Hessian, i.e., that the Hessian quadratic form must be vanishing along some nontrivial directions because the critical points are not isolated. 

\begin{proposition}
For any critical point $ A \in \mathcal{A} $ of\eqref{eq:grad-flowA} there exist variations $ V \in \mathcal{A}$ s.t. $ \mathfrak{h}(A,V)=0$. 
\end{proposition}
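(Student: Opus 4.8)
The plan is to exploit the fact, already recorded in Proposition~\ref{prop:LevelSurf}, that no critical point of $\mathcal{L}$ is isolated: around a critical point $A$ one can produce a nontrivial curve $\epsilon\mapsto A(\epsilon)\in\mathcal{A}$, $A(0)=A$, along which $\mathcal{L}$ is \emph{exactly} constant. Then $g(\epsilon):=\mathcal{L}(A(\epsilon))$ is constant, so $g''(0)=0$; expanding $\mathcal{L}(A+H)$ around $A$ with $H=A(\epsilon)-A=\epsilon V+O(\epsilon^2)$ and $V:=\dot A(0)$, the first-order term vanishes because $A$ is a critical point (the first variation is zero in every direction), and the second-order term is $\tfrac{\epsilon^2}{2}\mathfrak{h}(A,V)+O(\epsilon^3)$ by \eqref{eq:hess-quadr-form}. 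Matching the coefficient of $\epsilon^2$ in $g$ then forces $\mathfrak{h}(A,V)=0$, and the point is to arrange $V\neq 0$.

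For $A\neq 0$ I would take the conjugation curve $A(\epsilon)=P(\epsilon)AP(\epsilon)^{-1}$ with $P(\epsilon)=\block_0(I_{d_x},e^{\epsilon N_1},\dots,e^{\epsilon N_{h-1}},I_{d_y})\in\mathcal{P}$. Conjugation by $\mathcal{P}$ preserves the block-shift structure, so $A(\epsilon)\in\mathcal{A}$, and since the outer diagonal blocks of $P(\epsilon)$ are identities while $A^h=\block_h(W_{1:h})$ has its only nonzero block in position $(h+1,1)$, we get $A(\epsilon)^h=P(\epsilon)A^hP(\epsilon)^{-1}=A^h$, hence $\mathcal{L}(A(\epsilon))\equiv\mathcal{L}(A)$ by Proposition~\ref{prop:grad-A}. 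Differentiating at $\epsilon=0$ gives $V=[N,A]\in\mathcal{A}$ with $N=\block_0(0,N_1,\dots,N_{h-1},0)$; a direct block computation gives $V=\block_1(N_1W_1,\,N_2W_2-W_2N_1,\,\dots,\,-W_hN_{h-1})$, and inspecting the lower-diagonal blocks $N_1W_1$, then $N_2W_2-W_2N_1$ with $N_1=0$, and so on down to $-W_hN_{h-1}$, one checks that $V$ vanishes for every choice of the $N_i$ only if all $W_i=0$, i.e.\ only if $A=0$. So for $A\neq 0$ some choice of $N$ gives $V\neq 0$, and the argument of the first paragraph applies.

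For $A=0$ the conjugation curve degenerates, so this case needs a separate (easy) treatment. If $h=1$ then $A=0$ is not even a critical point, since $\nabla_A\mathcal{L}(0)=-E\neq 0$, so nothing is to be proved. If $h\geq 2$, take instead the straight line $A(\epsilon)=\epsilon Z$ with $Z=\block_1(\zeta_1,0,\dots,0)\in\mathcal{A}$, $\zeta_1\neq 0$; then $Z^2=0$, so $A(\epsilon)^h=\epsilon^hZ^h=0=A^h$ and $\mathcal{L}(\epsilon Z)\equiv\mathcal{L}(0)$, and the first paragraph again gives $\mathfrak{h}(0,Z)=0$ with $Z\neq 0$. (Equivalently, $\rho_1(0,\cdot)=0$ and $\rho_2(0,Z)=Z^2=0$ for such $Z$ when $h\geq 2$, so $\mathfrak{h}(0,Z)=0$ directly from \eqref{eq:rho1}--\eqref{eq:rho2}.)

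I expect the only delicate point to be the bookkeeping in the Taylor expansion along a curved rather than straight path: one must check that the acceleration $\ddot A(0)$ of the curve enters only through the first variation and therefore drops out at a critical point, so that $g''(0)$ is exactly $\mathfrak{h}(A,V)$ with no first-order correction in $\ddot A(0)$. Everything else — $A(\epsilon)\in\mathcal{A}$, the identity $A(\epsilon)^h=A^h$, and the fact that $V=[N,A]$ can be made nonzero unless $A=0$ — is routine block-matrix algebra, and no new analytic estimate is required: the entire content of the proposition is the non-isolatedness of the critical set.
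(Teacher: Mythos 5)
Your proof is correct, and it takes a genuinely different route from the paper's.

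The paper works through the Achour-style parametrization $A=P(A_1+Z)P^{-1}$ of critical points (equations \eqref{eq:A-param1}--\eqref{eq:Z-param1}), constructs an explicit direction $V$ tailored to that parametrization, and computes $\rho_1(A,V)$ and $\rho_2(A,V)$ term by term to verify directly that $\mathfrak{h}(A,V)=0$, splitting into the cases ${\rm card}(\mathcal S)=0$ and ${\rm card}(\mathcal S)>0$ and then invoking Lemma~\ref{lem:equivariant_rho} to handle the $P$ conjugation. You instead exploit the $\mathcal{P}$-invariance of $\phi(A)=A^h$ (and hence of $\mathcal{L}$) to produce a constant-loss curve $A(\epsilon)=P(\epsilon)AP(\epsilon)^{-1}$ through $A$, and read off $\mathfrak{h}(A,\dot A(0))=0$ from $\tfrac{d^2}{d\epsilon^2}\mathcal{L}(A(\epsilon))\big|_0=0$, correctly noting that the $\ddot A(0)$ contribution is $\langle\nabla_A\mathcal{L}(A),\ddot A(0)\rangle_F$ and drops out precisely because $A$ is critical. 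The gain is that you never need the explicit parametrization of critical points, you never compute $\rho_1$ or $\rho_2$, and you get a whole $\big(\sum_i d_i^2\big)$-parameter family of null directions $V=[N,A]$ rather than a single one; the only thing left to check, which you do, is the non-degenerate edge case $A=0$ (handled via $Z=\block_1(\zeta_1,0,\dots,0)$, which makes $\rho_1$ and $\rho_2$ vanish outright for $h\geq 2$). The trade-off is that your argument is less self-contained as a Hessian computation and leans on the second-order chain rule for the Hessian quadratic form, whereas the paper's is entirely algebraic. Both proofs implicitly assume $h\geq 2$ (for $h=1$ the Hessian quadratic form is $\lVert V\rVert_F^2$ and the statement fails for $V\neq 0$), which you address explicitly at $A=0$ but is also needed for the conjugation curve to be nontrivial at $A\neq 0$.
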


\begin{proof}
Consider the parametrization \eqref{eq:A-param1}-\eqref{eq:Z-param1} for critical points $ A = P ( A_1 + Z ) P^{-1} $. In the special case $ {\rm card}(\mathcal{S}) = 0 $, where $ A^h = 0 $, then $ \mathcal{R}(W_{1:i}) \subseteq \mathcal{N}(W_{i+1}) $ and $ W_{1:i} \neq 0 $ for some $ i = 1,...,h-1 $ (except if $ A = 0 $, but then it is trivial to find a direction $ V $). We can choose $ V=\block_1(0,...,0,V_i,0,...,0)$ where $ V_i $ has some columns from $ W_{1:i} $ so that $ A V = 0 $. This implies $ \rho_1(A,V) = \rho_2(A,V) = 0$ and therefore $ \mathfrak{h}(A,V) = 0$.

Assume now instead that $ {\rm card}(\mathcal{S}) > 0 $ so that the direction
\begin{equation*}
	V = \block_1 \left( \begin{bmatrix} -U_\mathcal{S}^\top \Sigma \\ 0 \end{bmatrix}, \begin{bmatrix} 0 & 0 \\ 0 & 0 \end{bmatrix} , \ldots  \begin{bmatrix} 0 & 0 \\ 0 & 0 \end{bmatrix} ,  \begin{bmatrix}  U_\mathcal{S} & 0  \end{bmatrix} \right)
\end{equation*}
is nontrivial. Consider first a critical point $ A = A_1 + Z $. From \eqref{eq:A-param1}-\eqref{eq:Z-param1} we have $ A_1 Z = Z A_1 = 0 $, and $ VZ = ZV = 0 $, and thus
\begin{equation*} 
		\rho_1(A,V) = \sum_{j=1}^{h} (A_1^{h-j} + Z^{h-j}) V ( A^{j-1} + Z^{j-1} ) = VA_1^{h-1} + A_1^{h-1}V = A_1^h - A_1^h = 0
\end{equation*}
\begin{equation*} 
    \rho_2(A,V) = \sum_{i,j,k\geq 0 \atop i+j+k=H-2} (A_1+Z)^{i} V (A_1+Z)^{j} V (A_1+Z)^k = \sum_{i,j,k\geq 0 \atop i+j+k=h-2} A_1^{i} V A_1^{j}V A_1^k = V A_1^{h-1} V = -A_1^h.
\end{equation*}
Since
\begin{align*}
    \tr{( \rho_2(A,V) ( E - A^h )^\top )} & = -\tr{( A_1^h ( E - A_1^h )^\top )} = -\tr{( U_\mathcal{S} U_\mathcal{S}^\top \Sigma ( \Sigma - U_\mathcal{S} U_\mathcal{S}^\top \Sigma )^\top )} \\
    & = -\tr{(  ( \Sigma^\top -  \Sigma^\top U_\mathcal{S}  U_\mathcal{S}^\top ) U_\mathcal{S} U_\mathcal{S}^\top \Sigma)} = -\tr{(  \Sigma^\top U_\mathcal{S} U_\mathcal{S}^\top \Sigma -  \Sigma^\top U_\mathcal{S}   U_\mathcal{S}^\top \Sigma)} = 0
\end{align*}
we get $ \mathfrak{h}(A,V) = \lVert \rho_1(A,V)  \rVert_F^2 - 2 \tr{( \rho_2(A,V) ( E - A^h )^\top )} = 0  $. For a general critical point $ A = P( A_1 + Z ) P^{-1} $, $ P \in \mathcal{P} $ we may by Lemma~\ref{lem:equivariant_rho} use the direction $ P^{-1} V P $ to obtain $ \mathfrak{h}(A,V) = 0 $.
\end{proof}

\subsubsection{Strict saddle point: computing negative directions of the Hessian quadratic form}

To show strictness of a saddle point, it is enough to generate a direction $ V \in \mathcal{A}$ along which the Hessian quadratic form $ \mathfrak{h}(A,V)$ computed at a critical point $ A \in \mathcal{A}$ becomes negative. 
Explicit methods for generating such a $V$ are given in \cite{achour2021loss, chitour2018geometric}.
Here we revisit the approach of \cite{achour2021loss}, in particular the strict saddle point case of Proposition~13 of \cite{achour2021loss}.

\begin{proposition}
\label{prop:var-neg-Hessian}
Consider a critical point $  A = P(A_1 +Z ) P^{-1} \in \mathcal{A}$ of signature $ \mathcal{S} $ s.t. $ {\rm card}(\mathcal{S} )=r $ and $ \mathcal{S}\neq \{1, \ldots, r\}$ , with $A_1 $ as in \eqref{eq:A-param1} and $ Z $ as in \eqref{eq:Z-param1}. Consider a variation $ V = P^{-1}  R  P \in \mathcal{A}$, where
$
R =  \block_1 \left( \begin{bmatrix} U_\mathcal{K}^\top  \Sigma  \\ 0 \end{bmatrix}, 0,  \ldots , 0, 
 \begin{bmatrix} U_\mathcal{K} & 0 \end{bmatrix} \right)
$, with $ U_\mathcal{K} =  \mathbb{E}_{i,j} \in \mathbb{R}^{d_y \times r} $, where $ (i, j)$ is any index pair s.t. $ i<j$, $ i \notin \mathcal{S} $, $ j \in \mathcal{S} $.
Then the Hessian quadratic form $ \mathfrak{h}(A,V)$ is negative.
\end{proposition}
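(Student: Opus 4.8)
The plan is to reduce to the base case $P=I_{n_v}$ by equivariance and then compute the two summands of the Hessian quadratic form \eqref{eq:hess-quadr-form} explicitly, exploiting that the variation $R$ has only its first and last blocks nonzero. For the reduction, I would note that $A^h=P(A_1+Z)^hP^{-1}=PA_1^hP^{-1}=\block_h(U_\mathcal{S}U_\mathcal{S}^\top\Sigma)$: the middle equality uses $A_1Z=ZA_1=0$ and $Z^h=0$, and the last one uses that $P\in\mathcal{P}$ carries $I_{d_x}$ and $I_{d_y}$ in its corner blocks while $A_1^h$ has a single nonzero block in the $(h+1,1)$ corner. Hence $E-A^h$ is the same at $A$ as at $A_1+Z$. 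Together with Lemma~\ref{lem:equivariant_rho} (which transfers the conjugation from the base point onto the direction) this gives $\mathfrak{h}(A,V)=\mathfrak{h}(A_1+Z,R)$, so it suffices to show $\mathfrak{h}(A_1+Z,R)<0$.

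Next I would compute $\rho_1(A_1+Z,R)=\sum_{j=1}^h(A_1+Z)^{h-j}R(A_1+Z)^{j-1}$ using \eqref{eq:FGF}: the $j$-th term is $\block_h\!\big((A_1+Z)_{j+1:h}\,R_j\,(A_1+Z)_{1:j-1}\big)$, which vanishes for $2\le j\le h-1$ because the interior blocks $R_j$ of $R$ are zero. Reading the blocks of $A_1+Z$ off \eqref{eq:A-param1}--\eqref{eq:Z-param1} --- first block $\bigl[\begin{smallmatrix}U_\mathcal{S}^\top\Sigma\\ Z_1\end{smallmatrix}\bigr]$, interior blocks $\bigl[\begin{smallmatrix}I_r&0\\0&Z_k\end{smallmatrix}\bigr]$, last block $\bigl[\,U_\mathcal{S}\ \ U_\mathcal{Q}Z_h\,\bigr]$ --- the interior factors act as the identity on the top-left $r\times r$ corner while every $Z_i$ ends up multiplied against a zero sub-block of $R_1$ or of $R_h$; the surviving $j=1$ and $j=h$ terms thus collapse to $\block_h(U_\mathcal{S}U_\mathcal{K}^\top\Sigma)$ and $\block_h(U_\mathcal{K}U_\mathcal{S}^\top\Sigma)$, so $\rho_1(A_1+Z,R)=\block_h(U_\mathcal{S}U_\mathcal{K}^\top\Sigma+U_\mathcal{K}U_\mathcal{S}^\top\Sigma)$. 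With $U_\mathcal{K}=\mathbb{E}_{i,j}$ (its column index understood as the position of $j$ within the ordered set $\mathcal{S}$) this equals $\block_h(\sigma_i\mathbb{E}_{j,i}+\sigma_j\mathbb{E}_{i,j})$, and since $i\ne j$ the two rank-one matrices are Frobenius-orthogonal, so $\|\rho_1(A_1+Z,R)\|_F^2=\sigma_i^2+\sigma_j^2$.

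The same bookkeeping applied to \eqref{eq:FGFGF} shows that in $\rho_2(A_1+Z,R)=\sum_{p+q+s=h-2}(A_1+Z)^pR(A_1+Z)^qR(A_1+Z)^s$ both copies of $R$ are forced onto the nonzero blocks $R_1$ and $R_h$, leaving only $p=s=0$, $q=h-2$; the single surviving term is $R(A_1+Z)^{h-2}R=\block_h\!\big(R_h\,(A_1+Z)_{2:h-1}\,R_1\big)=\block_h(U_\mathcal{K}U_\mathcal{K}^\top\Sigma)=\block_h(\sigma_i\mathbb{E}_{i,i})$. Since $E-A^h=\block_h\!\big((I_{d_y}-U_\mathcal{S}U_\mathcal{S}^\top)\Sigma\big)$ and $i\notin\mathcal{S}$, a short trace computation gives $\tr\!\big(\rho_2(A_1+Z,R)(E-A^h)^\top\big)=\sigma_i^2$, so
\[
\mathfrak{h}(A,V)=\|\rho_1(A_1+Z,R)\|_F^2-2\,\tr\!\big(\rho_2(A_1+Z,R)(E-A^h)^\top\big)=(\sigma_i^2+\sigma_j^2)-2\sigma_i^2=\sigma_j^2-\sigma_i^2<0,
\]
the strict inequality being exactly $i<j$ together with the strict ordering $\sigma_1>\cdots>\sigma_{d_y}>0$ from Assumption~\ref{ass:distinct-sums}. (An admissible pair $(i,j)$ with $i<j$, $i\notin\mathcal{S}$, $j\in\mathcal{S}$ exists precisely because $\mathcal{S}\neq\{1,\dots,r\}$.)

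I expect the main obstacle to be the block-shift bookkeeping in the two middle steps: one must verify that no $Z_i$ block of the critical point propagates into the surviving $(h+1,1)$ corner of $\rho_1$ or $\rho_2$ --- so that strictness is insensitive to the ``hidden'' parameters of $A^\ast$ --- and one must handle the identification of $U_\mathcal{K}$ as a $d_y\times r$ elementary matrix consistently when forming $U_\mathcal{S}U_\mathcal{K}^\top$, $U_\mathcal{K}U_\mathcal{S}^\top$, $U_\mathcal{K}U_\mathcal{K}^\top$ and when contracting against $(E-A^h)^\top$. After those points are pinned down the computation is routine.
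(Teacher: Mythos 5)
Your proof follows essentially the same route as the paper's: reduce to $P=I$ via Lemma~\ref{lem:equivariant_rho}, compute $\rho_1$ and $\rho_2$ by exploiting $A_1Z=ZA_1=0$ and $RZ=ZR=0$, and conclude $\mathfrak{h}=\sigma_j^2-\sigma_i^2<0$. Your use of the explicit block-shift formulas \eqref{eq:FGF}--\eqref{eq:FGFGF} and your remarks on the $U_\mathcal{K}$ column-indexing and on why the required pair $(i,j)$ exists are nice clarifications of details the paper leaves implicit; one small caveat is that with $V=P^{-1}RP$ (as stated in the proposition) Lemma~\ref{lem:equivariant_rho} literally yields $\rho_k(A,V)=\rho_k(A_1+Z,P^{-2}RP^2)$, so the clean identity $\mathfrak{h}(A,V)=\mathfrak{h}(A_1+Z,R)$ you invoke actually requires $V=PRP^{-1}$ --- but this sign mismatch is in the paper's statement itself, and either conjugate of $R$ suffices to exhibit a negative direction.
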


\begin{proof}
Disregarding for now $ P$, consider the critical point $ A= A_1+Z $ and the variation $ V = R$. 
We need to compute $ \rho_1 $ and $ \rho_2 $ in \eqref{eq:rho1} and \eqref{eq:rho2}.
From \eqref{eq:A-param1}-\eqref{eq:Z-param1}, direct calculations show that $ A_1 Z = Z A_1 =0 $, $  R Z = Z R =0 $, while instead $ A_1 R \neq 0 $ and $ R A_1 \neq 0$, meaning that 
\begin{align*}
\rho_1(A,V)  =  \sum_{j=1}^{h} (A_1^{h-j} + Z_1^{h-j}) R  (A_1^{j-1} +Z_1^{j-1} ) = RA_1^{h-1} + A_1^{h-1}R. 
\end{align*}
where $ RA_1^{h-1} =  \block_h (U_\mathcal{K}U_\mathcal{S}^\top \Sigma) =  \block_h(\sigma_j \mathbb{E}_{i,j})$ and $ A_1^{h-1}R = \block_h (U_\mathcal{S}U_\mathcal{K}^\top \Sigma)= \block_h(\sigma_i \mathbb{E}_{j,i})$. Now, using $\mathbb{E}_{i,j}\mathbb{E}_{k,\ell}^\top=\delta_{j,\ell}\mathbb{E}_{i,k}$, we get $$\| \rho_1(A,V)\|_F^2 =  \tr{(\sigma_i^2 \mathbb{E}_{j,i}\mathbb{E}_{j,i}^\top + \sigma_i\sigma_j(\mathbb{E}_{i,j}\mathbb{E}_{j,i}^\top + \mathbb{E}_{j,i}\mathbb{E}_{i,j}^\top) + \sigma_j^2 \mathbb{E}_{i,j}\mathbb{E}_{i,j}^\top)} = (\sigma_i^2 + \sigma_j^2).$$
Concerning $ \rho_2(A,V)$, after similar simplifications, we have
\[
 \rho_2(A,V) =  \sum^{h-2}_{ i, j, k =0 \atop i+j+k=h-2 
} A_1^i R A_1^j R A_1^k .
\]
 Direct calculations lead to $ A_1^i R A_1^j =0 $ when both $ i >0 $ and $ j>0 $, plus $ R^2A_1^{h-2}= A_1^{h-2}R^2 =0 $, meaning that the only nonzero term in the expression above is
\[
\rho_2(A,V) =  R A_1^{h-2} R = \block_h \left( U_\mathcal{K} U_\mathcal{K}^\top \Sigma \right) .
\]
Notice that $ U_\mathcal{K} U_\mathcal{K}^\top \Sigma = \sigma_i  \mathbb{E}_{i,i} $, (where $ [S \Sigma]_{ii}=0$).
Then
\begin{equation*}
    \mathfrak{h}(A,V) =\| \rho_1(A,V)\|_F^2 - 2 \tr \left( \rho_2(A,V) (E - A^h)^\top \right) = (\sigma_i^2 + \sigma_j^2) - 2  \tr{\left( \sigma_i \mathbb{E}_{i,i} (\Sigma - S \Sigma)^\top \right) } = -(\sigma_i^2 - \sigma_j^2) < 0
\end{equation*}
since under Assumption~\ref{ass:distinct-sums} it is $\sigma_i^2 > \sigma_j^2$.
If we now consider the critical point $ A = P(A_1+Z)P^{-1}$, choosing as variation $ V = P^{-1} R P $, we still have $ \mathfrak{h}(A,V) <0$, see Lemma~\ref{lem:equivariant_rho}. 
\end{proof}

\begin{remark}
\label{rem:diag-case}
Even when a saddle point is strict, there could be regions of the state space in which it behaves as a non-strict critical point, i.e., in which the Hessian quadratic form is always nonnegative. An example is given in Section~\ref{sec:diag-evol}, where we show that in the submanifold of diagonal matrices, which is invariant under \eqref{eq:grad-flowA}, the Hessian can be positive semidefinite. This submanifold is however of zero Lebesgue measure in $ \mathcal{A} $, hence it is avoided by a generic trajectory of \eqref{eq:grad-flowA}.
\end{remark}

\subsubsection{Non-strict saddle point: nonnegativity of the Hessian quadratic form}
It is shown in \cite{achour2021loss} that critical points characterized by $\mathcal{S}=\{1,...,r\}$, $r<d_y$, that are tightened correspond to non-strict saddle points. 
When restricted to the sufficient condition for tightness mentioned at the end of Section~\ref{sec:param-crit-p}, showing that the Hessian quadratic form is nonnegative becomes much shorter using our notation.

\begin{proposition}
    Consider a critical point $  A = P(A_1 +Z ) P^{-1} \in \mathcal{A}$ of signature $ \mathcal{S} = \{1, \ldots, r\}$ , with $A_1 $ and $ Z $ as in \eqref{eq:A-param1}-\eqref{eq:Z-param1}. Let three of the matrices $Z_1,...,Z_h$ in \eqref{eq:Z-param1} be zero matrices, implying that $A$ is tightened. Then the Hessian quadratic form $ \mathfrak{h}(A,V)$ is nonnegative for all $ V \in \mathcal{A}$.
\end{proposition}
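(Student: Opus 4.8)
The plan is to run the computation of Proposition~\ref{prop:var-neg-Hessian} in the opposite direction: rather than producing a direction on which $\mathfrak{h}$ is negative, I want to show that the first-order part of the second variation always dominates the second-order part. First I would reduce to $P=I$: by Lemma~\ref{lem:equivariant_rho} one has $\mathfrak{h}(P(A_1+Z)P^{-1},V)=\mathfrak{h}(A_1+Z,P^{-1}VP)$, and $P^{-1}VP$ sweeps out all of $\mathcal{A}$, so it suffices to treat $A=A_1+Z$. Because $\mathcal{S}=\{1,\dots,r\}$, we have $A^h=A_1^h=\block_h(S\Sigma)$ with $S=U_\mathcal{S}U_\mathcal{S}^\top=\diag(\mathbf{s})$, hence $E-A^h=\block_h((I-S)\Sigma)$, whose nonzero entries $\sigma_{r+1},\dots,\sigma_{d_y}$ sit on the $\mathcal{Q}$-rows. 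Writing $\rho_1(A,V)=\block_h(R_1)$ and $\rho_2(A,V)=\block_h(R_2)$, the claim reduces to $\|R_1\|_F^2\ge 2\sum_{p>r}\sigma_p[R_2]_{pp}$; only the $\mathcal{Q}$-diagonal of $R_2$ matters.

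Next I would expand $R_1$ and $R_2$ using $A_1Z=ZA_1=0$ (so $(A_1+Z)^m=A_1^m+Z^m$ for $m\ge1$) and $Z^h=0$: each maximal run of $A$-factors collapses to a pure power of $A_1$ or of $Z$, each block of $A_1^m$ is either the input block $\begin{bmatrix}U_\mathcal{S}^\top\Sigma\\0\end{bmatrix}$ or the rank-$r$ projector $\begin{bmatrix}I_r&0\\0&0\end{bmatrix}$, and $Z^m$ is supported on the $\mathcal{Q}$-corner. The hypothesis that three of the $Z_i$ vanish enters combinatorially: a pure $Z$-product $Z_b\cdots Z_a$ dies once $[a,b]$ meets a vanishing layer, while a monomial of $\rho_1$ has one $V$-slot and a monomial of $\rho_2$ at most two, so three vanishing layers cannot all hide in the $V$-slots. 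Consequently every surviving monomial of $R_1$ carries at least one $A_1$-factor, and every monomial of $R_2$ that reaches the $\mathcal{Q}$-diagonal has the shape $Z^iVA_1^jVZ^k$ with $j\ge1$ — a rank-$r$ projector wedged between its two $V$'s. Splitting that projector as $\begin{bmatrix}I_r\\0\end{bmatrix}\begin{bmatrix}I_r&0\end{bmatrix}$, such a monomial equals $a_{(i)}b_{(k)}$ with $a_{(i)}=[Z^i]V_{h-i}\begin{bmatrix}I_r\\0\end{bmatrix}\in\mathbb{R}^{d_y\times r}$ and $b_{(k)}=\begin{bmatrix}I_r&0\end{bmatrix}V_{k+1}[Z^k]\in\mathbb{R}^{r\times d_x}$, and the very same $a_{(i)}$ (up to the factor $\diag(\sigma_1,\dots,\sigma_r)$) and $b_{(k)}$ already occur inside $R_1$, as the $\mathcal{Q}$-row part $Z^{h-j}VA_1^{j-1}$ and the $\mathcal{S}$-row part $A_1^{h-\ell}VZ^{\ell-1}$ respectively.

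It then remains to match the two sides by a weighted Cauchy--Schwarz. For $p>r$ and $\ell\le r$ the key estimate is $2\sigma_p[a_{(i)}]_{p\ell}[b_{(k)}]_{\ell p}\le\sigma_\ell^2[a_{(i)}]_{p\ell}^2+\sigma_\ell^{-2}\sigma_p^2[b_{(k)}]_{\ell p}^2\le\sigma_\ell^2[a_{(i)}]_{p\ell}^2+[b_{(k)}]_{\ell p}^2$, where the last step uses $\sigma_p\le\sigma_\ell$ — and this is exactly where $\mathcal{S}=\{1,\dots,r\}$ is needed, since it forces $\ell\le r<p$ and the $\sigma$'s are decreasing. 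The $\sigma_\ell^2[a_{(i)}]_{p\ell}^2$ terms are accounted for by the $\mathcal{Q}$-row energy of $R_1$ (whose entries carry a factor $\sigma_\ell$ from the input block) and the $[b_{(k)}]_{\ell p}^2$ terms by its $\mathcal{S}$-row energy; summing over $p,\ell,i,k$ gives $2\sum_{p>r}\sigma_p[R_2]_{pp}\le\|R_1\|_F^2$, i.e.\ $\mathfrak{h}(A,V)\ge0$. This is the precise dual of Proposition~\ref{prop:var-neg-Hessian}: there $\mathcal{S}\ne\{1,\dots,r\}$ supplied a pair $i<j$ with $\sigma_i>\sigma_j$ and a negative direction, while here no such pair exists.

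The hard part will be this last matching step. The quantity $\|R_1\|_F^2$ is the squared Frobenius norm of a \emph{sum} of the monomial contributions, so it contains cross-terms among the various $Z^iVA_1^{j-1}$ and among the various $A_1^{h-\ell}VZ^{\ell-1}$; and the $\mathcal{Q}$-diagonal of $R_2$ is a \emph{triangular} sum $\sum_{i+k\le h-3}a_{(i)}b_{(k)}$, not the full product $(\sum_ia_{(i)})(\sum_kb_{(k)})$ — though the boundary terms $i+k=h-2$ vanish by the same counting argument, using that two $V$-slots cannot absorb three vanishing layers. Organizing the completion of squares so that these cross-terms and the triangular constraint are genuinely absorbed, rather than merely bounded up to constants (plausibly by playing the leftover all-$A_1$ monomials of $R_1$ against the $j=1$ monomials of $R_2$), is the delicate part; once the sign condition $\sigma_\ell\ge\sigma_p$ is in hand, no further idea is needed beyond careful bookkeeping.
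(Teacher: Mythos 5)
Your setup matches the paper's: reducing to $P=I$ via Lemma~\ref{lem:equivariant_rho}, exploiting $A_1Z=ZA_1=0$ to collapse $(A_1+Z)^m$ to $A_1^m+Z^m$, using the three vanishing $Z_i$ (at indices $\ell_1<\ell_2<\ell_3$) to kill pure-$Z$ monomials by a count of $V$-slots, and isolating $\sigma_p\le\sigma_\ell$ for $p>r\ge\ell$ as the point where $\mathcal{S}=\{1,\dots,r\}$ enters. But you explicitly leave the ``matching step'' undone, and that gap is real: you propose an entry-wise weighted Cauchy--Schwarz without a mechanism for absorbing the cross-terms in $\|\rho_1(A,V)\|_F^2$ or for handling what you read as a triangular index set in the $\rho_2$ term, and, as you acknowledge, these are precisely the obstacles that must be overcome.

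Both obstacles dissolve once you switch from per-monomial estimates to the aggregate matrices the paper works with. First, inserting $\mathcal{I}_\mathcal{S}+\mathcal{I}_\mathcal{Q}$ on both sides of $\rho_1(A,V)$ gives a \emph{genuine orthogonal decomposition} of the $(h+1,1)$ block into $\mathcal{S}\times\mathcal{S}$, $\mathcal{S}\times\mathcal{Q}$, and $\mathcal{Q}\times\mathcal{S}$ pieces; the cross-terms between pieces vanish identically since $\mathcal{I}_\mathcal{S}\mathcal{I}_\mathcal{Q}=0$. (Cross-terms \emph{within} a piece, say among the various $Z^{h-k}VA_1^{k-1}$, are kept intact inside $\|G\Sigma_\mathcal{S}\|_F^2$, where $G=\sum_{k\ge\ell_3}Z_h\cdots Z_{k+1}V^{(k)}_{21}$, with no attempt to split them apart.) Second, the index set in $\rho_2$ is not actually triangular under the zero-block constraints: for a left $V$-slot at layer $m\ge\ell_3$ and a right $V$-slot at layer $n\le\ell_1$, the middle exponent on $A_1$ is forced to be $m-n-1\ge\ell_3-\ell_1-1\ge1$ automatically, so the surviving contribution is the \emph{full outer product} of two aggregate sums, yielding $\tr\bigl(\rho_2(A,V)(E-A^h)^\top\bigr)=\tr\bigl(\Sigma_\mathcal{Q}^\top G\tilde{G}\bigr)$ with $\tilde{G}=\sum_{j\le\ell_1}V^{(j)}_{12}Z_{j-1}\cdots Z_1$. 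The $\mathcal{Q}\times\mathcal{S}$ piece of $\|\rho_1\|_F^2$ is $\|G\Sigma_\mathcal{S}\|_F^2\ge\|\Sigma_\mathcal{Q}^\top G\|_F^2$ (this is where $\sigma_p\le\sigma_\ell$ is used, once, at the matrix level), the $\mathcal{S}\times\mathcal{Q}$ piece is $\|\tilde{G}\|_F^2$, and then
\[
\mathfrak{h}(A,V)\ \ge\ \|\Sigma_\mathcal{Q}^\top G\|_F^2+\|\tilde{G}^\top\|_F^2-2\tr\bigl(\Sigma_\mathcal{Q}^\top G\tilde{G}\bigr)\ =\ \bigl\|\Sigma_\mathcal{Q}^\top G-\tilde{G}^\top\bigr\|_F^2\ \ge\ 0
\]
by a single completion of squares --- no entry-wise Cauchy--Schwarz and no cross-term bookkeeping. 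The factorization into the aggregates $G$, $\tilde{G}$ and the observation that $\mathcal{I}_\mathcal{S},\mathcal{I}_\mathcal{Q}$ give an orthogonal split are the ideas your proposal is missing.
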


\begin{proof}
First we introduce some notation. Let $I_\mathcal{S}\in\mathbb{R}^{d_y\times d_y }$ be the matrix with 1s on the first $r$ diagonal elements. Let $I_\mathcal{Q}=I_{d_y}-I_\mathcal{S}$. Let $\mathcal{I}_\mathcal{S}\in\mathbb{R}^{n_v \times n_v }$ be the matrix with $I_\mathcal{S}$ as upper left and lower right blocks and $0$ elsewhere, and let $\mathcal{I}_\mathcal{Q}$ be the matrix with $I_\mathcal{Q}$ as upper left and lower right blocks and $0$ elsewhere. In this way, $\mathcal{I}_\mathcal{S}+\mathcal{I}_\mathcal{Q}$ acts as the identity when multiplying by $E$ or $A^h$ from either left or right and $\mathcal{I}_\mathcal{Q} A_1= A_1 \mathcal{I}_\mathcal{Q} =\mathcal{I}_\mathcal{S} Z= Z \mathcal{I}_\mathcal{S} = 0$. Let also $\Sigma_\mathcal{S}\in \mathbb{R}^{r \times r}$, $\Sigma_\mathcal{Q}\in \mathbb{R}^{(d_y-r) \times (d_y-r)}$ be diagonal matrices containing the $r$ first respectively $d_y-r$ last ordered singular values on the diagonals.
    Let us disregard $P$ for now and recall that $\mathfrak{h}(A,V) $ has the expression \eqref{eq:hess-quadr-form}, where $ V =\block_1( V_1, \ldots, V_h )$. We begin by splitting $\lVert \rho_1(A,V)  \rVert_F^2$ into three terms:
    \begin{align}
    	\lVert \rho_1(A,V)  \rVert_F^2 & = \left\lVert \sum_{j=1}^{h} (A_1 + Z)^{h-j}V(A_1 + Z)^{j-1} \right\rVert_F^2 = \left \lVert \sum_{j=1}^{h} (A_1^{h-j} + Z^{h-j})V(A_1^{j-1} + Z^{j-1}) \right \rVert_F^2 \label{eq:first_ord_line1} \\
    	& = \left \lVert (\mathcal{I}_\mathcal{S}+\mathcal{I}_\mathcal{Q}) \sum_{j=1}^{h} (A_1^{h-j} + Z^{h-j})V(A_1^{j-1} + Z^{j-1}) (\mathcal{I}_\mathcal{S}+\mathcal{I}_\mathcal{Q}) \right \rVert_F^2 \\
    	& = \left \lVert \mathcal{I}_\mathcal{S} \sum_{j=1}^{h} A_1^{h-j} V A_1^{j-1} \mathcal{I}_\mathcal{S} \right \rVert_F^2 + \left \lVert \mathcal{I}_\mathcal{S} \sum_{j=1}^{\ell_1} A_1^{h-j} V Z^{j-1} \mathcal{I}_\mathcal{Q} \right \rVert_F^2 + \left \lVert \mathcal{I}_\mathcal{Q} \sum_{j=h-\ell_3-1}^{h} Z^{h-j} V A_1^{j-1} \mathcal{I}_\mathcal{S} \right \rVert_F^2 \label{eq:first_ord_terms}
    \end{align}
    where we have used $\lVert B_1 + B_2 \rVert_F^2 = \lVert B_1 \rVert_F^2 + \lVert B_2 \rVert_F^2 + 2\tr{(B_1B_2^\top)}$ for compatible matrices, and $\mathcal{I}_\mathcal{S}^\top\mathcal{I}_\mathcal{Q} = 0$ to set the cross terms to zero. By the assumption that three blocks (of indices $ 1\leq \ell_1<\ell_2<\ell_3 \leq h $) in $Z$ are zero, all terms with only powers of $Z$ in \eqref{eq:first_ord_line1} are zero. 

    In the critical point we consider it is $E-A_1^h = \text{blk}_h\left(\begin{bmatrix}
	0 & 0 & 0 \\ 0 & \Sigma_\mathcal{Q} & 0
    \end{bmatrix}\right) = \mathcal{I_Q}E$. The second term in the Hessian quadratic form is
    \begin{align}
    	\tr{(\rho_2(A,V)(E - A^h)^\top)} & = \tr{ \left( \left[ \sum_{i,j,k\geq 0 \atop i+j+k=h-2} (A_1 + Z)^{i} V (A_1 + Z)^{j}V (A_1 + Z)^k \right] (\mathcal{I_Q} E )^\top \right)} \nonumber \\
    	&= \tr{ \left( (\mathcal{I_Q} E )^\top \left[ \sum_{i,k\geq 0; j> 0 \atop i+j+k=h-2} Z^{i} V A_1^{j} V Z^k \right]   \right)}\label{eq:second_ord_terms0} \\
    	& = \tr{ \left( \Sigma_\mathcal{Q}^\top \left[ \sum_{k=\ell_3}^{h} Z_h \cdots Z_{k+1} V^{(k)}_{21} \right]\left[ \sum_{j=1}^{\ell_1} V^{(j)}_{12} Z_{j-1}\cdots Z_{1} \right]  \right)} \label{eq:second_ord_terms}
    \end{align}
    where we have expressed the $k$-th block of $V$ as $V_k=\begin{bmatrix}
    	V^{(k)}_{11} & V^{(k)}_{12} \\ V^{(k)}_{21} & V^{(k)}_{22}
    \end{bmatrix}$
    and
    \begin{align*}
    	[A_1^{h-k} V Z^{k-1}]_{(h+1,1)} &= \begin{bmatrix}
    		I_r & 0 
    	\end{bmatrix}\begin{bmatrix}
    	V^{(k)}_{11} & V^{(k)}_{12} \\ V^{(k)}_{21} & V^{(k)}_{22}
    	\end{bmatrix}
    	\begin{bmatrix}
    		0 \\  Z_{k-1}\cdots Z_1
    	\end{bmatrix} = V^{(k)}_{12} Z_{k-1}\cdots Z_1 \\
    	[Z^{h-k} V A_1^{k-1}]_{(h+1,1)} &= \begin{bmatrix}
    		0 & Z_{h}\cdots Z_{k+1}
    	\end{bmatrix}\begin{bmatrix}
    		V^{(k)}_{11} & V^{(k)}_{12} \\ V^{(k)}_{21} & V^{(k)}_{22}
    	\end{bmatrix}
    	\begin{bmatrix}
    		\Sigma_\mathcal{S} \\ 0 
    	\end{bmatrix} =Z_{h}\cdots Z_{k+1}V^{(k)}_{21}\Sigma_\mathcal{S} 
    \end{align*}
    with $Z_{h}\cdots Z_{k+1} V^{(k)}_{21} \Sigma_\mathcal{S} \in \mathbb{R}^{(d_y-r)\times r}$ and $V^{(k)}_{12} Z_{k-1}\cdots Z_1 \in \mathbb{R}^{r\times (d_y-r)}$. In \eqref{eq:second_ord_terms0} we again used that terms with only powers of $Z$ and $V$ are zero, and that $\mathcal{I_Q}A_1 = A_1 \mathcal{I_Q}= 0$ to eliminate terms with powers of $A_1$ on the left and right of the $V$:s.

    Consider now the last term in \eqref{eq:first_ord_terms}:
    \begin{align}
    	\left\lVert \mathcal{I}_\mathcal{Q} \sum_{j=h-\ell_3-1}^{h} Z^{h-j}VA^{j-1} \mathcal{I}_\mathcal{S} \right\rVert^2_F &= \left\lVert \sum_{k=\ell_3}^{h} Z_h \cdots Z_{k+1} V^{(k)}_{21} \Sigma_\mathcal{S} \right\rVert^2_F  \\
    	&= \left\lVert G \Sigma_\mathcal{S} \right\rVert^2_F = \sum_{j=1}^{r} \sum_{k=1}^{d_y-r} \sigma_j^2 G_{k,j} \\
    	& = \sum_{j=1}^{r} \sum_{k=1}^{d_y-r} (\sigma_j^2 - \sigma_{k+r}^2) G_{k,j}^2 + \sum_{j=1}^{r} \sum_{k=1}^{d_y-r} \sigma_{k+r}^2 G_{k,j}^2 \\
    	& = \sum_{j=1}^{r} \sum_{k=1}^{d_y-r} (\sigma_j^2 - \sigma_{k+r}^2) G_{k,j}^2 + \left\lVert \Sigma_\mathcal{Q}^\top G \right\rVert^2_F \label{eq:sing_val_split}
    \end{align}
    where $G= \sum_{k=\ell_3}^{h} Z_h \cdots Z_{k+1}V^{(k)}_{21} \in \mathbb{R}^{(d_y-r)\times r} $. Since $\sigma_j > \sigma_{k+r}$ for all $j=1,...,r$, $k=1,...,d_y-r$, the first sum in \eqref{eq:sing_val_split} is nonnegative. Denote now $ \tilde{G} = \sum_{j=1}^{\ell_1} V^{(j)}_{12}Z_{j-1} \cdots Z_{1} $ the second bracketed factor in \eqref{eq:second_ord_terms}, so that the second term in \eqref{eq:first_ord_terms} is
    \begin{equation}
    	\left \lVert \mathcal{I}_\mathcal{S} \sum_{j=1}^{\ell_1} A_1^{h-j} V Z^{j-1} \mathcal{I}_\mathcal{Q} \right \rVert_F^2= \lVert \tilde{G} \rVert_F^2 = \lVert \tilde{G}^\top \rVert_F^2. \label{eq:gtilde}
    \end{equation}
     We may now write
    \begin{equation}
    	\tr{(\rho_2(A,V)(E-A^h)^\top)} = \tr{ \left( \Sigma_\mathcal{Q}^\top \left[ \sum_{k=\ell_3}^{h} Z_h \cdots Z_{k+1}V^{(k)}_{21} \right]\left[ \sum_{j=1}^{\ell_1} V^{(j)}_{12} Z_{j-1}\cdots Z_{1} \right] \right)} = \tr{( \Sigma_\mathcal{Q}^\top G \tilde{G})}. \label{eq:hess_second_term}
    \end{equation}
    From \eqref{eq:first_ord_terms}, \eqref{eq:sing_val_split} and \eqref{eq:gtilde} we have $\lVert \rho_1(A,V)  \rVert_F^2 \geq \lVert \Sigma_\mathcal{Q}^\top G \rVert_F^2 + \lVert \tilde{G}^\top \rVert_F^2$, and with \eqref{eq:hess_second_term} it is
    \begin{equation*}
    	\mathfrak{h}(A,V) = \lVert \rho_1(A,V)  \rVert_F^2 - 2\tr{(\rho_2(A,V)(E-A^h)^\top)} \geq \lVert \Sigma_\mathcal{Q}^\top G \rVert_F^2 + \lVert \tilde{G}^\top \rVert_F^2 - 2\tr{( \Sigma_\mathcal{Q}^\top G \tilde{G})} = \lVert \Sigma_\mathcal{Q}^\top G - \tilde{G}^\top \rVert_F^2 \geq 0
    \end{equation*}
    To see that this holds for any critical point $PAP^{-1}=P(A_1+Z)P^{-1}$, we just apply Lemma~\ref{lem:equivariant_rho}. If there exists $V$ such that $ \mathfrak{h}(PAP^{-1},V) < 0 $, there must also exist $ V^\prime =P^{-1}VP$ such that $\mathfrak{h}(A, V^\prime )<0$, which we have disproved.
\end{proof}

\subsection{A classification of stable/unstable submanifolds of the critical points}
\label{sec:stab_unstab_manif}

Intuitively, the presence of a negative direction in the Hessian guarantees the existence of an unstable submanifold along the same direction, at least locally around the critical point. 
The condition is sufficient but not necessary, because directions that vanish at the second order term of the Taylor expansion may give a contribution at higher order terms.
The construction of equivalence classes over $ \mathcal{A}_\phi$ developed in Section~\ref{sec:invar-equiv} allows us to explicitly compute stable and unstable submanifolds at critical points, without relying on a local series expansion. 
In particular, next theorem shows how to compute systematically submanifolds connecting different critical points belonging to different level sets $ \mathcal{M}_{c_i}$.
It therefore provides a tool for loss landscape analysis which is non-local, i.e., not restricted to critical points and their neighborhoods.

Given two critical points $ A_1$ and $ A_2 $, if $ \Phi_i = A_i^h $, then it is possible to consider the convex combination $ \Phi(\alpha) = (1-\alpha) \Phi_1 + \alpha \Phi_2$, $ \alpha \in [0, \, 1 ]$. This gives rise to a nonlinear (nonunique) combination also in the fiber $ \phi^{-1}(\Phi(\alpha)) $. Denote $A(\alpha)$ one element of $ \phi^{-1}(\Phi(\alpha)) $ s.t. $ A(0)=A_1 $ and $ A(1) =A_2 $.

\begin{theorem}
\label{thm:arcs}
Consider the gradient flow system \eqref{eq:grad-flowA} and let $ A_1$ and $ A_2 $ be two critical points of signature $ \mathcal{S}_1 $ and $ \mathcal{S}_2$. Consider the arc $ A(\alpha) \in \phi^{-1}(\Phi(\alpha)) $, where $ \Phi(\alpha) = (1-\alpha) \Phi_1 + \alpha \Phi_2$, $ \alpha \in [0, \, 1 ]$, $ \Phi_i = A_i^h$. Under Assumptions~\ref{ass:size} and~\ref{ass:distinct-sums},
\benu
\item If $ \mathcal{S}_1 =\mathcal{S}_2$, then the entire arc $ A(\alpha) $ collapses into a single point in  $ \mathcal{A}_\phi$, and $ \mathcal{L}(A(\alpha)) = \sum_{i=1}^{d_y} (1- s_i) \sigma_i^2 $ $\forall \, \alpha \in [0, 1]$.

\item If $ \mathcal{S}_1  $ and $ \mathcal{S}_2$ s.t. $ \mathcal{S}_1 \subset \mathcal{S}_2 $ (which implies $ \mathcal{L}(A_1)> \mathcal{L}(A_2)$) then $ \mathcal{L}(A(\alpha)) $ is monotonically decreasing as $ \alpha $ passes from $ 0$ to $ 1$. This implies that, along the arc $ A(\alpha)$, $ A_1 $ has an unstable submanifold, while $ A_2 $ has a stable one. 

\item A critical point $A$ of signature $ \mathcal{S} $ s.t. $ {\rm card}(\mathcal{S})= k $ has $ 2^{d_y-k} -1$ unstable submanifolds and $ 2^k - 1 $ stable ones. Therefore every $ \mathcal{S}\neq \{ 1, \ldots, d_y \}  $ has at least an unstable submanifold, while the critical point $ A=0$  (i.e., $ \mathcal{S}=\emptyset $) has $ 2^{d_y}-1 $ unstable submanifolds. 

\item If $ \mathcal{S}_1  $ and $ \mathcal{S}_2$ differ for a signature of ``mixed type'' (i.e., neither $\mathcal{S}_1 \subset \mathcal{S}_2 $ nor $ \mathcal{S}_1 \supset \mathcal{S}_2 $)  then nothing can be said of the stability properties of the arc $ A(\alpha)$. 
\eenu
\end{theorem}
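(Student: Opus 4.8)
The plan is to collapse the whole statement to a single explicit formula for the loss along the arc. By Proposition~\ref{prop:critical-point-lit}, part~3(a), a critical point $A_i$ of signature $\mathcal{S}_i$ satisfies $\Phi_i = A_i^h = \block_h(S_i\Sigma)$ with $S_i = \diag(\mathbf{s}_i)$, so the convex combination is of the same form, $\Phi(\alpha) = \block_h(D(\alpha)\Sigma)$ with $D(\alpha) = \diag\bigl((1-\alpha)\mathbf{s}_1 + \alpha\mathbf{s}_2\bigr)$. Since $\mathcal{L} = \psi\circ\phi$ depends on $A(\alpha)$ only through $\phi(A(\alpha)) = \Phi(\alpha)$, the value $\mathcal{L}(A(\alpha))$ does not depend on which representative of $\phi^{-1}(\Phi(\alpha))$ is chosen, and with $E = \block_h(\Sigma)$ the block-shift calculus of Section~\ref{sec:block-shift} gives
\[
\mathcal{L}(A(\alpha)) = \tfrac12\bigl\|(I_{d_y}-D(\alpha))\Sigma\bigr\|_F^2 = \tfrac12\sum_{i=1}^{d_y}\bigl(1-(1-\alpha)s_{1,i}-\alpha s_{2,i}\bigr)^2\sigma_i^2 = \tfrac12\bigl(\alpha^2 P + (1-\alpha)^2 Q + R\bigr),
\]
where, splitting the index set according to whether $i$ lies in $\mathcal{S}_1\cap\mathcal{S}_2$, in neither, in $\mathcal{S}_1\setminus\mathcal{S}_2$, or in $\mathcal{S}_2\setminus\mathcal{S}_1$, one has $P = \sum_{i\in\mathcal{S}_1\setminus\mathcal{S}_2}\sigma_i^2$, $Q = \sum_{i\in\mathcal{S}_2\setminus\mathcal{S}_1}\sigma_i^2$, $R = \sum_{i\notin\mathcal{S}_1\cup\mathcal{S}_2}\sigma_i^2$.

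From here statements (i), (ii), (iv) are read off directly. If $\mathcal{S}_1 = \mathcal{S}_2$ then $P = Q = 0$ and $\Phi_1 = \Phi_2$, so $\Phi(\alpha)$ is constant, the arc is a single point of $\mathcal{A}_\phi$, and $\mathcal{L}(A(\alpha)) \equiv \tfrac12 R = \tfrac12\sum_i(1-s_i)\sigma_i^2$. If $\mathcal{S}_1\subsetneq\mathcal{S}_2$ then $P = 0$ and, since all $\sigma_i\ne 0$ by Assumption~\ref{ass:distinct-sums}, $Q > 0$, so $\alpha\mapsto\mathcal{L}(A(\alpha)) = \tfrac12((1-\alpha)^2Q+R)$ is strictly decreasing on $[0,1]$; in particular $\mathcal{L}(A_1) > \mathcal{L}(A_2)$. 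If $\mathcal{S}_1$ and $\mathcal{S}_2$ are of mixed type, both differences are nonempty, so $P,Q > 0$ and $\mathcal{L}(A(\alpha))$ is strictly convex with interior minimizer $\alpha^\ast = Q/(P+Q)\in(0,1)$; the arc then descends and then ascends, dipping strictly below both endpoints, so it cannot serve as a monotone connection and yields no stable/unstable relationship between $A_1$ and $A_2$.

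For the dynamical content of (ii) and the count in (iii) I would bring in the Lyapunov picture. Along the monotone arc of (ii), $\mathcal{L}$ restricted to the arc has a one-sided strict maximum at $A_1$ and a one-sided strict minimum at $A_2$; since $\dot{\mathcal{L}}\le 0$ along \eqref{eq:grad-flowA}, trajectories converge pointwise (Proposition~\ref{prop:dotL}), and the critical level sets $\mathcal{M}_c$ are discrete and pairwise disjoint under Assumption~\ref{ass:distinct-sums}, a trajectory started on the arc arbitrarily near $A_1$ has loss strictly below $\mathcal{L}(A_1)$, hence converges to a critical point of strictly smaller value and can never re-enter the critical level set through $A_1$ — it leaves every small neighborhood of $A_1$ — whereas $A_2$ is the sink of $\mathcal{L}$ restricted to the arc; this is the sense in which the arc is an unstable submanifold of $A_1$ and a stable submanifold of $A_2$. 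Statement (iii) is then pure counting: a signature $\mathcal{S}$ of cardinality $k$ is strictly contained in $2^{d_y-k}-1$ signatures and strictly contains $2^k-1$; invoking (ii) with each $\mathcal{S}_2\supsetneq\mathcal{S}$ produces an unstable submanifold of $A$, and with each $\mathcal{S}_1\subsetneq\mathcal{S}$ a stable one, with $\mathcal{S} = \{1,\dots,d_y\}$ and $\mathcal{S} = \emptyset$ as the extreme cases.

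I expect the main obstacle to be exactly that dynamical step in (ii): the arc $A(\alpha)$ is not itself invariant under the gradient flow (trajectories leave the fiber over the segment $\Phi(\alpha)$), so ``stable/unstable submanifold'' must be understood in the static sense of a one-sided strict extremum of $\mathcal{L}$ restricted to the arc, and the work lies in checking that this static notion is compatible with the flow — which it is, precisely because $\mathcal{L}$ is a Lyapunov function and the level sets $\mathcal{M}_c$ are isolated under Assumption~\ref{ass:distinct-sums}. Everything else, including the non-monotonicity in (iv) and the combinatorics in (iii), then follows mechanically from the closed-form expression above.
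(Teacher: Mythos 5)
Your proof is correct and takes the same core route as the paper — factor $\mathcal{L}=\psi\circ\phi$, observe that $\mathcal{L}(A(\alpha))$ depends only on $\Phi(\alpha)=\block_h(D(\alpha)\Sigma)$, and reduce the whole theorem to a quadratic in $\alpha$ — but your execution is strictly tighter in two places. First, you derive the single closed form $\mathcal{L}(A(\alpha))=\tfrac12\left(\alpha^2 P+(1-\alpha)^2 Q+R\right)$ (with $P,Q,R$ the $\sigma_i^2$-masses of $\mathcal{S}_1\setminus\mathcal{S}_2$, $\mathcal{S}_2\setminus\mathcal{S}_1$, and the common complement) and read off all four items from it; the paper works out only the case $\mathcal{S}_1\subset\mathcal{S}_2$ explicitly and treats the others by inspection. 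Second, for item (iv) the paper appeals to numerical counterexamples (its Fig.~\ref{fig:sim2}), whereas you prove it analytically: when both $P,Q>0$ the quadratic is strictly convex with interior minimizer $\alpha^\ast=Q/(P+Q)$, so the loss dips strictly below both endpoints and the arc yields no monotone stable/unstable pairing — a genuine, if modest, strengthening. You also implicitly correct a stray factor: the theorem statement in item 1 writes $\mathcal{L}(A(\alpha))=\sum(1-s_i)\sigma_i^2$, but from \eqref{eq:lossA} and \eqref{eq:critical-values} the value is $\tfrac12\sum(1-s_i)\sigma_i^2$, which is what your $\tfrac12 R$ gives.

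Your closing paragraph correctly identifies the one delicate point, the dynamical meaning of ``stable/unstable submanifold'' for an arc that is not itself flow-invariant; the paper passes over this silently, asserting it as a consequence of monotonicity of $\mathcal{L}$ along the arc. Your Lyapunov reading — a point on the arc near $A_1$ has strictly smaller loss, its $\omega$-limit is a critical point in a strictly lower level set, and by continuity of $\mathcal{L}$ that $\omega$-limit is bounded away from $A_1$, so trajectories leave a fixed neighborhood of $A_1$ — is a reasonable and slightly more explicit justification. The one step you leave implicit (that $A_1$ has a neighborhood disjoint from every $\mathcal{M}_{c'}$ with $c'<\mathcal{L}(A_1)$) is an immediate consequence of continuity of $\mathcal{L}$ and Assumption~\ref{ass:distinct-sums}, so there is no real gap, only a compressed sentence.
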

\begin{proof}
\benu
\item Trivial, since $ \mathcal{S}_1 = \mathcal{S}_2$ implies $ \Phi_1 = \Phi_2 = \Phi(\alpha) $ $ \forall \, \alpha \in [0,1]$, hence $ A(\alpha) \in \phi^{-1} (\Phi_1)$ $ \forall \, \alpha \in [0,1]$. 

\item Use the representation \eqref{eq:A-param1}, for which it is $ \Phi_i = A_i^h= \block_h(U_{\mathcal{S}_i} U_{\mathcal{S}_i}^\top \Sigma) = \block_h( S_i \Sigma) $ (where $ S_i = \diag(\mathbf{s}_i)$). Hence $ \Phi(\alpha) = (1-\alpha) \Phi_1 + \alpha \Phi_2 = \block_h( ((1-\alpha) S_1 + \alpha S_2 ) \Sigma)$. If $ \mathcal{S}_1 \subset \mathcal{S}_2 $, then all $ \sigma_i $ present in $A_1 $ are present also in $A_2$, but some of those in $A_2 $ are missing in $A_1$. Denote $ \mathcal{T} = \mathcal{S}_1 \cap \mathcal{S}_2 $ and $ \mathcal{K} = \mathcal{S}_2 \setminus \mathcal{S}_1 $ and $ S_\mathcal{T}$, $ S_\mathcal{K} $ the associated diagonal matrices.
Then it is $ \Phi(\alpha) = \block_h \left( (S_\mathcal{T} + \alpha S_\mathcal{K} ) \Sigma \right)$, and, for any $ A(\alpha ) \in \phi^{-1}(\Phi(\alpha)) $, 
$$ 
\mathcal{L}(A(\alpha) ) = \frac{1}{2} \| E - \Phi(\alpha) \|_F^2 = \frac{1}{2}\sum_{i\in \{1, \ldots, d_y\} \setminus \mathcal{S}_2}  \sigma_i^2 + \frac{1}{2}\sum_{i\in \mathcal{K}} (1-\alpha)^2 \sigma_i^2 
$$ 
is a monotonically decreasing function of $ \alpha$. Hence, along the arc $ A(\alpha)$, $A_1 $ has to have an unstable submanifold and $ A_2 $ a stable submanifold. 

\item For a given $ \mathcal{S}$, just apply the previous statement choosing $ \mathcal{S}_1 = \mathcal{S} $ and selecting $\mathcal{S}_2 $ s.t. $ \mathcal{S}_1 \subset \mathcal{S}_2 $ to find the unstable submanifolds and $ \mathcal{S}_1 \supset \mathcal{S}_2 $ to find the stable ones.
Any $ \mathcal{S}\neq \emptyset $ provides a direction along which $A = 0 $ has an unstable submanifold. There are obviously $ 2^{d_y}-1 $ such directions.

\item As is easily shown by counterexamples,  $ \mathcal{L}(A(\alpha)) $ is not monotone along the arc $ A(\alpha)$, see Section~\ref{sec:ex-qualitat-anal} and Fig.~\ref{fig:sim2}. 
\eenu
\end{proof}

Notice that the arcs considered in Theorem~\ref{thm:arcs} cannot be trajectories of the gradient flow, as they connect two critical points of \eqref{eq:grad-flowA}.
In general, we do not know how the arcs $A(\alpha)$ relate to the trajectories of \eqref{eq:grad-flowA}. 
For instance, in spite of the absence of local minima in $ \mathcal{L}$, the arcs $ A(\alpha)$ leading to a non-monotone  $ \mathcal{L}(A(\alpha)) $ mentioned in item~4 of Theorem~\ref{thm:arcs} are actually characterized by the presence of local minima/maxima along $ \mathcal{L}(A(\alpha)) $, see Fig.~\ref{fig:sim2} below.

\begin{remark}
Even though the loss function $ \mathcal{L}$ is not a Morse function (it has degenerate critical points, e.g. $A=0$), it somehow behaves like a Morse function if we restrict the state space to the $ A(\alpha)$ arcs considered in Theorem~\ref{thm:arcs}.
For instance, the ``maximally non-strict'' critical point $A=0$ has all arcs towards all other critical points that are unstable, and achieves the maximal critical value $ \mathcal{L}(0)= \frac{1}{2}\sum_{i=1}^{d_y} \sigma_i^2 $ among all critical points. However $ A=0$ has also stable submanifolds, and hence  it is not a maximum, see example in Section~\ref{sec:1-node-1-layer}.
\end{remark}

\subsection{Convergence rate}
\label{sec:conv-rate}

Lojasiewicz's theorem guarantees the existence of an explicit upper bound for  $ \dot{\mathcal{L}} $ as a function of $ \mathcal{L}$ \cite{lojasiewicz1982trajectoires}. 
For the gradient system \eqref{eq:grad-flowA}, a (not necessarily strictly negative) upper bound can always be computed explicitly.
An example is given in the following proposition.
\begin{proposition}
\label{prop:lojasiewicz-bound}
For the loss function $ \mathcal{L}$ we have 
\beq
\label{eq:dotL6}
\dot{\mathcal{L}} \leq  - \sum_{j=1}^h  \prod_{i =1}^{j-1} \lambda_{\min}(W_i^\top  W_i) \prod_{i =j+1}^{h} \lambda_{\min}(W_i W_i^\top) \mathcal{L}.
\eeq
\end{proposition}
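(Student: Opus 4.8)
The plan is to start from the closed form of $\dot{\mathcal{L}}$ already obtained in Proposition~\ref{prop:Layp1}. By \eqref{eq:dotL2}, $\dot{\mathcal{L}}=-\sum_{j=1}^{h}\bigl\|(A^{h-j})^{\top}(E-A^{h})(A^{j-1})^{\top}\bigr\|_{F}^{2}$, and, as noted in the proofs of Propositions~\ref{prop:A-block-struct} and~\ref{prop:Layp1} (see in particular \eqref{eq:new_block_ode}), each summand is a matrix with a single nonzero block, sitting at block position $(j+1,j)$ and equal to $W_{j+1:h}^{\top}(\Sigma-W_{1:h})W_{1:j-1}^{\top}$. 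Hence
\[
\dot{\mathcal{L}}=-\sum_{j=1}^{h}\bigl\|W_{j+1:h}^{\top}(\Sigma-W_{1:h})W_{1:j-1}^{\top}\bigr\|_{F}^{2},
\]
so the problem reduces to lower bounding each term by a positive multiple of $\|\Sigma-W_{1:h}\|_{F}^{2}$, which is nothing but $\|E-A^{h}\|_{F}^{2}=2\mathcal{L}$.

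The second step is the elementary observation that pre- or post-multiplying a matrix cannot shrink its Frobenius norm by more than the smallest eigenvalue of the relevant Gram matrix: for matrices of compatible dimensions, $\|BM\|_{F}^{2}=\tr(M^{\top}B^{\top}BM)\geq\lambda_{\min}(B^{\top}B)\,\|M\|_{F}^{2}$ and $\|MC\|_{F}^{2}=\tr(C^{\top}M^{\top}MC)\geq\lambda_{\min}(CC^{\top})\,\|M\|_{F}^{2}$. Applying the first with $B=W_{j+1:h}^{\top}$ and the second with $C=W_{1:j-1}^{\top}$ gives
\[
\bigl\|W_{j+1:h}^{\top}(\Sigma-W_{1:h})W_{1:j-1}^{\top}\bigr\|_{F}^{2}\geq\lambda_{\min}\!\bigl(W_{j+1:h}W_{j+1:h}^{\top}\bigr)\,\lambda_{\min}\!\bigl(W_{1:j-1}^{\top}W_{1:j-1}\bigr)\,\|\Sigma-W_{1:h}\|_{F}^{2},
\]
where the conventions $W_{1:0}=I_{d_{x}}$ and $W_{h+1:h}=I_{d_{y}}$ make the endpoint cases $j=1$ and $j=h$ consistent, the corresponding eigenvalue factor being $1$.

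The third step, and the one that needs most care, is to bound $\lambda_{\min}$ of the partial products $W_{j+1:h}W_{j+1:h}^{\top}$ and $W_{1:j-1}^{\top}W_{1:j-1}$ by products of the $\lambda_{\min}$ of the individual blocks; here the naive ``singular values multiply'' reasoning is unavailable, since singular values of matrix products are in general only sub-multiplicative in one direction and only when rank conditions hold. The clean route is the Rayleigh-quotient peeling inequality: for any matrix $W$ and any $G\succeq0$, $\lambda_{\min}(WGW^{\top})=\min_{\|v\|=1}(W^{\top}v)^{\top}G(W^{\top}v)\geq\lambda_{\min}(G)\min_{\|v\|=1}\|W^{\top}v\|^{2}=\lambda_{\min}(G)\,\lambda_{\min}(WW^{\top})$, and the transposed statement. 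Iterating this on $W_{j+1:k}W_{j+1:k}^{\top}=W_{k}\bigl(W_{j+1:k-1}W_{j+1:k-1}^{\top}\bigr)W_{k}^{\top}$, and on $W_{1:k}^{\top}W_{1:k}=W_{1:k-1}^{\top}\bigl(W_{k}^{\top}W_{k}\bigr)W_{1:k-1}$, starting from the identity for the empty product, yields $\lambda_{\min}(W_{1:j-1}^{\top}W_{1:j-1})\geq\prod_{i=1}^{j-1}\lambda_{\min}(W_{i}^{\top}W_{i})$ and $\lambda_{\min}(W_{j+1:h}W_{j+1:h}^{\top})\geq\prod_{i=j+1}^{h}\lambda_{\min}(W_{i}W_{i}^{\top})$. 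Substituting these bounds and $\|\Sigma-W_{1:h}\|_{F}^{2}=2\mathcal{L}$ and summing over $j$ produces a bound in fact a factor $2$ stronger than \eqref{eq:dotL6}, which therefore follows since all the eigenvalue products and $\mathcal{L}$ are nonnegative. I expect the only genuinely delicate point to be making the peeling induction precise, in particular tracking block dimensions under Assumption~\ref{ass:size} so that all the Gram matrices involved are genuinely square and positive semidefinite; the remainder is bookkeeping.
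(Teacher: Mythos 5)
Your proof is correct and follows essentially the same strategy as the paper's: both start from the block-wise formula $\dot{\mathcal{L}}=-\sum_{j}\lVert W_{j+1:h}^{\top}(\Sigma-W_{1:h})W_{1:j-1}^{\top}\rVert_{F}^{2}$ and iteratively peel off $\lambda_{\min}$ of block Gram matrices until only $\lVert\Sigma-W_{1:h}\rVert_{F}^{2}=2\mathcal{L}$ remains (the paper repeatedly applies the cyclic trace inequality $\tr(F^{\top}F\,G^{\top}G)\geq\lambda_{\min}(F^{\top}F)\tr(G^{\top}G)$, while you split the same peeling into a Frobenius-norm bound for the outer factors followed by a Rayleigh-quotient induction for the inner ones, which is a cosmetic reorganization). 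You also correctly note, as the paper's derivation implicitly does, that the argument in fact yields a bound with $2\mathcal{L}$, a factor of $2$ stronger than the stated inequality.
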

\begin{proof}
As mentioned in the proof of Proposition~\ref{prop:Layp1}, eq.~\eqref{eq:dotL2} can be rewritten using the $ W_i $ matrices as 
\beq
\begin{split}
 \label{eq:dotL-inW} 
\dot{\mathcal{L}} 
 = -  \sum_{j=1}^h \tr \left(  W_{1:j-1}(\Sigma-W_{1:h})^\top W_{j+1:h} W_{j+1:h}^\top(\Sigma-W_{1:h})W_{1:j-1}^\top \right).
\end{split}
\eeq
The proof of expression \eqref{eq:dotL6} uses arguments similar to those of Proposition~2 of \cite{chitour2018geometric}, in particular the cyclic identity of the trace and the inequality $ \tr( F^\top F G^\top G) \geq \lambda_{\min} (F^\top F) \tr( G^\top G) $ applied repeatedly on each term of \eqref{eq:dotL-inW}:
\begin{align*}
&\tr \left(  W_{1:j-1}(\Sigma-W_{1:h})^\top W_{j+1:h} W_{j+1:h}^\top(\Sigma-W_{1:h})W_{1:j-1}^\top  \right) \\
& = \tr \left( W_{j+1} W_{j+1} ^\top  W_{j+2:h}^\top(\Sigma-W_{1:h})W_{1:j-1}^\top W_{1:j-1}(\Sigma-W_{1:h})^\top  W_{j+2:h}\right) \\
 & \geq \lambda_{\min}( W_{j+1} W_{j+1} ^\top ) \tr \left( W_{j+2:h}^\top(\Sigma-W_{1:h})W_{1:j-1}^\top W_{1:j-1}(\Sigma-W_{1:h})^\top W_{j+2:h} \right) \\
 & \qquad \text{and, iterating,} \\
 &\geq  \lambda_{\min}(W_{j+1} W_{j+1}^\top )\ldots \lambda_{\min}(W_{h} W_{h}^\top )  \tr \left( (\Sigma-W_{1:h})W_{1:j-1}^\top W_{1:j-1}(\Sigma-W_{1:h})^\top  \right) \\
  &=   \lambda_{\min}(W_{j+1} W_{j+1}^\top )\ldots \lambda_{\min}(W_{h} W_{h}^\top ) \tr \left( W_{j-1}^\top W_{j-1}  W_{1:j-2} (\Sigma-W_{1:h})^\top (\Sigma-W_{1:h}) W_{1:j-2}^\top \right) \\
& \geq  \lambda_{\min}(W_{j-1}^\top  W_{j-1})  \lambda_{\min}(W_{j+1} W_{j+1}^\top )\ldots \lambda_{\min}(W_{h} W_{h}^\top ) \tr \left( W_{1:j-2} (\Sigma-W_{1:h})^\top (\Sigma-W_{1:h}) W_{1:j-2}^\top \right) \\
 & \qquad \text{and, iterating,} \\
 & \geq   \lambda_{\min}(W_1^\top  W_1)\ldots  \lambda_{\min}(W_{j-1}^\top  W_{j-1})  \lambda_{\min}(W_{j+1} W_{j+1}^\top )\ldots \lambda_{\min}(W_{h} W_{h}^\top ) \tr \left(  (\Sigma-W_{1:h})^\top (\Sigma-W_{1:h})  \right).
\end{align*}
Summing over the $ h$ layers one gets \eqref{eq:dotL6}.
\end{proof}
All terms $ \prod_{i =1}^{j-1} \lambda_{\min}(W_i^\top  W_i) \prod_{i =j+1}^{h} \lambda_{\min}(W_i W_i^\top) $ in \eqref{eq:dotL6} are at least nonnegative, hence it is enough that one of them is positive for all $ t $ to guarantee an exponential convergence rate.
The bound in Proposition~\ref{prop:lojasiewicz-bound} however depends on the trajectory followed by the gradient flow, which makes it difficult to apply. Furthermore, it is problematic in presence of any ``local bottleneck'' $ d_{j-1} > d_j < d_{j+1} $, since in this case $ \lambda_{\min}(W_{j+1} W_{j+1}^\top) = 0 $ and $\lambda_{\min}(W_j^\top W_j) = 0$. At least one of these two factors necessarily appears in all terms of the expression \eqref{eq:dotL6}.

In some special cases, by choosing the dimensions $ d_i $ opportunely, the upper bound in \eqref{eq:dotL6} can be rendered time-invariant (i.e., independent from the trajectory followed by the gradient flow), meaning that exponential convergence of $ \mathcal{L}$ is guaranteed.
One case is provided in  \cite{chitour2018geometric} Proposition~2. A slightly more general one is in the following proposition.
\begin{proposition}
\label{prop:conv-rate-sp-case1}
If the network layers fulfill $d_x \leq d_1 \leq ... \leq d_j \geq ... \geq d_{h-1} \geq d_y$ (and $d_y \leq d_x$ if Assumption~\ref{ass:size} holds) for some $ j = 1, \ldots, h$, and $ C_i $ has at least $d_{i-1}$ positive eigenvalues for $ i < j $, and at least $d_{i+1}$ negative eigenvalues for $ i \geq j $, then the convergence of the gradient flow is exponential with rate constant at least 
\beq
\prod_{i=1}^{j-1} \lambda_{1+d_{i+1}-d_{i}} ( C_{i} ) \prod_{i=j}^{h-1} \lambda_{1+d_{i}-d_{i+1}} ( -C_{i} ) > 0.
\label{eq:dotL10}
\eeq
\end{proposition}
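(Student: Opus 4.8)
\medskip
\noindent\textbf{Proof proposal.}
The plan is to invoke Proposition~\ref{prop:lojasiewicz-bound} and retain a single summand. Every term of the outer sum in \eqref{eq:dotL6} is nonnegative, so discarding all of them except the one whose summation index equals the ``peak'' layer $j$ gives
\[
\dot{\mathcal{L}}\ \le\ -\Bigl(\textstyle\prod_{i=1}^{j-1}\lambda_{\min}(W_i^\top W_i)\Bigr)\Bigl(\textstyle\prod_{i=j+1}^{h}\lambda_{\min}(W_iW_i^\top)\Bigr)\,\mathcal{L}.
\]
It then suffices to bound each of the remaining $h-1$ factors below by a strictly positive constant independent of $t$; their product will be the quantity in \eqref{eq:dotL10}, and $\mathcal{L}(t)\le e^{-\kappa t}\mathcal{L}(0)$ follows by Gr\"onwall's inequality with $\kappa$ equal to that product.

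\emph{Two ingredients.} First, the dimension chain $d_x\le d_1\le\dots\le d_j\ge\dots\ge d_{h-1}\ge d_y$ makes $W_i$ ``tall'' ($d_{i-1}\le d_i$) for $i\le j$ and ``wide'' ($d_{i-1}\ge d_i$) for $i\ge j+1$. For a tall $W_i$ the nonzero spectrum of $W_i^\top W_i\in\mathbb{R}^{d_{i-1}\times d_{i-1}}$ coincides with the $d_{i-1}$ largest eigenvalues of $W_iW_i^\top\in\mathbb{R}^{d_i\times d_i}$, so $\lambda_{\min}(W_i^\top W_i)=\lambda_{1+d_i-d_{i-1}}(W_iW_i^\top)$; symmetrically, for a wide $W_i$, $\lambda_{\min}(W_iW_i^\top)=\lambda_{1+d_{i-1}-d_i}(W_i^\top W_i)$. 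Second, the conservation laws of Section~\ref{sec:conserv_law} --- $C_i=W_iW_i^\top-W_{i+1}^\top W_{i+1}$ with $W_{i+1}^\top W_{i+1}\succeq0$ --- yield $W_iW_i^\top\succeq C_i$ and, after re-indexing, $W_i^\top W_i\succeq -C_{i-1}$; since eigenvalues are monotone under the Loewner order, $\lambda_k(W_iW_i^\top)\ge\lambda_k(C_i)$ and $\lambda_k(W_i^\top W_i)\ge\lambda_k(-C_{i-1})$ for every $k$. Combining the two, for $i\ge j+1$ one gets $\lambda_{\min}(W_iW_i^\top)\ge\lambda_{1+d_{i-1}-d_i}(-C_{i-1})$, which with $m=i-1$ is the $m$-th factor ($m=j,\dots,h-1$) of the second product in \eqref{eq:dotL10}; symmetrically each $\lambda_{\min}(W_i^\top W_i)$ ($i\le j-1$) is bounded below by the corresponding eigenvalue of $C_i$, giving the first product.

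\emph{Positivity and main obstacle.} Here the sign hypotheses enter exactly as needed: ``$C_i$ has at least $d_{i-1}$ positive eigenvalues'' ($i<j$) forces the relevant eigenvalue of $C_i$ into its positive block, hence $>0$, and symmetrically ``$C_i$ has at least $d_{i+1}$ negative eigenvalues'' ($i\ge j$) forces $\lambda_{1+d_i-d_{i+1}}(-C_i)>0$. These same hypotheses also pin down the ranks for all $t$: for $i\le j$, $W_iW_i^\top\succeq C_i$ has at least $d_{i-1}$ positive eigenvalues, but being $d_i\times d_i$ of rank $\le d_{i-1}$ it has at most that many, hence exactly $d_{i-1}$, so $W_i^\top W_i\succ0$; symmetrically $W_iW_i^\top\succ0$ for $i\ge j+1$. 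Thus the spectrum-alignment identities hold with no genericity caveat, and since each $C_i$ is a constant of the motion the bound $\dot{\mathcal{L}}\le-\kappa\mathcal{L}$ has $\kappa>0$ time-independent. The analytic content (Gr\"onwall, eigenvalue monotonicity under $\succeq$) is routine; I expect the main effort to be the combinatorial bookkeeping --- tracking how the nonzero spectra of $W_i^\top W_i$ and $W_iW_i^\top$ realign as $i$ crosses the peak $j$, checking that the alignment identities are forced rather than merely generic, and matching the collection of lower bounds term by term with the index pattern written in \eqref{eq:dotL10}.
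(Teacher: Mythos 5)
Your proposal follows the paper's own proof essentially step for step: keep only the $j$-th summand of \eqref{eq:dotL6}, identify the nonzero spectrum of $W_i^\top W_i$ with that of $W_iW_i^\top$ up to an index shift given by the dimension gap, use the conservation laws together with Weyl's inequality (Loewner monotonicity) to pass to eigenvalues of $\pm C_i$, and invoke the rank/sign hypotheses to get strict positivity of every factor. Your additional remark that the hypotheses pin down the ranks of $W_iW_i^\top$ and $W_i^\top W_i$ for all $t$ (so the spectral-alignment identities hold unconditionally) is a nice clarification that the paper leaves implicit.

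One discrepancy you should flag rather than gloss over: for $i<j$ your derivation gives $\lambda_{\min}(W_i^\top W_i)=\lambda_{1+d_i-d_{i-1}}(W_iW_i^\top)\ge\lambda_{1+d_i-d_{i-1}}(C_i)$, whereas \eqref{eq:dotL10} as printed has $\lambda_{1+d_{i+1}-d_i}(C_i)$, and the paper's proof likewise writes the shift as $d_{i+1}-d_i$. Since $W_i^\top W_i\in\mathbb{R}^{d_{i-1}\times d_{i-1}}$ and $W_iW_i^\top\in\mathbb{R}^{d_i\times d_i}$, the correct shift between their ordered spectra is $d_i-d_{i-1}$; moreover the hypothesis ``$C_i$ has at least $d_{i-1}$ positive eigenvalues'' is precisely what makes $\lambda_{1+d_i-d_{i-1}}(C_i)>0$ (not $\lambda_{1+d_{i+1}-d_i}(C_i)$, whose index can even fall outside $\{1,\dots,d_i\}$ when $d_{i+1}-d_i$ is large). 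So the index you derived is the right one and \eqref{eq:dotL10} appears to carry a typo; your write-up should not assert that you have recovered ``the first product'' as stated, but should note the correction.
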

\begin{proof}
In \eqref{eq:dotL6}, consider the factor for the $j$-th term, $ \prod_{i =1}^{j-1} \lambda_{\min}(W_i^\top  W_i) \prod_{i =j+1}^{h} \lambda_{\min}(W_i W_i^\top) $. Applying Weyl's inequality to the conservation laws $C_i = W_i W_i^\top - W_{i+1}^\top W_{i+1} $ we get
\begin{align}
   \lambda_k ( W_i W_i^\top ) & \geq \lambda_k ( C_i ) + \lambda_{\min} ( W_{i+1}^\top W_{i+1} ) \geq \lambda_k ( C_i ) \label{eq:weyl1} \\ 
   \lambda_k ( W_i^\top W_i ) & \geq \lambda_k ( -C_{i-1} ) + \lambda_{\min} ( W_{i-1} W_{i-1}^\top ) \geq \lambda_k ( -C_{i-1} ). \label{eq:weyl2}
\end{align}
Noting that the eigenvalues of $ W_i^\top W_i $ are identical to those of $ W_i W_i^\top $ (or vice versa), possibly with extra zero eigenvalues, we have for $i < j$: $ \lambda_k ( W_i^\top W_i ) = \lambda_{k+d_{i+1}-d_{i}} ( W_i W_i^\top ) $, and for $i > j$: $ \lambda_{k} ( W_i W_i^\top ) = \lambda_{k+d_{i-1}-d_{i}} ( W_i^\top W_i ) $. From \eqref{eq:weyl1}, \eqref{eq:weyl2} we get
\begin{align*}
	\lambda_{\min} ( W_i^\top W_i ) & = \lambda_{1+d_{i+1}-d_{i}} ( W_i W_i^\top ) \geq \lambda_{1+d_{i+1}-d_{i}} ( C_{i} ) > 0 & i < j \\
	\lambda_{\min} ( W_i W_i^\top ) & = \lambda_{1+d_{i-1}-d_{i}} ( W_i^\top W_i ) \geq \lambda_{1+d_{i-1}-d_{i}} ( -C_{i-1} ) > 0 & i > j 
\end{align*}
and hence \eqref{eq:dotL10} follows and 
\begin{equation*}
\dot{\mathcal{L}} \leq - \prod_{i=1}^{j-1} \lambda_{1+d_{i+1}-d_{i}} ( C_{i} ) \prod_{i=j}^{h-1} \lambda_{1+d_{i}-d_{i+1}} ( -C_{i} ) \mathcal{ L} <0.
\end{equation*}
\end{proof}
In Proposition~\ref{prop:conv-rate-sp-case1}, when $j=1$, we obtain the sufficient condition of Proposition~2 of \cite{chitour2018geometric}. 
Another bound on the convergence rate is given in \cite{min2023convergence}, in which the constraints on layer widths in Proposition~\ref{prop:conv-rate-sp-case1} are relaxed, but the conditions on the conservation laws change slightly. As the proof is rather involved, we state only the result in the following proposition.  Note that the conserved quantities $D_i$ in \cite{min2023convergence} are for us $ -C_{h-i} $.
\begin{proposition}
    \label{prop:conv-rate-general}
    For a network with layer widths $d_i \geq d_x,d_y$ and weights such that $C_i \preceq 0 $ for $i \geq j$ and $C_i \succeq 0 $ for $i < j$ for some $j\in\{1,...,h\}$, the convergence of gradient flow is exponential with rate not lower than
    \beq
    \label{eq:conv-rate-gen}
        \alpha_h = \prod_{i=1}^{h-1} \alpha_{(i)}, \qquad \alpha_{(i)} = 
        \begin{cases}
            \sum\limits_{\ell = j}^{i} \lambda_{1+d_\ell - d_y}(-C_\ell) & i \geq j \\
            \sum\limits_{\ell = i}^{j-1} \lambda_{1+d_\ell - d_x}(C_\ell) & i < j .
        \end{cases}
    \eeq
\end{proposition}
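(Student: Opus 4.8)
The plan is to follow the template of Proposition~\ref{prop:lojasiewicz-bound}, but to keep the products of weight matrices together instead of peeling them off one layer at a time, and then to replace the trajectory-dependent spectra of the layers by quantities that depend only on the conserved imbalances $C_i$. Concretely, I would start from $\dot{\mathcal L}=-\|\nabla_A\mathcal L\|_F^2$ in the block form \eqref{eq:dotL-inW} and retain only the $j$-th summand, $\dot{\mathcal L}\le-\tr\!\left(W_{1:j-1}(\Sigma-W_{1:h})^\top W_{j+1:h}W_{j+1:h}^\top(\Sigma-W_{1:h})W_{1:j-1}^\top\right)$. Writing $M=\Sigma-W_{1:h}$ and applying $\tr(W_{j+1:h}W_{j+1:h}^\top\,Y)\ge\lambda_{\min}(W_{j+1:h}W_{j+1:h}^\top)\tr Y$ for $Y=MW_{1:j-1}^\top W_{1:j-1}M^\top\succeq0$, followed by $\tr(W_{1:j-1}^\top W_{1:j-1}\,M^\top M)\ge\lambda_{\min}(W_{1:j-1}^\top W_{1:j-1})\|M\|_F^2$, reduces the statement to the two ``spectral'' estimates
\[
\lambda_{\min}\big(W_{1:j-1}^\top W_{1:j-1}\big)\ \ge\ \prod_{i=1}^{j-1}\alpha_{(i)},\qquad
\lambda_{\min}\big(W_{j+1:h}W_{j+1:h}^\top\big)\ \ge\ \prod_{i=j}^{h-1}\alpha_{(i)},
\]
with $\alpha_{(i)}$ as in \eqref{eq:conv-rate-gen}. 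Multiplying these two bounds and using $\|M\|_F^2=2\mathcal L$ reproduces $\dot{\mathcal L}\le -\alpha_h\mathcal L$ exactly as in the derivation of Proposition~\ref{prop:lojasiewicz-bound}, and integrating this differential inequality gives $\mathcal L(t)\le\mathcal L(0)e^{-\alpha_h t}$; because the $C_i$ are conserved, the rate $\alpha_h$ is trajectory-independent, unlike the bound of Proposition~\ref{prop:lojasiewicz-bound}. The choice $k=j$ is forced: it is the only term of $\dot{\mathcal L}$ whose upstream factor uses exactly the layers $1,\dots,j-1$ (all with $C_i\succeq0$) and whose downstream factor uses exactly the layers $j+1,\dots,h$ (with $C_j,\dots,C_{h-1}\preceq0$).

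The heart of the argument is the two spectral estimates, which is the content of \cite{min2023convergence}. For the first I would induct on the depth of the forward product $W_{1:k}$, $k=1,\dots,j-1$, using two facts: (i) $W_{1:k}^\top W_{1:k}$ and $W_{1:k}W_{1:k}^\top$ have the same nonzero spectrum, so (since $d_k\ge d_x$) $\lambda_{\min}(W_{1:k}^\top W_{1:k})$ equals the $(1+d_k-d_x)$-th smallest eigenvalue of the $d_k\times d_k$ matrix $W_{1:k}W_{1:k}^\top$, which is where the index shift $1+d_\ell-d_x$ in the bound originates — it reflects the $d_\ell-d_x$ structural zeros created when $\lambda_{\min}$ of a $d_x$-dimensional Gram matrix is traded for an eigenvalue of the $d_\ell$-dimensional matrix $W_\ell W_\ell^\top$; and (ii) the conservation law in the form $W_kW_k^\top=C_k+W_{k+1}^\top W_{k+1}\succeq C_k$ (valid because $C_k\succeq0$ for $k<j$), so that Weyl's inequality converts each layer's contribution into an eigenvalue of $C_k$. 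The sign condition $C_i\succeq0$ for $i<j$ is precisely what keeps every intermediate forward product of full column rank. A careful telescoping of the recursion then accumulates the eigenvalue contributions of $C_i,C_{i+1},\dots,C_{j-1}$ additively rather than multiplicatively, yielding $\alpha_{(i)}=\sum_{\ell=i}^{j-1}\lambda_{1+d_\ell-d_x}(C_\ell)$, which is strictly sharper than the naive $\prod_i\lambda_{1+d_i-d_x}(C_i)$ that the one-layer-at-a-time peeling of Proposition~\ref{prop:conv-rate-sp-case1} produces. The second estimate is the mirror image, run on the backward product $W_{j+1:h}$ using $-C_i=W_{i+1}^\top W_{i+1}-W_iW_i^\top\succeq0$ for $i\ge j$ and the shift $1+d_\ell-d_y$ coming from $d_h=d_y$; the index $j$ is the hinge where the ``expanding'' imbalances controlling the forward product from the input side give way to the ``contracting'' ones controlling the backward product from the output side.

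The main obstacle is exactly the additive accumulation in this lemma. A brute-force recursion on $W_{1:k}^\top W_{1:k}=W_1^\top(W_{2:k}^\top W_{2:k})W_1$ only lower-bounds one layer at a time and destroys the cross-layer interaction, so one has to carry the whole Gram matrix through all $k$ layers simultaneously and, by an appropriate rearrangement (essentially a matrix telescoping identity), isolate the subspace on which each $C_\ell$ acts; this, together with the dimension bookkeeping that turns $\lambda_{\min}$ of a rectangular product into the correctly indexed eigenvalue of a single $C_\ell$, is the technically heavy step and the reason we only quote the result. A minor point to verify along the way is that $\alpha_h$ is actually positive — equivalently, that each partial sum $\alpha_{(i)}$ is positive, which asks the conservation laws to be ``large enough'' in the relevant indexed eigenvalue; when this fails the bound degenerates harmlessly to $\dot{\mathcal L}\le0$.
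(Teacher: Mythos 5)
The paper never proves this proposition. It introduces it with the sentence ``As the proof is rather involved, we state only the result in the following proposition,'' and points the reader to \cite{min2023convergence}; the only genuine content offered around the statement is the translation remark that the conserved quantities $D_i$ of that reference are $-C_{h-i}$ in this notation. So there is no in-paper proof for your proposal to match.

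As a roadmap your sketch is sensible and consistent with the paper's surrounding discussion: you correctly (i) keep only the $k=j$ summand of \eqref{eq:dotL-inW}, (ii) peel off $\lambda_{\min}(W_{j+1:h}W_{j+1:h}^\top)$ and $\lambda_{\min}(W_{1:j-1}^\top W_{1:j-1})$ via the trace inequality as in Proposition~\ref{prop:lojasiewicz-bound}, (iii) observe that the sign conditions $C_i\succeq 0$ ($i<j$) and $C_i\preceq 0$ ($i\geq j$) are exactly what keeps those Gram matrices bounded below along the whole trajectory, and (iv) explain where the index shifts $1+d_\ell-d_x$ and $1+d_\ell-d_y$ come from. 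But the entire technical content of the proposition sits in the two intermediate spectral estimates
\[
\lambda_{\min}\bigl(W_{1:j-1}^\top W_{1:j-1}\bigr)\ \geq\ \prod_{i<j}\alpha_{(i)},\qquad
\lambda_{\min}\bigl(W_{j+1:h}W_{j+1:h}^\top\bigr)\ \geq\ \prod_{i\geq j}\alpha_{(i)},
\]
which you reverse-engineer from the shape of the answer and then explicitly decline to prove, on the grounds that this is ``the content of \cite{min2023convergence}.'' That is the same move the paper makes. The net effect is that your proposal, like the paper, is a citation with commentary rather than a proof; it should be labelled as such. If you do want to go further, the factored two-lemma form you posit is not obviously how \cite{min2023convergence} organizes the argument, so it should be checked against that reference before being relied upon; small sanity checks (e.g.\ scalar chains) are consistent with it, but that is not a proof. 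One further minor point inherited from the paper: the reduction actually produces $\dot{\mathcal L}\leq -2\lambda_{\min}(\cdot)\lambda_{\min}(\cdot)\,\mathcal L$ because $\lVert\Sigma-W_{1:h}\rVert_F^2=2\mathcal L$; the same factor of $2$ is silently dropped in \eqref{eq:dotL6}, so it is harmless here (the stated rate is a lower bound), but worth noticing.
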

Since the factors $\alpha_{(i)}$ in \eqref{eq:conv-rate-gen} are sums of several eigenvalues of the $C_i$:s, the rate might be faster than in \eqref{eq:dotL10}. However, which eigenvalues enter into the expression for the rate depends on the particular layer widths.

\section{Simplified gradient flow dynamics}
\label{sec:simplifications}

So far we have not placed any restriction on the weights of the network when studying the gradient flow dynamics. There are however several special cases worth investigating in detail, as they lead to extra insight into the dynamics of \eqref{eq:grad-flowA}. 
We list now a series of definitions, (partially) standard from the literature.
Consider a network $ A =\block_1(W_1, \ldots, W_h)\in \mathcal{ A } $ and the associated block-shift SVD $ A= U \Lambda V^\top $ introduced in Section~\ref{sec:block-shift-svd} (or, in terms of the $ W_i $ blocks, $ W_i = U_i \Lambda_i V_i^\top$). 
Recall from Section~\ref{sec:conserv_law} that $  J= \block_0 (0_{d_x}, I_{d_1} \ldots  I_{d_{h-1}} , 0_{d_y} ) $. $A$ is said 
\bite
\item {\em aligned} if $ V^\top U = J  $ (or, equivalently, $ V_{i+1}^\top U_i =I $ for all $ i=1,\ldots, h-1$).
\item {\em decoupled} if $A$ is aligned and $ U_h = I $, $ V_1 = I $.
\item {\em block-shift diagonal} if $ W_i $ is diagonal for all $ i=1, \ldots, h$. 
\item  {\em 0-balanced} if the conserved quantity in Proposition~\ref{prop:conserv-law-A} is $ C = 0 $ (or equivalently, if $ W_i W_i^\top - W_{i+1}^\top W_{i+1} = 0 $ for all $ i = 1,...,h-1 $).
\eite

The 0-balance condition was introduced in \cite{fukumizu1998effect} and has since been used in several works \cite{arora2018convergence,arora2018optimization,arora2019implicit,domine2023exact}. A relatively early work studying both decoupled and 0-balance dynamics is \cite{saxe2013exact}. Decoupled dynamics has also been used in many papers, including \cite{tarmoun2021understanding,saxe2019mathematical,gidel2019implicit,min2023convergence,lampinen2018analytic}, where it is sometimes called {\em spectral} or {\em training aligned} initialization.
As shown below, the block-shift diagonal case is just a special case of decoupled dynamics that is easier to deal with.

The rest of this section summarizes the structural properties of the various cases, and shows some consequences for the gradient flow dynamics. The cases are for instance useful to illustrate sequential learning of the modes of the data, but in themselves not sufficient for this phenomenon to occur.
Also the combination of 0-balance and block-shift diagonal is insightful: it essentially unveils a scalar ODE exhibiting several of the features of the entire class of gradient flows.

\subsection{Aligned networks}

The alignment condition corresponds to the left and right singular vectors of two consecutive weight matrices being identical. This implies that when taking the product $ W_{1:h} $ all inner factors cancel each other, and we have that the SVD of the product is simply $ W_{1:h}= U_h \Lambda_{1:h} V_1^\top $.

When we consider a general $ A\in \mathcal{A}$, the SVD operation does not extend well to matrix powers. This is the main reason for the complicated expression that the critical points $A^\ast $ assume even though the expression for $(A^\ast)^h $ is very simple. The alignment condition provides an exception to this rule, at least as long as we consider the block-shift SVD of Proposition~\ref{prop:svdA2}. Roughly speaking, the operations of taking powers and computing the block-shift SVD commute when $A$ is aligned.

\begin{proposition}
\label{prop:A^k-SVD}
    If $ A \in \mathcal{ A } $ is aligned, then $ A^k = U \Lambda^k V^\top $ for all $ k=1,\ldots, h$.
\end{proposition}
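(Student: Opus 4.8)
The plan is to reduce the statement to a short ``internal'' identity about the block-shift singular values and then verify that identity with the block calculus of Section~\ref{sec:block-shift}. Writing $A = U\Lambda V^\top$ with the block-shift SVD of Proposition~\ref{prop:svdA2}, one expands
\[
A^k = U\Lambda (V^\top U)\Lambda(V^\top U)\cdots(V^\top U)\Lambda V^\top = U\,[\Lambda(V^\top U)]^{k-1}\,\Lambda\,V^\top ,
\]
with $k$ factors $\Lambda$ and $k-1$ factors $V^\top U$. By the alignment hypothesis $V^\top U = J$, so it suffices to prove $[\Lambda J]^{k-1}\Lambda = \Lambda^k$ (equivalently $\Lambda(J\Lambda)^{k-1}=\Lambda^k$) for $k=1,\ldots,h$; then $A^k = U\Lambda^k V^\top$ follows immediately.

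For the internal identity I would use the explicit block forms $J=\block_0(0_{d_x},I_{d_1},\ldots,I_{d_{h-1}},0_{d_y})$ and $\Lambda=\block_1(\Lambda_1,\ldots,\Lambda_h)$. A one-line block multiplication gives $J\Lambda = \block_1(\Lambda_1,\ldots,\Lambda_{h-1},0)$ and $\Lambda J = \block_1(0,\Lambda_2,\ldots,\Lambda_h)$: the factor $J$ only kills the last (resp.\ first) block. Iterating with $F^k=\block_k(F_{1:k},\ldots,F_{h+1-k:h})$ then yields $(J\Lambda)^{k-1}=\block_{k-1}(\Lambda_{1:k-1},\ldots,\Lambda_{h-k+1:h-1},0)$ and hence $\Lambda(J\Lambda)^{k-1}=\block_k(\Lambda_{1:k},\ldots,\Lambda_{h-k+1:h})=\Lambda^k$, the point being that for $k\le h$ the boundary block that $J$ annihilates never lands on the $k$-th lower block-diagonal that carries $\Lambda^k$.

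Alternatively --- and this is arguably the most transparent route --- I would compare the two sides block by block. The $(i+k,i)$ block of $A^k$ is $W_{i:i+k-1}=W_{i+k-1}\cdots W_i$, while, using that the $(j,j)$ diagonal blocks of $U$ and $V^\top$ are $U_{j-1}$ and $V_j^\top$, the $(i+k,i)$ block of $U\Lambda^k V^\top$ is $U_{i+k-1}\Lambda_{i+k-1}\cdots\Lambda_i V_i^\top$. Substituting $W_j=U_j\Lambda_j V_j^\top$ into the former, the two coincide exactly when the inner factors $V_j^\top U_{j-1}$ telescope to the identity for $j=i+1,\ldots,i+k-1$; since $i\ge 1$ and $i+k-1\le h$, this range lies in $\{2,\ldots,h\}$, precisely where alignment $V_j^\top U_{j-1}=I$ (i.e.\ $V^\top U=J$) supplies what is needed.

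The computation is mechanical once the bookkeeping is fixed; the only real care needed is the index arithmetic at the two ends of the block-shift structure, where $U$, $V$ and $J$ carry their zero diagonal blocks, and the check that $k\le h$ keeps those ends off the block-diagonal occupied by $\Lambda^k$. That is the single spot where a sloppy argument could slip, and it is also why the statement is restricted to $k\le h$ (for $k=h+1$ both sides vanish by nilpotency).
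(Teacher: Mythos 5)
Your first argument is essentially the paper's proof: write $A^k = U(\Lambda J)^{k-1}\Lambda\, V^\top$ using $V^\top U = J$, note that $\Lambda J = \block_1(0,\Lambda_2,\ldots,\Lambda_h)$ (equivalently $J\Lambda = \block_1(\Lambda_1,\ldots,\Lambda_{h-1},0)$), and then apply the block-shift power formula from Section~\ref{sec:block-shift} to collapse the product to $\Lambda^k$. The index bookkeeping you carry out (where the trailing/leading zero block lands, and why $k\le h$ keeps it off the occupied block-diagonal) is exactly what the paper's chain of identities $U(\Lambda J)^{k-1}\Lambda V^\top = U\block_k(\Lambda_{1:k},\ldots,\Lambda_{h+1-k:h})V^\top = U\Lambda^k V^\top$ encodes, and your version of it checks out.

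Your second route is a genuine, and arguably more illuminating, alternative: compare the $(i+k,i)$ block of $A^k$, namely $W_{i:i+k-1} = W_{i+k-1}\cdots W_i$, directly with that of $U\Lambda^k V^\top$, namely $U_{i+k-1}\Lambda_{i:i+k-1}V_i^\top$, substitute $W_j = U_j\Lambda_j V_j^\top$, and observe that the inner factors $V_j^\top U_{j-1}$ for $j = i{+}1,\ldots,i{+}k{-}1 \subseteq \{2,\ldots,h\}$ telescope to the identity precisely by the alignment hypothesis $V^\top U = J$. This avoids computing powers of $\Lambda J$ entirely, makes it transparent which entries of the alignment condition are actually used and why $k\le h$ keeps the indices in range, and reads as an elementary telescoping argument. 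The trade-off is that it works block-by-block rather than giving a single global matrix identity, but for this statement the block-by-block version is shorter and self-explanatory. Both arguments are correct.
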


\begin{proof}
When $ A = U \Lambda V^\top $ is aligned, we have $ V^\top U = J  $. Thus $ A^k = U \Lambda J \Lambda \cdots J \Lambda V^\top = U ( \Lambda J )^{k-1} \Lambda V^\top $. Since $ \Lambda J = \block_1(0, \Lambda_2, ..., \Lambda_h )$, from Section~\ref{sec:block-shift},  we have that $ ( \Lambda J )^{k-1} = \block_{k-1}(0,\Lambda_{2:k}, \Lambda_{3:k+1}, \ldots \Lambda_{h+2-k:h})$.
Using \eqref{eq:FGF}, we finally have $ A^k = U ( \Lambda J )^{k-1} \Lambda V^\top = U \block_k (\Lambda_{1:k} \Lambda_{2:k+1}, \ldots, \Lambda_{h+1-k:h}) V^\top = U \Lambda^k V^\top$.
\end{proof}

The alignment argument extends to orthogonal changes of basis in the hidden nodes. 
\begin{proposition}
If $ A\in \mathcal{A} $ is aligned, then also $ A' = P A P^\top $ with $ P \in \mathcal{P_O}$ is aligned.
\end{proposition}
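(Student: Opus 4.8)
The plan is to produce, from a block-shift SVD of $A$ witnessing alignment, an explicit block-shift SVD of $A' = P A P^\top$ that also satisfies the alignment identity $V'^\top U' = J$. The only delicate point is that the SVD is not unique, so ``aligned'' must be read as ``admits a block-shift SVD with $V^\top U = J$''; accordingly it suffices to exhibit \emph{one} such decomposition for $A'$.

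First I would record how a change of basis in $\mathcal{P_O}$ acts block-wise. Writing $P = \block_0(I_{d_x}, P_1, \ldots, P_{h-1}, I_{d_y})$ and setting $P_0 = I_{d_x}$, $P_h = I_{d_y}$, the conjugation $A' = P A P^{-1} = P A P^\top$ gives $W_i' = P_i W_i P_{i-1}^\top$ for $i = 1, \ldots, h$, exactly as in the discussion preceding Proposition~\ref{prop:constants-equival} (here $P_{i-1}^{-1} = P_{i-1}^\top$ since $P \in \mathcal{P_O}$).

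Next, starting from the SVD $W_i = U_i \Lambda_i V_i^\top$ furnished by the alignment of $A$, I would observe that
\[
W_i' = P_i W_i P_{i-1}^\top = (P_i U_i)\,\Lambda_i\,(P_{i-1} V_i)^\top
\]
is again an SVD of $W_i'$: since $P_i$ and $P_{i-1}$ are orthogonal, $P_i U_i$ and $P_{i-1} V_i$ have orthonormal columns, and the singular values $\Lambda_i$ are unchanged. By Proposition~\ref{prop:svdA2}, assembling these yields a block-shift SVD $A' = U' \Lambda V'^\top$ with $U_i' = P_i U_i$ and $V_i' = P_{i-1} V_i$, i.e. $U' = \block_0(0, P_1 U_1, \ldots, P_{h-1} U_{h-1}, U_h)$ and $V' = \block_0(V_1, P_1 V_2, \ldots, P_{h-1} V_h, 0)$.

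Finally I would verify the alignment condition directly: for each $i = 1, \ldots, h-1$,
\[
(V_{i+1}')^\top U_i' = (P_i V_{i+1})^\top (P_i U_i) = V_{i+1}^\top P_i^\top P_i U_i = V_{i+1}^\top U_i = I,
\]
using $P_i^\top P_i = I$ and the alignment of $A$. Hence $V'^\top U' = J$, so $A'$ is aligned. There is no real obstacle: the argument reduces to a one-line computation once the block-wise action of $P$ and the invariance of the SVD under orthogonal pre/post-multiplication are in place; the only subtlety to keep track of is the non-uniqueness of the SVD, which is why the proof constructs a specific witnessing decomposition rather than manipulating ``the'' SVD of $A'$.
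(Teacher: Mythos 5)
Your proof is correct and follows the same approach as the paper: take the block-shift SVD $A = U\Lambda V^\top$, set $U' = PU$ and $V' = PV$, and use $P^\top P = I$ to obtain $V'^\top U' = V^\top P^\top P U = V^\top U = J$. You unpack the argument block-wise and explicitly flag the non-uniqueness of the SVD (a worthwhile subtlety the paper leaves implicit), but the core step --- conjugating the singular-vector matrices by $P$ and invoking its orthogonality --- is the same one-line computation the paper uses.
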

\begin{proof}
   If $ A= U \Lambda V^\top  $ is aligned, for $ A' = P A P^\top = P U \Lambda V^\top P^\top $ it is $ V^\top P^\top P U = V^\top U = J $.
\end{proof}
However, as shown in the next proposition, the alignment property is not preserved by the gradient flow dynamics, while both decoupling and 0-balance are.
\begin{proposition}
\label{prop:special-impl}
Consider the gradient flow \eqref{eq:grad-flowA}. 
\bite
\item $ A(0) \in \mathcal{A} $ is aligned $\;  \;\not{\!\!\!\!\Longrightarrow}$ $  A(t) $ is aligned.
\item $ A(0) \in \mathcal{A} $ is decoupled $ \Longrightarrow $ $ A(t) $ is decoupled for all $ t$. 
\item $ A(0) \in \mathcal{A} $ is 0-balanced $ \Longrightarrow $ $ A(t) $ is 0-balanced for all $ t$. 
\eite
\end{proposition}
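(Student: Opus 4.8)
The plan is to take the three bullets in increasing order of difficulty; only the first one requires real work. \textbf{0-balanced case.} This is nothing but the conservation law of Proposition~\ref{prop:conserv-law-A}: along every solution of \eqref{eq:grad-flowA} the interior diagonal blocks $W_iW_i^\top-W_{i+1}^\top W_{i+1}=C_i$, $i=1,\ldots,h-1$, are constant (equivalently $JQ(t)$ is constant). If $A(0)$ is $0$-balanced then $C_i=0$ at $t=0$, hence $C_i\equiv 0$ for all $t$, so $A(t)$ stays $0$-balanced. One line, no extra work.

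\textbf{Decoupled case.} I would argue via the block-shift diagonal case together with equivariance. First, the linear subspace of block-shift diagonal matrices is invariant under \eqref{eq:grad-flowA}: if every $W_j$ is a rectangular diagonal matrix and $\Sigma$ is diagonal, then in \eqref{eq:block_grad} each of $W_{j+1:h}^\top$, $\Sigma-W_{1:h}$, $W_{1:j-1}^\top$ is a product/transpose of rectangular diagonal matrices, hence diagonal, so $\dot W_j$ is diagonal and the vector field is tangent to this subspace at every point of it (this is the content behind Remark~\ref{rem:diag-case}). Second, a decoupled $A$ is exactly a block-shift diagonal $A_d=\block_1(\Lambda_1,\ldots,\Lambda_h)$ conjugated by a \emph{constant} $P=\block_0(I_{d_x},O_1,\ldots,O_{h-1},I_{d_y})\in\mathcal{P_O}$: indeed $V^\top U=J$ forces $U_i=V_{i+1}=:O_i$ for $i=1,\ldots,h-1$, and $V_1=I$, $U_h=I$ give $O_0:=V_1=I_{d_x}$, $O_h:=U_h=I_{d_y}$, so $W_i=O_i\Lambda_iO_{i-1}^\top$ and a block-by-block check yields $A=PA_dP^\top$. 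Now Proposition~\ref{prop:constants-equival} applied with this constant $P\in\mathcal{P_O}$ gives $A(t)=PA_d(t)P^\top$ for all $t$, and by the first step $A_d(t)$ remains block-shift diagonal, so $A(t)$ remains decoupled (the only caveat being the usual SVD normalization of the entries of $\Lambda_i(t)$, which is immaterial). Equivalently, one may substitute the ansatz $W_j(t)=O_j\Lambda_j(t)O_{j-1}^\top$ directly into \eqref{eq:block_grad}, check that the right-hand side keeps this form with $\dot\Lambda_j=\Lambda_{j+1:h}^\top(\Sigma-\Lambda_{1:h})\Lambda_{1:j-1}^\top$ diagonal, and invoke uniqueness of ODE solutions.

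\textbf{Aligned case.} Since this claim is negative, a single explicit counterexample suffices, and this is where the only genuine computation lies. The obstruction is structural: the alignment condition $V_{i+1}^\top U_i=I$ places no constraint on $V_1$ or $U_h$, and the fixed diagonal frame in which $\Sigma$ lives is incompatible with a generic rotated $V_1$ (or $U_h$), so the gradient direction $\dot A$ at such an aligned $A$ is not tangent to the set of aligned matrices. Concretely I would take $h=2$, $d_x=d_1=d_y=2$, with $A(0)$ aligned via $U_1=V_2=U_2=I$ but $V_1=R$ a generic rotation and $\Lambda_1,\Lambda_2$ generic diagonal, so $W_1(0)=\Lambda_1R^\top$ and $W_2(0)=\Lambda_2$; then \eqref{eq:block_grad} gives $\dot W_2(0)=(\Sigma-\Lambda_2\Lambda_1R^\top)R\Lambda_1=\Sigma R\Lambda_1-\Lambda_2\Lambda_1^2$ (nonzero off-diagonal part, since $\Sigma R\Lambda_1$ is a rotation flanked by diagonals) and $\dot W_1(0)=\Lambda_2(\Sigma-\Lambda_2\Lambda_1R^\top)$, and tracking the first-order motion of the singular-vector matrices shows that the alignment residual $V(t)^\top U(t)-J$ has nonzero derivative at $t=0$. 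The main obstacle is precisely this last verification: it needs the first-order perturbation formula for singular vectors (or, more cheaply, an explicit numerical instance), together with the observation that the contributions to $\frac{d}{dt}(V_2^\top U_1)$ coming from $\dot W_1$ and from $\dot W_2$ are different functions of the data and generically do not cancel — that genericity check is the one nontrivial point in the whole proposition.
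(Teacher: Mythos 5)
The $0$-balanced and decoupled bullets are correct and follow essentially the same route as the paper: the former is an immediate consequence of the conservation law $J\dot Q=0$, and the latter is exactly the paper's argument of writing a decoupled $A$ as $P A_d P^\top$ with constant $P\in\mathcal{P_O}$ and invoking the equivariance of Proposition~\ref{prop:constants-equival} together with invariance of the block-shift diagonal subspace (which is the content of Proposition~\ref{prop:decoupl-diag}).

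The gap is in the first bullet. Your strategy (exhibit one aligned initial condition whose alignment residual $V(t)^\top U(t)-J$ has nonzero derivative at $t=0$, via a $2\times 2$ instance) is in principle sound, since the statement to prove is a non-implication and a single counterexample suffices. But the plan stops exactly at the step that matters: you acknowledge that showing $\frac{d}{dt}\left(V_2^\top U_1\right)\big|_{t=0}\neq 0$ requires a first-order perturbation formula for the singular vectors of $W_1(t)$ and $W_2(t)$, and then a genericity/non-cancellation check between the two contributions, and you leave this unverified. As written this is a proof sketch with the essential computation missing, not a proof.

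The paper proves the aligned bullet by a different and cleaner argument that avoids singular-vector perturbation theory entirely. It assumes $A(0)$ is aligned but not $0$-balanced (the $0$-balanced aligned case is handled separately by Proposition~\ref{prop:0-balanced}), so some $C_i=W_iW_i^\top-W_{i+1}^\top W_{i+1}\neq 0$. Since $A(0)$ is aligned, $U_i(0)=V_{i+1}(0)$ gives an eigendecomposition of the \emph{constant} matrix $C_i$ as $U_i(0)\bigl(\Lambda_i(0)\Lambda_i(0)^\top-\Lambda_{i+1}(0)^\top\Lambda_{i+1}(0)\bigr)U_i(0)^\top$. Meanwhile, in the rotated frame, $\dot W_i(0)$ is generically non-diagonal, so the singular-vector frames cannot be stationary. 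If alignment persisted, $U_i(t)=V_{i+1}(t)\neq U_i(0)$ would have to give another eigendecomposition of $C_i$; but when $C_i$ has distinct eigenvalues its normalized eigenvectors are unique, a contradiction. The conservation law thus does the work that in your plan would require explicit perturbation of singular vectors, and it also delineates cleanly the only exception ($C_i\equiv 0$, where alignment is in fact preserved). If you want to keep the counterexample route, you would need to actually carry out the perturbation computation or substitute concrete numbers; otherwise, the conservation-law argument is the more economical path.
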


\begin{proof}
Assuming $ A(0) $ is aligned and applying the following change of basis $ P = \block_0(I,U_1, \ldots, U_{h-1}, I) \in \mathcal{P_O}$ at $ A(0)$ (omitting the time index for brevity), we see that the resulting network
\begin{equation}
    A^\prime = P^\top A P = P^\top U \Lambda  V^\top  P = \block_1 \left( \Lambda_1 V_1^\top, \Lambda_2, \ldots, \Lambda_{h-1}, U_h \Lambda_h \right)
    \label{eq:align-orth-transf}
\end{equation}
 is nearly diagonal.
 Assume also that $ A(0) $ is not 0-balanced, hence $ C_i \neq 0 $ for some $ i $ (if $ A(0) $ is 0-balanced then, from Proposition~\ref{prop:0-balanced} below, $A$ is aligned for all $t$ and there is nothing to show).
From \eqref{eq:block_grad}, then,  
\begin{equation}
    \label{eq:Widot_aligned}
    \dot{ W }_i = W_{i+1:h}^\top (\Sigma - W_{1:h}) W_{1:i-1}^\top = \Lambda_{i+1:h}^\top (U_h^\top \Sigma V_1 - \Lambda_{1:h}) \Lambda_{1:i-1}^\top .
\end{equation}
Obviously $ \dot{ W }_i(0) $ is not diagonal, so the singular vectors of $ W_i $ cannot be stationary for any $ i $. Consider the conservation law $ C_i = W_i W_i^\top - W_{i+1}^\top W_{i+1} $. Since $ A(0) $ is aligned, we can diagonalize $ C_i $ using $ U_i(0) = V_{i+1}(0) $:
\begin{equation*}
     C_i  = U_i(0) ( \Lambda_i(0) \Lambda_i(0)^\top - \Lambda_{i+1}(0)^\top \Lambda_{i+1}(0) ) U_i(0)^\top,
\end{equation*}
thereby obtaining its eigendecomposition. 
If the singular vectors are aligned for all $ t $, but not stationary as implied by \eqref{eq:Widot_aligned}, it must be that $ U_i(t)$, s.t. $ U_i(t) = V_{i+1}(t) \neq U_i(0) = V_{i+1}(0) $, also diagonalizes $ C_i $. However, if for instance $ C_i $ has $ d_i $ unique eigenvalues, its normalized eigenvectors $ U_i(0) $ are unique, and we cannot have $ U_i(t)$ and $ U_i(0) $, $ U_i(t) \neq U_i(0) $, both diagonalizing $ C_i $. This implies that $ U_i(t) $ does not diagonalize $ W_{i+1}(t)^\top W_{i+1}(t) $, hence $ V_{i+1}(t) \neq U_i(t) $, and the system is not aligned at time $ t > 0 $. 

For the second statement, recall that decoupling means that $ A $ is aligned and $ U_h = I $, $ V_1 = I$. Applying the same transformation as in \eqref{eq:align-orth-transf} we now get $ A^\prime = P^\top A P = \block_1 \left( \Lambda_1, \Lambda_2, \ldots, \Lambda_{h-1}, \Lambda_h \right) $, which is block-shift diagonal, see Proposition~\ref{prop:decoupl-diag} below. By the same Proposition, if $ A(0) = P A^\prime(0) P^\top $ then $ A(t) = P A^\prime(t) P^\top $ where $ A^\prime(t) $ is block-shift diagonal for all $ t $, hence $ A(t) $ is decoupled for all $ t $.

The third statement follows since $ C = 0 $ is a conserved quantity, hence $ A(t) $ is 0-balanced for all $ t $.

\end{proof}

Since alignment is not preserved by the dynamics, in the next sections we focus on the properties of decoupled and balanced gradient flows.

\subsection{Decoupled dynamics}
\label{sec:decoup-dyn}
Decoupling refers to the singular modes of the gradient flow being decoupled, and from Proposition~\ref{prop:special-impl}, it is a property preserved by the dynamics. 
It was noted in \cite{saxe2013exact} that decoupling of the modes can be achieved if one can find an orthogonal change of basis that diagonalizes each weight matrix. Not any constant change of basis preserves the structure of the gradient flow dynamics, but in light of Proposition~\ref{prop:constants-equival}, we know that this happens if  $P $ is an orthogonal matrix.

\begin{proposition}
\label{prop:decoupl-diag}
Any decoupled $ A \in \mathcal{A}$ is block-shift diagonalizable through a change of basis $ P \in \mathcal{P_O}$, and its gradient flow is equivariant to a block-shift diagonal gradient flow.
If the block-shift SVD of $A $ is $ A = U \Lambda V^\top $, then the change of basis achieving diagonalization is $ P = \block_0(I,U_1, \ldots, U_{h-1}, I) \in \mathcal{P_O}$ and the block-shift diagonal matrix is $ \Lambda$.
\end{proposition}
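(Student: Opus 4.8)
The plan is to prove the two assertions separately: first that the stated orthogonal change of basis $P$ block-shift diagonalizes $A$, and then that the gradient flow transports under $P$ to a flow on block-shift diagonal matrices. For the first assertion I would carry out a direct block-by-block computation, which is just the specialization of \eqref{eq:align-orth-transf} to the decoupled case. Since each $U_i$ comes from a (full) SVD $W_i = U_i \Lambda_i V_i^\top$, it is orthogonal of size $d_i \times d_i$, so $P = \block_0(I_{d_x}, U_1, \ldots, U_{h-1}, I_{d_y}) \in \mathcal{P_O}$. Using that conjugation by $P = \block_0(P_0, \ldots, P_h)$ sends the $i$-th weight block of $A$ to $P_i^\top W_i P_{i-1}$ (here $P_0 = P_h = I$, $P_i = U_i$ otherwise), I would evaluate: for $i=1$, $U_1^\top W_1 I = \Lambda_1 V_1^\top = \Lambda_1$ by the decoupling condition $V_1 = I$; for $2 \le i \le h-1$, $U_i^\top W_i U_{i-1} = \Lambda_i V_i^\top U_{i-1} = \Lambda_i$ by the alignment identity $V_i^\top U_{i-1} = I$; for $i = h$, $I\,W_h\,U_{h-1} = U_h \Lambda_h V_h^\top U_{h-1} = U_h \Lambda_h = \Lambda_h$ by $V_h^\top U_{h-1} = I$ and the decoupling condition $U_h = I$. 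Hence $P^\top A P = \block_1(\Lambda_1, \ldots, \Lambda_h) = \Lambda$, which is block-shift diagonal.

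For the equivariance assertion I would write $A = P \Lambda P^\top$ with $P \in \mathcal{P_O}$ constant and invoke Proposition~\ref{prop:constants-equival}: if $A(0) = P \Lambda(0) P^\top$ then $A(t) = P \Lambda(t) P^\top$ for all $t$, where $\Lambda(t)$ solves \eqref{eq:grad-flowA} started from the block-shift diagonal datum $\Lambda(0) = \Lambda$. It then remains to observe that $\Lambda(t)$ stays block-shift diagonal, i.e., that the set of block-shift diagonal matrices is invariant under \eqref{eq:grad-flowA}; this is immediate from the blockwise equations \eqref{eq:block_grad}, because products and transposes of rectangular diagonal matrices are again rectangular diagonal and $\Sigma$ is diagonal, so $\dot W_j = W_{j+1:h}^\top(\Sigma - W_{1:h})W_{1:j-1}^\top$ is diagonal whenever every $W_i$ is. Thus the gradient flow of a decoupled $A$ is, up to the constant orthogonal conjugation by $P$, exactly a block-shift diagonal gradient flow.

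I do not expect a genuine obstacle: the argument is essentially bookkeeping. The two points that require care are (i) keeping the conjugation convention and the index shift straight, so that the alignment relation $V_{i+1}^\top U_i = I$ is applied with the correct index at each block, and (ii) remembering that ``diagonal'' means rectangular-diagonal for the non-square blocks, so that both the computation of $P^\top A P$ and the invariance argument for \eqref{eq:block_grad} go through unchanged.
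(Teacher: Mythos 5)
Your proof is correct and follows the same route as the paper: you compute $P^\top A P$ block by block using the alignment identity $V_i^\top U_{i-1} = I$ together with $V_1 = I$, $U_h = I$ to obtain $\Lambda$, and then invoke Proposition~\ref{prop:constants-equival} for the equivariance. The only addition is your explicit check that $\dot W_j = W_{j+1:h}^\top(\Sigma - W_{1:h})W_{1:j-1}^\top$ stays rectangular-diagonal when every $W_i$ is, a fact the paper's proof asserts without argument (it is also established independently at the start of Section~\ref{sec:diag-evol}).
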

\begin{proof}
Recall that $ A$ decoupled has $ U_i=V_{i+1}$, which implies that $P$ can be expressed also as $ P = \block_0(I, V_2, \ldots, V_{h-1}, I) $. 
Consequently, the block-shift SVD $ A = U \Lambda V^\top $ can also be written $ A = P \Lambda P^\top $, as shown by direct calculations. 
Since $ P \in \mathcal{P_O}$, from Proposition~\ref{prop:constants-equival}, we then have 
\[
\dot{A} = P  \left( \sum_{j=1}^h  \Lambda^{h-j \top } (E - \Lambda^h ) (\Lambda^{j-1})^\top  \right) P^\top = P \dot \Lambda P^\top.
\]
$ \Lambda$ is in block-shift diagonal form, and stays so for all $ t$. All its right and left singular vectors matrices are equal to the identity.

\end{proof}

\subsubsection{A special case of decoupled dynamics: block-shift diagonal gradient flow}
\label{sec:diag-evol}

Consider $A^{\diag}= \block_1(W_1, \ldots, W_h) \in  \mathcal{A} $ in block-shifted diagonal form, that is, in which the $ W_i $ are all diagonal matrices (for instance,  obtained diagonalizing some decoupled adjacency matrix, as in Proposition~\ref{prop:decoupl-diag}). If in \eqref{eq:grad-flowA} we choose $A^{\diag}(0) $ in which all $ W_i(0) $ are diagonal, it is straightforward to see that $ W_i(t)$ stay diagonal for all $t$, i.e., the set $ \mathcal{A}^{\diag} = \{ A= \block_1(W_i) \in  \mathcal{A} \; \text{s.t.}\; W_i(0) \; \text{diagonal}, i=1,\ldots, h \}$ is a forward invariant subset of $ \mathcal{A} $ for \eqref{eq:grad-flowA}.
Without loss of generality, the decoupled and diagonal dynamics can be represented by considering only the first $ d_y $ input-output connections, and by assuming that the $ j$-th input is connected only with the $j$-th output.
To simplify the notation, the last $ d_x - d_y $ inputs and the last $ d_i-d_y $ neurons in the hidden layers are disregarded, as they are disconnected from the outputs. 
With these conventions, the remaining elements of $ \mathcal{A}^{\diag} $ can be easily expressed in vectorized form. 
Denote $ \bm{\xi}_i =\begin{bmatrix}\xi_{i,1} & \ldots & \xi_{i,d_y} \end{bmatrix}^\top  = \diag\left( [W_i]_{1:d_y,1:d_y} \right)$, $i=1, \ldots, h$ (where the $ [W_i]_{1:d_y,1:d_y} $ is the restriction of $ W_i $ to its upper left $ d_y \times d_y $ block), $ \bm{\xi} = \begin{bmatrix}\bm{\xi}_{1}^\top & \ldots & \bm{\xi}_{h}^\top \end{bmatrix}^\top \in \mathbb{R}^{h d_y}$, and $ \bm{\sigma} =\diag(\Sigma)$.
The gradient flow \eqref{eq:grad-flowA} restricted to $ \mathcal{A}^{\diag} $ can then be written in vector form as 
\beq
\dot{\bm{\xi}}_i = \bar{\bm{\xi}}_i \circ (\bm{\sigma} - \hat{\bm{\xi}}) , \quad i=1, \ldots, h
\label{eq:grad-flow-xi}
\eeq
where
\begin{align*}
\bar{\bm{\xi}}_i & = \begin{bmatrix} \bar{\xi}_{i,1} \\ \vdots \\
\bar{\xi}_{i,d_y} 
\end{bmatrix}  \; \text{  with  } \; \bar{\xi}_{i,j} =\prod_{\substack{k=1 \\ k\neq i }}^h \xi_{k,j} , \quad 
i=1, \ldots, h, \; j=1 , \ldots, d_y 
\\
\hat{\bm{\xi}} & = \begin{bmatrix} \hat{\xi}_{1} \\ \vdots \\
\hat{\xi}_{d_y} 
\end{bmatrix}  \; \text{  with  } \; \hat{\xi}_{j} =\prod_{k=1 }^h \xi_{k,j} , \quad  j=1 , \ldots, d_y 
\end{align*}
and $\circ $ is the Hadamard product. In components, \eqref{eq:grad-flow-xi} reads
\[
\dot{\xi}_{i,j} = f_{i,j}(\bm{\xi}, \sigma_j) = \bar{\xi}_{i,j}( \sigma_j -  \hat{\xi}_{j} ), \quad i=1, \ldots, h, \; j=1 , \ldots, d_y .
\]
The $ (h-1) d_y $ conservation laws for \eqref{eq:grad-flow-xi} can be expressed as
\begin{align*}
   \pi_{i,j}(\bm{\xi}) =  \xi_{i,j}^2 - \xi_{i+1,j}^2 - c_{i,j} =0  \quad i=1, \ldots, h-1, \; j=1 , \ldots, d_y .
\end{align*}
The system \eqref{eq:grad-flow-xi} represents $ d_y $ decoupled input-output dynamics, in which $ \hat{\xi}_j$ expresses the connection from the $ j$-th input to the $j$-th output. 
Also the loss function can be written as a sum of terms corresponding to these input-output connections:
$ \mathcal{L}(\bm{\xi} ) = \frac{1}{2} \| \bm{\sigma} - \hat{\bm{\xi}} \|_2^2 = \sum_{j=1}^{d_y} \mathcal{L}_j( \bm{\xi}_{\bullet,j}) $ with $\mathcal{L}_j ( \bm{\xi}_{\bullet,j}) = \frac{1}{2}(\sigma_j- \hat{\xi}_j)^2$ and $ \bm{\xi}_{\bullet,j} = \begin{bmatrix} \xi_{1,j} & \ldots & \xi_{h,j} \end{bmatrix}^\top $ the vector of weights in the $j$-th connection.

\begin{proposition}
\label{prop:diag-crit}
If $ \bm{\xi}^\ast $ is a critical point of the gradient flow \eqref{eq:grad-flow-xi}, then for each $ j=1, \ldots, d_y $ either $ \hat{\xi}_j^\ast  = \sigma_j $ or $ \xi_{i,j}^\ast =0$  for at least 2 layers $ i=1, \ldots, h$.
\end{proposition}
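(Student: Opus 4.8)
The plan is to exploit the product structure of the vector field \eqref{eq:grad-flow-xi}, namely that $\dot{\xi}_{i,j} = \bar{\xi}_{i,j}(\sigma_j - \hat{\xi}_j)$ is a product of two factors, the second of which does not depend on $i$. Fix an index $j \in \{1, \ldots, d_y\}$. At the critical point $\bm{\xi}^\ast$ every component of the vector field vanishes, so
\[
\bar{\xi}_{i,j}^\ast \, (\sigma_j - \hat{\xi}_j^\ast) = 0, \qquad i = 1, \ldots, h.
\]
The first step is the obvious dichotomy: either $\sigma_j - \hat{\xi}_j^\ast = 0$, in which case $\hat{\xi}_j^\ast = \sigma_j$ and the first alternative of the claim holds for this $j$; or $\sigma_j - \hat{\xi}_j^\ast \neq 0$, in which case the displayed identities force $\bar{\xi}_{i,j}^\ast = 0$ for every $i = 1, \ldots, h$.

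It then remains to show that this last situation implies that at least two of the weights $\xi_{1,j}^\ast, \ldots, \xi_{h,j}^\ast$ vanish, and I would do this by a zero-counting argument by contradiction. Let $\mathcal{Z}_j = \{ i : \xi_{i,j}^\ast = 0 \}$ and suppose ${\rm card}(\mathcal{Z}_j) \le 1$. If $\mathcal{Z}_j = \emptyset$, then for each $i$ the quantity $\bar{\xi}_{i,j}^\ast = \prod_{k \neq i} \xi_{k,j}^\ast$ is a product of nonzero scalars, hence nonzero, contradicting $\bar{\xi}_{i,j}^\ast = 0$. If $\mathcal{Z}_j = \{ i_0 \}$, then taking $i = i_0$ gives $\bar{\xi}_{i_0,j}^\ast = \prod_{k \neq i_0} \xi_{k,j}^\ast$, a product over indices all lying outside $\mathcal{Z}_j$, hence again nonzero — a contradiction. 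Therefore ${\rm card}(\mathcal{Z}_j) \ge 2$, which is precisely the second alternative. Since $j$ was arbitrary, the statement follows.

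There is essentially no serious obstacle here; the argument is elementary once the factorized form of \eqref{eq:grad-flow-xi} is used. The only point worth flagging is the degenerate case $h = 1$, where $\bar{\xi}_{1,j}$ is an empty product equal to $1$, so a critical point necessarily satisfies $\xi_{1,j}^\ast = \hat{\xi}_j^\ast = \sigma_j$ and only the first alternative can occur — consistent with the statement, which is of genuine interest in the deep regime $h \ge 2$.
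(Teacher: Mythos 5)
Your proof is correct and takes essentially the same route as the paper's: both exploit the factorization $\dot{\xi}_{i,j}=\bar{\xi}_{i,j}(\sigma_j-\hat{\xi}_j)$ and the resulting dichotomy that $\hat{\xi}_j^\ast\neq\sigma_j$ forces $\bar{\xi}_{i,j}^\ast=0$ for every $i$, hence at least two vanishing weights. The only difference is presentational — the paper locates the two zeros by a short sequential argument (find one zero $\xi_{k_1,j}$, observe $\hat{\xi}_j=0\neq\sigma_j$ via Assumption~\ref{ass:distinct-sums}, then find a second $\xi_{k_2,j}$), while you use a counting argument by contradiction on ${\rm card}(\mathcal{Z}_j)$ that is slightly cleaner, and your $h=1$ remark is an accurate aside.
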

\begin{proof}
We know already from Proposition~\ref{prop:dotL} that each trajectory of the gradient flow converges to a critical point. 
Assume that $ \dot{\xi}_{ij}=0$. Then either $ \hat{\xi}_j =  \xi_{1,j} \cdot \ldots \cdot  \xi_{h,j} = \sigma_j $, meaning that $ \dot{\xi}_{ij}=0$ for all $i = 1 , \ldots, h$, or $ \bar{\xi}_{i,j} = \xi_{1,j} \cdot \ldots \cdot \xi_{i-1,j} \xi_{i+1,j}\cdot \ldots \cdot  \xi_{h,j} =0$, meaning that at least one component $ \xi_{k_1,j}$, $ k_1\neq i$, must be equal to 0. 
Considering the layer $ k_1$, the same argument implies that $ \xi_{k_2,j}=0$ for some $ k_2\neq k_1$. Since at least one among $ \xi_{k_1,j}$ and $ \xi_{k_2,j}$ appears in the expressions for $ \bar{\xi}_{i,j}$ at the remaining layers, it means that $ \dot{\xi}_{i,j} =0 $ for all $ i=1, \ldots, h$.
As the same argument applies for all $ j=1, \ldots, d_y$, the result follows.    
\end{proof}
The interpretation of this result is that for each component $ j=1, \ldots, d_y$, at stationarity either the input-output expression is exact ($\hat{\xi}_j = \sigma_j $) or there is no input-output relation at all ($ \hat{\xi}_j=0$).
On $ \mathcal{A}_\phi $, the $ 2^{d_y}$ critical points correspond to all possible on/off choices in these $ d_y $ input-output  connections. 
For instance, for the $ j$-th input-output pair, the infinitely-many critical points corresponding to all possible choices of $ \xi_{1,j}, \ldots, \xi_{h,j}$ s.t. $ \hat{\xi}_j=\sigma_j $ (or $ \hat{\xi}_j=0$) are all collapsed into a single critical point in $ \mathcal{A}_\phi$.

Denote $ f(\bm{\xi}, \bm{\sigma}) =\begin{bmatrix} f_{1,1} (\bm{\xi},\sigma_1) & \ldots & f_{1,d_y}  (\bm{\xi},\sigma_{d_y}) &  f_{2,1} (\bm{\xi},\sigma_1) & \ldots & f_{h,d_y} (\bm{\xi},\sigma_{d_y})  \end{bmatrix}^\top  \in \mathbb{R}^{h d_y } $ the vector field \eqref{eq:grad-flow-xi}. 
$ f(\bm{\xi}, \bm{\sigma}) $ can be considered a function of the data, i.e., of the singular values $ \sigma_j$, $ j=1, \ldots, d_y$. As we vary $ \bm{\sigma}$, $ f(\bm{\xi}, \bm{\sigma}) $ spans a $ d_y$-dimensional vector space at each $ \bm{\xi}$, $ {\rm span} ( f(\bm{\xi},\bm{\sigma})) $, which is clearly involutive (the Lie bracket is $ [  f(\bm{\xi}, \bm{\sigma}) , \,  f(\bm{\xi}, \bm{\sigma}') ]=0$ for all $ \bm{\sigma}' \neq \bm{\sigma} $). 
Differentiating the conservation laws we get $ (h-1) d_y $ exact one-forms $ d \pi_{i,j}(\bm{\xi}) = 2 \xi_{i,j} - 2 \xi_{i+1, j}$, which are independent among each other and orthogonal to $ f(\bm{\xi}, \bm{\sigma}) $: 
\[
d \pi_{ij}(\bm{\xi}) f(\bm{\xi}, \bm{\sigma}) = 2( \xi_{i,j} f_{i,j}(\bm{\xi}, \sigma_j) - \xi_{i+1,j} f_{i+1,j}(\bm{\xi},\sigma_j)) = 2( \xi_{i,j} \bar{\xi}_{i,j}  - \xi_{i+1,j} \bar{\xi}_{i+1,j} )(\sigma_j - \hat{\xi}_j)=0.
\]
Hence if $ \Delta_\pi(\bm{\xi}) = {\rm span} (d \pi_{i,j} ) $, $ \dim(\Delta_\pi(\bm{\xi})) = (h-1)d_y $, $ \Delta_\pi(\bm{\xi}) \perp {\rm span} ( f(\bm{\xi},\bm{\sigma}))  $ and $ \Delta_\pi(\bm{\xi}) \oplus {\rm span} ( f(\bm{\xi},\bm{\sigma}))  = \mathbb{R}^{h d_y}$.

The derivative of the loss function is $
\dot{\mathcal{L}}(\bm{\xi}) = \sum_{j=1}^{d_y} \dot{\mathcal{L}}_j( \bm{\xi}_{\bullet,j}) $, with
\beq
\label{eq:dotLj-xi}
\dot{\mathcal{L}}_j ( \bm{\xi}_{\bullet,j}) = - 2 \left( \sum_{i=1}^{h} \bar{\xi}_{i,j}^2 \right) \mathcal{L}_j ( \bm{\xi}_{\bullet,j}) .
\eeq
When at least one of the terms of the sum in \eqref{eq:dotLj-xi} is nonzero along a trajectory, then $\mathcal{\dot{\mathcal{L}}}(\bm{\xi}) \leq \dot{\mathcal{L}}_j( \bm{\xi}_{\bullet,j}) < 0$ and the convergence is with exponential rate.
From Proposition~\ref{prop:diag-crit}, a saddle point $ \bm{\xi}^\ast $ is characterized by $ \xi_{i,j}^\ast =0 $ for at least two layers $ i=1, \ldots, h$. In particular, if $ \bm{\xi}^\ast \neq 0$, then it follows from the argument used in the proof of Proposition~\ref{prop:diag-crit} that there exists trajectories of \eqref{eq:grad-flow-xi} characterized by 2 or more $ \xi_{i,j}=0$ for all times $ t$, and the sufficient conditions of Propositions~\ref{prop:lojasiewicz-bound} and \ref{prop:conv-rate-sp-case1} for exponential convergence fail to be satisfied. Nevertheless, $ \bm{\xi} $ is still exponentially converging to the saddle point $ \bm{\xi}^\ast$. If for a trajectory 2 or more $ \xi_{i,j}=0$ for all times $ t$, also in Proposition~\ref{prop:conv-rate-general} one obtains $\alpha_{(j)}=0$, giving a trivial lower bound $\alpha_h = 0$ on the convergence rate.

Let $\bm{\xi}_\bullet = \begin{bmatrix} \bm{\xi}_{\bullet,1}^\top & \ldots & \bm{\xi}_{\bullet,d_y}^\top \end{bmatrix}^\top $ be a reordering of the elements of $\bm{\xi}$. Consider now the Hessian of $\mathcal{L}$ restricted to $ \mathcal{A}^{\diag} $, written $ \nabla_{\bm{\xi}_\bullet}^2 \mathcal{L} (\bm{\xi}_\bullet) $. It is clear that $\frac{\partial^2 \mathcal{L}}{\partial \xi_{\bullet,j} \partial \xi_{\bullet,\ell}} = 0$ for different input-output connections $j\neq \ell$, thus $ \nabla_{\bm{\xi}_\bullet}^2 \mathcal{L} $ is a block-diagonal matrix consisting of $d_y$ blocks
\beq
    \label{eq:Lj-hessian}
    \nabla_{\bm{\xi}_{\bullet,j}}^2 \mathcal{L}_j = \bar{\bm{\xi}}_{\bullet,j} \bar{\bm{\xi}}_{\bullet,j}^\top + (\hat{\xi}_j - \sigma_j) \nabla_{\bm{\xi}_{\bullet,j}}^2 \hat{\xi}_j, \quad j=1,\ldots,d_y.
\eeq
The Hessian is positive semidefinite if and only if all blocks $ \nabla_{\bm{\xi}_{\bullet,j}}^2 \mathcal{L}_j $ are positive semidefinite.
From Proposition~\ref{prop:diag-crit}, we know that a saddle point $ \bm{\xi}_\bullet^\ast $ is characterized by, for each $j=1,\ldots,d_y$, either $\sigma_j = \hat{\xi}_j^\ast$, or $ \xi_{i,j}^\ast =0 $ for at least two layers $ i=1, \ldots, h$. If the former is true, the corresponding block $\nabla_{\bm{\xi}_{\bullet,j}}^2 \mathcal{L}_j = \bar{\bm{\xi}}_{\bullet,j}^\ast \bar{\bm{\xi}}_{\bullet,j}^{\ast\top} $ is always positive semidefinite, so we will consider the components for which the latter holds (there exists at least one such block, otherwise we are at a global optimum). For such a component $j$, we instead have $\bar{\bm{\xi}}_{\bullet,j}^\ast \bar{\bm{\xi}}_{\bullet,j}^{\ast\top} = 0$.
If more than two layers $ i=1, \ldots, h$ fulfill $ \xi_{i,j}^\ast =0 $, then $ [ \nabla_{\bm{\xi}_{\bullet,j}}^2 \hat{\xi}_j^\ast ]_{k,\ell} = \prod_{i\neq k,\ell} \xi_{i,j}^\ast = 0$, for all $k,\ell=1,\ldots,h$, and $ \nabla_{\bm{\xi}_{\bullet,j}}^2 \mathcal{L}_j = 0 $, and if all other blocks $ \nabla_{\bm{\xi}_{\bullet,k}}^2 \mathcal{L}_k$ are positive semidefinite, the saddle point is non-strict.
If exactly two weights $ \xi_{k,j}^\ast = \xi_{\ell,j}^\ast =0 $, then $ \nabla_{\bm{\xi}_{\bullet,j}}^2 \mathcal{L}_j( \bm{\xi}_\bullet^\ast ) = (\hat{\xi}_j^\ast - \sigma_j)(\bm{e}_k \bm{e}_\ell^\top + \bm{e}_\ell \bm{e}_k^\top ) \prod_{i\neq k,\ell} \xi_{i,j}^\ast $ which has two non-zero eigenvalues $\pm (\hat{\xi}_j^\ast - \sigma_j) \prod_{i\neq k,\ell} \xi_{i,j}^\ast$ with corresponding eigenvectors $\bm{e}_k+\bm{e}_\ell$ and $\bm{e}_k-\bm{e}_\ell$. Hence, all such saddle points are strict.
As mentioned in Remark~\ref{rem:diag-case}, when restricting the Hessian to $ \mathcal{A}^\diag $ we can have non-strict saddle points associated to $A^\ast$ which are not minimizers of the rank constrained regression problem, i.e., non-strict saddle points with $\mathcal{S}\neq \{1,...,r\}$ in item 2 of Proposition~\ref{prop:class-crit-points}. Notice that the direction $V$, used in Proposition~\ref{prop:var-neg-Hessian} to show strictness of a saddle point for which $\mathcal{S}\neq \{1,...,r\}$, does not belong to $ \mathcal{A}^\diag $.
The considerations of Theorem~\ref{thm:arcs} about the monotonic character of the arcs $ \mathcal{A}(\alpha)$ apply unchanged also here.

\subsubsection{Hessian of a decoupled dynamics}

In the section above, for block-shift diagonal networks, i.e., $ A^\diag, V^\diag \in \mathcal{ A }^\diag $, we obtained conditions on the $\xi_{i,j}$:s under which the Hessian is positive semidefinite or has a negative eigenvalue. From Proposition~\ref{prop:decoupl-diag}, the diagonal case is a special case of decoupling, and the same conditions hold also for a decoupled network. Let $ A^\prime = P A^\diag P^\top $ with $ P \in \mathcal{P_O} $, and consider $ \rho_k(A^\prime,V^\prime) $. By Lemma~\ref{lem:equivariant_rho},
\[
\rho_k(A^\diag, V^\diag ) = \rho_k(P^\top A^\prime P, V^\diag) = \rho_k(A^\prime , P V^\diag P^\top),
\]
so for the variation $ V^\prime $ to be equivalent to a diagonal variation it must be $ V^\prime = P V^\diag P^\top $. It is immediate that $ \rho_k(A^\diag, V^\diag ) = \rho_k(A^\prime,V^\prime) $, hence $ \mathfrak{h}(A^\diag, V^\diag ) = \mathfrak{h}(A^\prime,V^\prime) $, and the conditions on the $\xi_{ij}$s in the previous section translate into conditions on the singular values $[\Lambda_{i}]_{jj}$ of the decoupled $A^\prime$.

\subsection{Balanced dynamics}
\label{sec:balance}

Consider a network $ A =\block_1(W_1, \ldots, W_h)\in \mathcal{ A } $ which is 0-balanced.
As the next proposition shows, a consequence of 0-balance is that the singular values of the blocks $ W_i = U_i \Lambda_i V_i^\top $ are identical and the associated singular vectors aligned. The slight abuse of notation $ \Lambda_i = \Lambda_j $ used in this proposition should be understood as the entries on the main diagonal being equal $ [\Lambda_i]_{kk} = [\Lambda_j]_{kk} $, for $ k = 1,..., d_y $, and $ [\Lambda_i]_{kk} = [\Lambda_j]_{kk} = 0 $ for $ k > d_y $ whenever the elements exist.
As before, let $ A = U \Lambda V^\top$ be the block-shift SVD of $A$.

\begin{proposition}
\label{prop:0-balanced}
If $ A \in \mathcal{ A } $ is 0-balanced, then $A$ is aligned but not necessarily decoupled.
Also, the singular values in all layers are identical: $ \Lambda_i = \Lambda_j$ for all $ i, j= 1, \ldots, h$, and for all $t$.
 All 0-balanced trajectories of the gradient flow converge to decoupled critical points as $ t \rightarrow \infty $.
In particular, for a 0-balanced critical point $A^\ast $ it is  $A^\ast = P \Lambda P^\top  $ for some $ P\in \mathcal{P_O}$, and $ (A^\ast)^h =\Lambda^h$.
\end{proposition}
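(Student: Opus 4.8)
The plan is to read everything off the 0-balance identities $W_iW_i^\top = W_{i+1}^\top W_{i+1}$, $i=1,\dots,h-1$, via block SVDs $W_i = U_i\Lambda_i V_i^\top$. Since $U_i(\Lambda_i\Lambda_i^\top)U_i^\top$ and $V_{i+1}(\Lambda_{i+1}^\top\Lambda_{i+1})V_{i+1}^\top$ are two spectral decompositions of the same symmetric positive semidefinite matrix, the eigenvalue multisets coincide, so $\Lambda_i=\Lambda_{i+1}$ in the sense of the statement, and one may select the eigenvector bases consistently so that $U_i = V_{i+1}$ (using a common orthonormal basis of each eigenspace when eigenvalues are repeated). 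Iterating in $i$ gives $V_{i+1}^\top U_i = I$ for all $i$, i.e.\ $A$ is aligned, and $\Lambda_1=\dots=\Lambda_h=:\Lambda$. Since $W_h\in\mathbb{R}^{d_y\times d_{h-1}}$, every block has rank at most $d_y$, hence at most $d_y$ singular values are nonzero and $[\Lambda_i]_{kk}=0$ for $k>d_y$. That $A$ need not be decoupled is because the 0-balance relations never involve $V_1$ (the right singular vectors of $W_1$) or $U_h$ (the left singular vectors of $W_h$): conjugating a block-shift diagonal network by a nontrivial orthogonal $P_i$ at the first or the last hidden layer alone yields an aligned 0-balanced $A$ with $V_1\neq I$ or $U_h\neq I$.

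The clause ``for all $t$'' is then immediate: $Q$ in \eqref{eq:Q} is conserved along \eqref{eq:grad-flowA}, so $JQ(0)=0$ forces $JQ(t)=0$ for all $t$ (equivalently, 0-balance is preserved, Proposition~\ref{prop:special-impl}), and the argument above applies at every $t$, with the common block $\Lambda(t)$ itself evolving.

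For the convergence claim, Proposition~\ref{prop:dotL} gives convergence of every trajectory to a critical point $A^\ast$, and continuity of $Q$ along the bounded trajectory gives $JQ^\ast=0$, so $A^\ast$ is 0-balanced, hence aligned with common block $\Lambda$. Alignment and Proposition~\ref{prop:A^k-SVD} give $(A^\ast)^h = U\Lambda^h V^\top = \block_h(U_h\Lambda_{1:h}V_1^\top)$, with $\Lambda_{1:h}$ the rectangular diagonal matrix of entries $\lambda_k^h$; criticality and Proposition~\ref{prop:critical-point-lit} give $(A^\ast)^h = \block_h(S\Sigma)$ with $S\Sigma$ diagonal of rank $r={\rm card}(\mathcal{S})$. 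Equating and comparing the two rank-one expansions $\sum_{k=1}^{r} \lambda_k^h u_k v_k^\top = \sum_{j\in\mathcal{S}}\sigma_j e_j e_j^\top$, where under Assumption~\ref{ass:distinct-sums} both sets of coefficients are distinct, forces $\lambda_k^h$ to equal the $k$-th largest $\sigma_j$ over $j\in\mathcal{S}$ and forces $u_k=\pm e_{j_k}$, $v_k=\pm e_{j_k}$; choosing the remaining free singular vectors of $W_h$ and $W_1$ as the complementary coordinate vectors makes $U_h$ and $V_1$ signed permutation matrices. Applying the orthogonal change of basis $P=\block_0(I,U_1,\dots,U_{h-1},I)\in\mathcal{P_O}$, exactly as in the computation $P^\top A P = \block_1(\Lambda_1 V_1^\top,\Lambda_2,\dots,\Lambda_{h-1},U_h\Lambda_h)$ used in the proof of Proposition~\ref{prop:special-impl}, and absorbing the boundary permutations into the ordering of $\Lambda$, $A^\ast$ is carried to a block-shift diagonal matrix with the common block $\Lambda$; hence $A^\ast = P\Lambda P^\top$ for some $P\in\mathcal{P_O}$, which is decoupled in the sense of Proposition~\ref{prop:decoupl-diag}, and $(A^\ast)^h=\Lambda^h$.

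I expect the last paragraph to be the crux. The 0-balance condition leaves the boundary singular-vector matrices $U_h$ and $V_1$ entirely free, so they can only be pinned down by invoking criticality, i.e.\ the rigid structure $(A^\ast)^h=\block_h(S\Sigma)$, together with the SVD uniqueness furnished by distinctness of the learnt singular values; one must also handle the genuine non-uniqueness of the SVD on the zero-singular-value subspace (selecting the coordinate vectors that make the decoupling explicit) and, when $\mathcal{S}\neq\{1,\dots,r\}$, carry the resulting permutation consistently through all the alignment identities $U_i=V_{i+1}$ so that it is absorbed into the ordering of the common diagonal block $\Lambda$.
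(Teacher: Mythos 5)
Your argument follows the same route as the paper's proof: the 0-balance identity $W_iW_i^\top = W_{i+1}^\top W_{i+1}$ gives, via uniqueness of the spectral decomposition, equal singular values $\Lambda_i=\Lambda_{i+1}$ and the possibility of choosing $U_i=V_{i+1}$, hence alignment; conservation of $C=0$ along \eqref{eq:grad-flowA} extends this to all $t$; and Lojasiewicz convergence plus the diagonal form $(A^\ast)^h = \block_h(S\Sigma)$ pins down the critical-point structure and yields decoupling. Your handling of the boundary SVD factors is in fact more scrupulous than the paper's: you correctly observe that equating $U_h\Lambda_{1:h}V_1^\top = S\Sigma$ under Assumption~\ref{ass:distinct-sums} forces $U_h$ and $V_1$ to be signed permutations rather than literally $I$ (the paper writes $U_h^\ast = V_1^\ast = I$, which implicitly adopts an unsorted-SVD convention whenever $\mathcal{S}\neq\{1,\ldots,r\}$), and that the resulting permutation has to be carried consistently into the ordering of $\Lambda$.

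One supporting detail is wrong. Your proposed counterexample for non-decoupling fails: conjugating a block-shift diagonal network by $P=\block_0(I,P_1,I,\ldots,I)\in\mathcal{P_O}$ with $P_1$ nontrivial gives $W_1'=P_1\Lambda_1$, whose SVD is $U_1'=P_1$, $\Lambda_1'=\Lambda_1$, $V_1'=I$, and $W_2'=\Lambda_2P_1^\top$ has $U_2'=I$, $V_2'=P_1$. Alignment still holds, and $V_1'=I$, $U_h'=I$, so the conjugated network is still \emph{decoupled}: $\mathcal{P_O}$ never touches the input or output layers and therefore cannot perturb $V_1$ or $U_h$, which is exactly why these two factors escape the balance identities in the first place. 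To witness an aligned, 0-balanced but non-decoupled $A$ one must modify $V_1$ or $U_h$ directly, e.g.\ replace $W_1$ by $W_1R$ for an orthogonal $R\in\mathbb{R}^{d_x\times d_x}$, $R\neq I$: this leaves $W_1W_1^\top$ unchanged, hence preserves 0-balance and alignment, but yields $V_1=R^\top\neq I$. The paper itself gives no explicit construction, remarking only that $U_h$ and $V_1$ are ``left out of these identities''; your added construction would need to be replaced along these lines.
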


\begin{proof}
Consider the SVDs $ W_i = U_i S_i V_i^\top $. The condition $ W_i W_i^\top = W_{i+1}^\top W_{i+1} $ implies $U_i \Lambda_i \Lambda_i^\top U_i^\top = V_{i+1} \Lambda_{i+1}^\top \Lambda_{i+1} V_{i+1}^\top $.
Notice that $ \rank ( W_i ) = r$, $ \forall i $, otherwise for some $ i $ it is $ W_i W_i^\top \neq W_{i+1}^\top W_{i+1} $.
From this, it is clear that the nonzero singular values of $ \Lambda_i $ and $ \Lambda_{i+1} $ must be identical, and we can pick an SVD such that $ U_i = V_{i+1} $ for all $ i = 1,...,h-1 $, including when some singular values in $ \Lambda_i $ are equal.
Moreover, $ U_i $ is orthogonal and thus $ V_{i+1}^\top U_i = U_i^\top U_i = I $. Assembling the block-shift form, we have $ V^\top U = J$, i.e., alignment. Since $ U_h $ and $ V_1 $ are left out of these identities, $A$ need not be decoupled.
From Proposition~\ref{prop:special-impl}, any 0-balanced $ A $ is aligned for all $ t $, because $ C = 0$ is a constant of motion. 
Also the argument $ \Lambda_i = \Lambda_j $ $ \forall \, i,j = 1, \ldots, h$, holds for all $t$.
As $ t \rightarrow \infty $ the trajectory under the gradient flow approaches a critical point $ A^\ast $ such that $ ( A^\ast )^h = \block_h( U_h^\ast \Lambda_{1:h}^\ast V_1^{\ast \top} )=  \block_h( S \Sigma ) $ with $ S = \diag(\mathbf{s}) $, $ \mathbf{s} = \begin{bmatrix}
	s_1 & \cdots & s_{d_y}
\end{bmatrix}^\top $, $ s_i \in \{0,1\} $, implying that $ U_h^\ast = I $, $ V_1^\ast = I $ since $ S \Sigma $ is diagonal.
Since $ \Lambda_i = \Lambda_j$, it is also $ \Lambda_{1:h}=\Lambda_i^h$, hence $(A^\ast)^h = \block_h(\Lambda_i^h) = \Lambda^h$.
The critical point $ A^\ast $ is 0-balanced and decoupled, and must therefore have a block-shift SVD $A^\ast = P \Lambda P^\top  $ in which $ P=\block_0 (I, P_1, \ldots, P_{h-1}, I) \in \mathcal{P_O}$, see Proposition~\ref{prop:decoupl-diag}.
\end{proof}

Notice that the condition that singular vectors are aligned, $ V_{i+1}^\top U_i = I $, for $ i=1,...,h-1 $, is not sufficient for 0-balance, not even in the static case. In fact, if any $ W_i $ and $ W_{i+1} $ have distinct singular values then $ U_i \Lambda_i \Lambda_i^\top U_i^\top - V_{i+1} \Lambda_{i+1}^\top \Lambda_{i+1} V_{i+1}^\top \neq 0 $, hence $ C \neq 0 $.

A 0-balanced network is tailored to contain the least amount of redundancy possible for expressing the input-output relationship of the data. 

\begin{proposition}
\label{prop:0-bal-crit-point}
Consider a 0-balanced critical point of $ \mathcal{L}$, $ A^\ast \in \mathcal{A}$.
In the representation \eqref{eq:A-param1}-\eqref{eq:Z-param1}, if $ A^\ast = P( A_1 + Z ) P^{-1} $ then it is $ Z = 0 $.
\end{proposition}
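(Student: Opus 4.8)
The plan is to combine the structural description of $0$-balanced critical points from Proposition~\ref{prop:0-balanced} with a layer-by-layer rank count on the block parametrization \eqref{eq:A-param1}--\eqref{eq:Z-param1}. By Proposition~\ref{prop:0-balanced}, a $0$-balanced critical point $A^\ast$ is decoupled, so $A^\ast = P'\Lambda P'^\top$ with $P'\in\mathcal{P_O}$ and $\Lambda=\block_1(\Lambda_1,\dots,\Lambda_h)$ block-shift diagonal whose layers all share the same nonzero singular values; writing $r={\rm card}(\mathcal S)$, this gives $\rank(\Lambda_i)=r$ for every $i$. Since $P'$ is invertible, the weight matrices $W_i^\ast=P'_i\Lambda_i P'^{\,\top}_{i-1}$ all have rank $r$, and conjugating $A_1+Z$ by the invertible $P\in\mathcal P$ in $A^\ast=P(A_1+Z)P^{-1}$ preserves ranks, so every layer of $A_1+Z$ must have rank exactly $r$ as well.

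Next I would read the ranks off the layers of $A_1+Z$ from \eqref{eq:A-param1}--\eqref{eq:Z-param1}. For $i=2,\dots,h-1$ the $i$-th layer is $\begin{bmatrix}I_r&0\\0&Z_i\end{bmatrix}$, of rank $r+\rank(Z_i)$, forcing $Z_i=0$. For the output layer ($i=h$) the block is $\begin{bmatrix}U_\mathcal S & U_\mathcal Q Z_h\end{bmatrix}$, and since $\mathcal R(U_\mathcal S)$ and $\mathcal R(U_\mathcal Q)$ are complementary subspaces of $\mathbb R^{d_y}$ with $U_\mathcal Q$ injective, its rank is $r+\rank(Z_h)$, forcing $Z_h=0$. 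For the input layer ($i=1$) the block is $\begin{bmatrix}U_\mathcal S^\top\Sigma\\ Z_1\end{bmatrix}$; because $\Sigma$ is diagonal with nonzero distinct singular values, $U_\mathcal S^\top\Sigma$ has full row rank $r$ and row space $\mathrm{span}\{\mathbf e_j:j\in\mathcal S\}$, so rank $r$ of the whole block forces every row of $Z_1$ into that span, i.e.\ $Z_1=M_1 U_\mathcal S^\top\Sigma$ for some $M_1\in\mathbb R^{(d_1-r)\times r}$.

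The delicate point, and the one I expect to be the main obstacle, is passing from $Z_1=M_1 U_\mathcal S^\top\Sigma$ to $Z_1=0$: the bare rank count is not enough here, and the full $0$-balance hypothesis (not just layerwise rank) must be used. With $Z_2=\dots=Z_h=0$ already established, one has $W_1^\ast=P_1\begin{bmatrix}I_r\\ M_1\end{bmatrix}U_\mathcal S^\top\Sigma$ and $W_2^\ast=P_2\begin{bmatrix}I_r&0\\0&0\end{bmatrix}P_1^{-1}$, and the conservation law $W_1^\ast W_1^{\ast\top}=W_2^{\ast\top}W_2^\ast$ (together with the remaining $C_i=0$) couples $M_1$ to the $P_i$. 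I would finish by exploiting the decoupled identity $A^\ast=P'\Lambda P'^\top$ from Proposition~\ref{prop:0-balanced}: each $\Lambda_i$ is block-shift diagonal of rank $r$ with row and column spaces both equal to $\mathrm{span}\{\mathbf e_j:j\in\mathcal S\}$, and an explicit (inductive) sequence of hidden-layer changes of basis $Q_i\in\mathbb R^{d_i\times d_i}$ performing elementary row/column operations realizes $\Lambda=QA_1Q^{-1}$ with $Q\in\mathcal P$; then $A^\ast=P'\Lambda P'^\top=(P'Q)A_1(P'Q)^{-1}$ with $P'Q\in\mathcal P$, so the representation of $A^\ast$ in \eqref{eq:A-param1}--\eqref{eq:Z-param1} is attained with $Z=0$. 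The step that propagates $0$-balance through all the layers down to the input block $Z_1$ is the one requiring the most care.
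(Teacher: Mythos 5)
Your layer-by-layer rank count is more careful than the paper's own proof, and the gap you flag at the input block is real. The paper argues that ``any $Z_i\neq 0$ would increase the rank of all components $W_i^\ast$'', which is correct for $i=2,\dots,h-1$ (direct sum of blocks) and for $i=h$ (complementary column ranges of $U_\mathcal S$ and $U_\mathcal Q$), but not for $i=1$: as you observe, $\begin{bmatrix}U_\mathcal S^\top\Sigma\\Z_1\end{bmatrix}$ retains rank $r$ whenever $Z_1=M_1U_\mathcal S^\top\Sigma$, so the rank count alone cannot eliminate $Z_1$. In fact, that obstruction cannot be closed at all under a literal ``every representation'' reading of the statement: since $\begin{bmatrix}U_\mathcal S^\top\Sigma\\M_1U_\mathcal S^\top\Sigma\end{bmatrix}=\begin{bmatrix}I_r&0\\M_1&I\end{bmatrix}\begin{bmatrix}U_\mathcal S^\top\Sigma\\0\end{bmatrix}$, and the lower-unitriangular factor acts trivially on $\begin{bmatrix}I_r&0\\0&0\end{bmatrix}$ from the right, any such $M_1$ can be produced by refactoring $P_1$, so the parametrization of a $0$-balanced critical point is genuinely non-unique in $Z_1$. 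The proposition must therefore be read as asserting that a representation with $Z=0$ exists (i.e.\ $0$-balanced critical points are maximally tightened), which is what your constructive finish proves.

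Two further remarks. First, your concluding route through an explicit change of basis $Q$ with $\Lambda=QA_1Q^{-1}$ is valid (the row and column supports of all $\Lambda_i$ sitting over $\mathcal S$ makes the inductive construction go through), but it is heavier than needed: once $Z_2=\dots=Z_h=0$ is established by the rank count, the absorption $\begin{bmatrix}I_r&0\\M_1&I\end{bmatrix}$ into $P_1$ displayed above already does the job directly, without invoking $\Lambda$ at all. Second, note that your rank constraint actually comes from $0$-balance more simply than via the decoupled SVD: $C_i=W_iW_i^\top-W_{i+1}^\top W_{i+1}=0$ forces $\rank(W_i)=\rank(W_{i+1})$ directly, and the common rank equals $r$ because alignment (from Proposition~\ref{prop:0-balanced}) makes $\rank(W_{1:h})$ equal the common block rank; this is the argument the paper uses, and it spares you the detour through $P'\Lambda P'^\top$ in the first half of your proof.
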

\begin{proof}
If $A^\ast =\block_1(W_i^\ast)$ is a 0-balanced critical point and $ (A^\ast)^h $ is of rank $ r $, then from $ W_i W_i^\top - W_{i+1}^\top W_{i+1} =0$, all blocks of $ A^\ast $ have the same rank: $ \rank(W_i) = \rank(W_j)  \geq r $. From Proposition~\ref{prop:0-balanced}, it is also $ (A^\ast)^h = \Lambda^h $, so in fact $ \rank(W_i^\ast) = r $. Since $ Z $ is complementary to $ A_1 $, any $ Z_i \neq 0 $ would increase the rank of all components $ W_i^\ast$, and thereby of $ \Lambda^h $ which would then have rank greater than $ r $, leading to a contradiction.
\end{proof}
The meaning of Proposition~\ref{prop:0-bal-crit-point} is that 0-balanced critical points are as tightened as possible (i.e., $ Z=0$ in \eqref{eq:Z-param1}).

It is straightforward to generate random 0-balanced initializations, see for instance Procedure 1 in \cite{arora2018convergence}. Any orthogonal change of basis can be applied to the result of this algorithm to obtain an equivalent 0-balanced system. A more realistic initialization with similar properties to 0-balance is also considered in \cite{arora2018convergence}, to which we turn in the following.

\subsubsection{Approximate balance}
\label{sec:delta-bal}

One can obtain a behavior similar to that which is exhibited under 0-balance by using a small random initialization. This way of initializing weights is more general and it is also commonly used for nonlinear neural networks \cite{glorot2010understanding}.

A network $ A = \block_1(W_1, \ldots, W_h) \in \mathcal{ A } $ is said to be approximately balanced ($ \delta $-balanced, cf. \cite{arora2018convergence}) if for all $ i $ it holds that $ \| W_i W_i^\top - W_{i+1}^\top W_{i+1} \|_F \leq \delta $ for some small $ \delta>0$. A consequence of this is that the nonzero singular values of each block are approximately identical as the system evolves over time.

\begin{proposition}\label{prop:delta-balance}
	Let $ A $ be $ \delta $-balanced and denote the SVD of each block $ W_i = U_i \Lambda_i V_i^\top $ with $ \varsigma_{i,k} = [\Lambda_i]_{kk} $. Then for all $ i = 1,...,h-1 $, 
	\begin{equation}
		\delta^2 \geq \left\| \Lambda_i \Lambda_i^\top - \Lambda_{i+1}^\top \Lambda_{i+1} \right \|_F^2 = \sum_{k=1}^{d_i} (\varsigma_{i,k}^2 - \varsigma_{i+1,k}^2)^2. 
        \label{eq:dbal-singval}
	\end{equation}
\end{proposition}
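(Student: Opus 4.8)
The plan is to recognize the matrix inside the norm on the left of \eqref{eq:dbal-singval}, before the norm is taken, as a difference of two symmetric positive semidefinite $d_i\times d_i$ matrices whose eigenvalue multisets are precisely $\{\varsigma_{i,k}^2\}$ and $\{\varsigma_{i+1,k}^2\}$, and then to bound the Frobenius norm of that difference from below by the sum of squared differences of the commonly ordered eigenvalues, which is exactly the Hoffman--Wielandt inequality.

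First I would write the SVDs $W_i = U_i\Lambda_i V_i^\top$ and $W_{i+1}=U_{i+1}\Lambda_{i+1}V_{i+1}^\top$, so that $W_iW_i^\top = U_i\Lambda_i\Lambda_i^\top U_i^\top$ and $W_{i+1}^\top W_{i+1} = V_{i+1}\Lambda_{i+1}^\top\Lambda_{i+1}V_{i+1}^\top$. Both are symmetric, positive semidefinite, and of size $d_i\times d_i$ (recall $W_i\in\mathbb{R}^{d_i\times d_{i-1}}$ and $W_{i+1}\in\mathbb{R}^{d_{i+1}\times d_i}$). Their eigenvalues, listed in decreasing order and padded with zeros up to length $d_i$, are $\varsigma_{i,1}^2\geq\cdots\geq\varsigma_{i,d_i}^2$ and $\varsigma_{i+1,1}^2\geq\cdots\geq\varsigma_{i+1,d_i}^2$ respectively, since the singular values are already indexed in reverse order, so $\Lambda_i\Lambda_i^\top$ is the $d_i\times d_i$ diagonal matrix with those entries (any entries beyond $\min\{d_i,d_{i-1}\}$ being $0$), and similarly for $\Lambda_{i+1}^\top\Lambda_{i+1}$.

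Next I would apply the Hoffman--Wielandt inequality: for symmetric matrices $M$ and $N$ with eigenvalues $\mu_1\geq\cdots\geq\mu_n$ and $\nu_1\geq\cdots\geq\nu_n$, one has $\sum_k(\mu_k-\nu_k)^2\leq\lVert M-N\rVert_F^2$. (If one prefers not to cite it, the bound follows by expanding $\lVert M-N\rVert_F^2 = \lVert M\rVert_F^2 + \lVert N\rVert_F^2 - 2\tr(MN)$, using $\lVert M\rVert_F^2=\sum_k\mu_k^2$, $\lVert N\rVert_F^2=\sum_k\nu_k^2$, and the von Neumann trace inequality $\tr(MN)\leq\sum_k\mu_k\nu_k$, valid for symmetric positive semidefinite $M,N$.) Taking $M=W_iW_i^\top$ and $N=W_{i+1}^\top W_{i+1}$ yields $\lVert W_iW_i^\top-W_{i+1}^\top W_{i+1}\rVert_F^2\geq\sum_{k=1}^{d_i}(\varsigma_{i,k}^2-\varsigma_{i+1,k}^2)^2$; and since $\Lambda_i\Lambda_i^\top-\Lambda_{i+1}^\top\Lambda_{i+1}$ is the diagonal matrix with entries $\varsigma_{i,k}^2-\varsigma_{i+1,k}^2$, its squared Frobenius norm is that same sum, giving the equality in \eqref{eq:dbal-singval}. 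Finally, $\delta$-balance asserts $\lVert W_iW_i^\top-W_{i+1}^\top W_{i+1}\rVert_F\leq\delta$, which completes the chain.

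The step needing the most care is the eigenvalue bookkeeping: verifying that $W_iW_i^\top$ and $W_{i+1}^\top W_{i+1}$ carry exactly the squared entries of $\Lambda_i$ and $\Lambda_{i+1}$ as eigenvalue multisets, with the correct count of zero eigenvalues when consecutive layer widths differ, and that both lists are sorted consistently so that Hoffman--Wielandt applies term by term under the indexing $k=1,\ldots,d_i$ used in the statement. The inequality itself is standard and the remaining manipulations are one-liners, so no serious obstacle is expected beyond this accounting.
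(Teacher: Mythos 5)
Your proof is correct and follows the same route as the paper: both apply the Hoffman--Wielandt (Wielandt--Hoffman) theorem to the symmetric positive semidefinite matrices $W_iW_i^\top$ and $W_{i+1}^\top W_{i+1}$, whose eigenvalues are the squared singular values of $W_i$ and $W_{i+1}$, and then invoke the $\delta$-balance bound on $\|C_i\|_F$. Your optional detour through the von Neumann trace inequality and the careful bookkeeping of zero eigenvalue padding are welcome elaborations but do not change the argument.
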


\begin{proof}
	The statement follows from the Wielandt-Hoffman theorem (see Theorem 8.1.4 of \cite{golub2013matrix}):
	\begin{align*}
		\delta^2 & \geq  \| C_{i} \|_F^2 = \| W_i W_i^\top - W_{i+1}^\top W_{i+1} \|_F^2 \\&\geq \sum_{k=1}^{d_i} (\lambda_k( W_i W_i^\top ) - \lambda_k( W_{i+1}^\top W_{i+1} ))^2  = \| \Lambda_i \Lambda_i^\top - \Lambda_{i+1}^\top \Lambda_{i+1} \|_F^2 = \sum_{k=1}^{d_i} (\varsigma_{i,k}^2 - \varsigma_{i+1,k}^2)^2.
	\end{align*}
\end{proof}
In practice, Proposition~\ref{prop:delta-balance} implies that for generic initializations $ A(0) $ close enough to the origin ($ \delta $-balanced), the singular values of all blocks ``synchronize'' during learning, i.e., they either simultaneously increase, or stay close to zero. 
From \eqref{eq:dbal-singval} it also follows that $ \lvert \varsigma_{i,k}^2 - \varsigma_{i+i,k}^2 \rvert = ( \varsigma_{i,k} + \varsigma_{i+i,k} ) \lvert \varsigma_{i,k} - \varsigma_{i+i,k} \rvert \leq \delta $ for all $ i = 1,...,h-1$ and $ k = 1,..., d_i $, i.e., the larger $ \varsigma_{i,k} $ and $ \varsigma_{i+i,k} $ are with respect to $ \delta$, the closer they must be.
In particular, if $ \delta\ll \sigma_{d_y}^{1/h} $, then when the $ \sigma_k$ singular value of $ \Sigma $ is learnt (i.e., $ \varsigma_{i,k}, \varsigma_{i+1,k} \approx \sigma_k^{1/h} $), it must be $ \lvert \varsigma_{i,k} - \varsigma_{i+1,k} \rvert < \frac{\delta}{2 \sigma_k^{1/h} } \ll \delta $.

It is also possible to gain some insight about the singular vectors from the conservation laws. 
If $ \delta $ is small enough, such that $ \varsigma_{i+1,k}^2 \approx \varsigma_{i,k}^2 $ is a good approximation, then we have
\begin{align}
    \delta^2 & \geq \| C_{i} \|_F^2 = \| W_i W_i^\top \|_F^2 + \| W_{i+1} W_{i+1} ^\top \|_F^2 - 2 \| W_{i+1} W_i \|_F^2 \nonumber \\
    & = \sum_{k=1}^{d_i} \varsigma_{i,k}^4  + \sum_{k=1}^{d_i} \varsigma_{i+1,k}^4 - 2 \| U_{i+1} \Lambda_{i+1} V_{i+1}^\top U_i \Lambda_i V_i^\top  \|_F^2  \nonumber \\
    & = \sum_{k=1}^{d_i} ( \varsigma_{i,k}^4  + \varsigma_{i+1,k}^4 )  - 2 \| \Lambda_{i+1} V_{i+1}^\top U_i \Lambda_i  \|_F^2 \nonumber \\
    & = \sum_{k=1}^{d_i} ( \varsigma_{i,k}^4  + \varsigma_{i+1,k}^4 ) - 2 \sum_{k,\ell=1}^{d_i} \varsigma_{i,k}^2 \varsigma_{i+1,\ell}^2 (v_{i+1,\ell}^\top u_{i,k})^2 \nonumber \\
    & \approx 2 \sum_{k=1}^{d_i} \varsigma_{i,k}^4 - 2 \sum_{k,\ell=1}^{d_i} \varsigma_{i,k}^2 \varsigma_{i,\ell}^2 (v_{i+1,\ell}^\top u_{i,k})^2 \nonumber \\
    & = 2 \boldsymbol{\mu}_i^\top ( I - \bar{\Theta} ) \boldsymbol{\mu}_i = 2 \boldsymbol{\mu}_i^\top \left( I - \frac{\bar{\Theta} + \bar{\Theta}^\top}{2} \right) \boldsymbol{\mu}_i \label{eq:dbal-lapl-deriv}
\end{align}
where $ \boldsymbol{\mu}_i = [ \varsigma_{i,1}^2 , \cdots , \varsigma_{i,d_i}^2 ]^\top $, and $ \bar{\Theta} $ is a doubly stochastic matrix with elements $ [\bar{\Theta}]_{k,\ell} = (v_{i+1,k}^\top u_{i,\ell})^2 $. This property of $ \bar{\Theta} $ follows from the fact that $ U_i $ and $ V_{i+1} $ are orthogonal matrices with $ u_{i,k} $ and $ v_{i+1,k}^\top $ as their $ k $-th columns, respectively, so that for all $ k $ we have $ \| v_{i+1,k}^\top U_i \|_2 = \sum_{\ell = 1}^{d_i} (v_{i+1,k}^\top u_{i,\ell})^2 = 1 $ and $ \sum_{\ell = 1}^{d_i} (v_{i+1,\ell}^\top u_{i,k})^2 = 1 $, which are the row and column sums of $ \bar{\Theta} $. The last equality in \eqref{eq:dbal-lapl-deriv} follows from transposing the scalar $ \boldsymbol{\mu}_i^\top \bar{\Theta} \boldsymbol{\mu}_i = (\boldsymbol{\mu}_i^\top \bar{\Theta} \boldsymbol{\mu}_i)^\top = \boldsymbol{\mu}_i^\top \bar{\Theta}^\top \boldsymbol{\mu}_i $. Now, $ \Theta = ( \bar{\Theta} + \bar{\Theta}^\top ) / 2 $ is also doubly stochastic since it is a convex combination of doubly stochastic matrices. Let $ L = I - \Theta $ be the Laplacian matrix associated with $ \Theta $. Then, from \eqref{eq:dbal-lapl-deriv},
\begin{equation}
    \delta^2 \geq 2\boldsymbol{\mu}_i^\top L \boldsymbol{\mu}_i = \sum\limits_{k,\ell=1}^{d_i} [\Theta]_{k,\ell} (\varsigma_{i,k}^2 - \varsigma_{i,\ell}^2 )^2. \label{eq:dbal-laplacian}
\end{equation}
Here $ [\Theta]_{k,\ell} $ for $ k \neq \ell $ quantifies the ``non-alignment'' of singular vectors in $ V_{i+1} $ and $ U_i $.
Both $ \Theta $ and $ \boldsymbol{\mu}_i $ are time-varying under gradient flow.

As mentioned above, a $ \delta $-balanced initialization generically manifests as synchronized singular values at all blocks, either simultaneously increasing or staying close to zero. 
For the singular values that increase, the corresponding singular vectors approximately align as this happens. More specifically, \eqref{eq:dbal-laplacian} suggests that we have the following cases for the singular vectors of $ A $:
\bite
    \item $ \varsigma_{i,k}^2 \approx \varsigma_{i+1,k}^2 <  \delta^2 $, then no constraint is imposed on $ u_{i,k} $ and $ v_{i+1,k} $ for any $ i = 1,...,h-1$;
    \item $ \varsigma_{i,k}^2 \approx \varsigma_{i+1,k}^2 > \delta^2 $ then $ v_{i+1,k}^\top u_{i,\ell} \approx 0 $ for any $ \ell $ such that $ \lvert \varsigma_{i,k}^2 - \varsigma_{i,\ell}^2 \rvert \geq \delta $. In particular, if for some $ k $ it holds that $ \lvert \varsigma_{i,k}^2 - \varsigma_{i,\ell}^2 \rvert \geq \delta $ for all $ \ell \neq k $, then $ [\Theta]_{k,k} = ( v_{i+1,k}^\top u_{i,k} )^2 \approx 1 $. 
\eite
In particular, if some singular values $ \sigma_k $ and $ \sigma_\ell $, $ k \neq \ell$, of $ \Sigma $ are close enough to each other, the singular values learnt by the blocks will also be close, i.e., $ \varsigma_{i,k}^2 - \varsigma_{i,\ell}^2 $ is small. This allows $ [\Theta]_{k,\ell} = \frac{1}{2} \left( (v_{i+1,\ell}^\top u_{i,k})^2 + (v_{i+1,k}^\top u_{i,\ell})^2 \right) $ to be large in \eqref{eq:dbal-laplacian}, meaning that $ v_{i+1,k} $ and $ u_{i,k} $ may be far from aligned. The singular vectors corresponding to close singular values span a subspace approximately orthogonal to all other singular vectors, but they are themselves not necessarily aligned. On the other hand, if all singular values are well separated, the corresponding $ d_y $ singular vectors are nearly aligned, i.e., $ [V_{i+1}^\top U_i]_{1:d_y,1:d_y} \approx I_{d_y} $. Examples of $ \delta $-balanced gradient flows are shown in Section~\ref{sec:ex-qualitat-anal} (see in particular Fig.~\ref{fig:dbalance}).

In light of Proposition~\ref{prop:0-bal-crit-point}, we have the following interpretation of sequential learning in the $ \delta$-balanced case. 
As is often mentioned in the literature \cite{saxe2013exact,arora2019implicit,cohen2024lecture}, and as we will also see in simulation in Section~\ref{sec:examples}, sequential learning proceeds from the largest to the smallest singular value of the data. 
When $ A(t)$ learns the first singular value $ \sigma_1 $, then, because of $ \delta$-balancedness, it has to pass near $ A^\ast = P (A_1 + Z) P^{-1}$ of signature $ \mathcal{S}=\{ 1\}$ (a tightened, and hence non-strict critical point, see Proposition~\ref{prop:class-crit-points}), and characterized by $ Z $ near 0 (i.e., $ \| Z \|_F $ small). This it tantamount to say that $ A(t)$ passing near $ A^\ast $ is near-tightened. In fact, recalling \eqref{eq:second_ord_terms}, the negative term of the Hessian quadratic form 
\[
\tr{(\rho_2(A,V)(E - A^h)^\top)} = \tr{ \left( \Sigma_\mathcal{Q}^\top \left[ \sum_{k=\ell_3}^{h} Z_h \cdots Z_{k+1} V^{(k)}_{21} \right]\left[ \sum_{j=1}^{\ell_1} V^{(j)}_{12} Z_{j-1}\cdots Z_{1} \right]  \right)}
\]
is small when $ Z $ is small enough. What dominates $ \mathfrak{h}(A,V) $ is therefore the normed term, i.e., $ \mathfrak{h}(A,V) \approx \lVert \rho_1(A,V)  \rVert_F^2 $, suggesting that in many directions the landscape is flat or has positive curvature. $A(t)$ follows therefore a plateau, until it learns $ \sigma_2 $. Still owing to $ \delta$-balancedness, this means that $A(t)$ is passing near a critical point $ A^{\ast \ast} = P (A_2 + Z) P^{-1}$ of signature $ \mathcal{S}=\{ 1, 2\}$, another non-strict saddle point with $ Z$ small. The procedure is then repeated until completion.

While this pattern happens generically, it does not happens for all $ \delta$-balanced initial conditions. For instance in Section~\ref{sec:diag-bal} we show a case in which 0-balanced and decoupled initial conditions fail at achieving sequential learning. 
It is at the moment not clear if sequential learning occurs whenever all singular modes are coupled in the dynamics (i.e., decoupling is missing).

\subsection{Combining balanced and diagonal dynamics}
\label{sec:diag-bal}

Assume now that on the diagonal $ W_i $ of Section~\ref{sec:diag-evol} we want to impose the 0-balanced condition of Section~\ref{sec:balance}: $ W_i W_i^\top = W_{i+1}^\top W_{i+1} $, $ i=1, \ldots, h-1$. Since this holds for all $t$, combining Proposition~\ref{prop:0-balanced} with the diagonality of $ W_i $ we have that all layers have the same singular values and singular vectors for all $t$.
Using the notation of Section~\ref{sec:diag-evol}, we have that $ \xi_{i,j} ^2 = \xi_{k,j}^2 $ for all $ i, k = 1, \ldots ,h$, and $ j=1,\ldots, d_y$, i.e., all diagonal entries of $ W_i $ and $ W_k $ are equal up to sign. We can express this by saying that  $ |\xi_{i,j} | = z_j \geq 0 $ is the $j$-th diagonal component of all weight matrices. 
Let us consider for simplicity the case in which all $ \xi_{i,j}\geq 0 $. Then the gradient flow equations \eqref{eq:grad-flow-xi}, which are already decoupled along the input-output paths, become the collection of $ d_y $ independent scalar ODEs
\beq
\dot z_j = z_j^{h-1} (\sigma_j - z_j^h ) , \qquad j=1, \ldots, d_y 
\label{eq:grad-flow-z}
\eeq
or, in vector form, $ \dot{\mathbf{z}} = \mathbf{z}^{h-1} \circ ( \bm{\sigma} - \mathbf{z}^h) $. 
Each ODE is decoupled from the others, meaning that each initial condition $ z_j(0)$ can be chosen independently from the others, while respecting the overall 0-balance structure.
The scalar ODE \eqref{eq:grad-flow-z} corresponds to a generalized logistic equation  \cite{blumberg1968logistic,tsoularis2002analysis}, and reproduces several of the features that can be found in the gradient flow \eqref{eq:grad-flowA}:

\bite
\item The graph of \eqref{eq:grad-flow-z}, drawn in Fig.~\ref{fig:diag-bal}(a), has 3 critical points $ \pm \sigma_j $ and $0$. Both $ \pm \sigma_j $ are locally asymptotically stable, while $0$ is unstable.
\item On both sides of  $\sigma_j$, the two terms of \eqref{eq:grad-flow-z} tend to work against each other, and, in spite of the different homogeneity order, ensure the asymptotic stability of $ \sigma_j$. This feature is standard to all logistic models.
\item When $ h>2$, the Hessian of \eqref{eq:grad-flow-z}, $ \nabla_{z_j}^2\mathcal{L}(z_j) = (h z_j^h -(h-1) (\sigma_j -  z_j^h)) z^{h-2} $, is homogeneous in $ z_j$, and hence it is vanishing at the critical point $ z_j=0$.
\item As shown in Fig.~\ref{fig:diag-bal}(a), around the origin there is a nearly flat (but not exactly flat, see lower panel Fig.~\ref{fig:diag-bal}(a)) plateau, meaning that escaping from a neighborhood of the origin may require a long time. Convergence to the critical point is instead fast when the initial condition is larger than the  singular value $ \sigma_j$. 
\item Sequential learning, from the largest to the smallest singular value,  occurs near the origin but is only guaranteed when the various input-output ``channels'' have identical initial condition: if $ \sigma_j > \sigma_k $ and $ z_j(0) = z_k(0) $ are both small ($ < \sigma_k^{1/h}$), then $ \sigma_j $ is discovered before $ \sigma_k $, see upper panel in Fig.~\ref{fig:diag-bal}(b). 
\item If $ z_j(0) \neq z_k (0) $, then there is no guarantee that sequential learning occurs, even though 0-balance is respected, see lower panel in Fig.~\ref{fig:diag-bal}(b).
\item Increasing $ h$ increases the steepness of $ \mathcal{L}(z_j)$ and the sharpness of the transition when $ z_j $ exits the plateau and approaches $ \sigma_j$ from below.
\eite
Most of these features are found also in the models we simulate in the next section.

\begin{figure}[htb!]
\centering
\subfigure[]{ 
\includegraphics[trim=3.5cm 8.5cm 10.5cm 8.5cm, clip=true, width=5cm]{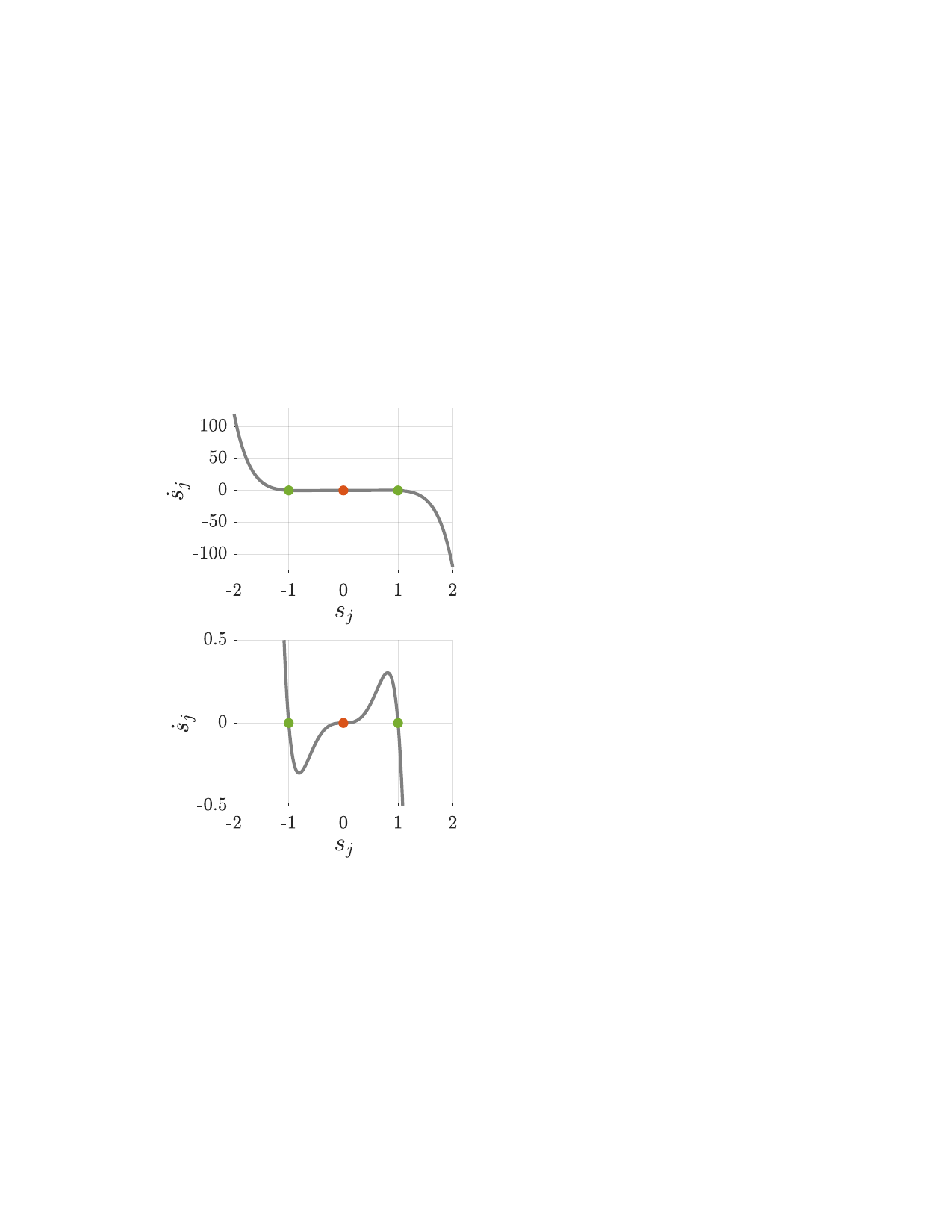}}
\subfigure[]{ 
\includegraphics[trim=3.5cm 8.5cm 10.5cm 8.5cm, clip=true, width=5cm]{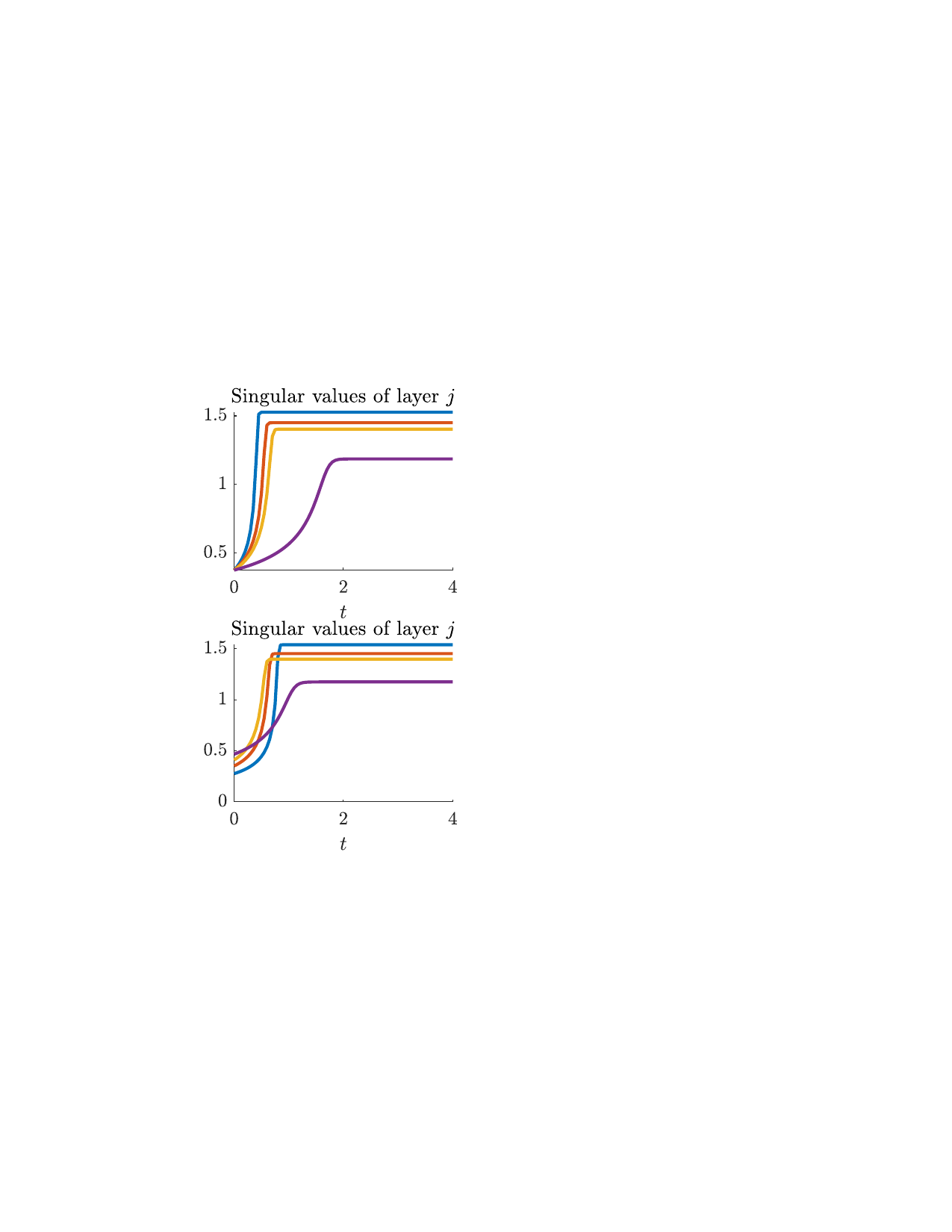}}
\caption{Example in Section~\ref{sec:diag-bal}. (a): Graph $( s_j, \dot{s}_j) $ of the scalar equation~\eqref{eq:grad-flow-z}. Green dots are stable critical points. The origin (in red) is unstable. The lower panel is a zoomed-in version of the upper panel. (b): Singular values of any layer along two gradient flow trajectories. On the upper panel $ s_j(0)=s_k(0) $ and the learning is sequential. On the lower panel $ s_j(0) \neq s_k (0) $ and the learning is not sequential. }
\label{fig:diag-bal}
\end{figure}

\section{Examples}
\label{sec:examples}

\subsection{Qualitative analysis of the gradient flow ODEs}
\label{sec:ex-qualitat-anal}

We consider in this example a deep linear neural network with $h=5 $ layers of dimension $ d_x =11$, $ d_1=\ldots = d_4 =6$ and $ d_y =4 $. 
The state space consists of matrices $A \in \mathcal{A}\subset \mathbb{R}^{39 \times 39} $ having $ d_x d_1 + d_1 d_2 + \ldots + d_{4} d_y =198$ variables and  $  \frac{1}{2} \left(  d_1(d_1+1) + \ldots + d_{4} (d_4+1)\right) =84 $ conservation laws.

In Fig.~\ref{fig:sim1}, we consider two trajectories of the gradient flow \eqref{eq:grad-flowA}, one initialized near the origin (Fig.~\ref{fig:sim1}(a)) and the other away from the origin (Fig.~\ref{fig:sim1}(b)). Notice that $A(0)$ close enough to the origin corresponds to the $ \delta$-balanced case discussed in Section~\ref{sec:delta-bal}.
As can be seen from the time scales, when $A(0)$ is near the origin, the convergence rate is typically much smaller, because the loss landscape is much more flat than away from the origin. 
Observe that $A=0$ is the ``most non-strict'' critical point, in the sense that for it all terms of the Taylor expansion are equal to 0, even though we know from Theorem~\ref{thm:arcs} that it is also the critical point with the largest number of unstable submanifolds. 
When $ \| A(0) \|_F $ is small, along the learning trajectory the singular values are typically learnt one by one, from the largest to the smallest one. Once learnt, they are no longer modified by the gradient flow, and neither are the associated singular vectors. This sequential learning is clearly reminiscent of a ``principal component analysis'' approach, as pointed out already in \cite{baldi1989neural}.
The behavior reminds also of the Eckart-Young theorem, which states that the best rank-$r $ approximation of a matrix is given by the singular value decomposition truncated at order $ r $. For small initial conditions, the gradient flow seems to apply this rule in learning the singular values of $A^h$, and this reflects also in the singular values of $A$. In other words, no unnecessary components (singular values not visible in the output) are learnt by the weight matrices $W_i$, which rhymes with the ``greedy low-rank'' implicit regularization principle mentioned e.g. in \cite{arora2019implicit,cohen2024lecture,li2020towards} in matrix factorization contexts (e.g. ``matrix sensing'' problem) for small $ \|A(0)\|_F $. 
As discussed in Sec.~\ref{sec:delta-bal}, in this regime, all layers contribute in an approximately equal way to each singular value being discovered, and the singular vectors generically come closer to being aligned, at least when the $ \sigma_j $ are well-separated in comparison with $ \delta$. In Fig.~\ref{fig:sim1}(a), for instance, $ \| V_{k+1}^{ \top} U_{k} - I \|_F^2 $ tends towards zero for all pairs of blocks, $k=1,...,h-1$. 
However, as pointed out in Sec.~\ref{sec:delta-bal}, if the $ \sigma_j $ are too close, approximate alignment in the sense $ [V_{k+1}^\top U_k]_{1:d_y,1:dy} \approx I_{d_y} $ may be missing. In Fig.~\ref{fig:dbalance}, a $ \delta $-balanced trajectory learns singular values that are relatively close (panel (a)) and pairwise very close (panel (b)). In particular, Fig.~\ref{fig:dbalance}(b) illustrates how the system moves towards alignment when a new distinct singular value is learnt (first and third), but moves away from alignment when another almost identical singular value is learnt (second and fourth). The singular vectors corresponding to singular values far enough apart are still approximately orthogonal in the case of both Fig.~\ref{fig:dbalance}(a) and (b).

No hierarchical learning is instead visible when $ A(0)$ is large, see Fig.~\ref{fig:sim1}(b). The convergence rate is orders of magnitude faster in this latter case, and the contribution of the different layers to the singular values of $ A^h $ may vary. The singular vectors show no clear sign of aligning. 

In both cases, while the individual singular values of $A$ (i.e., the singular values of the individual layers) may change with the initial condition, what is learnt univocally are the singular values of $A^h$, as expected.  
While at a global optimum $ A^h $ has always $ d_y $ nonzero singular values regardless of the initialization, when  $ \|A(0)\|_F$ is small $ A $ has $ h d_y $ ``large'' singular values, and the remaining ones are instead small in comparison, see Fig.~\ref{fig:sim1}(a).
This is not necessarily the case when $ \|A(0)\|_F$ is away from the origin:  $A$ can have more than $ h d_y $ singular values of comparable size, see Fig.~\ref{fig:sim1}(b). This suggests that the implicit regularization towards low rank is restricted to near-balanced initializations and not universal for the class of deep linear neural networks.

Notice that while $ \| A(t)^h\|_F $ is monotone (since $ \mathcal{L}(A(t)) $ is monotone), $ \| A(t) \|_F $ is not, see Fig.~\ref{fig:sim1}(b), even though it shares all critical points of $ \mathcal{L}(A)$, see Proposition~\ref{prop:FrobA}. 

Fig.~\ref{fig:sim2} shows the arcs $ \mathcal{L}(A(\alpha))$ discussed in Section~\ref{sec:stab_unstab_manif}. When $ \mathcal{S}_1 \subset \mathcal{S}_2$, then the loss function along the arc, $ \mathcal{L}(A(\alpha))$, monotonically decreases when passing from the critical point $A_1 $ to the critical point $A_2$, meaning that $ A_1 $ has an unstable submanifold along the arc direction, while $ A_2 $ has a stable submanifold in the arc direction. The converse occurs when $ \mathcal{S}_1 \supset \mathcal{S}_2$. When $ \mathcal{S}_1 \not{\!\!\subset}\; \mathcal{S}_2 $ and $ \mathcal{S}_1 \not{\!\!\supset}\; \mathcal{S}_2 $ (called ``mixed case'' in the lower right panel of Fig.~\ref{fig:sim2}) then what often happens is that $ \mathcal{L}(A(\alpha))$ is not monotone along the arc, but shows local minima/maxima. Obviously such arcs are not gradient flow trajectories.
Different arcs in the fiber $ \phi^{-1}(\Phi(\alpha)) $ may even give different types of monotonicity. 

\begin{figure}[htb!]
\centering
\subfigure[]{
\begin{minipage}{7.5cm}
\includegraphics[trim=3.5cm 8cm 3.3cm 8cm, clip=true, width=7.5cm]{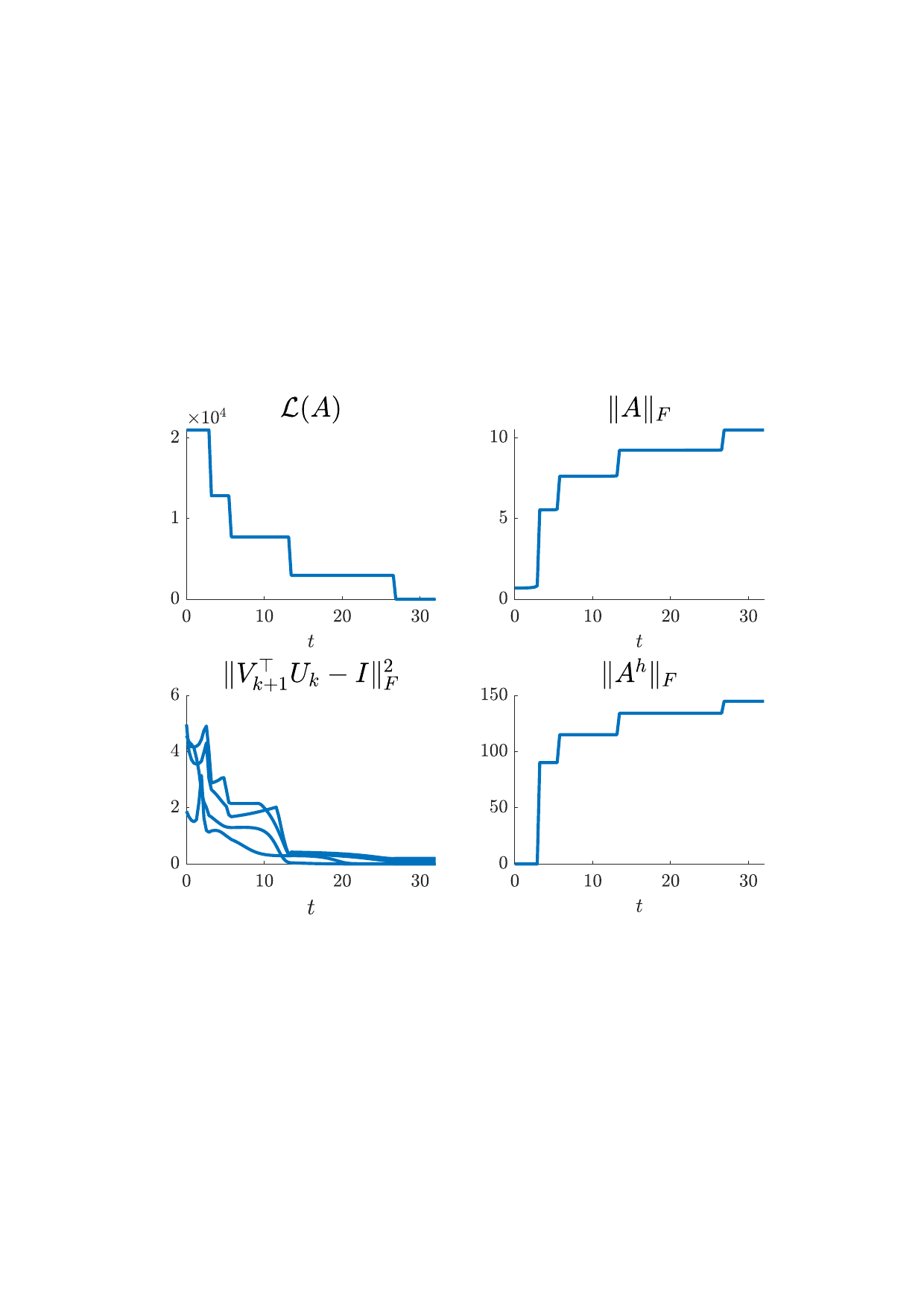} 
\includegraphics[trim=3.5cm 8cm 3.3cm 9cm, clip=true, width=7.5cm]{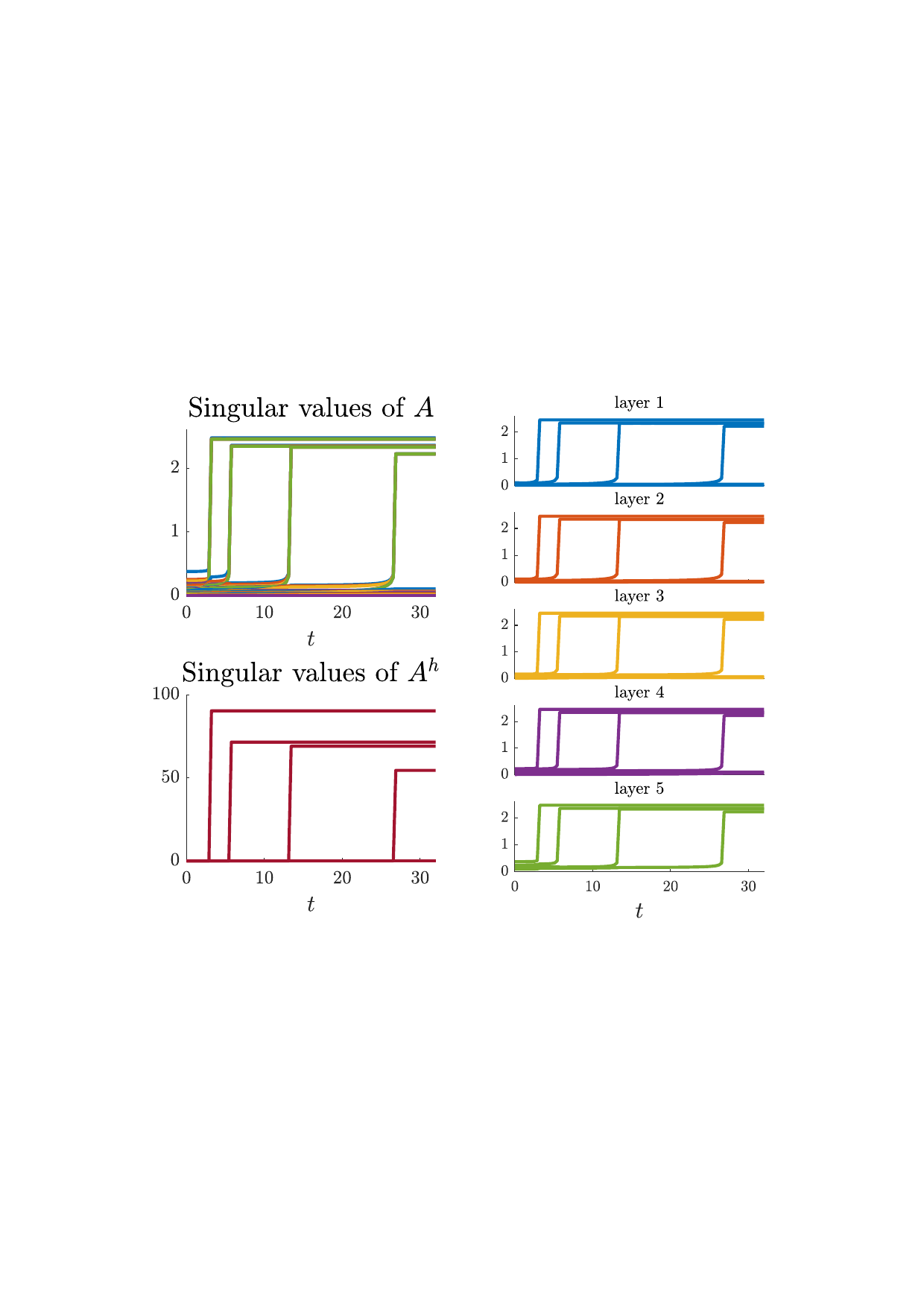}
\end{minipage}}
\subfigure[]{
\begin{minipage}{7.5cm}
\includegraphics[trim=3.5cm 8cm 3.3cm 8cm, clip=true, width=7.5cm]{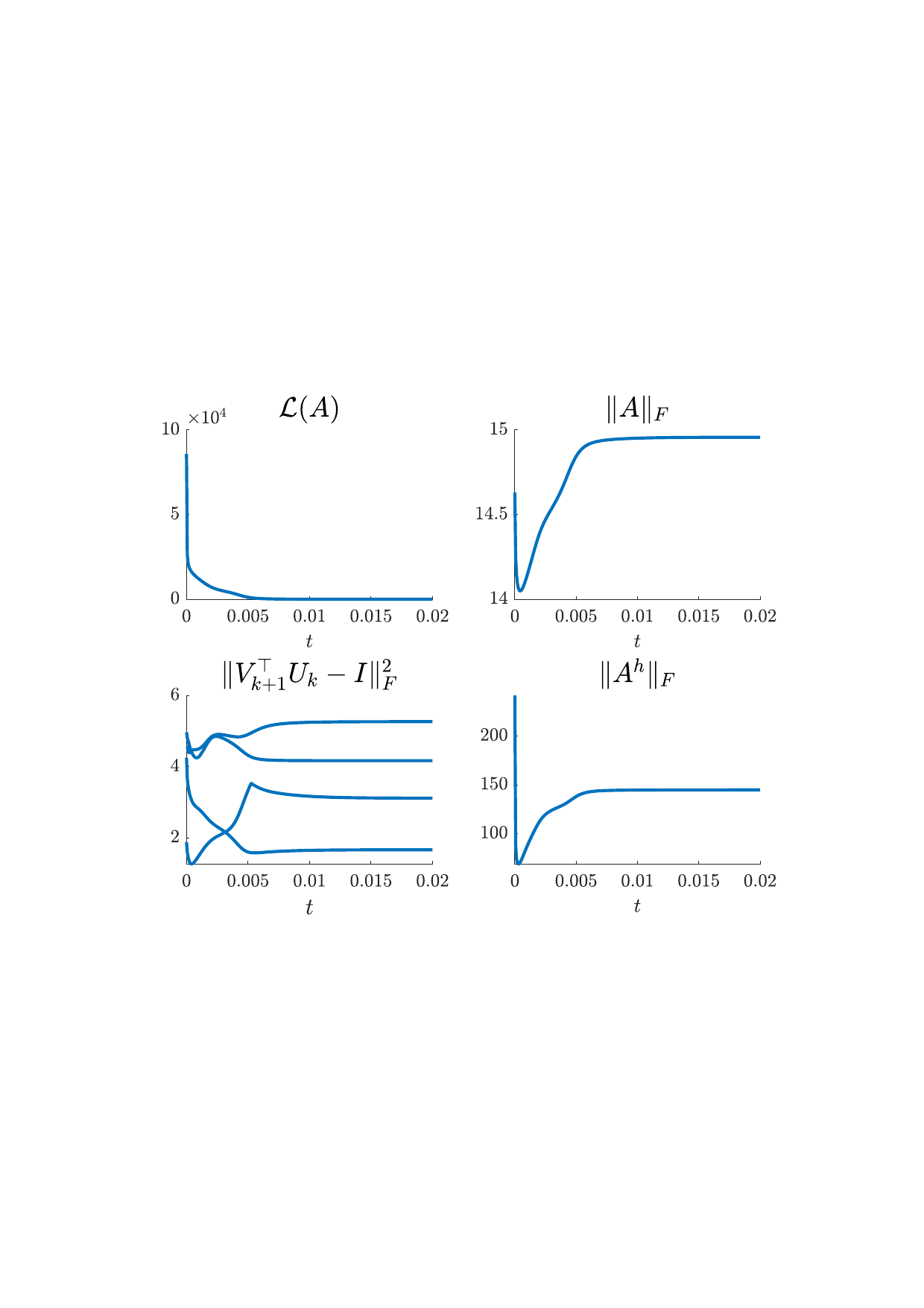}
\includegraphics[trim=3.5cm 8cm 3.3cm 9cm, clip=true, width=7.5cm]{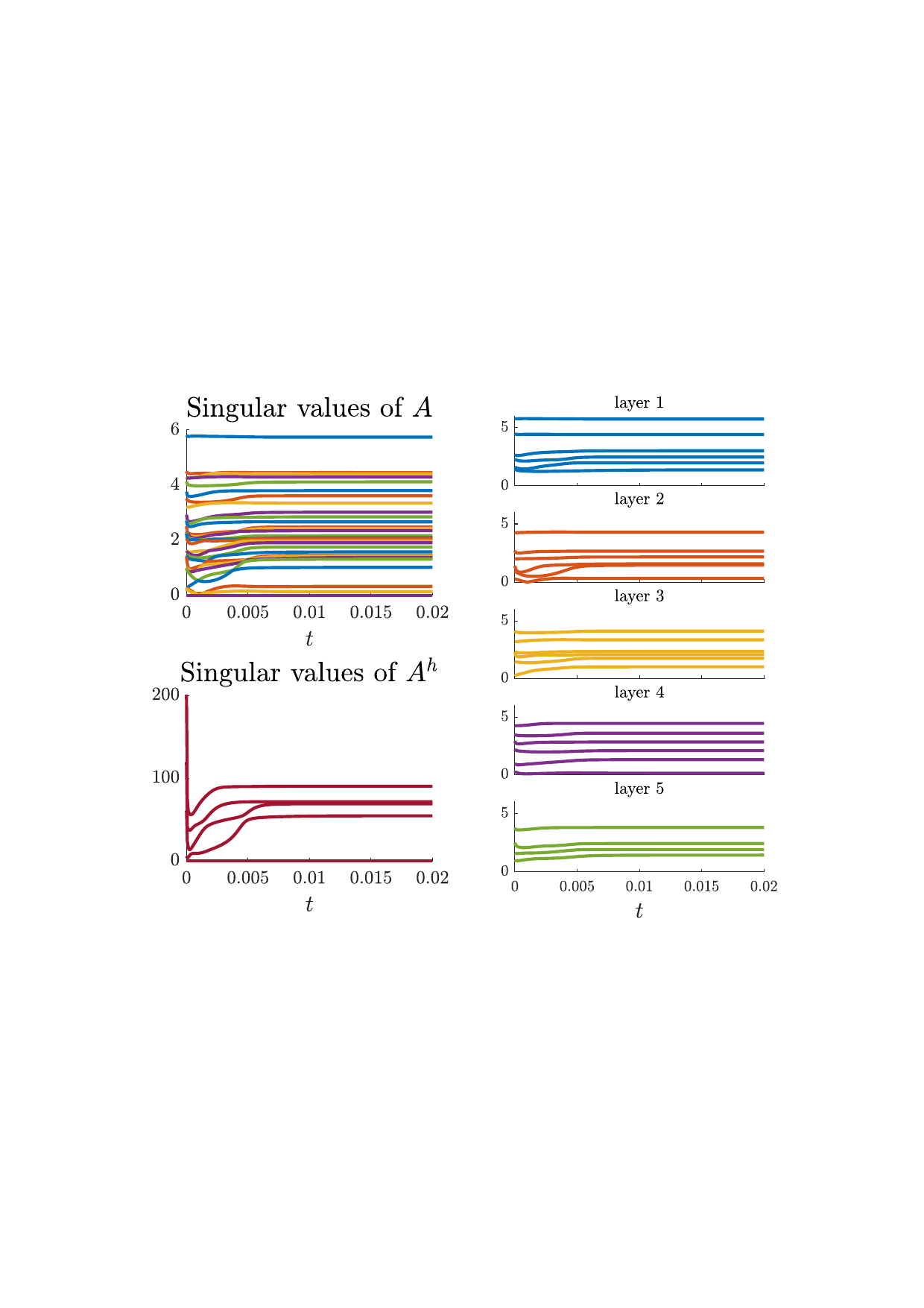}
\end{minipage}}
\caption{Example in Section~\ref{sec:ex-qualitat-anal}. (a): Case with $ A(0) $ near the origin. (b): Case with $ A(0) $ away from the origin. Convergence is to a global minimum in both panels.}
\label{fig:sim1}
\end{figure}

\begin{figure}[htb!]
	\centering
	\subfigure[]{
			\includegraphics[trim=3cm 9cm 3cm 9.5cm, clip=true, width=0.32\textwidth]{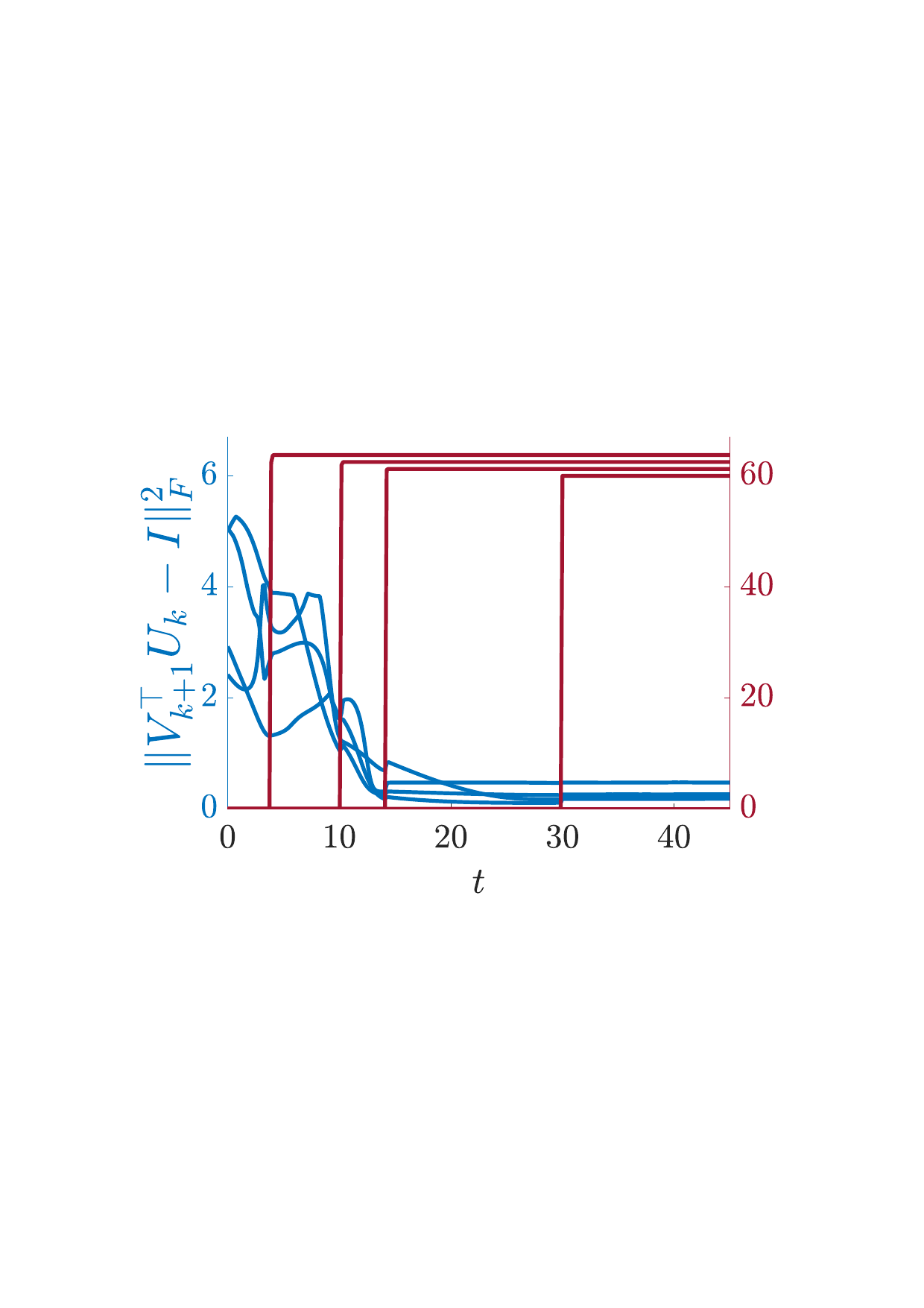}}$ \qquad $ 
	\subfigure[]{
			\includegraphics[trim=3.5cm 9cm 2.5cm 9.5cm, clip=true, width=0.32\textwidth]{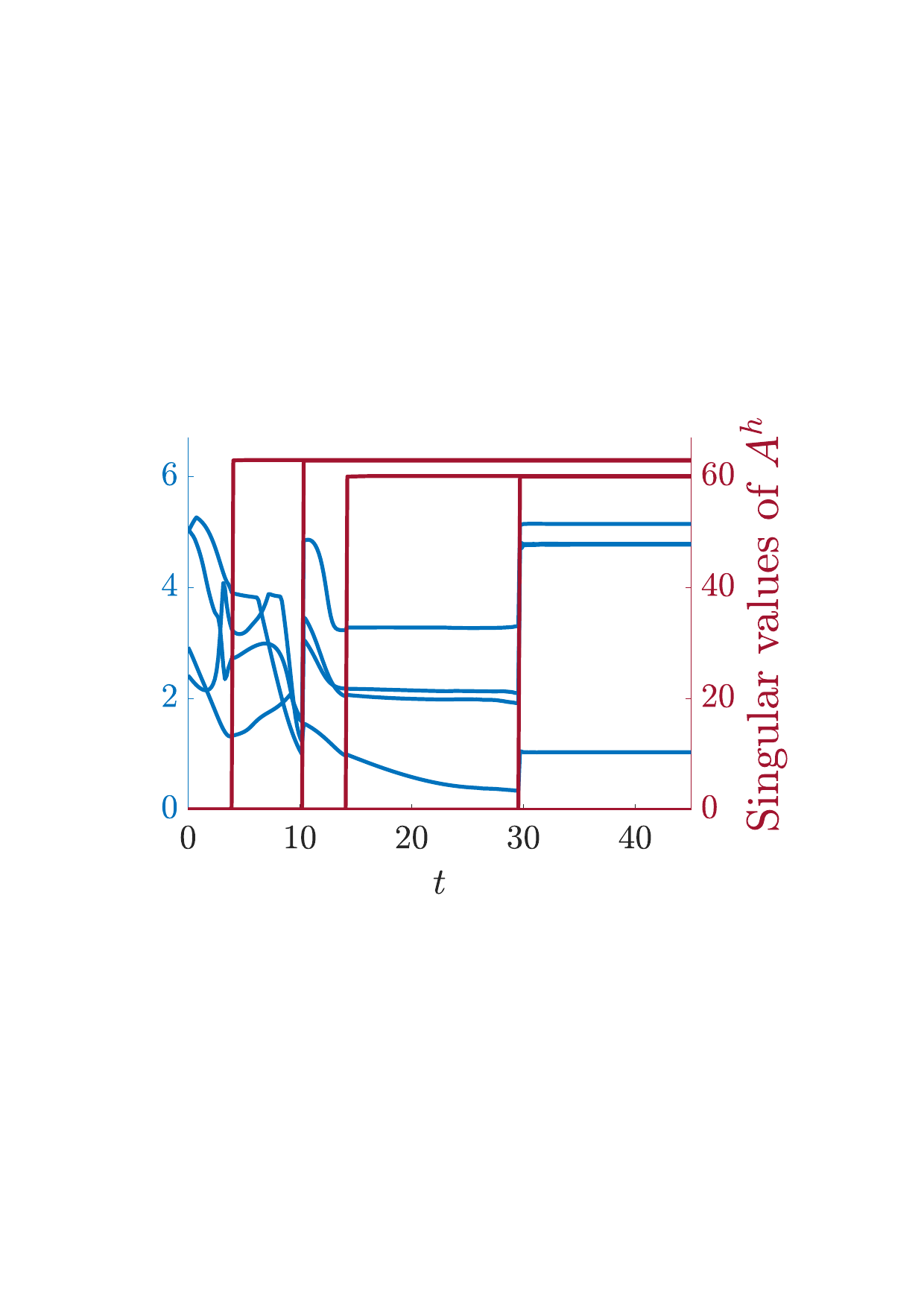}}
\caption{Example of Section~\ref{sec:ex-qualitat-anal}. Singular values and vectors for $\delta$-balanced trajectories, $ \delta = 0.04 $. (a): Singular values of $ \Sigma $ have relatively close spacing, $ \sigma_{j} - \sigma_{j+1} = 1.5 $, $\forall j$. The singular vectors approximately align. (b): Singular values of $ \Sigma$ are pairwise very close, $ \sigma_{j} - \sigma_{j+1} = 0.1 $, $j = 1,3$. The singular vectors do not align.}
\label{fig:dbalance}
\end{figure}

\begin{figure}[htb!]
\centering
\subfigure[]{
\includegraphics[trim=3cm 8cm 1.5cm 7cm, clip=true, width=10cm]{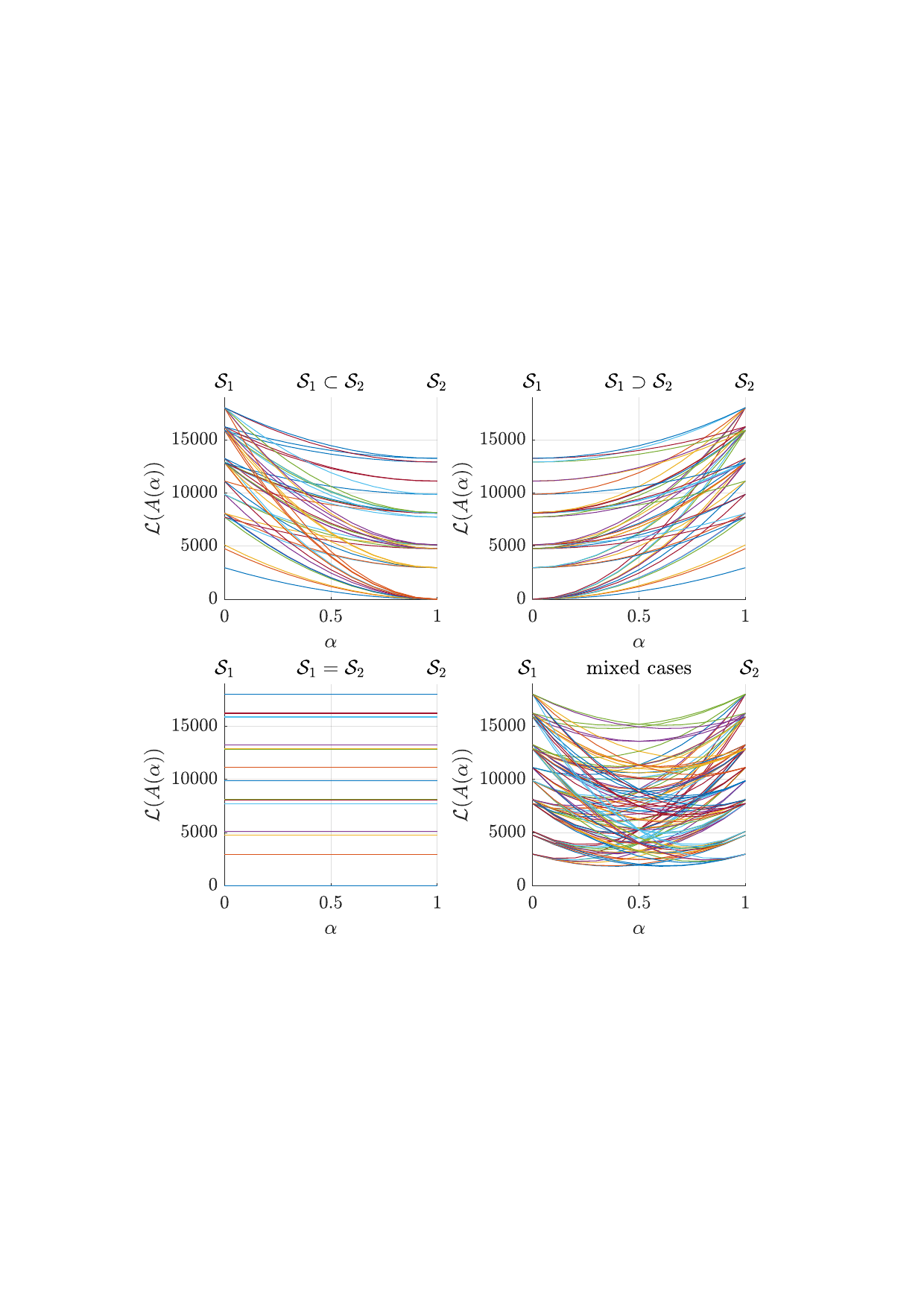}}
\caption{Example in Section~\ref{sec:ex-qualitat-anal}. Arcs $ \mathcal{L}(A(\alpha))$ for the various cases of Theorem~\ref{thm:arcs}.}
\label{fig:sim2}
\end{figure}

\subsection{Loss landscape in small-scale examples}
In this Section we investigate a few very small scale examples. The advantage is that the landscape of critical points can be explored in a precise way, and visualized. 
For all of these examples we proceed by vectorizing the matrix ODEs.

\subsubsection{Single hidden node}
\label{sec:1-node-1-layer}
Consider an example of a single hidden layer with a single hidden node. The gradient flow system must then learn only two parameters $ \bm{w} = \begin{bmatrix} w_1 & w_2\end{bmatrix}^\top$, and obeys to the vectorized ODE
\beq
\label{eq:plane-sys}
\begin{split}
\dot w_1 & = w_2 (\sigma  - w_1 w_2) \\
\dot w_2 & = w_1 (\sigma - w_1 w_2)
\end{split}
\eeq
where $ \sigma>0$ is given (it represents the data of the problem after the---here trivial---SVD decomposition). 
The system has an infinite number of equilibrium points: the origin $ \bm{w}=0 $ is one equilibrium, but also $ \left( w_1, \frac{\sigma}{w_1} \right) $ for any $ w_1 \neq 0$. 
The Jacobian matrix of \eqref{eq:plane-sys} (which is also equal to the negated Hessian for a gradient system) is 
\[
J = \begin{bmatrix} - w_2^2 & \sigma - 2 w_1 w_2 \\ \sigma - 2 w_1 w_2 & - w_1^2 \end{bmatrix}.
\]
Using the Lyapunov indirect method to investigate the stability of the equilibria, we obtain that $ \bm{w}=0 $ is a saddle point (the Jacobian has eigenvalues $ \pm \sigma$, of eigenvectors $ \bm{v}_1 = \begin{bmatrix}1 & 1 \end{bmatrix}^\top $ and $ \bm{v}_2 = \begin{bmatrix} 1 & -1 \end{bmatrix}^\top $), while for $ \left( w_1, \frac{\sigma}{w_1} \right) $ the Lyapunov indirect method fails, as the eigenvalues are $ 0$ and $ -\frac{\sigma^2 + w_1^4}{w_1^2} $. 

While $ \bm{w}=0$ is a strict saddle point (the Hessian has a negative eigenvalue), 
$ \left( w_1, \frac{\sigma}{w_1} \right) $ instead has a singular positive semidefinite Hessian, $\lambda_{\min}(-J) \geq 0 $, at least as long as we disregard the conservation law which restricts the dimension of the state space and hence reduces the size of the ``effective'' Jacobian/Hessian. 
The conservation law for this example becomes the hyperbola $  \pi(\bm{w}) = w_1^2 - w_2^2 - c =0 $ for some scalar constant $ c$. 
Combining this constraint with the expression for the equilibria, we get that each trajectory not passing through the origin has exactly one equilibrium point. 
In fact, the algebraic system 
\beq
 \begin{cases}
  \sigma - w_1 w_2  =0 & \\
  w_1^2 - w_2^2  = c  &
  \end{cases}
\label{eq:1-node-ex}
\eeq
leads to  $ w_1^4 - c w_1^2 - \sigma^2 =0$, which has only one admissible solution: $ w_1^2 = \frac{1}{2} \left( c + \sqrt{c^2 + 4 \sigma^2 } \right) $. The expression for the equilibrium is then  
\beq
\bm{w}_e = \left( \sqrt{ \frac{1}{2} \left( c + \sqrt{c^2 + 4 \sigma^2 } \right) }, \, \frac{\sigma}{\sqrt{ \frac{1}{2} \left( c + \sqrt{c^2 + 4 \sigma^2 } \right) }} \right).
\label{eq:1-node-equil}
\eeq 
Notice how this argument implies that while the quadratic loss function $ \mathcal{L}(\bm{w}) = \frac{1}{2} \left( \sigma - w_1 w_2 \right)^2 $ is only positive semidefinite in $\mathbb{R}^2$, its restriction to a trajectory of \eqref{eq:plane-sys} is indeed positive definite since $ \mathcal{L}(\bm{w}) =0 $ corresponds to solving \eqref{eq:1-node-ex}.
Computing the derivative of the loss function we get 
\[
\dot{\mathcal{L}}(\bm{w}) = - ( \sigma - w_1 w_2 )^2 (w_1^2 + w_2^2)  = - 2(w_1^2 + w_2^2) \mathcal{L}(\bm{w}).
\]
We have to distinguish two types of trajectories. 
\benu\item $0$-balance conservation law: $c=0 $. It corresponds to $ w_1^2 = w_2^2 $, i.e., $ w_1(t)=w_2(t) $ or $ w_1(t) = - w_2(t) $ for all $ t$. This means that the eigenvectors of the linearization at $ \bm{w}=0 $, i.e., $ \bm{v}_1 $ and $ \bm{v}_2 $, form also the trajectories obeying to the 0-balance conservation law. In detail, the $ \bm{v}_1 $ trajectory is the unstable submanifold of the saddle point: it is characterized by an equilibrium point on each side of the origin, $ \bm{w}_e = \pm \sqrt{\sigma} \begin{bmatrix} 1 & 1 \end{bmatrix}^\top$. The system \eqref{eq:plane-sys} reduces to the single equation $ \dot w_1 = w_1 (\sigma - w_1^2)$ which has the origin as unstable equilibrium and $ \bm{w}_e $ as asymptotically stable equilibria, similarly to \eqref{eq:grad-flow-z}. In particular, the basin of attraction of $ \sqrt{\sigma} \begin{bmatrix} 1 & 1 \end{bmatrix}^\top$ is the open half-line $ w_1 = w_2 >0$, while $ - \sqrt{\sigma} \begin{bmatrix} 1 & 1 \end{bmatrix}^\top$ is the attractor for the other half line $ w_1 = w_2 <0$. 
No other equilibrium appears instead on the $ \bm{v}_2 $ eigenvector trajectory, i.e., on the stable submanifold of the saddle point. The single equation becomes $ \dot w_1 = w_1 (-\sigma - w_1^2)$ which has only the origin as an asymptotically stable equilibrium point when $ \sigma>0$.

\item Any other trajectory, corresponding to nonzero balance in the conservation law, i.e., $ c\neq 0$. The solution of \eqref{eq:plane-sys} does not pass through the origin, hence $ \dot{\mathcal{L}} $ is negative definite along the trajectories, vanishing only in the equilibrium point $ \bm{w}_e $, and implying that $ \bm{w}_e $ is asymptotically stable for all $ (w_1(0), \, w_2(0) )$ such that $  w_1^2(0) - w_2^2(0)  = c$. The convergence rate is exponential.
\eenu
The phase portrait of the system \eqref{eq:plane-sys} is shown in Fig.~\ref{fig:1hidden}. As can be observed, the line $ w_1 = - w_2 $ (i.e., the stable submanifold of the origin in the $0$-balance trajectory) acts as a separatrix of the phase plane, subdividing trajectories for which $ c>0 $ from trajectories for which $ c<0$. The two half lines $w_1 = w_2 $ (i.e., the unstable submanifolds for the origin in the $0$-balanced trajectory) act as heteroclinic orbits, and contain the already mentioned two equilibria $ \bm{w}_e = \pm \sqrt{\sigma} \begin{bmatrix} 1 & 1 \end{bmatrix}^\top$. Notice further in Fig.~\ref{fig:1hidden}(b) that near the origin $ \dot{\mathcal{L}}\sim 0$ i.e., the loss landscape is nearly flat. The convergence is slow but nevertheless exponential to some $\bm{w}_e$.

\begin{figure}[htb!]
\centering
\subfigure[]{
\includegraphics[trim=4cm 9cm 4cm 8cm, clip=true, width=7cm]{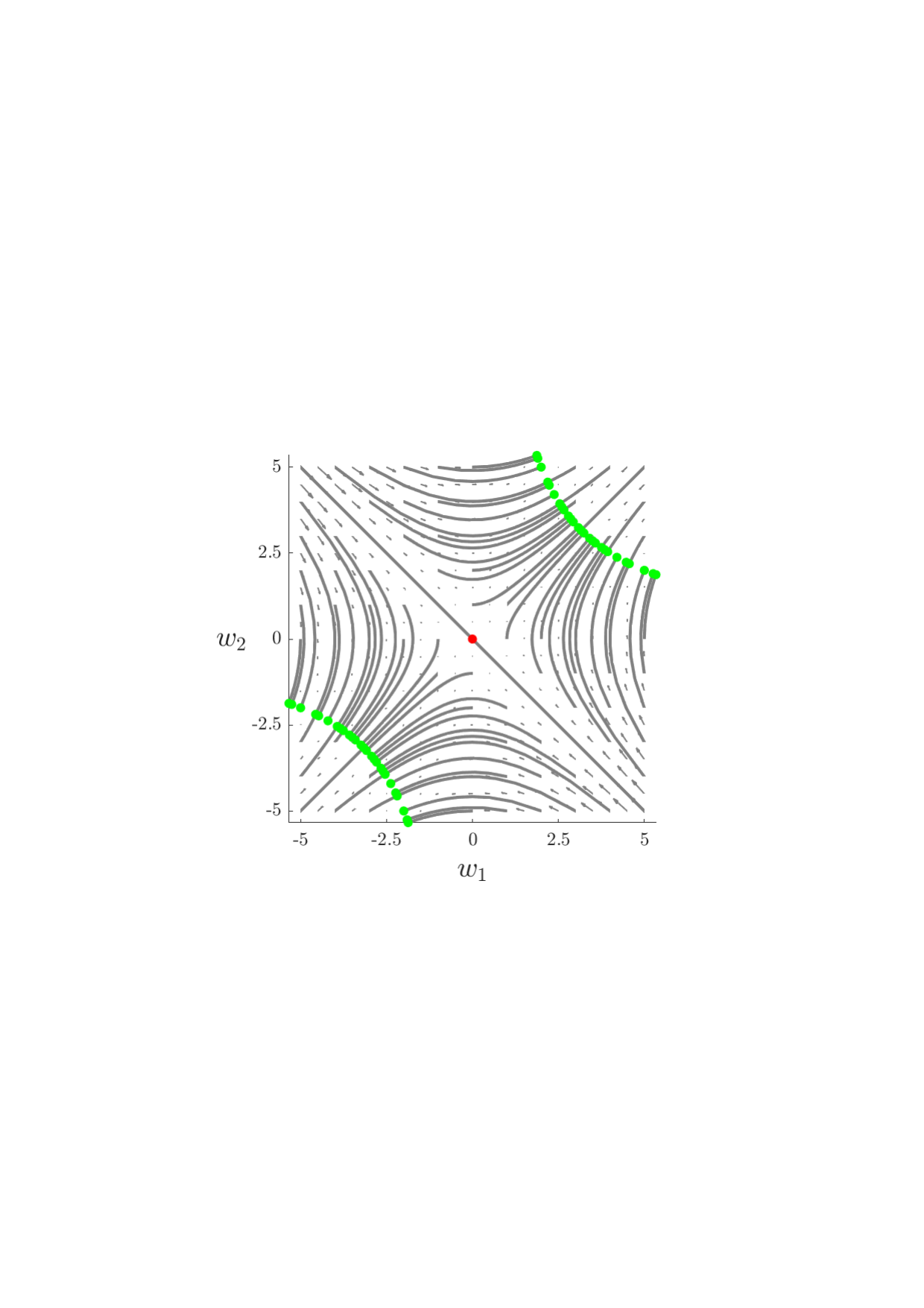}}
\subfigure[]{
\includegraphics[trim=3.5cm 8cm 4cm 10.5cm, clip=true, width=7cm]{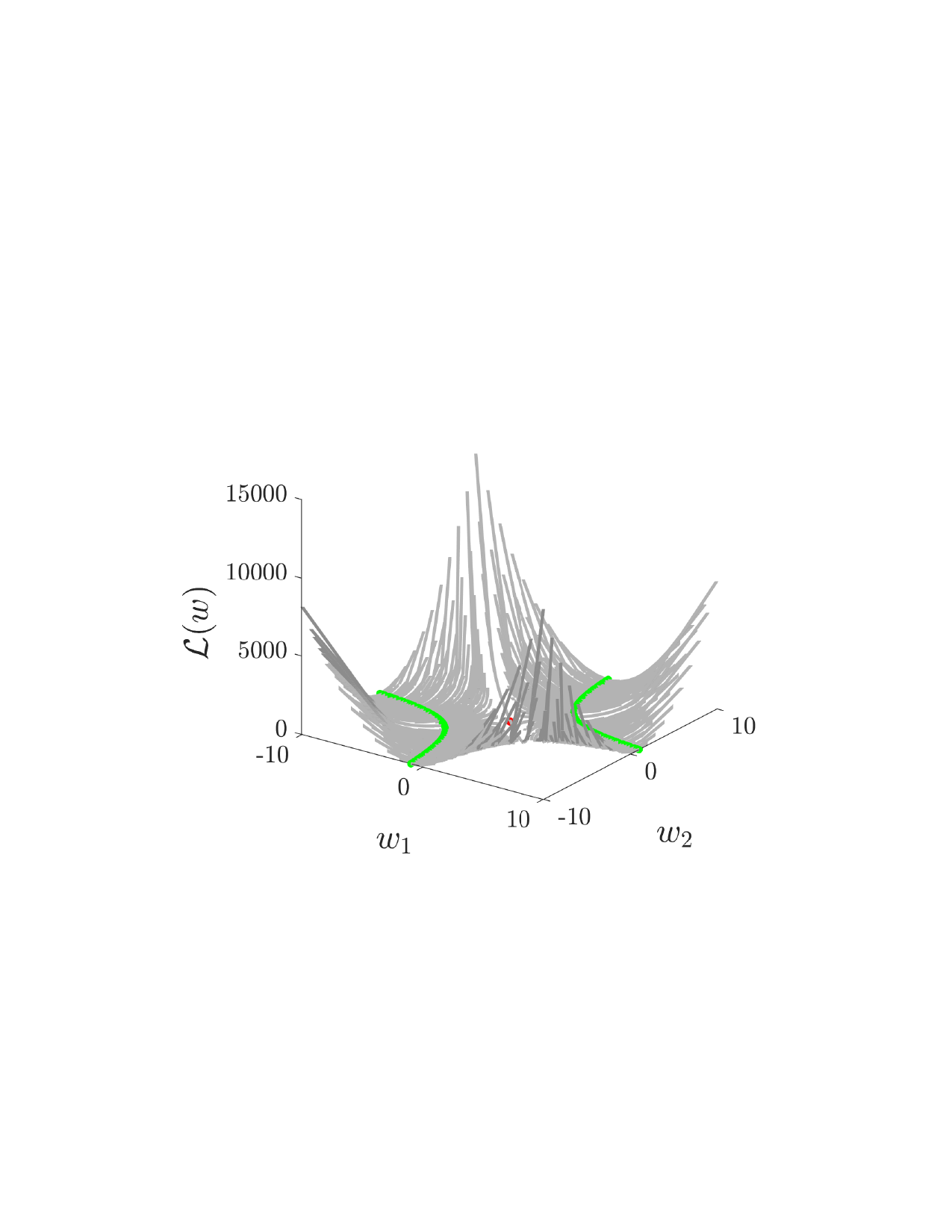}}
\caption{Example of 1 layer with 1 hidden node. Green dots are global minima. The origin in red is a saddle point. (a): Phase space. (b): Loss landscape. Notice the plateau around the origin.}
\label{fig:1hidden}
\end{figure}

\subsubsection{Two hidden layers with one node each}
The gradient flow depends on 3 parameters, $ \bm{w} =\begin{bmatrix} w_1 & w_2 & w_3\end{bmatrix}^\top$. The vectorized equations are
\beq
\label{eq:3layers-1-node-sys}
\begin{split}
\dot w_1 & = w_2 w_3 ( \sigma  - w_1 w_2 w_3 ) \\
\dot w_2 & =  w_1 w_3 ( \sigma  - w_1 w_2 w_3 )  \\
\dot w_3 & =  w_1 w_2 ( \sigma  - w_1 w_2 w_3 ) ,
\end{split}
\eeq
and the conservation laws are 
\[
\begin{split}
    & w_1^2 - w_2^2 = c_1 \\
    & w_2^2 - w_3^2 = c_2 .
\end{split}
\]
The loss function is $ \mathcal{L}(\bm{w})= \frac{1}{2}( \sigma  - w_1 w_2 w_3)^2  $ and its derivative $ \dot{\mathcal{L}}(\bm{w}) = - 2(\sum_{i=1}^3 \prod_{j=1, j\neq i}^3 w_j^2 ) \mathcal{L}(\bm{w}) =- 2 (w_2^2 w_3^2 +w_1^2 w_3^2 +w_1^2 w_2^2 ) \mathcal{L}(\bm{w})  $, which is negative definite as soon as $ c_1, c_2 \neq 0$. The phase space and critical points are visualized in Fig.~\ref{fig:3hidden}.

Rewrite the system \eqref{eq:3layers-1-node-sys} as $ \dot{\bm{w}} = f(\bm{w}) $ and the conservation laws as $ \pi_i (\bm{w}) = w_i^2 - w_{i+1}^2 - c_i =0 $, $ i=1,2$.
If we differentiate $ \pi_i $ we get the two exact one-forms $ d \pi_i(\bm{w}) = 2 w_i - 2 w_{i+1} $. 
Denote $  \Delta_\pi (\bm{w})= {\rm span} ( d \pi_1(\bm{w}) ,\; d\pi_2(\bm{w}) ) $ the codistribution generated by these two one-forms which are always linearly independent. Since both are exact, $ \Delta_\pi (\bm{w}) $ is involutive at any $ \bm{w}$. 
Easy calculations give that $ d \pi_i(\bm{w}) f(\bm{w}) =0$, i.e., the tangent space of the conservation laws is orthogonal to the tangent space of the system: $ \Delta_\pi (\bm{w})  \perp {\rm span}(f(\bm{w})) $.
Together they cover the entire tangent state space of $ \mathbb{R}^3$.
\begin{figure}[ht]
\centering
\subfigure[]{
\includegraphics[trim=3cm 9cm 3cm 9cm, clip=true, width=10cm]{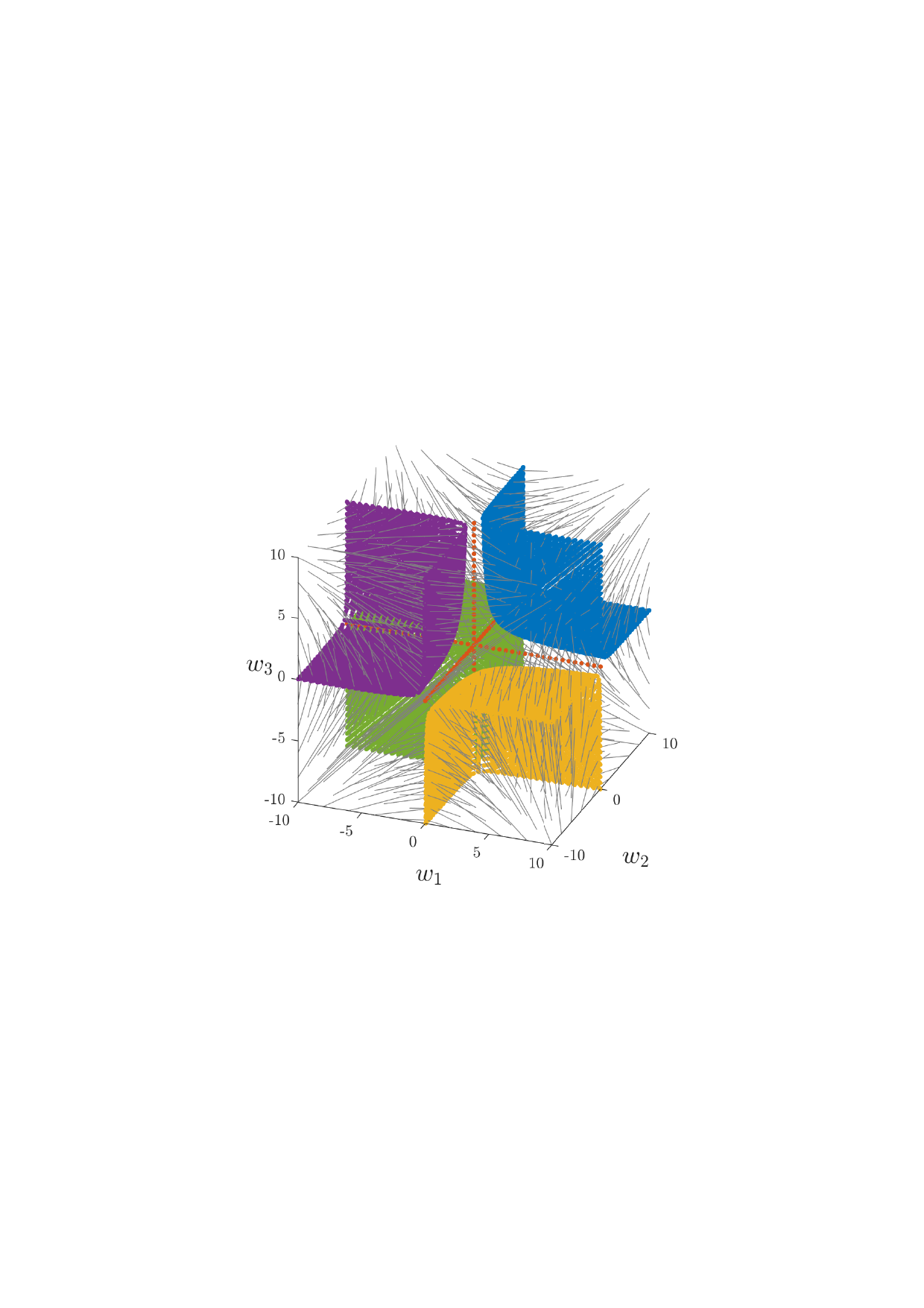}}
\caption{Example of 2 hidden layers with one node each (3D phase space). Colored points represent critical points. All are global minima except the saddle points in red.}
\label{fig:3hidden}
\end{figure}

\subsubsection{One hidden layer with two nodes}
Consider now a shallow network with 2 neurons in the single hidden layer.
If $ W_1=\begin{bmatrix} w_1 \\ w_2 \end{bmatrix} $ and $ W_2=\begin{bmatrix} w_3 & w_4 \end{bmatrix} $, the associated adjacency matrix is 
\[
A =\begin{bmatrix}
0 & 0 & 0 & 0 \\
w_1 & 0  & 0 & 0 \\
w_2 & 0 & 0 & 0  \\
0 & w_3 & w_4 & 0  \\
\end{bmatrix} .
\]
In vector form, $ \bm{w} =\begin{bmatrix} w_1 & w_2 & w_3 & w_4 \end{bmatrix}^\top$, and the gradient flow corresponds to the following 4 ODEs, one for each $ w_i$:  
\beq
\label{eq:2-hiddennodes-sys}
\begin{split}
\dot w_1 & = w_3 ( \sigma  - w_1 w_3 - w_2 w_4 ) \\
\dot w_2 & = w_4 ( \sigma  - w_1 w_3 - w_2 w_4 ) \\
\dot w_3 & = w_1 ( \sigma  - w_1 w_3 - w_2 w_4 ) \\
\dot w_4 & = w_2 ( \sigma  - w_1 w_3 - w_2 w_4 ) .
\end{split}
\eeq
The 4 parameters are linked by three conservation laws:
\[
W_1 W_1^\top - W_2^\top W_2 =\begin{bmatrix} c_1 & c_3 \\ c_3 & c_2 \end{bmatrix} \quad \Longrightarrow \quad 
\begin{cases}
    \pi_1(\bm{w}) =  & w_1^2 - w_3^2 - c_1 =0\\
    \pi_2(\bm{w}) = & w_2^2 - w_4^2 - c_2=0 \\
    \pi_3(\bm{w}) = & w_1 w_2 - w_3 w_4 - c_3 =0 .\\
\end{cases}
\]
The loss function $ \mathcal{L}(\bm{w})=  \frac{1}{2} (\sigma  - w_1 w_3 - w_2 w_4 )^2$ has derivative $ \dot{\mathcal{L}}(\bm{w}) = - 2(\sum_{i=1}^4 w_i^2) \mathcal{L}(\bm{w}) $.
Also in this case $  \Delta_\pi (\bm{w}) = {\rm span} ( d \pi_1 ,\; d\pi_2, \; d \pi_3 ) $ is an involutive distribution of dimension 3 at each $ \bm{w}$ s.t. $  \Delta_\pi (\bm{w}) \perp {\rm span}(f(\bm{w})) $ and $  \Delta_\pi (\bm{w})\oplus {\rm span}(f(\bm{w})) = \mathbb{R}^4$.
As can be seen in the 2D and 3D slices of Fig.~\ref{fig:2hidden}, the (infinitely-many) critical points are scattered over the entire $ \mathbb{R}^4$. Only parameters pairs obeying the conservation laws show a pattern of critical points which is easy to identify, for the other parameter pairs (or triplets) critical points appear everywhere.
The origin is still a saddle point. 
\begin{figure}[ht]
\centering
\subfigure[]{
\includegraphics[trim=5cm 12.7cm 4cm 9cm, clip=true, width=11cm]{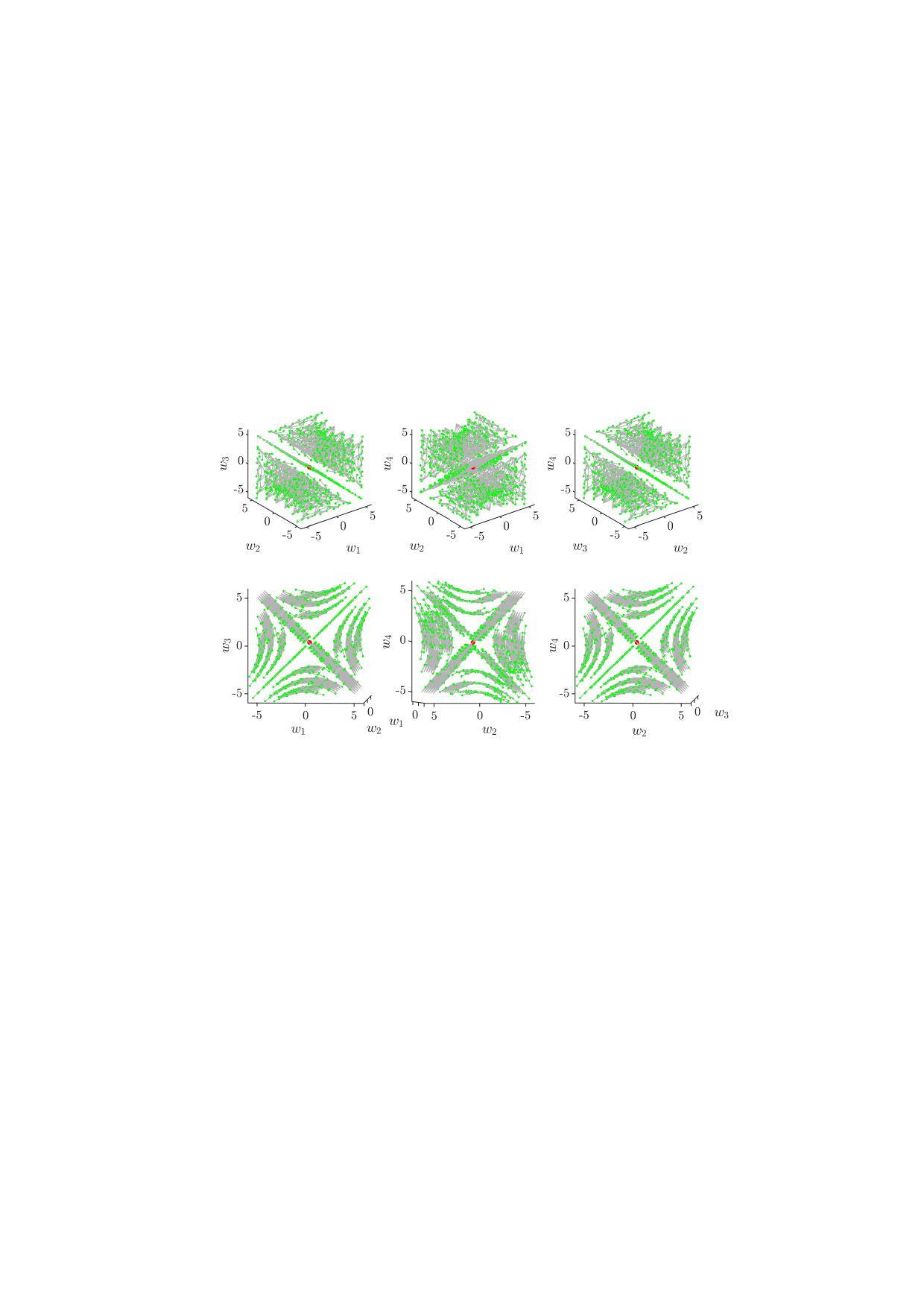}
}
\caption{Example of 1 hidden layer with 2 nodes. (a): 3D slices of the 4D parameter space. Green dots are stable equilibria; the red dot at the origin is a saddle point.  The two rows of panels show the same slices, but from different angles.}
\label{fig:2hidden}
\end{figure}

Notice that as the number of hidden nodes increases, the number of conservation laws quickly outgrows the dimension of the state space: if $ d_1>3$, then the number of conservation laws is $ d_1(d_1+1)/2 $ which, when $ d_x =d_y =1 $, is larger than $ d_xd_1+d_1 d_y = 2 d_1$. Obviously this means that the conservation laws are not all independent, as can be verified by computing $ \dim(\Delta_\pi (\bm{w}))$. 

\section{Extensions and related topics}
\label{sec:extensions}
In this section we briefly review a few extra topics that we do not discuss in detail in the paper, in some cases because they are still beyond reach, and in others because they would require a substantial extra amount of notation and/or technical tools to be introduced, hampering the readability of the paper. 

\paragraph{Discrete gradient descent.}
As mentioned in Section~\ref{sec:grad-flow-basic}, the gradient flow is the limit of a gradient descent algorithm for the step size $ \eta $ that goes to $0$.
For finite $ \eta$, in our block-shift matrix formulation, a gradient descent is just a discretization of \eqref{eq:grad-flowA} and can be expressed as 
\beq
A(k+1) = A(k) - \eta \nabla_{A(k)} \mathcal{L} =  A(k) - \eta \sum_{j=1}^h \left(A(k)^{h-j}\right)^\top \left( E - A(k)^h \right) \left( A(k)^{j-1}
 \right) ^\top.
\label{eq:grad-desc-A}
\eeq
It is straightforward to show that also the gradient descent expression \eqref{eq:grad-desc-A} obeys to properties similar to Proposition~\ref{prop:A-block-struct}: $A(k) \in \mathcal{A}$ is nilpotent $ \forall \, k$, and the gradient descent \eqref{eq:grad-desc-A} is isospectral. 
However, the conservation laws no longer hold exactly, and convergence properties are more delicate to investigate \cite{arora2018convergence}.

\paragraph{Beyond quadratic loss.}
The quadratic loss function is by far the most studied for deep linear networks. However, there exist important results also for other types of losses. In \cite{laurent2018deep}, it is shown how bad local minima are absent for convex differentiable loss functions given that the hidden layers are at least as wide as either the input {\em or} output layers (a more general case than our Assumption~\ref{ass:size}). The fact that bad local minima do not appear in the case of quadratic loss, regardless of the width of the hidden layers, is explained from a geometric point of view in \cite{trager2019pure}.
A few works derive results for the dynamics using general convex loss functions, e.g., \cite{jacot2021saddle,min2023convergence,tu2024mixed}, but do not explicitly treat other loss functions, such as logistic or cross-entropy loss.

\paragraph{Underdetermined problems.}
Most papers treating deep linear networks with quadratic loss deal with overdetermined problems, i.e., the setting where the data matrices have full rank, $\nu \geq d_x$ as in Assumption~\ref{ass:size}.
One well-studied framework that includes linear regression as a special case and allows for formulating underdetemined problems is matrix sensing \cite{arora2019implicit,gunasekar2017implicit,li2020towards}. In matrix sensing, a deep linear network is trained to minimize
\beq
    \mathcal{L}_{\rm ms}(W_1,...,W_h) = \frac{1}{2} \sum_{i=1}^\nu (\langle B_i, W_{1:h} \rangle_F - y_i)^2 \label{eq:matrix-sensing-loss}
\eeq
where $B_i\in \mathbb{R}^{d_y \times d_x}$ are measurement matrices providing measurements $y_i = \langle B_i, W \rangle_F $ of a target matrix $W \in \mathbb{R}^{d_y \times d_x} $.
By taking measurement matrices $ B_i = \mathbf{e}_k \mathbf{x}_\ell^\top $ for $k=1,...,d_y$, $\ell=1,...,\nu$, the quadratic loss \eqref{eq:loss} is recovered.
An important application is matrix completion, where the measurement matrices are $B_i = \mathbb{E}_{k,\ell}$, i.e., each measurement gives information on a single entry of the target $ y_i = \langle \mathbb{E}_{k,\ell}, W \rangle_F = [W]_{k\ell} $.
When only a few measurements are available, the matrix sensing problem is underdetermined, as an infinite number of product matrices $W_{1:h}$ minimizes $ \mathcal{L}_{\rm ms} $.
Interestingly, the overparameterization seems to induce a bias towards low rank (``greedy low rank'') solutions, sometimes coinciding with the minimum nuclear norm solution, although the latter has proven to give an incomplete picture in a general setting \cite{arora2019implicit,li2020towards}.
More specifically, given few measurements of a low-rank target $W$, an overparameterized network with $h\geq 2$ (and small initialization) can recover a low-rank matrix $W_{1:h}$ minimizing the loss \eqref{eq:matrix-sensing-loss}, whereas networks with $h=1$ (i.e., no hidden layers) typically recover a matrix $W_1$ with higher rank than $W$ \cite{arora2019implicit}.

\paragraph{Bottlenecked architectures.}
When some hidden layers have a smaller width than the output layer, $d_{\min} := \min_i d_i < d_y $, the expressivity of the network is restricted to maps of $\rank(A^h) \leq d_{\min} < d_y $. Most important results still hold for these networks: the gradient flow equations and its conservation laws $J\dot{Q}=0$ are unchanged; Proposition~\ref{prop:dotL} still guarantees convergence to a critical point; the results of Proposition~\ref{prop:critical-point-lit} are mostly unchanged except that obviously $r\leq d_{\min}$; and the description of the loss landscape in Proposition~\ref{prop:class-crit-points} is only affected in that the global minimum now corresponds to the signature $\mathcal{S}=\{1,...,d_{\min}\}$, and critical points for which $\rank(A^h) = d_{\min}$ and $\mathcal{S}\neq \{1,...,d_{\min}\}$ are {\em strict} saddle points. 
Importantly, no new {\em non-strict} saddle points are introduced due to the bottleneck, see \cite{achour2021loss} for details. Under generic initializations the network still converges to a global optimum, not a suboptimal rank-constrained solution. One can note that the results on convergence rates in Section~\ref{sec:conv-rate} rely on the assumption $d_{\min} \geq d_x,d_y$. As discussed above, when the loss is other than quadratic, bottlenecks can introduce local minima.

\paragraph{Skip connections.}
Skip (or residual) connections correspond to a layer $k$ being connected not only to the layer $ k+1$ but also to a layer $ k+j$ for $ j>1$. 
When they appear, they alter the ``homogeneity'' of the input-output function \eqref{eq:prodW}, which, in the case of single skip connection $ k \to k+j $ (for simplicity, between layers with identical dimensions $ d_k = d_{k+j}$, and using a skip connection matrix equal to the identity), becomes $ f(X)= (W_{1:h}+ W_{k+j+1:h} W_{1:k}) X$.
This can be accommodated in our matrix representation in block-shift form by expanding the weight matrices
\[
    \tilde{W}_{k+j-1} = \begin{bmatrix}
        W_{k+j-1} & I_{d_k}
    \end{bmatrix}, \,
    \tilde{W}_{k+j-2} = \begin{bmatrix}
        W_{k+j-2} & 0 \\ 0 & I_{d_k}
    \end{bmatrix}, \, \ldots, \,
    \tilde{W}_{k+2} = \begin{bmatrix}
        W_{k+2} & 0 \\ 0 & I_{d_k}
    \end{bmatrix}, \,
    \tilde{W}_{k+1} = \begin{bmatrix}
        W_{k+1} \\ I_{d_k}
    \end{bmatrix}
\]
so that $\tilde{W}_{k+1:k+j-1} = W_{k+1:k+j-1} + I$. The adjacency matrix is
\[
A=\block_1 (W_1, \ldots, W_k, \tilde{W}_{k+1}, \ldots, \tilde{W}_{k+j-1}, W_{k+j}, \ldots, W_h),
\]
and $A^h = \block_h(W_{1:h}+ W_{k+j+1:h} W_{1:k})$. This formulation requires inserting $d_k(j-1)$ artificial nodes, but the weights connecting these nodes are all equal to 1.
The gradient flow dynamics becomes however more complex to express than \eqref{eq:grad-flowA}.
In fact, maintaining constant the edges in the skip connections requires to constrain the gradient flow, for instance via
\[
\dot{A} = A_W \circ \left(\sum_{j=1}^h \left(A^{h-j}\right)^\top \left( E - A^h \right) \left( A^{j-1}
 \right) ^\top \right),
 \]
where $A_W$ is 1 at the positions with learnable weights and 0 otherwise.

\paragraph{Linear vs nonlinear networks.}
The models we consider in this manuscript lack activation functions and other ``layers'' typically seen in modern deep neural networks, such as normalization, pooling, depth concatenations, residual connections, softmax, etc. 
Dealing with these extra ingredients complicates the picture considerably. 
When only standard activation functions such as ReLUs are added, then the topology of the deep neural network is still characterizable in term of $A$, which plays the role of adjacency matrix of the network, but the input-output function becomes dependent on the states of the hidden nodes. In particular, this impacts directly the gradient dynamics, which no longer can be expressed in a compact form, but varies with the batch input data.

\section{Conclusion}
\label{sec:conclusion}
The gradient flow of a deep linear neural network provides a very useful and instructive setup for understanding the training process of a ``true'' deep neural network. 
In particular, the existence of a multitude of degenerate critical points (also of continuous submanifolds of such critical points) is known to play a role also in presence of activation functions, facilitating the practical convergence of the algorithms on one hand, but hampering the understanding of fundamental principles leading to generalization on the other hand.

Even restricting to deep linear neural networks, several aspects remain to be clarified. For instance, we still need to show rigorously under what conditions near the origin the learning is sequential and occurs from the largest to the smallest singular value of the data. 
Also the value of the implicit regularization arguments put forward so far remains to be clarified. In our understanding, they hold only for a specific area of the state space, which puts into question their universal value as generalization principles.
One would like to understand whether there is any equivalent property away from the origin, and if not, why. 
Furthermore, it remains to be understood whether any insight into generalizability obtained on the linear case may transfer to the nonlinear case.

These aspects notwithstanding, we believe that the gradient flow equations of deep linear neural networks, as we have formulated in this paper, provide a novel class of matrix differential equations, which is interesting and elegant, and which we hope will attract the attention of the dynamical systems community, regardless of any specific interest in deep learning.

\appendix

\section{Appendix}

\subsection{Table of symbols}
\label{sec:symb-table}

In Table~\ref{tab:symb-table} we collect the main notation used in the paper. See also Section~\ref{sec:notation} for further basic notation.

\begin{table}[h!]
    \centering
    \caption{Table of symbols and notation.}
    \bgroup
    \def\arraystretch{1.14}
    \begin{tabular}{ll}
         \hline Notation & Description \\ \hline
         $ d_i $ & Width of layer $i$ \\
         $ d_x$ & Dimension of input $d_x = d_0 $ \\
         $ d_y$ & Dimension of output $d_y = d_h $ \\
         $ \nu $ & Number of data \\
         $ n_v $ & Number of network nodes \\
         $ W_i $ & Weight matrix $ W_i \in \mathbb{R}^{d_i \times d_{i-1}}$ \\
         $W_{j:k}$ & Weight matrix product $W_k W_{k-1}\cdots W_j$ \\
         $\Sigma $ & Matrix of data singular values \\
         $\Xi$ & Target-prediction difference $\Xi = \Sigma - W_{1:h} $ \\
         $\mathcal{L}$ & Quadratic loss function \\
         $ C_i $ & Quantities conserved by gradient flow $ C_i = W_iW_i^\top - W_{i+1}W_{i+1}^\top $ \\
         $A$ & Block-shift adjacency matrix,  $A = \block_1(W_1,...,W_h) \in \mathbb{R}^{n_v \times n_v}$ \\
         $\mathcal{A}$ & Set of block-shift adjacency matrices $A$ \\
         $ E $ & Singular value matrix in $ E = \block_h(\Sigma) \in \mathbb{R}^{n_v \times n_v}$ \\
         $\mathfrak{h}(A,V)$ & Hessian quadratic form (at $A$, along a direction $V\in\mathcal{A}$) \\
         $\rho_k(A,V)$ & Sum of $A,V$ products: $\sum_{k {\rm \, factors \,} V}^{k_1+...+k_h = h-k} A^{k_1}VA^{k_2}\cdots A^{k_{h-1}} V A^{k_h}$ \\
         $\phi(A)$ & Power operator $\phi(A) = A^h $ \\
         $ \mathcal{A}_{\phi} $ & Set of matrices $\{\phi(A) = A^h : A\in \mathcal{A} \}$ \\
         $\Phi$ & An element in $\mathcal{A}_{\phi}$ \\
         $A_1 \sim A_2$ & Equivalence relation $A_1,A_2\in \mathcal{A}$ s.t. $ A_1^h = A_2^h $\\
         $\mathcal{P}$ & Set of changes of basis for block-shift adjacency matrices \\
         $\mathcal{P_O}$ & $P\in \mathcal{P}$ with $ P$ orthogonal \\
         $ \mathcal{M}_c $ & Invariant set of $\dot{\mathcal{L}}$ with $\mathcal{L}=c$ \\
         $\mathcal{A}^{\rm diag} $ & Subset of $\mathcal{A}$ with all $W_i$ diagonal \\
         $\Delta$ & Distribution (in differential geometry sense)\\
         $\perp$ & Orthogonality relation: $u\perp v \implies u^\top v = 0$
    \end{tabular}
    \egroup
    \label{tab:symb-table}
\end{table}

\subsection{Reformulation of the loss function}
\label{app:loss-func}

Here, following \cite{chitour2018geometric}, we provide the details on how to reformulate the loss function \eqref{eq:orig_loss} as in \eqref{eq:loss}. Let the weight matrices of the linear neural network be $\bar{W}_1,...,\bar{W}_h$, so that $f(X)=\bar{W}_h\cdots \bar{W}_1X$. We take a singular value decomposition of the inputs
\begin{equation}
    X=U_X\Sigma_X V_X^\top, \quad V_X=\begin{bmatrix}
        V_X^1 & V_X^2
    \end{bmatrix}, \quad \Sigma_X=\begin{bmatrix}
        \Lambda_X & 0
    \end{bmatrix}
\end{equation}
where the columns of $V_X^1$ are the $d_x$ first singular vectors of $V_X$, associated to the singular values $\Lambda_X $, and write \eqref{eq:orig_loss} as
\begin{equation}
    \lVert Y-\bar{W}_h\cdots \bar{W}_1X\rVert_F^2 = \lVert Y-\bar{W}_h\cdots \bar{W}_1U_X\Sigma_X V_X^\top\rVert_F^2 
    \label{eq:loss-reformul1}
\end{equation}
Recall that for any matrix $B$, and for any $U$ unitary we have $\lVert U B \rVert_F^2=\text{trace}(B^*U^*UB)=\lVert B U\rVert_F^2=\text{trace}(U^*B^*BU)=\lVert B\rVert_F^2$. We have that \eqref{eq:loss-reformul1} is equal to
\begin{equation*}
    \lVert YV_X^1-\bar{W}_h\cdots \bar{W}_1U_X \Lambda_X  \rVert_F^2 + \lVert YV_X^2 \rVert_F^2.
\end{equation*}
Taking a SVD of $YV_X^1=U_Y\Sigma V_Y^\top$ we can express the loss as
\begin{equation}\label{eq:sep_loss}
    \lVert U_Y\Sigma V_Y^\top -\bar{W}_h\cdots \bar{W}_1U_X \Lambda_X  \rVert_F^2 + \lVert YV_X^2 \rVert_F^2 = \lVert \Sigma - U_Y^\top\bar{W}_h\cdots \bar{W}_1U_X \Lambda_X V_Y \rVert_F^2 + \lVert YV_X^2 \rVert_F^2.
\end{equation}
By renaming the weight matrices $W_h=U_Y^\top\bar{W}_h$, $W_1=\bar{W}_1U_X \Lambda_X V_Y$, and $W_i=\bar{W}_i$ for $i=2,...,h-1$, and dropping the term $\lVert YV_X^2 \rVert_F^2$ which is never changed by any $W_i$ we have the loss function \eqref{eq:loss}.

\subsection{Boundedness of the trajectories}
\label{app:bounded}
The following Lemma reformulates Lemma 2.4 of \cite{chitour2018geometric} for the matrix $A \in \mathcal{A}$ instead of the weight matrices $W_j$. It is needed to prove boundedness of the gradient flow in Proposition~\ref{prop:dotL}.

\begin{lemma}
    \label{lem:bounded}
    There exists a positive constant $\gamma\in(0,1)$ only depending on $h$ and the dimensions of the problem, such that for all $A\in \mathcal{A}$, there exist two polynomials $p$ and $q$ of degree $\leq h-1$, with nonnegative coefficients only depending on $h$ and $C_j=W_{j+1}^\top  W_{j+1}-W_{j}W_{j}^\top $, $j=1,...,h-1$, such that
\begin{equation}
	\gamma \lVert A \rVert^{2h}_F - p\left(\lVert A \rVert^{2}_F\right) \leq \lVert A^h \rVert^2_F \leq \lVert A \rVert^{2h}_F + q\left(\lVert A \rVert^{2}_F\right)
\end{equation}
\end{lemma}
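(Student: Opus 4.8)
The plan is to exploit the block-shift structure of $A = \block_1(W_1,\ldots,W_h)$, for which $A^h = \block_h(W_{1:h})$, so that $\lVert A^h\rVert_F^2 = \lVert W_{1:h}\rVert_F^2$ and $\lVert A\rVert_F^2 = \sum_{i=1}^h \lVert W_i\rVert_F^2$. Thus the claimed double inequality is really a statement about the single matrix product $W_{1:h} = W_h\cdots W_1$ versus the sum of the squared Frobenius norms of the factors. The \emph{upper bound} is the easy direction: by submultiplicativity of the Frobenius norm (or of the spectral norm bounded by the Frobenius norm), $\lVert W_{1:h}\rVert_F^2 \le \prod_{i=1}^h \lVert W_i\rVert_F^2$, and by the AM--GM inequality $\prod_i \lVert W_i\rVert_F^2 \le \left(\frac{1}{h}\sum_i \lVert W_i\rVert_F^2\right)^h = h^{-h}\lVert A\rVert_F^{2h}$. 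Since $h^{-h}\le 1$, this already gives $\lVert A^h\rVert_F^2 \le \lVert A\rVert_F^{2h}$, so one may even take $q\equiv 0$ (or keep $q$ as a harmless nonnegative slack polynomial). I would state this cleanly and move on.

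The \emph{lower bound} is where the conservation laws enter, and it is the main obstacle. The point is that one needs a reverse inequality of the form $\prod_i \lVert W_i\rVert_F^2 \lesssim \lVert W_{1:h}\rVert_F^2 + (\text{lower order})$, which is false in general (e.g.\ $W_2 W_1 = 0$ with both $W_i\ne 0$) but becomes true once the differences $C_j = W_j W_j^\top - W_{j+1}^\top W_{j+1}$ are held fixed. The mechanism, following Lemma~2.4 of \cite{chitour2018geometric}, is that the conservation laws force the singular values of consecutive layers to differ in a controlled way: from $W_j W_j^\top = W_{j+1}^\top W_{j+1} + C_j$ and Weyl's inequality, $\sigma_k(W_j)^2 \ge \sigma_k(W_{j+1})^2 - \lVert C_j\rVert$ (with the appropriate index shift for rectangular blocks), and iterating, each $\lVert W_j\rVert_F^2$ is sandwiched between $\lVert W_h\rVert_F^2$ and $\lVert W_h\rVert_F^2$ plus a polynomial in the $\lVert C_\ell\rVert$'s. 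Hence all the $\lVert W_i\rVert_F^2$ are, up to additive constants depending only on $h$ and the $C_j$, comparable to a single quantity, say $\lVert W_h\rVert_F^2$, and likewise $\lVert A\rVert_F^2 \asymp h\lVert W_h\rVert_F^2 + (\text{const})$. Separately, the largest singular value of the product satisfies $\sigma_1(W_{1:h})^2 = \prod_i \sigma_1(W_i)^2$ (this holds exactly when the singular directions line up; in general one uses that under the conservation-law constraints the top singular spaces of consecutive layers are forced to overlap enough — this is the delicate quantitative step), so $\lVert W_{1:h}\rVert_F^2 \ge \sigma_1(W_{1:h})^2$ grows like $\lVert W_h\rVert_F^{2h}$ whenever $\lVert A\rVert_F$ is large, with the deficit absorbed into a degree-$\le h-1$ polynomial $p(\lVert A\rVert_F^2)$.

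Concretely, the steps I would carry out, in order: (i) reduce to the product statement via the block-shift identities; (ii) dispatch the upper bound by submultiplicativity and AM--GM; (iii) use Weyl's inequality on each conservation law $C_j$ to bound $\sigma_k(W_j)$ below by $\sigma_k(W_{j+1})$ minus a constant, and iterate to get that every $\lVert W_i\rVert_F^2$ — and hence $\lVert A\rVert_F^2/h$ — is within a degree-$\le h-1$ polynomial (in $\lVert A\rVert_F^2$, with $C_j$-dependent coefficients) of the largest among them, call it $m := \max_i \sigma_1(W_i)^2$; (iv) bound $\lVert W_{1:h}\rVert_F^2 \ge \sigma_1(W_{1:h})^2$ from below, using the conservation laws to control how much the singular subspaces can rotate between layers, so that $\sigma_1(W_{1:h})^2 \ge m^h - (\text{polynomial of degree} \le h-1 \text{ in } m)$; (v) combine (iii) and (iv): since $\lVert A\rVert_F^{2h}$ and $m^h$ agree to leading order, choose $\gamma\in(0,1)$ small enough (depending only on $h$ and the dimensions) so that $\gamma\lVert A\rVert_F^{2h}$ is dominated by $m^h$ minus lower-order terms, collecting all the remaining lower-order contributions into the single polynomial $p$. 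The genuinely hard part is step (iv): controlling the alignment of singular subspaces of consecutive factors under only the constraint that $C_j$ is fixed, i.e.\ making rigorous the claim that the product cannot collapse in norm without violating a conservation law; this is exactly the content that \cite{chitour2018geometric} establishes, and I would either reproduce that argument adapted to the $A$-notation or cite it directly.
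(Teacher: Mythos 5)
Your upper bound is clean and correct, and is in fact a genuine simplification of what the paper does: submultiplicativity of the Frobenius norm gives $\lVert W_{1:h}\rVert_F^2 \leq \prod_i \lVert W_i\rVert_F^2$, and AM--GM then gives $\prod_i \lVert W_i\rVert_F^2 \leq h^{-h}\lVert A\rVert_F^{2h} \leq \lVert A\rVert_F^{2h}$, so $q\equiv 0$ works. The paper derives its upper bound as a by-product of the same iterated-substitution machinery it uses for the lower bound, so on this half your route is shorter.

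The lower bound is where the content of the lemma lives, and there your proposal has a genuine gap. The paper never argues about singular-vector geometry. Its proof is purely algebraic: starting from $\lVert A^h\rVert_F^2 = \tr\bigl(W_{2:h}(W_1W_1^\top)W_{2:h}^\top\bigr)$, it substitutes the conservation law $W_jW_j^\top = W_{j+1}^\top W_{j+1} - C_j$, uses the identity $W_{j+1}(W_{j+1}^\top W_{j+1})^{j}W_{j+1}^\top = (W_{j+1}W_{j+1}^\top)^{j+1}$ to raise the power and drop an inner factor, and collects all cross terms involving $C_j$ into a polynomial error of degree $\leq h-1$ in $\lVert A\rVert_F^2$. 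After $h-1$ iterations one lands on $\tr\bigl((W_hW_h^\top)^h\bigr)$, which is compared to $(\tr(W_hW_h^\top))^h$ via the elementary trace inequality $\gamma(\tr B)^\ell \leq \tr(B^\ell)$ for $B\succeq 0$ (Lemma~\ref{lem:tracebound}), and then $\tr(W_hW_h^\top)$ is rewritten as $(\lVert A\rVert_F^2 - \tilde c)/h$ using the \emph{exact} identity $\lVert W_j\rVert_F^2 - \lVert W_{j+1}\rVert_F^2 = -\tr(C_j)$. No alignment of singular subspaces is ever invoked.

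Your step~(iv) is precisely the hard quantitative claim the lemma encapsulates --- that fixing the $C_j$ prevents the singular subspaces of consecutive layers from decoupling --- and you flag it yourself as ``the delicate quantitative step'' and propose to either reproduce the argument of \cite{chitour2018geometric} or cite it directly. But the lemma you are asked to prove \emph{is} the $A$-notation reformulation of Lemma~2.4 of that reference, so citing it outright is circular in spirit, and reproducing it would not give your step~(iv): that reference proves the bound by the algebraic trace route sketched above, not by a singular-subspace argument. To carry your plan through you would need to independently establish the inequality $\sigma_1(W_{1:h})^2 \geq m^h - (\text{poly of degree} \leq h-1 \text{ in } m)$ under fixed $C_j$, which is not easier than the trace argument you are trying to sidestep. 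Two smaller inaccuracies: in step~(iii) you slide between $\lVert W_i\rVert_F^2$ and $\sigma_1(W_i)^2$, which are comparable only up to a dimension factor; and the claim that all $\lVert W_j\rVert_F^2$ agree up to a $C$-dependent constant is an exact trace identity, not a Weyl-type estimate, and you should use it as such.
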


\begin{proof}
    Recall the conservation law from Proposition~\ref{prop:conserv-law-A}, and note the following relation
\begin{align}
	\tr{(W_{j+1:h}(W_jW_j^\top )^jW_{j+1:h}^\top )} &= \tr{(W_{j+1:h}(W_{j+1}^\top W_{j+1}-C_j)^jW_{j+1:h}^\top )} \nonumber \\
	&= \tr{(W_{j+2:h}(W_{j+1}W_{j+1}^\top )^{j+1}W_{j+2:h}^\top )} + \tr{(W_{j+1:h} K_j W_{j+1:h}^\top )}. \label{eq:tr_split}
\end{align}
Here, $K_j=(W_{j+1}^\top W_{j+1}-C_j)^j-(W_{j+1}^\top W_{j+1})^{j}$, and expanding the first parenthesis we see that $ K_j $ contains terms with factors of $C_j=C_j^\top $ and $W_{j+1}^\top W_{j+1}$ interlaced, and the latter of power at most $j-1$. Consider a general term of $ K_j$, $W_{j+1}^\top \cdots W_{j+1}C_j W_{j+1}^\top \cdots W_{j+1}C_j^\ell W_{j+1}^\top \cdots W_{j+1}C_j^k$, and note that we can make pairings such that we get positive semidefinite factors $(C_j^\top W_{j+1}^\top \cdots W_{j+1}C_j)$, or even powers of $C_j$:s, except possibly for a single factor $C_j$. We can move the $C_j$:s outside the trace by permuting the factors and applying the identity $\lvert \tr{(BC)}\rvert \leq \lVert C\rVert_F\tr{(B)}$ for $B\succeq0$, and submultiplicativity of the trace for positive semidefinite matrices. There will be at most $j-1$ factors $W_{j+1}^\top W_{j+1}$ in each term of $K_j$ which are bounded by 
\begin{equation*}
	\tr{(W_{j+1:h}W_{j+1:h}^\top )}\tr{((W_{j+1}^\top W_{j+1})^{j-1})} \leq \lVert W_{j+1:h}\rVert^{2}_F \lVert W_{j+1}\rVert^{2(j-1)}_F \leq \lVert A \rVert^{2(h-j)}_F\lVert A \rVert^{2(j-1)}_F = \lVert A \rVert^{2(h-1)}_F,
\end{equation*}
meaning that all the terms stemming from $K_j$ in \eqref{eq:tr_split} can be lower and upper bounded by polynomials in $ \lVert A \rVert^{2}_F$ with nonnegative coefficients depending on $C_j$, and of degree $\leq h-1$. In the previous formula, we have also used the fact that $\lVert A \rVert^{2k}_F \geq \lVert A^k \rVert^2_F \geq \lVert W_{\ell:k+\ell-1} \rVert^2_F$, see \eqref{eq:Apowerk}, $k=1,...,h$, i.e., $\lVert A \rVert^{2k}_F$ upper bounds the norm of any product of $k$ consecutive factors $W_\ell\cdots W_{k+\ell-1}$. The high order term in \eqref{eq:tr_split} is on the same form as the term on the left hand side. We can again substitute $W_{j+1}W_{j+1}^\top =W_{j+2}^\top W_{j+2}-C_{j+1}$, and bound the lower order terms with polynomials in $\lVert A \rVert^{2}_F$ of degree $\leq h-1$. Starting from $\lVert A^h \rVert^2_F = \tr{(W_h\cdots W_1W_1^\top \cdots W_h^\top )}$ and iterating gives
\begin{equation*}
	\tr{((W_hW_h^\top )^h)} -\tilde{p}\left(\lVert A \rVert^{2}_F\right) \leq \lVert A^h \rVert^2_F \leq  \tr{((W_hW_h^\top )^h)} + \tilde{q}\left(\lVert A \rVert^{2}_F\right).
\end{equation*}
Applying Lemma~\ref{lem:tracebound}, along with 
\begin{equation*}
	\lVert A \rVert^{2}_F = \sum_{j=1}^{h} \tr{(W_jW_j^\top )} = h\tr{(W_hW_h^\top )} + \underbrace{\sum_{j=1}^{h} c_{jh}}_{=\tilde{c}} \implies \tr{(W_hW_h^\top )} = \frac{ \lVert A \rVert^{2}_F - \tilde{c} }{h}
\end{equation*}
we get
\begin{equation*}
	\gamma\lVert A \rVert^{2h}_F - p\left(\lVert A \rVert^{2}_F\right) \leq \lVert A^h \rVert^2_F \leq  \lVert A \rVert^{2h}_F + q\left(\lVert A \rVert^{2}_F\right),
\end{equation*}
where the terms with $\lVert A \rVert^{2}_F$ of degree $\leq h-1$ are bounded by $p$ and $q$.
\end{proof}

\begin{lemma}
    \label{lem:tracebound}
    For any $k\times k$ matrix $B\succeq 0$, for some $\gamma\in(0,1)$ that depends only on $k$ and $\ell\in(1,\infty)$, it holds that
\begin{equation*}
	\gamma(\tr{(B)})^\ell \leq \tr{(B^\ell)}\leq \tr{(B)}^\ell.
\end{equation*}
\end{lemma}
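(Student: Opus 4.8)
The plan is to diagonalize $B$ and reduce both inequalities to elementary facts about its eigenvalues. Since $B \succeq 0$ it is Hermitian with real nonnegative eigenvalues $\lambda_1, \ldots, \lambda_k \geq 0$, and $\tr(B) = \sum_{i=1}^k \lambda_i$, $\tr(B^\ell) = \sum_{i=1}^k \lambda_i^\ell$. Thus the claim is equivalent to the purely scalar statement that, for every $\bm{\lambda} \in \mathbb{R}_{\geq 0}^k$,
\[
\gamma \Bigl(\sum_{i=1}^k \lambda_i\Bigr)^{\!\ell} \;\leq\; \sum_{i=1}^k \lambda_i^\ell \;\leq\; \Bigl(\sum_{i=1}^k \lambda_i\Bigr)^{\!\ell},
\]
for a suitable $\gamma \in (0,1)$ depending only on $k$ and $\ell$.

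For the upper bound I would invoke monotonicity of the $\ell_p$-norms on $\mathbb{R}^k$: for $\ell \geq 1$ one has $\lVert \bm{\lambda} \rVert_\ell \leq \lVert \bm{\lambda} \rVert_1$, i.e.\ $\bigl(\sum_i \lambda_i^\ell\bigr)^{1/\ell} \leq \sum_i \lambda_i$; raising to the power $\ell$ gives $\tr(B^\ell) \leq \tr(B)^\ell$. (Equivalently this is submultiplicativity/superadditivity of $t \mapsto t^\ell$ on $[0,\infty)$.)

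For the lower bound I would first use homogeneity — both sides are homogeneous of degree $\ell$ in $\bm{\lambda}$ — to reduce to the simplex $\Delta = \{\bm{\lambda} \in \mathbb{R}_{\geq 0}^k : \sum_i \lambda_i = 1\}$, on which $\bm{\lambda} \mapsto \sum_i \lambda_i^\ell$ is continuous and strictly positive, hence attains a positive minimum $\gamma$ over the compact set $\Delta$. To make $\gamma$ explicit I would instead apply Jensen's inequality to the convex function $t \mapsto t^\ell$ ($\ell > 1$): $\tfrac1k \sum_i \lambda_i^\ell \geq \bigl(\tfrac1k \sum_i \lambda_i\bigr)^\ell$, which on $\Delta$ yields $\sum_i \lambda_i^\ell \geq k^{1-\ell}$, so $\gamma = k^{1-\ell}$ works; this constant lies in $(0,1)$ for $k \geq 2$, $\ell > 1$, and for the trivial case $k = 1$ the two outer quantities coincide so any $\gamma \in (0,1)$ suffices. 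In every case $\gamma$ depends only on $k$ and $\ell$, as required, and applying this with $B = W_h W_h^\top$ and $\ell = h$ is exactly what is needed at the end of the proof of Lemma~\ref{lem:bounded}.

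There is no serious obstacle here; the only points requiring care are that the constant $\gamma$ must be uniform in $B$ — which is precisely why one normalizes to $\Delta$ (or uses Jensen) rather than a bound that could degenerate as some $\lambda_i \to 0$ — and that $\gamma$ must be placed strictly inside $(0,1)$, which forces the minor case distinction at $k=1$.
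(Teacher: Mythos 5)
Your proof is correct and takes essentially the same route as the paper's: diagonalize, reduce to the eigenvalue inequality, and extract $\gamma = k^{1-\ell}$ from a power-mean bound (you phrase it as Jensen for $t\mapsto t^\ell$; the paper phrases it as H\"older with exponents $\ell$ and $\ell/(\ell-1)$, which is the dual formulation of the same inequality). The only cosmetic difference is in the upper bound, where you invoke monotonicity of $\ell_p$-norms at the eigenvalue level rather than the paper's ``submultiplicativity of trace for PSD matrices''; your version has the slight advantage of covering non-integer $\ell$ directly, as the lemma's $\ell\in(1,\infty)$ permits.
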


\begin{proof}
     The statement is trivial for $k=1$, let us therefore assume that $k>1$. The right inequality follows from the submultiplicativity of the trace for positive semidefinite matrices. Let $\lambda_i(B)\geq 0$, $i=1,...,k$ denote the eigenvalues of $B$. For the left inequality
\begin{equation}
	(\tr{(B)})^\ell = \left(\sum_{i=1}^k \lambda_i(B)\right)^\ell \leq k^{\ell-1}\sum_{i=1}^k \lambda_i^\ell(B) = k^{\ell-1} \sum_{i=1}^k \lambda_i(B^\ell) = k^{\ell-1} \tr{(B^\ell)}
\end{equation}
where we in the second step used the H{\"o}lder inequality
\begin{equation*}
	0\leq\sum_{i=1}^k \lambda_i(B)\cdot 1 \leq \left(\sum_{i=1}^k \lambda_i^\ell(B) \right)^{\frac{1}{\ell}}\left(\sum_{i=1}^k 1 \right)^{1-\frac{1}{\ell}} \implies \left(\sum_{i=1}^k \lambda_i^\ell(B)\right)^\ell \leq \left(\sum_{i=1}^k \lambda_i(B) \right) k^{\ell-1}.
\end{equation*}
\end{proof}

\subsection{A standard singular value decomposition for $A$}
\label{sec:svd}

The following proposition constructs a ``standard'' SVD for the adjacency matrix $ A = \block_1(W_1, \ldots , W_h)\in \mathcal{A} $ out of that of its individual blocks $ W_i $. 

\begin{proposition}
\label{prop:svdA}
Let $ A= \block_1(W_1, \ldots , W_h)\in \mathcal{A} $. The singular values of $A$ are given by the union of the singular values of each block $W_i$, with $n_v-q$ extra zeros, where $q=\sum_{i=1}^{h}\min\{d_i,d_{i-1}\}$.
The associated singular vectors of $A$ can also be obtained assembling the singular vectors of the blocks $ W_i$.
\end{proposition}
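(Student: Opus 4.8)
The plan is to promote the block-shift SVD of Proposition~\ref{prop:svdA2} to an honest orthogonal factorization and then to diagonalize it by a permutation of rows and columns. Write $W_i = U_i\Lambda_i V_i^\top$ for the SVD of each block ($U_i\in\mathbb{R}^{d_i\times d_i}$, $V_i\in\mathbb{R}^{d_{i-1}\times d_{i-1}}$ orthogonal, $\Lambda_i\in\mathbb{R}^{d_i\times d_{i-1}}$ rectangular-diagonal with $\sigma_1(W_i),\ldots,\sigma_{r_i}(W_i)$ on the main diagonal, $r_i:=\min\{d_i,d_{i-1}\}$), and set
\[
\widehat U = \block_0(I_{d_x},U_1,\ldots,U_h),\qquad \widehat V = \block_0(V_1,\ldots,V_h,I_{d_y}).
\]
Both are orthogonal, being block-diagonal with orthogonal blocks; the two identity blocks simply fill the layer-$0$ and layer-$h$ slots that the factors of Proposition~\ref{prop:svdA2} left as zero blocks, and since $\Lambda := \block_1(\Lambda_1,\ldots,\Lambda_h)$ has no nonzero block in its first block-row nor in its last block-column, inserting $I$ there in place of $0$ does not change the product. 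A direct block computation (exactly as in the proof of Proposition~\ref{prop:svdA2}) gives $\widehat U^\top A\widehat V = \Lambda$, since the only nonzero block of $A$, at block position $(i+1,i)$, contributes $U_i^\top W_i V_i = \Lambda_i$, while all other blocks vanish. Hence $A = \widehat U\,\Lambda\,\widehat V^\top$ with $\widehat U,\widehat V$ orthogonal, and it remains only to diagonalize $\Lambda$.

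The key observation is that the potentially nonzero entries of $\Lambda=\block_1(\Lambda_1,\ldots,\Lambda_h)$ occupy exactly $q=\sum_{i=1}^h\min\{d_i,d_{i-1}\}$ cells, no two of which lie in the same row or in the same column: inside the $(i+1,i)$-block the matrix $\Lambda_i$ is supported on the $r_i$ positions $(k,k)$ only, and distinct indices $i$ correspond to disjoint block-rows (layers $1,\ldots,h$) and disjoint block-columns (layers $0,\ldots,h-1$). A matrix all of whose nonzero entries lie in distinct rows and distinct columns can be brought to rectangular-diagonal form by permuting its rows and its columns; thus there exist $n_v\times n_v$ permutation matrices $\Pi_L,\Pi_R$ with $\Pi_L^\top\Lambda\,\Pi_R = D$, where $D$ is diagonal, its first $q$ diagonal entries being the concatenated block singular values $\sigma_1(W_1),\ldots,\sigma_{r_1}(W_1),\ldots,\sigma_1(W_h),\ldots,\sigma_{r_h}(W_h)$ and its remaining $n_v-q$ diagonal entries being $0$. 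Absorbing one further permutation that reorders these $q$ numbers in nonincreasing order puts $D$ in the standard SVD convention.

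Setting $\mathcal{U}=\widehat U\Pi_L$ and $\mathcal{V}=\widehat V\Pi_R$ (with the sorting permutation folded in), both are orthogonal as products of orthogonal matrices, and $A=\mathcal{U} D\mathcal{V}^\top$ is a standard singular value decomposition of $A$. By construction the singular values of $A$ are the union, with multiplicity, of the singular values of the $W_i$ together with $n_v-q$ extra zeros, and the columns of $\mathcal{U}$ (resp.\ $\mathcal{V}$) are, up to reordering, the columns of the $U_i$ (resp.\ $V_i$) together with the standard basis vectors of the layer-$0$ block (resp.\ layer-$h$ block), which account for those extra zero singular values; so the singular vectors of $A$ are assembled from those of the blocks, as claimed. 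The argument is essentially bookkeeping: the one point I expect to require care is the permutation step — verifying that the $q$ nonzero cells of $\Lambda$ genuinely occupy distinct rows and distinct columns, so that a single pair $(\Pi_L,\Pi_R)$ diagonalizes it — together with the padding that converts the (non-orthogonal) factors of Proposition~\ref{prop:svdA2} into the orthogonal $\widehat U,\widehat V$ without altering the product $A$.
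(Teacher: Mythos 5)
Your proof is correct and arrives at the same decomposition the paper constructs, but it organizes the argument in a genuinely cleaner two-step way. The paper builds the orthogonal factors $U$ and $V$ all at once: it splits each $U_i=\bigl[\,U_i^\sigma\;\;U_i^o\,\bigr]$ and $V_i=\bigl[\,V_i^\sigma\;\;V_i^o\,\bigr]$, arranges the $U_i^\sigma$ in a shifted block pattern, appends the $U_i^o$ and an extra block $U_{h+1}^o$ (and analogously for $V$), defines $\Omega$ with the $\Lambda_i$ on the diagonal padded by zeros, and then verifies $U\Omega V^\top=A$ by multiplying out. You instead first conjugate $A$ by the block-diagonal orthogonals $\widehat U,\widehat V$ to obtain the block-shift form $\Lambda$, and then diagonalize $\Lambda$ by a pair of permutations $\Pi_L,\Pi_R$. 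The combinatorial observation that makes your second step work --- the $q$ nonzero cells of $\Lambda$ lie in distinct rows and distinct columns, since each $\Lambda_i$ is rectangular-diagonal and distinct blocks occupy disjoint block-rows (layers $1,\ldots,h$) and disjoint block-columns (layers $0,\ldots,h-1$) --- is exactly what the paper encodes implicitly in its column bookkeeping, but stating it explicitly both shortens the verification and makes the count of $n_v-q$ extra zeros obvious. You also account for the sorting permutation needed to meet the standard SVD ordering convention, which the paper omits. The one thing the paper's direct construction buys is a closed-form expression for the final $U$ and $V$, which could be convenient if those matrices were needed in later computations, whereas yours leaves them implicit behind $\Pi_L,\Pi_R$.
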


\begin{proof}
Consider a real SVD of each block $W_i=U_i \Omega_iV_i^\top \in\mathbb{R}^{d_i\times d_{i-1}}$, $U_i\in\mathbb{R}^{d_i\times d_i}$, $\Omega_i\in\mathbb{R}^{d_i\times d_{i-i}}$, and $V_i\in\mathbb{R}^{d_{i-1}\times d_{i-1}}$. As $\Omega_i$ may be rectangular, we write $\Omega_i = \begin{bmatrix}
\Lambda_i & 0
\end{bmatrix}$ (or $\Omega_i = \begin{bmatrix}
\Lambda_i & 0
\end{bmatrix}^\top $) with $\Lambda_i$ diagonal. 
We divide the matrices into columns corresponding to singular values, and columns that have no contribution to $W_i$:
\begin{equation}
	U_i = \begin{bmatrix}
		U_i^\sigma & U_i^{o}
	\end{bmatrix}, \quad V_i = \begin{bmatrix}
		V_i^\sigma & V_i^{o}
	\end{bmatrix}
\end{equation}
with $U_i^\sigma \in\mathbb{R}^{d_i\times q_i}$, $U_i^{o}\in\mathbb{R}^{d_i\times p_i^u}$, $\Lambda_i\in\mathbb{R}^{q_i\times q_i}$, $V_i^\sigma \in\mathbb{R}^{d_{i-1}\times q_i}$, $V_i^{o}\in\mathbb{R}^{d_{i-1}\times p_i^v}$, where we have defined $q_i=\min\{d_i,d_{i-1}\}$, $p_i^u=d_i-q_i$ and $p_i^v=d_{i-1}-q_i$, accordingly, for $i=1,...,h$. Let also $q=\sum_{i=1}^h q_i$, $p^u=\sum_{i=1}^h p_i^u$, $p^v=\sum_{i=1}^h p_i^v$. Note that $W_i=U_i^\sigma \Lambda_iV_i^{\sigma \top}$, corresponding to a thin SVD. Now let
\begin{equation}
	U = 
	\left[
	\begin{array}{cccc|cccc|c}
		0 & & & & 0 & & & & U_{h+1} \\
		U_1^\sigma & 0 & & & U_{1}^o & 0 & & &0  \\
		0 & U_2^\sigma & \ddots & & 0 & U_{2}^o &\ddots  & & \vdots\\
		& \ddots & \ddots &0  & & \ddots & \ddots & 0 & \\
		& & 0 & U_h^\sigma & & & 0 & U_{h}^o & 0\\
	\end{array}
	\right] = \left[
	\begin{array}{c|c|c}
		U^\sigma & U^o & U_{h+1}^o \\
	\end{array}
	\right] \in\mathbb{R}^{n_v\times n_v}
    \label{eq:svd_U}
\end{equation}
where $U^\sigma\in\mathbb{R}^{n_v\times q}$, $U^o\in\mathbb{R}^{n_v\times p^u}$, and $U_{h+1}^o\in\mathbb{R}^{n_v\times d_x}$ is a tall matrix with a unitary block $U_{h+1}\in\mathbb{R}^{d_x\times d_x}$ at the top. Similarly,
\begin{equation}
	V = 
	\left[
	\begin{array}{cccc|cccc|c}
		V_1^{\sigma} & 0 & & & V_{1}^o & 0 & & & 0 \\
		0 & V_2^{\sigma} & \ddots & & 0 & V_{2}^o & \ddots & & \vdots\\
		& \ddots & \ddots & 0 & & \ddots & \ddots & 0 & \\
		& & & V_{h}^{\sigma} & & & & V_{h}^{o} & 0 \\
		& & & 0 & & & & 0 & V_{h+1}^{\sigma} \\
	\end{array}
	\right] = \left[
	\begin{array}{c|c|c}
		V^\sigma & V^o & V_{h+1}^o \\
	\end{array}
	\right] \in\mathbb{R}^{n_v\times n_v}
    \label{eq:svd_V}
\end{equation}
with  $V^\sigma\in\mathbb{R}^{n_v\times q}$, $V^o\in\mathbb{R}^{n_v\times p^v}$, $V_{h+1}^o\in\mathbb{R}^{n_v\times d_y}$, and $V_{h+1}\in\mathbb{R}^{d_y\times d_y}$. Define the diagonal matrix
\begin{equation}\label{eq:svddiag}
	\Omega = \begin{bmatrix}
		\Lambda_1 & & & & \\
		& \Lambda_2 & & & \\
		& & \ddots & & \\
		& & & \Lambda_h & \\
		& & & & 0 \\
	\end{bmatrix} = \begin{bmatrix}
	\Lambda & 0 \\
	0 & 0
	\end{bmatrix} \in\mathbb{R}^{n_v\times n_v}
\end{equation}
where the diagonal block $\Lambda$ is $q\times q$, and the last $n_v-q$ rows and columns of $\Omega$ are zeros. This means that $U\Omega V^\top  = U^\sigma \Lambda V^{\sigma \top}$, hence
\begin{align}
	U\Omega V^\top  & = \begin{bmatrix}
		0 & & & \\
		U_1^\sigma & 0 & & \\
		0 & U_2^\sigma & \ddots & \\
		& \ddots & \ddots & 0\\
		& & 0 & U_h^\sigma \\
	\end{bmatrix}
	\begin{bmatrix}
		\Lambda_1 & & & \\
		& \Lambda_2 & & \\
		& & \ddots & \\
		& & & \Lambda_h \\
	\end{bmatrix}
	\begin{bmatrix}
		V_1^{\sigma \top} & 0 & & &  \\
		0 & V_2^{\sigma \top} & \ddots & & \\
		& \ddots & \ddots & 0 & \\
		& & 0 & V_h^{\sigma \top} & 0 \\
	\end{bmatrix} \nonumber \\
	& =\begin{bmatrix}
		0 & & & & \\
		U_1^\sigma \Lambda_1 V_1^{\sigma \top} & 0 & & & \\
		0 & U_2^\sigma \Lambda_2 V_2^{\sigma \top} & \ddots & & \\
		& \ddots & \ddots & & \\
		& & 0 & U_h^\sigma \Lambda_h V_h^{\sigma \top} & 0 \\
	\end{bmatrix}
\end{align}
As noted before, $W_i=U_i^\sigma \Lambda_iV_i^{\sigma \top}$, so $A=U\Omega V^\top $. $\Omega$ is diagonal by construction, and each column in $U$ is orthogonal to all other columns in $U$ (this follows from each $U_i$ being orthogonal, and columns from different $U_i$:s in $U$ having no nonzero elements in the same position), this of course holds also for $V$. Thus, $A=U\Omega V^\top $ is a singular value decomposition of $A$, and it has the same singular values as the blocks, with $n_v-q$ extra zeros, as seen from \eqref{eq:svddiag}.
\end{proof}

\subsection{A sufficient condition for equivalence of adjacency matrices}

The following proposition provides a sufficient condition for $ A_1 \sim A_2 $ which is slightly more general than the one given in Proposition~\ref{prop:equival-classes-A}. 
\begin{proposition}
\label{prop:equival-classes-power}
Consider $ A_i = \block_1 ( W_1^{(i)}, \, \ldots , \, W_h^{(i)})\in \mathcal{A}$, $i=1,\, 2$.
Then $ A_1 \sim A_2 $ if $ A_2 = P (A_1 + Z ) P^{-1} $, where $ P \in \mathcal{P} $ and $ Z =\block_1(Z_1, \ldots, Z_h ) \in \mathcal{A} $ s.t. $ Z_j \in \mathcal{N}(W_{j+1}^{(1)}) $, $ Z_j^\top \in \mathcal{N}((W_{j-1}^{(1)})^\top) $ and $ Z^h =0$.
\end{proposition}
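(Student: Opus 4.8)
The plan is to reduce the statement to the single matrix identity $(A_1+Z)^h=A_1^h$ and then to check that the block-wise null-space hypotheses on $Z$ are exactly what makes this identity hold.

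First I would dispose of the conjugation by $P$. Since $A_2=P(A_1+Z)P^{-1}$ we have $A_2^h=P(A_1+Z)^hP^{-1}$ by telescoping. Now $A_1+Z\in\mathcal{A}$, so by the power formula of Section~\ref{sec:block-shift} the matrix $(A_1+Z)^h$ has a single nonzero block, located in block position $(h+1,1)$; call it $B$. Because $P=\block_0(I_{d_x},P_1,\ldots,P_{h-1},I_{d_y})$ and $P^{-1}=\block_0(I_{d_x},P_1^{-1},\ldots,P_{h-1}^{-1},I_{d_y})$ carry identities in their first and last diagonal blocks, a direct block computation gives $[P(A_1+Z)^hP^{-1}]_{(h+1,1)}=I_{d_y}\,B\,I_{d_x}=B$ and all other blocks zero, so $P(A_1+Z)^hP^{-1}=(A_1+Z)^h$. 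Hence $A_2^h=(A_1+Z)^h$, and it remains to prove $(A_1+Z)^h=A_1^h$, which is precisely $\phi(A_2)=\phi(A_1)$, i.e. $A_1\sim A_2$.

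Second, I would translate the block hypotheses into identities for $Z$. Multiplying $A_1=\block_1(W_1^{(1)},\ldots,W_h^{(1)})$ with $Z=\block_1(Z_1,\ldots,Z_h)$ block-wise yields $A_1Z=\block_2(W_2^{(1)}Z_1,\ldots,W_h^{(1)}Z_{h-1})$ and $ZA_1=\block_2(Z_2W_1^{(1)},\ldots,Z_hW_{h-1}^{(1)})$. The hypothesis $Z_j\in\mathcal{N}(W_{j+1}^{(1)})$ (for $j=1,\ldots,h-1$, vacuous at $j=h$) says $W_{j+1}^{(1)}Z_j=0$, hence $A_1Z=0$; the hypothesis $Z_j^\top\in\mathcal{N}((W_{j-1}^{(1)})^\top)$ (for $j=2,\ldots,h$, vacuous at $j=1$) is $(W_{j-1}^{(1)})^\top Z_j^\top=0$, equivalently $Z_jW_{j-1}^{(1)}=0$, hence $ZA_1=0$. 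Together with the standing assumption $Z^h=0$ we are in the situation $A_1Z=ZA_1=0$, $Z^h=0$ (so this also recovers, and could be cited from, Proposition~\ref{prop:equival-classes-A}).

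Third, I would expand $(A_1+Z)^h$ as the noncommutative sum over all words $w_1\cdots w_h$ with each $w_i\in\{A_1,Z\}$. Any word that is not the constant word $A_1\cdots A_1$ nor $Z\cdots Z$ contains two adjacent distinct letters, i.e. a factor $A_1Z$ or $ZA_1$ somewhere inside the product; since both of these vanish, the whole product is $0$. The two constant words contribute $A_1^h$ and $Z^h=0$, so $(A_1+Z)^h=A_1^h$, which finishes the argument by the reduction of the first paragraph. There is no serious obstacle here; the only point requiring care is the block-indexing bookkeeping in the second step — correctly matching the statements $Z_j\in\mathcal{N}(W_{j+1}^{(1)})$ and $Z_j^\top\in\mathcal{N}((W_{j-1}^{(1)})^\top)$ (with their vacuous boundary cases) to $A_1Z=0$ and $ZA_1=0$ — and then noting that these two relations together with $Z^h=0$ annihilate every mixed word in the binomial expansion.
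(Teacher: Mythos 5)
Your proof is correct and follows essentially the same route as the paper's, but at the level of the whole adjacency matrix rather than block by block. The paper expands the product $W_{1:h}^{(2)} = (W_h^{(1)}+Z_h)\cdots(W_1^{(1)}+Z_1)$ and checks that every cross term vanishes; you instead first translate the block hypotheses into the two matrix identities $A_1 Z=0$ and $ZA_1=0$ (reading off the single nonzero lower diagonal of each product, whose $j$-th block is $W_{j+1}^{(1)}Z_j$ resp.\ $Z_{j+1}W_j^{(1)}$), and then apply the noncommutative binomial expansion of $(A_1+Z)^h$, observing that every mixed word contains an adjacent factor $A_1Z$ or $ZA_1$ and hence vanishes. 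Your handling of the conjugation by $P$ (a matrix in $\block_h$-form is fixed by conjugation with any $P\in\mathcal{P}$, since the corner blocks of $P$ are identities) is likewise more direct than the paper's telescoping $W_j^{(2)}=P_jW_j^{(1)}P_{j-1}^{-1}$. A useful byproduct of your reduction is that it makes explicit that the block-wise hypotheses here are equivalent to the conditions $A_1Z=ZA_1=0$ of Proposition~\ref{prop:equival-classes-A}, so---as you note---one could simply cite that proposition once the translation is made; the paper only records the converse implication.
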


\begin{proof}
The nilpotent condition $ Z^h=0 $ corresponds in terms of the $ Z_1, \ldots, Z_h $ blocks to $ Z_{1:h}=0$, which is achieved for instance when $ Z_j =0$ for some $j$.  
Let us observe that the statement on $ A_2 = P (A_1 + Z ) P^{-1} $ can be decomposed into the two following conditions
\benu
\item (change of basis in hidden nodes) $ \exists P \in \mathcal{P} $ s.t. $ A_2 = P A_1 P^{-1}$;
\item (mapping to null space in hidden layer product) For some $ j=1,\ldots, h$, $ W_j^{(2)} = W_j^{(1)} + Z_j $ where  $ Z_j \in \mathbb{R}^{d_j \times d_{j-1}} $ is such that $ Z_j \in \mathcal{N}(W_{j+1}^{(1)}) $ and $ Z_j^\top \in \mathcal{N}((W_{j-1}^{(1)})^\top) $, plus $ Z_{1:h}=0$.
\eenu

Let us show that any (or both) of these conditions imply $ A_1 \sim A_2 $.
\benu
\item If $ P \in \mathcal{P}$, then $ P_j $ is a change of basis at the hidden nodes of layer $ j$.
 Each block of weights $ W_j^{(i)} $ is affected by the changes of basis happening at its upstream and downstream hidden nodes: $ W_j^{(2)} = P_j W_j^{(1)} P_{j-1}^{-1}$ (with $ P_0=I_{d_x} $ and $ P_h=I_{d_y} $). 
Computing the product:
\begin{align*}
W_{1:h}^{(2)} & = W_h^{(2)} W_{h-1}^{(2)} \ldots  W_2^{(2)} W_1^{(2)} \\
& = W_h^{(1)} P_{h-1}^{-1} P_{h-1} W_{h-1}^{(1)} P_{h-2}^{-1} \ldots  P_2 W_2^{(1)} P_1^{-1} P_1  W_1^{(1)} \\
& = W_h^{(1)} W_{h-1}^{(1)} \ldots  W_2^{(1)} W_1^{(1)} = W_{1:h}^{(1)} 
\end{align*}
Therefore $ A_1^h = A_2^h$ follows from \eqref{eq:Apowerh}.
\item If $ W_j^{(2)} = W_j^{(1)} + Z_j $ with $ Z_j \in \mathcal{N}(W_{j+1}^{(1)}) $ resp. $ Z_j^\top \in \mathcal{N}((W_{j-1}^{(1)})^\top) $, then  $  W_{j+1}^{(1)} Z_j =0 $ resp.  $  Z_j W_{j-1}^{(1)}  =0 $. Expanding the matrix product
\[
\begin{split}
W_{1:h}^{(2)}  = & W_{1:h}^{(1)} + Z_h W_{1:h-1}^{(1)} + W_h^{(1)} Z_{h-1} W_{1:h-2}^{(1)} + \ldots + W_{2:h}^{(1)}Z_1 + Z_h Z_{h-1} W_{1:h-2}^{(1)} + \ldots \\ & \ldots +Z_h W_{h-1}^{(1)} Z_{h-2} W_{1:h-3}^{(1)} + \ldots + 
Z_h Z_{h-1} \ldots Z_2 W_1^{(1)} + Z_h Z_{h-1} \ldots Z_2 Z_1 
\end{split}
\]
Each term in the sum except the first is equal to zero, including the last one if $ Z_{1:h}=0 $.
\eenu
\end{proof}

Proposition~\ref{prop:equival-classes-A} follows straightforwardly from Proposition~\ref{prop:equival-classes-power}. 
In fact $ A_1Z =0 $ implies $ W_{j+1}^{(1)} Z_j =0 \; \forall \,j$, while $ ZA_1=0$ implies $ Z_j W_{j-1}^{(1)}=0 \; \forall \,j $.

The condition of Proposition~\ref{prop:equival-classes-power} is sufficient but not necessary. For instance it could be that $ Z_j \notin \mathcal{N}(W_{j+1}^{(1)}) $ but $ Z_j \in \mathcal{N}(W_{j+2}^{(1)}W_{j+1}^{(1)}) $.

%\bibliographystyle{abbrv}
%\bibliography{references.bib}

\newpage 

\end{document}